\newcommand{\sinit}{s_{\text{init}}}
\newcommand{\calA}{{\mathcal{A}}}
\newcommand{\calC}{{\mathcal{C}}}
\newcommand{\calV}{{\mathcal{V}}}
\newcommand{\calH}{\mathcal{H}}
\newcommand{\calS}{{\mathcal{S}}}
\newcommand{\calF}{{\mathcal{F}}}
\newcommand{\calI}{{\mathcal{I}}}
\newcommand{\calJ}{{\mathcal{J}}}
\newcommand{\calE}{{\mathcal{E}}}
\newcommand{\calM}{{\mathcal{M}}}
\newcommand{\KL}{\text{\rm KL}}
\DeclareMathOperator*{\argmin}{argmin}
\newcommand{\eat}[1]{}
\newcommand{\rbr}[1]{\left(#1\right)}
\newcommand{\sbr}[1]{\left[#1\right]}
\newcommand{\cbr}[1]{\left\{#1\right\}}
\newcommand{\abr}[1]{\left|#1\right|}
\newcommand{\tilO}[1]{\otil\left( #1 \right)}
\newcommand{\bigo}[1]{\order( #1 )}
\newcommand{\tilo}[1]{\otil( #1 )}
\newcommand{\lowo}[1]{\lorder( #1 )}
\DeclarePairedDelimiter\ceil{\lceil}{\rceil}
\newcommand{\frN}{\mathfrak{N}}
\newcommand{\dM}{M_{\dagger}}
\newcommand{\summk}{\sum_{m=1}^K}
\renewcommand{\L}{L_{M'}}
\newcommand{\Lc}{L_{c,M'}}
\newcommand{\Lp}{L_{P,M'}}
\newcommand{\sumic}{\sum_{m=i^c_{M'}}^{M'}}
\newcommand{\sumip}{\sum_{m=i^P_{M'}}^{M'}}
\newcommand{\cQ}{\check{Q}}
\newcommand{\cV}{\check{V}}
\newcommand{\cc}{\check{c}}
\newcommand{\m}{\mathbf{m}}
\newcommand{\n}{\mathbf{n}}
\newcommand{\hatchi}{\widehat{\chi}}
\newcommand{\C}{\mathbf{C}}
\newcommand{\Mp}{\M^+}
\newcommand{\rcalM}{\mathring{\calM}}
\newcommand{\rR}{\mathring{R}}
\newcommand{\rQ}{\mathring{Q}}
\newcommand{\rV}{\mathring{V}}
\newcommand{\rx}{\mathring{x}}
\newcommand{\sumhm}{\sum_{h=1}^{H_m}}
\newcommand{\summp}{\sum_{m=1}^{M'}}
\newcommand{\frA}{\mathfrak{A}}
\newcommand{\jstar}{j^{\star}}
\newcommand{\geo}{\text{Geometric}}
\newcommand{\fstar}{f^{\star}}
\newcommand{\tilg}{\widetilde{g}}
\newcommand{\hatn}{\widehat{n}}
\newcommand{\hatR}{\widehat{R}}
\newcommand{\Tmax}{\ensuremath{T_{\max}}}
\newcommand{\T}{\ensuremath{T_\star}}
\newcommand{\B}{B_\star}
\newcommand{\var}{\textsc{Var}}
\newcommand{\bernoulli}{\textrm{Bernoulli}}
\newcommand{\SA}{\calS\times\calA}
\renewcommand{\P}{\bar{P}}
\newcommand{\istar}{i^{\star}}
\newcommand{\Np}{\N^+}
\newcommand{\optV}{V^{\star}}
\newcommand{\optQ}{Q^{\star}}
\newcommand{\optq}{q^{\star}}
\newcommand{\sumh}{\sum_{h=1}^H}
\newcommand{\sumk}{\sum_{k=1}^K}
\newcommand{\hatc}{\widehat{c}}
\newcommand{\barc}{\bar{c}}
\newcommand{\optpi}{\pi^\star}
\newcommand{\tilR}{\widetilde{R}}
\newcommand{\tilc}{\widetilde{c}}
\newcommand{\tilf}{\widetilde{f}}
\newcommand{\tilP}{\widetilde{P}}
\newcommand{\N}{\mathbf{N}} 
\newcommand{\M}{\mathbf{M}}
\newcommand{\hatr}{\widehat{r}}
\newcommand{\hatalpha}{\widehat{\alpha}}
\newcommand{\field}[1]{\mathbb{#1}}
\newcommand{\fA}{\field{A}}
\newcommand{\fR}{\field{R}}
\newcommand{\fN}{\field{N}}
\newcommand{\E}{\field{E}}
\newcommand{\fV}{\field{V}}
\newcommand{\Ind}{\field{I}}
\newcommand{\norm}[1]{\left\|{#1}\right\|}
\newtheorem{lemma}{Lemma}
\newtheorem{theorem}{Theorem}
\newtheorem{assumption}{Assumption}
\newcommand{\order}{\ensuremath{\mathcal{O}}}
\newcommand{\lorder}{\ensuremath{\Omega}}
\newcommand{\otil}{\ensuremath{\tilde{\mathcal{O}}}}
\newcommand{\pref}[1]{\prettyref{#1}}
\newcommand{\pfref}[1]{Proof of \prettyref{#1}}
\newcommand{\savehyperref}[2]{\texorpdfstring{\hyperref[#1]{#2}}{#2}}
\title{Near-Optimal Goal-Oriented Reinforcement Learning in Non-Stationary Environments}
\author{%
  Liyu Chen \\
  University of Southern California\\
  \texttt{liyuc@usc.edu} \\
  \And
  Haipeng Luo \\
  University of Southern California\\
  \texttt{haipengl@usc.edu}\\
}
\begin{document}

\maketitle

\begin{abstract}
We initiate the study of dynamic regret minimization for goal-oriented reinforcement learning modeled by a non-stationary stochastic shortest path problem with changing cost and transition functions.
We start by establishing a lower bound $\lowo{(\B SA\T(\Delta_c + \B^2\Delta_P))^{1/3}K^{2/3}}$, where $\B$ is the maximum expected cost of the optimal policy of any episode starting from any state, $\T$ is the maximum hitting time of the optimal policy of any episode starting from the initial state, $SA$ is the number of state-action pairs, $\Delta_c$ and $\Delta_P$ are the amount of changes of the cost and transition functions respectively, and $K$ is the number of episodes.
The different roles of $\Delta_c$ and $\Delta_P$ in this lower bound inspire us to design algorithms that estimate costs and transitions separately.
Specifically, assuming the knowledge of $\Delta_c$ and $\Delta_P$, we develop a simple but sub-optimal algorithm and another more involved minimax optimal algorithm (up to logarithmic terms).
These algorithms combine the ideas of finite-horizon approximation~\citep{chen2021improved}, special Bernstein-style bonuses of the MVP algorithm~\citep{zhang2020reinforcement}, adaptive confidence widening~\citep{wei2021non}, as well as some new techniques such as properly penalizing long-horizon policies.
	Finally, when $\Delta_c$ and $\Delta_P$ are unknown,
	we develop a variant of the MASTER algorithm~\citep{wei2021non} and integrate the aforementioned ideas into it to achieve $\tilo{\min\{\B S\sqrt{ALK}, (\B^2S^2A\T(\Delta_c+\B\Delta_P))^{1/3}K^{2/3}\}}$ regret, where $L$ is the unknown number of changes of the environment.
\end{abstract}

\section{Introduction}
Goal-oriented reinforcement learning studies how to achieve a certain goal with minimal total cost in an unknown environment via sequential interactions.
It has often been modeled as online learning in an episodic Stochastic Shortest Path (SSP) model, where in each episode, starting from a fixed initial state, the learner sequentially takes an action, suffers a cost, and transits to the next state, until the goal state is reached.
The performance of the learner can be measured by her \textit{regret}, generally defined as the difference between her total cost and that of a sequence of benchmark policies (one for each episode).

Despite the recent surge of studies on this problem, all previous works consider minimizing \textit{static regret}, a special case where the benchmark policy is the same for every episode. This is reasonable only for (near) stationary environments where one single policy performs well over all episodes.
In reality, however, the environment is often non-stationary with both the cost function and the transition function changing over episodes, making static regret an unreasonable metric.
Instead, the desired objective is to minimize \textit{dynamic regret}, where the benchmark policy for each episode is the optimal policy for that corresponding environment,
and the hope is to obtain sublinear dynamic regret whenever the non-stationarity is not too large.


Based on this motivation, we initiate the study of dynamic regret minimization for non-stationary SSP and develop the first set of results.
Specifically, our contributions are as follows:
\begin{itemize}[leftmargin=*]
	\item To get a sense on the difficulty of the problem, we start by establishing a dynamic regret lower bound in \pref{sec:lb}. Specifically, we prove that $\lowo{(\B SA\T(\Delta_c + \B^2\Delta_P))^{1/3}K^{2/3}}$ regret is unavoidable, where $\B$ is the maximum expected cost of the optimal policy of any episode starting from any state, $\T$ is the maximum hitting time of the optimal policy of any episode starting from the initial state, $S$ and $A$ are the number of states and actions respectively, $\Delta_c$ and $\Delta_P$ are the amount of changes of the cost and transition functions respectively, and $K$ is the number of episodes.
	Note the different roles of $\Delta_c$ and $\Delta_P$ here --- the latter is multiplied with an extra $\B^2$ factor, which we find surprising for a technical reason discussed in \pref{sec:lb}.
	More importantly, 
	this inspires us to estimate costs and transitions independently in subsequent algorithm design.

	\item For algorithms, we first present a simple one (\pref{alg:MVP-SSP} in \pref{sec:subopt}) that achieves sub-optimal regret of $\tilo{(\B SA\Tmax(\Delta_c + \B^2\Delta_P))^{1/3}K^{2/3}}$, where $\Tmax \geq \T$ is the maximum hitting time of the optimal policy of any episode starting from any state.
	Except for replacing $\T$ with the larger quantity $\Tmax$, 
	this bound is optimal in all other parameters.
	Moreover, this also translates to a minimax optimal regret bound in the finite-horizon setting (a special case of SSP), making \pref{alg:MVP-SSP} the first model-based algorithm with the optimal $(SA)^{1/3}$ dependency.
	
	\item To improve the $\Tmax$ dependency to $\T$, in \pref{sec:opt}, we present a more involved algorithm (\pref{alg:mvp-test}) that achieves a near minimax optimal regret bound matching the earlier lower bound up to logarithmic terms.
	
	\item Both algorithms above require the knowledge of $\Delta_c$ and $\Delta_P$.
	Moreover, for a special kind of non-stationary environments where the cost/transition function only changes $L$ times, they are not able to achieve a more favorable dynamic regret bound of the form $\sqrt{LK}$.
	To overcome these issues altogether,
	in \pref{sec:unknown}, we develop a variant of the MASTER algorithm \citep{wei2021non} and integrate the earlier algorithmic ideas into it, which finally leads to a (sub-optimal) $\tilo{\min\{\B S\sqrt{ALK}, (\B^2S^2A\T(\Delta_c+\B\Delta_P))^{1/3}{K}^{2/3}\}}$ regret bound without knowing the non-stationarity $\Delta_c$, $\Delta_P$, or $L$.
\end{itemize}

\paragraph{Techniques} All our algorithms are built on top of a finite-horizon approximation scheme first proposed by~\citet{cohen2021minimax} and later improved by~\citet{chen2021improved}; see \pref{sec:fha}.
Both the sub-optimal \pref{alg:MVP-SSP} and the optimal \pref{alg:mvp-test} are then developed based on ideas from the MVP algorithm~\citep{zhang2020reinforcement} (for the finite-horizon setting), which adopts a UCBVI-style update rule \citep{azar2017minimax} with a special Bernstein-style bonus term.
The sub-optimal algorithm further integrates the idea of adaptive confidence widening \citep{wei2021non} into the UCBVI-style update by subtracting a bias from the cost function uniformly over all state-action pairs, which helps control the magnitude of the estimated value function.
The minimax optimal algorithm, on the other hand, adds a positive correction term to the cost function to penalize long-horizon policies, which helps improve the $\Tmax$ dependency to $\T$.
It also incorporates several non-stationarity tests to ensure that the algorithm resets its knowledge of the environment when the amount of non-stationarity is large.
Both algorithms maintain (update and reset) cost and transition estimation independently, which is the key to achieve the correct $\B$ dependency for both the $\Delta_c$-related and $\Delta_P$-related terms.

To handle unknown non-stationarity, we adopt the idea of the MASTER algorithm from~\citep{wei2021non}.
Although the nature of MASTER is a blackbox reduction, we cannot apply it directly due to the
presence of the correction term that changes continuously and brings extra challenges in tracking the learner's performance.
We handle this by redesigning the first non-stationarity test of the MASTER algorithm.
Specifically, we maintain multiple running averages of the estimated value function to detect different levels of non-stationarity.

\paragraph{Related Work} Static regret minimization in SSP has been heavily studied in recent years, for both stochastic costs~\citep{tarbouriech2020no,cohen2020near,cohen2021minimax,tarbouriech2021stochastic,chen2021implicit,jafarnia2021online,vial2021regret,min2021learning,chen2021improved} and adversarial costs~\citep{rosenberg2020adversarial,chen2021minimax,chen2021finding,chen2022policy}.
To the best of our knowledge, we are the first to study dynamic regret for non-stationary SSP.

There is also a surge of studies on online learning in non-stationary environments, ranging from bandits~\citep{auer2019adaptively,chen2019new,chen2021combinatorial,russac2020algorithms,faury2021regret,abbasi2022new,suk2021tracking} to reinforcement learning~\citep{gajane2018sliding,pmlr-v115-ortner20a,cheung2020reinforcement,fei2020dynamic,mao2020model,zhou2020nonstationary,touati2020efficient,domingues2021kernel,wei2021non,ding2022provably,lykouris2021corruption,wei2022model}.
Compared to previous work, the model we study is quite general and subsumes multi-armed bandit and finite-horizon reinforcement learning. On the other hand, it also introduces extra and unique challenges as we will discuss.

\section{Preliminaries}
\label{sec:pre}

A non-stationary SSP instance consists of state space $\calS$, action space $\calA$, initial state $\sinit\in\calS$, goal state $g\notin\calS$, a set of cost mean functions $\{c_k\}_{k=1}^K$ with $c_k\in[0,1]^{\SA}$, and a set of transition functions $\{P_k\}_{k=1}^K$ with $P_k=\{P_{k,s,a}\}_{(s, a)\in\SA}$ and $P_{k,s,a}\in\Delta_{\calS_+}$, where $\calS_+=\calS\cup\{g\}$, $\Delta_{\calS_+}$ is the simplex over $\calS_+$, and $K$ is the number of episodes.
The set of cost and transition functions are unknown to the learner and determined by the environment before learning starts.

The learning protocol is as follows: the learner interacts with the environment for $K$ episodes.
In episode $k$, starting from the initial state $\sinit$, the learner sequentially takes an action, incurs a cost, and transits to the next state until reaching the goal state.
We denote by $(s^k_i, a^k_i, c^k_i, s^k_{i+1})$ the $i$-th state-action-cost-afterstate tuple observed in episode $k$, where $c^k_i$ is sampled from an unknown distribution with support $[0,1]$ and mean $c_k(s^k_i, a^k_i)$, and $s^k_{i+1}$ is sampled from $P_{k,s^k_i,a^k_i}$. 
We denote by $I_k$ the total number of steps in episode $k$, such that $s^k_{I_k+1}=g$.

\paragraph{Learning Objective} Intuitively, in each episode the learner aims at finding a policy that minimizes the total cost of reaching the goal state.
Formally, a policy $\pi\in\calA^{\calS}$ assigns an action $\pi(s)$ to each state $s\in\calS$, 
and its expected cost for episode $k$ starting from a state $s$ is denoted as $V^{\pi}_k(s)=\E\big[\sum_{i=1}^{I_k} c_k(s_i^k, \pi(s_i^k))|P_k, s_1^k=s\big]$ where  the expectation is with respect to the randomness of next states $s_{i+1}^k\sim P_{k, s^k_i, \pi(s_i^k)}$ and the number of steps $I_k$ before reaching $g$.
The optimal policy $\optpi_k$ for episode $k$ is then the policy that minimizes $V^{\pi}_k(s)$ for all $s$.
Using $\optV_k$ as a shorthand for $V^{\optpi_k}_k$, we formally define the dynamic regret of the learner as
\begin{align*}
	R_K = \sumk\rbr{\sum_{i=1}^{I_k}c^k_i - \optV_k(\sinit)}.
\end{align*}
When $I_k=\infty$ for some $k$, we let $R_K=\infty$.

Several parameters play a key role in characterizing the difficulty of this problem:
$\B=\max_{k,s}\optV_k(s)$, the maximum cost of the optimal policy of any episode starting from any state; $\T=\max_kT^{\optpi_k}_k(\sinit)$ (where $T^{\pi}_k(s)$ is expected number of steps it takes for policy $\pi$ to reach the goal in episode $k$ starting from state $s$), the maximum hitting time of the optimal policy of any episode starting from the initial state; $\Tmax=\max_{k,s}T^{\optpi_k}_k(s)$, the maximum hitting time of the optimal policy of any episode starting from any state;
$\Delta_c=\sum_{k=1}^{K-1}\norm{c_{k+1}-c_k}_{\infty}$, the amount of non-stationarity in the cost functions;
and finally $\Delta_P=\sum_{k=1}^{K-1}\max_{s, a}\norm{P_{k+1,s, a} - P_{k,s,a}}_1$, the amount of non-stationarity in the transition functions.
Throughout the paper we assume the knowledge of $\B$, $\T$, and $\Tmax$, and also
$\B\geq 1$ for simplicity.
$\Delta_c$ and $\Delta_P$ are assumed to be known for the first two algorithms we develop, but unknown for the last one.

\paragraph{Other Notations}
For a value function $V\in\fR^{\calS_+}$ and a distribution $P$ over $\calS_+$, define $PV=\E_{s'\sim P}[V(s')]$ (mean) and $\fV(P, V) = \E_{s'\sim P}[V(s')^2] - (PV)^2$ (variance).
Let $S=|\calS|$ and $A=|\calA|$ be the number of states and actions respectively.
The notation $\tilo{\cdot}$ hides all logarithmic dependency including $\ln K$ and $\ln\frac{1}{\delta}$ for some failure probability $\delta\in(0, 1)$.
Also define a value function upper bound $B=16\B$. 
For integers $s$ and $e$, we define $[s, e]=\{s, s+1,\ldots,e\}$ and $[e]=\{1,\ldots,e\}$.

\section{Lower Bound}
\label{sec:lb}

To better understand the difficulty of learning non-stationary SSP, we first establish the following dynamic regret lower bound.
\begin{theorem}
	\label{thm:lb}
	In the worst case, the learner's regret is at least $\lowo{(\B SA\T(\Delta_c + \B^2\Delta_P))^{1/3}K^{2/3}}$.
\end{theorem}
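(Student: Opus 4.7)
My plan is to prove two sub-bounds---$\lowo{(\B SA\T\Delta_c)^{1/3}K^{2/3}}$ (cost-only) and $\lowo{(\B^3 SA\T\Delta_P)^{1/3}K^{2/3}}$ (transition-only)---via dedicated hard-instance constructions, and then conclude: since $(\Delta_c+\B^2\Delta_P)^{1/3}\geq 2^{-1/3}\max(\Delta_c^{1/3},\B^{2/3}\Delta_P^{1/3})$, the maximum of the two sub-bounds matches the claimed theorem up to a constant factor. Both sub-proofs follow the standard block reduction used for non-stationary regret. Partition the $K$ episodes into $L$ equal blocks of size $\tau=K/L$; within each block, the environment is stationary and is drawn uniformly from two ``twin'' SSPs that differ by an $\epsilon$-perturbation at a single disputed state-action pair selected uniformly from $\Theta(SA)$ candidates; between consecutive blocks, re-randomize the disputed pair, contributing at most $\epsilon$ to the appropriate variation budget. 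Within a block, Le Cam's two-point argument gives that unless the learner has collected enough trajectory information to identify the disputed pair, its per-block expected regret is $\Omega(\tau\cdot\mathrm{gap})$, where $\mathrm{gap}$ is the per-episode value difference induced by the perturbation.

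For the cost bound I would build on a multi-state SSP with $\Theta(S)$ ``decision'' states reached from $\sinit$, each offering $\Theta(A)$ actions. At every decision state, all actions share transition probability $1/\T$ to $g$ and $1-1/\T$ of a self-loop, with per-step cost $c_0=\Theta(\B/\T)$; the disputed action has cost $c_0-\epsilon$. This gives optimal value $\Theta(\B)$ and optimal hitting time $\Theta(\T)$ (which is consistent since costs in $[0,1]$ force $\B\leq\T$). A single visit to the disputed action is a Bernoulli-like cost sample with variance $\Theta(\B/\T)$, so the per-visit KL between the twin hypotheses is $\Theta(\epsilon^2\T/\B)$. Each episode produces $\Theta(\T)$ visits to decision states spread across $\Theta(SA)$ candidate pairs, so Le Cam is inconclusive as long as $\tau\lesssim SA\B/(\T^2\epsilon^2)$, while the per-episode value gap is $\Theta(\epsilon\T)$. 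Combining $L\geq K\T^2\epsilon^2/(SA\B)$ with the budget $L\epsilon\leq\Delta_c$ yields $\epsilon\lesssim(\Delta_c SA\B/(K\T^2))^{1/3}$, so the total regret $K\T\epsilon$ is $\Omega((\B SA\T\Delta_c)^{1/3}K^{2/3})$.

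For the transition bound I would keep the same skeleton but now fix the per-step cost at $c_0=\B/\T$ and perturb the disputed action's transition probability to $g$ between $1/\T$ and $1/\T+\epsilon_P$. The identity $V=c_0/p$ gives $\partial V/\partial p=-c_0/p^2=-\B\T$, so the per-episode value gap is $\Theta(\B\T\epsilon_P)$---a factor $\B$ larger than in the cost case. The per-visit KL is $\Theta(\epsilon_P^2\T)$ (Bernoulli variance $\Theta(1/\T)$), so Le Cam's threshold becomes $\tau\lesssim SA/(\T^2\epsilon_P^2)$; the same optimization yields $\epsilon_P\lesssim(\Delta_P SA/(K\T^2))^{1/3}$ and total regret $\Omega(K\B\T\epsilon_P)=\Omega((\B^3 SA\T\Delta_P)^{1/3}K^{2/3})$. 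The extra $\B^2$ factor on $\Delta_P$ in the stated bound therefore arises from the combination of the $\B$-times-larger per-episode gap and the absence of the $1/\B$ factor in the KL that was present in the cost construction.

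The main obstacle I expect is not the parameter balancing (routine once the instance is in hand) but (i) realizing a $\Theta(S)$-state $\Theta(A)$-action SSP in which the stationary $\lowo{\B\sqrt{SAK}}$ lower bound is simultaneously tight in all of $\B,\T,S,A$ and in which the cost and transition perturbations can be independently controlled, and (ii) the adaptive cross-block analysis: at the minimax-optimal $\tau$ the learner sits exactly at the Le Cam threshold, so one must condition on the algorithm's internal state at each block boundary and argue that stale information from earlier blocks (with a different disputed pair) contributes only negligibly to the posterior over the current disputed pair.
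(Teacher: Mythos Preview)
Your approach is essentially the paper's: the hard instance is a multi-armed SSP where each of $\Theta(SA)$ ``arms'' is a self-looping state with geometric hitting time $\T$ and per-step cost $\Theta(\B/\T)$, exactly one arm is $\epsilon$-better, and the two sub-bounds (cost-only and transition-only) come from the KL/gap trade-off you compute; the paper then combines them in a single construction by devoting the first half of the $K$ episodes to cost-perturbed blocks and the second half to transition-perturbed blocks, which is equivalent up to constants to your take-the-max argument.

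The one place the paper's organization helps you is your concern (ii). Rather than running Le Cam inside each block, the paper first proves clean \emph{static} lower bounds $\Omega(\sqrt{\B SAK})$ (cost perturbation) and $\Omega(\B\sqrt{SAK})$ (transition perturbation) that hold for \emph{any} algorithm on a single stationary instance. Since the hidden good arm is re-drawn independently at each block boundary, whatever internal state the learner carries over from previous blocks is just the initialization of \emph{some} algorithm, and the static bound applies verbatim---there is no need to track posteriors or worry about stale information. Your concern (i) is also not a real obstacle: the construction is the obvious geometric self-loop and goes through without subtlety.
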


The lower bound construction is similar to that in~\citep{mao2020model}, where the environment is piecewise stationary.
In each stationary period, the learner is facing a hard SSP instance with a slightly better hidden state. Details are deferred to \pref{app:pf lb}.

In a technical lemma in \pref{app:optV diff}, we show that for any two episodes $k_1$ and $k_2$, the change of the optimal value function due to non-stationarity satisfies $\optV_{k_1}(\sinit) - \optV_{k_2}(\sinit)\leq (\Delta_c+\B\Delta_P)\T$, with only one extra $\B$ factor for the $\Delta_P$-related term.
We thus find our lower bound somewhat surprising since an extra $\B^2$ factor shows up for the $\Delta_P$-related term.
This comes from the fact that constructing the hard instance with perturbed costs requires a larger amount of perturbation compared to that with perturbed transitions; see \pref{thm:lb cost} and \pref{thm:lb transition} for details.

More importantly, this observation implies that simply treating these two types of non-stationarity as a whole and only consider the non-stationarity in value function as done in \citep{wei2021non} does not give the right $\B$ dependency. 
This further inspires us to consider cost and transition estimation independently in our subsequent algorithm design.

\section{Basic Framework: Finite-Horizon Approximation}
\label{sec:fha}


Our algorithms are all built on top of the finite-horizon approximation scheme of~\citep{cohen2021minimax}, whose analysis is greatly simplified and improved by~\citep{chen2021improved}, making it applicable to our non-stationary setting as well.
This scheme makes use of an algorithm $\frA$ that deals with a special case of SSP where each episode ends within $H= \tilo{\Tmax}$ steps, and applies it to the original SSP following \pref{alg:fha}.
Specifically, call each ``mini-episode'' $\frA$ is facing an \textit{interval}.
At each step $h$ of interval $m$, the learner receives the decision $a_h^m$ from $\frA$, takes this action, observes the cost $c_h^m$, transits to the next state $s_{h+1}^m$, and then feed the observation $c_h^m$ and $s_{h+1}^m$ to $\frA$  (\pref{line:execute} and \pref{line:feed}).
The interval $m$ ends whenever one of the following happens (\pref{line:new interval}): 
the goal state is reached, $H$ steps have passed, or $\frA$ requests to start a new interval.\footnote{This last condition is not present in prior works. We introduce it since later our instantiation of $\frA$ will change its policy in the middle of an interval, and creating a new interval in this case allows us to make sure that the policy in each interval is always fixed, which simplifies the analysis.}
In the first case, the initial state $s_1^{m+1}$ of the next interval $m+1$ will be set to $\sinit$, while in the other two cases, it is naturally set to the learner's current state, which is also $s_{H_m+1}^m$ where $H_m$ is the length of interval $m$ (see \pref{line:next_interval}).
At the end of each interval, we artificially let $\frA$ suffer a \textit{terminal cost} $c_f(s_{H_m+1}^m)$ where $c_f(s)=2\B\Ind\{s\neq g\}$.

\DontPrintSemicolon
\setcounter{AlgoLine}{0}
\begin{algorithm}[t]
	\caption{Finite-Horizon Approximation of SSP}
	\label{alg:fha}
	\textbf{Input:} Algorithm $\frA$ for finite-horizon MDP $\rcalM$ with horizon $H= 4\Tmax\ln (8K)$.
	
	\textbf{Initialize:} interval counter $m\leftarrow 1$.
	
	\For{$k=1,\ldots,K$}{
	\nl	Set $s^m_1 \leftarrow \sinit$. \label{line:next_episode}
		
	\nl	\While{$s^m_1 \neq g$}{
     \nl         Feed initial state $s^m_1$ to $\frA$, $h\leftarrow 1$.
			
	\nl		\While{True}{	     
			    
	\nl			Receive action $a^m_h$ from $\frA$, play it, and observe cost $c^m_h$ and next state $s^m_{h+1}$. \label{line:execute}

	\nl		    Feed $c^m_h$ and $s^m_{h+1}$ to $\frA$. \label{line:feed}
				
	\nl			\If{$h=H$ or $s^m_{h+1}=g$ or $\frA$ requests to start a new interval}{\label{line:new interval}
	\nl				$H_m\leftarrow h$. \textbf{break}.
				}
	\nl			\lElse{$h\leftarrow h + 1$.}
			}
     \nl        Set $s^{m+1}_1 = s^m_{H_m+1}$ and $m\leftarrow m+1$. \label{line:next_interval}
		}
	}
	
\end{algorithm}

This procedure (adaptively) generates a non-stationary finite-horizon Markov Decision Process (MDP) that $\frA$ faces: $\rcalM=(\calS, \calA, g, \{c^m\}_{m=1}^M, \{P^m\}_{m=1}^M, c_f, H)$.
Here, $c^m = c_{k(m)}$ and $P^m = P_{k(m)}$ where $k(m)$ is the unique episode that interval $m$ belongs to,
and $M$ is the total number of intervals over $K$ episodes, a random variable determined by the interactions.
Let $V^{\pi,m}_1(s)$ be the expected cost (including the terminal cost) of following policy $\pi$ starting from state $s$ in interval $m$.
Define the regret of $\frA$ over the first $M'$ intervals in $\rcalM$ as $\rR_{M'}=\summp(\sum_{h=1}^{H_m+1}c^m_h - V^{\optpi_{k(m)},m}_1(s^m_1))$ 
where we use $c^m_{H_m+1}$ as a shorthand for the terminal cost $c_f(s^m_{H_m+1})$.
Following similar arguments as in \citep{cohen2021minimax,chen2021improved}, the regret in $\calM$ and $\rcalM$ are close in the following sense. 
\begin{lemma}
	\label{lem:fha}
	\pref{alg:fha} ensures $R_K\leq \rR_M + \B$.
\end{lemma}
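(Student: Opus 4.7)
My plan is to rewrite both regrets at the interval level and show that the gap telescopes once the artificial terminal costs are accounted for. Assuming $R_K<\infty$ (else the statement is trivial), the interval count $M$ is finite and satisfies $M\ge K$ since every episode contains at least one interval. As a first step I would cancel the realized costs $c^m_h$ for $h\le H_m$ (they appear identically in $R_K$ and $\rR_M$) to obtain
\begin{align*}
R_K-\rR_M \;=\; \sum_{m=1}^M V^{\optpi_{k(m)},m}_1(s^m_1) \;-\; \sum_{k=1}^K \optV_k(\sinit) \;-\; 2\B\sum_{m=1}^M\Ind\{s^m_{H_m+1}\neq g\}.
\end{align*}

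Next I would compare the finite-horizon benchmark with the original optimal value through the pointwise bound
\begin{align*}
V^{\optpi_k,m}_1(s) \;\le\; \optV_k(s) + 2\B\cdot\Pr\big[\,T^{\optpi_k}_k(s)>H\,\big],
\end{align*}
obtained by observing that truncating $\optpi_k$'s rollout at step $H$ only removes nonnegative future cost, so the only overhead in $V^{\optpi_k,m}_1(s)$ is the terminal penalty $2\B$ collected when the policy fails to reach $g$ within $H$ steps. An iterated-Markov argument on the hitting time (which has expectation at most $\Tmax$ from every start state) then yields $\Pr[T^{\optpi_k}_k(s)>H]\le 2^{-H/(2\Tmax)}\le 1/(2K)$ for the prescribed choice $H=4\Tmax\ln(8K)$.

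The final step is to group intervals by their parent episode $k$ with constituent intervals $m_1,\dots,m_{n_k}$: the first interval starts at $\sinit$ so its $\optV_k(\sinit)$ contribution cancels the $-\optV_k(\sinit)$ term on the right-hand side; each subsequent start state $s^{m_j}_1$ is non-goal, so $\optV_k(s^{m_j}_1)\le\B$; and exactly $n_k-1$ of the intervals in episode $k$ terminate at a non-goal state and therefore contribute $-2\B(n_k-1)$ through the terminal-cost sum. Putting everything together gives a per-episode contribution of at most $(n_k-1)\B + \B n_k/K - 2\B(n_k-1) = -\B(n_k-1) + \B n_k/K$. Summing over $k$ and using $\sum_k n_k = M\ge K$ yields
\begin{align*}
R_K-\rR_M \;\le\; -\B(M-K) + \frac{\B M}{K} \;\le\; \B.
\end{align*}

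The main delicate point is calibrating the hitting-time tail: the slack must be exactly $\B$ rather than $O(\B\log K)$, so one has to choose $H$ large enough that $\Pr[T^{\optpi_k}_k(s)>H]\le 1/(2K)$ holds \emph{uniformly} over all interior start states $s$, not just $\sinit$. This is precisely why the horizon here must scale with $\Tmax$ instead of $\T$, and it foreshadows why the improved algorithm of \pref{sec:opt} needs extra machinery (the penalization of long-horizon policies) to shave this dependence down to $\T$.
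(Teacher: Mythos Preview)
Your proof is correct and follows the same route as the paper: cancel the shared realized costs, bound $V^{\optpi_k,m}_1(s)\le\optV_k(s)+2\B\cdot\Pr[T>H]$ via a hitting-time tail bound (the paper invokes \pref{lem:hitting} to get $V^{\optpi,m}_1(s^m_1)\le\optV_k(s^m_1)+\tfrac{\B}{2K}$), and absorb the values at non-initial start states into the $2\B$ terminal penalties. The paper's write-up is slightly more direct in that it bounds each episode's contribution by $\tfrac{\B}{2K}$ outright (using $V^{\optpi,m}_1(s^m_1)\le\tfrac{3}{2}\B<2\B$ to drop your negative $-(n_k-1)\B$ term rather than carrying it to the end and invoking $M\ge K$), but the argument is otherwise identical.
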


See \pref{app:fha} for the proof.
Based on this lemma, in following sections we focus on developing the finite-horizon algorithm $\frA$ and analyzing how large $\rR_M$ is.
Note, however, that while this finite-horizon reduction is very useful, it does not mean that our problem is as easy as learning non-stationary finite-horizon MDPs and that we can directly plug in an existing algorithm as $\frA$. 
Great care is still needed when designing $\frA$ in order to obtain tight regret bounds as we will show.

\section{A Simple Sub-Optimal Algorithm}
\label{sec:subopt}

\DontPrintSemicolon
\setcounter{AlgoLine}{0}
\begin{algorithm}[t]
	\caption{Non-Stationary MVP}
	\label{alg:MVP-SSP}
	\SetKwFunction{update}{Update}
	\SetKwProg{proc}{Procedure}{}{}
	\textbf{Parameters:} window sizes $W_c$ (for costs) and $W_P$ (for transitions), and failure probability $\delta$.
	
	\textbf{Initialize:} for all $(s, a, s')$, $\C(s, a)\leftarrow 0$, $\M(s, a)\leftarrow 0$, $\N(s, a)\leftarrow 0$, $\N(s, a, s')\leftarrow 0$.
	
	\textbf{Initialize:} \update{$1$}.
	
	\For{$m=1,\ldots,M$}{	
	
		\For{$h=1,\ldots,H$}{
		\nl	Play action $a^m_h\leftarrow\argmin_aQ_h(s^m_h, a)$, receive cost $c^m_h$ and next state $s^m_{h+1}$. \label{line:greedy_action}
			
			$\C(s^m_h, a^m_h)\overset{+}{\leftarrow} c^m_h$, $\M(s^m_h, a^m_h)\overset{+}{\leftarrow}1$, $\N(s^m_h, a^m_h)\overset{+}{\leftarrow}1$, $\N(s^m_h, a^m_h, s^m_{h+1})\overset{+}{\leftarrow}1$.\footnotemark
			
			\nl \If{$s^m_{h+1}=g$ or $\M(s^m_h, a^m_h)=2^l$ or $\N(s^m_h, a^m_h)=2^l$ for some integer $l\geq 0$ \label{line:request_new_interval}}{
				\textbf{break} (which starts a new interval).
			}
		}
		
		\nl \lIf{$W_c$ divides $m$}{reset $\C(s, a)\leftarrow 0$ and $\M(s, a)\leftarrow0$ for all $(s, a)$.}\label{line:reset M}
		
		\nl \lIf{$W_P$ divides $m$}{reset $\N(s, a, s')\leftarrow0$ and $\N(s, a)\leftarrow0$ for all $(s, a, s')$.}\label{line:reset N}
		
		\update{$m+1$}.
	}
	
	\proc{\update{$m$}}{
		$V_{H+1}(s)\leftarrow2\B\Ind\{s\neq g\}$, $V_h(g)\leftarrow 0$ for $h\leq H$,  $\iota\leftarrow 2^{11}\cdot\ln\big(\frac{2SAHKm}{\delta}\big)$, and $x \leftarrow \frac{1}{mH}$.
		
		\For{all $(s, a)$}{
			$\N^+(s, a)\leftarrow\max\{1, \N(s, a)\}$, $\M^+(s, a)\leftarrow\max\{1, \M(s, a)\}$, $\barc(s, a)\leftarrow \frac{\C(s, a)}{\M^+(s, a)}$, 
			
			$\hatc(s, a)\leftarrow \max\Big\{0, \barc(s, a) - \sqrt{\frac{\barc(s, a)\iota}{\M^+(s, a)}} - \frac{\iota}{\M^+(s, a)}\Big\}$, 
			$\P_{s, a}(\cdot)\leftarrow\frac{\N(s, a, \cdot)}{\N^+(s, a)}$.
		}
		
		\While{True}{
		\For{$h=H,\ldots,1$}{
		\nl $b_h(s, a)\leftarrow \max\Big\{7\sqrt{\frac{\fV(\P_{s, a}, V_{h+1})\iota}{\Np(s, a)}}, \frac{49B\sqrt{S}\iota}{\Np(s, a)}\Big\}$ for all $(s, a)$. \label{line:bonus}
		
			\nl $Q_h(s, a)\leftarrow\max\{0, \hatc(s, a) + \P_{s, a}V_{h+1} - b_h(s, a) - x\}$ for all $(s, a)$. \label{line:update}
			
			$V_h(s)\leftarrow\min_aQ_h(s, a)$ for all $s$.
		}
		\nl \lIf{$\max_{s, a, h}Q_h(s, a)\leq B/4$}{\textbf{break}; \textbf{else} $x \leftarrow 2x$.} \label{line:double_search}
		}
	}
\end{algorithm}

\footnotetext{$z\overset{+}{\leftarrow}y$ is a shorthand for $z \leftarrow z+y$.}

In this section, we present a relatively simple finite-horizon algorithm $\frA$ for $\rcalM$ which, in combination with the reduction of \pref{alg:fha}, achieves a regret bound that almost matches our lower bound except that $\T$ is replaced by $\Tmax$.
The key steps are shown in \pref{alg:MVP-SSP}.
It follows the ideas of the MVP algorithm~\citep{zhang2020reinforcement} and adopts a UCBVI-style update rule (\pref{line:update}) with a Bernstein-type bonus term (\pref{line:bonus}) to maintain a set of $Q_h$ functions, which then determines the action at each step in a greedy manner (\pref{line:greedy_action}).
The two crucial new elements are the following.
First, in the update rule \pref{line:update}, we subtract a positive value $x$ uniformly over all state-action pairs so that $\norm{Q_h}_{\infty}$ is of order $\order(\B)$ (recall $B = 16\B$), and we find the (almost) smallest such $x$ via a doubling trick (\pref{line:double_search}).
This is similar to the adaptive confidence widening technique of~\citep{wei2021non}, where they increase the size of the transition confidence set to ensure a bounded magnitude on the estimated value function;
our approach is an adaptation of their idea to the UCBVI style update rule.

Second, we periodically restart the algorithm (by resetting some counters and statistics) in \pref{line:reset M} and \pref{line:reset N}.
While periodic restart is a standard idea to deal with non-stationarity,
the novelty here is a two-scale restart schedule:
we set one window size $W_c$ related to costs and another one $W_P$ related to transitions, and restart after every $W_c$ intervals or every $W_P$ intervals.
As mentioned, this two-scale schedule is inspired by the lower bound in \pref{sec:lb}, which indicates that cost estimation and transition estimation play different roles in the final regret and should be treated separately.


Another small modification is that we start a new interval when the visitation to some $(s,a)$ doubles (\pref{line:request_new_interval}), which helps remove $\Tmax$ dependency in  lower-order terms and is important for following sections.
With all these elements, we prove the following regret guarantee of \pref{alg:MVP-SSP}.

\begin{theorem}
	\label{thm:mvp}
	For any $M'\leq M$, with probability at least $1-22\delta$ \pref{alg:MVP-SSP} ensures 
	$\rR_{M'} = \tilO{
	M'\Big(\sqrt{\B SA\big(\nicefrac{1}{W_c}+\nicefrac{\B}{W_P}\big)} + \B SA\rbr{\nicefrac{1}{W_c}+\nicefrac{S}{W_P}}\Big) + (\Delta_cW_c+\B\Delta_PW_P)\Tmax
	}$.
\end{theorem}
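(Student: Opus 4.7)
My strategy is to adapt the MVP-style analysis of \citet{zhang2020reinforcement} from the stationary finite-horizon setting to our non-stationary setting by exploiting the two-scale restart schedule. I partition $[1,M']$ into cost-blocks of length $W_c$ (separated by the resets in \pref{line:reset M}) and transition-blocks of length $W_P$ (separated by the resets in \pref{line:reset N}). Within each currently-active pair of blocks, the empirical estimates $\barc$ and $\P$ concentrate, by Bernstein's inequality, around the \emph{block-averaged} true cost and transition. The discrepancy between these block-averages and the current true parameters $(c_{k(m)},P_{k(m)})$ contributes an additive drift of at most $\sum_{j\in\text{cost-block}}\norm{c_{j+1}-c_j}_\infty$ per state-action for costs, which iterates through the $H=\tilo{\Tmax}$ horizon to yield $\Tmax\Delta_cW_c$, and similarly $\B\Tmax\Delta_PW_P$ for transitions, since a transition perturbation of total $\ell_1$-mass $\delta$ changes value iterates by at most $\B\delta$. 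Summed over all blocks this reproduces the drift term in the theorem statement.

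\textbf{Optimism modulo drift.} I then show by backward induction on $h$ that, on the good concentration event,
\[
Q_h(s,a)\le c_{k(m)}(s,a)+P_{k(m),s,a}V^{\optpi_{k(m)},m}_{h+1}+\text{drift}_h-x,
\]
so that $V_1(s^m_1)\le V^{\optpi_{k(m)},m}_1(s^m_1)+(\text{total drift})-Hx$. The Bernstein bonus $b_h$ in \pref{line:bonus} is calibrated exactly so that $\hatc+\P V_{h+1}-b_h$ dominates the true Bellman backup up to the block drift; subtracting $x$ then only creates slack and never destroys optimism.

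\textbf{Termination of the $x$-search.} Next I bound the value of $x$ at which the doubling loop of \pref{line:double_search} terminates. By a MVP-style self-bounding argument, once $x$ is large enough to absorb the per-step excess of the bonus terms---i.e.\ $x=\tilo{\B SA(1/W_c+S/W_P)/H}$ plus a matching $\sqrt{}$-term---the iterates $V_h$ stay below $B/4$, so the loop terminates with $x$ at most twice this threshold. The accumulated penalty $\sum_m H_m\cdot x$ then reduces to the lower-order $M'\B SA(1/W_c+S/W_P)$ term in the theorem. This is the main technical obstacle: one has to re-run the self-bounding recursion with the additional $-x$ slack while simultaneously tracking variance contributions from the cost and transition estimates at their respective scales $W_c$ and $W_P$, and propagating the correct $\B$ dependency through each.

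\textbf{Assembling the bound.} Finally I decompose
\[
\rR_{M'}\le \sum_{m=1}^{M'}\sumhm\rbr{2b_h(s_h^m,a_h^m)+\text{drift}_h+x}+\text{martingale},
\]
apply Freedman's inequality to the martingale, and bound each bonus sum by the visit-count identities $\sum_{m,h}\Np(s_h^m,a_h^m)^{-1}=\tilo{SA}$ and $\sum_{m,h}\Np(s_h^m,a_h^m)^{-1/2}=\tilo{\sqrt{SA\cdot\text{block samples}}}$, applied block by block. The cost bonus $\sqrt{\barc\iota/\Mp}$ summed over a $W_c$-block contributes $\tilo{\sqrt{\B SAW_c}}$ (using the on-trajectory sum $\sum\barc\lesssim\B$ per interval), while the transition bonus $\sqrt{\fV(\P,V_{h+1})\iota/\Np}$ summed over a $W_P$-block contributes $\tilo{\B\sqrt{SAW_P}}$ (using $\fV(\P,V_{h+1})\le\B^2$ together with the MVP variance-tightening that transfers a factor of $\B$ out of the square root). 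Multiplying by the $M'/W_c$ and $M'/W_P$ block counts gives $M'\sqrt{\B SA/W_c}+M'\B\sqrt{SA/W_P}$, which combines under a single square root to match the leading term of the theorem; adding the drift and $x$-penalty completes the proof.
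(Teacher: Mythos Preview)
Your proposal has two genuine gaps, both concerning the technical novelties that distinguish the non-stationary analysis from the stationary MVP analysis.

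\textbf{The $x$-search termination.} Your claimed threshold $x=\tilo{\B SA(1/W_c+S/W_P)/H}$ is wrong, and the reasoning behind it (``absorb the per-step excess of the bonus terms'') is backwards: the bonus $b_h$ is \emph{subtracted} in the update, so it only makes $Q_h$ smaller and needs no absorbing. What inflates $Q_h$ above $B/4$ is the non-stationarity drift in $\hatc$ and $\P$. Redoing your own backward induction carefully gives $Q_h\le Q^{\star,m}_h+(H-h+1)(\Delta_{c,m}+4\B\Delta_{P,m}-x)$, so the loop terminates once $x\ge\Delta_{c,m}+4\B\Delta_{P,m}$; the paper formalizes this via a reference iteration $\rQ^m_h$ with $\rx_m=\Delta_{c,m}+4\B\Delta_{P,m}$ (\pref{lem:tilopt} and \pref{lem:opt Q}) and concludes $x_m\le\max\{1/(mH),\,2(\Delta_{c,m}+4\B\Delta_{P,m})\}$. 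Consequently the accumulated penalty $\sum_m H_m x_m$ is $\tilo{\sum_m(\Delta_{c,m}+\B\Delta_{P,m})H}$, i.e.\ it folds into the \emph{drift} term $(\Delta_cW_c+\B\Delta_PW_P)\Tmax$, not into the $M'\B SA(1/W_c+S/W_P)$ term. The latter arises instead from the lower-order parts of the bonus sums and the interval-restart counting (\pref{lem:sum}, \pref{lem:new interval}).

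\textbf{The variance sum.} Your appeal to ``MVP variance-tightening'' glosses over the step that is genuinely new in the non-stationary setting. To get the correct $\B$-dependence one must bound $\sum_{m,h}\fV(P^m_h,V^{\star,m}_{h+1}-V^m_{h+1})$ (it shows up both in $(\tilP^m_h-\P^m_h)V^m_{h+1}$ after splitting off $V^{\star,m}_{h+1}$, and in $\sum b^m_h$). The standard recursion (\pref{lem:sum var}) needs $V^{\star,m}_{h+1}-V^m_{h+1}\ge 0$, which \emph{fails} here because optimism only holds up to the drift. The paper's fix is to introduce constants $z^m_h=\min\{B/4,(\Delta_{c,m}+4\B\Delta_{P,m})H\}\Ind\{h\le H\}$ so that $V^{\star,m}_h+z^m_h\ge V^m_h$, then run the variance recursion on the shifted functions (constant offsets leave variances unchanged); see \pref{lem:opt-til var sum}. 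Without this shift trick, your ``block-by-block'' variance bound would pick up an extra factor of $H$ (or give the wrong sign somewhere), and the $\B\sqrt{SAW_P}$-per-block claim would not go through.
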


Thus, 
with a proper tunning of $W_c$ and $W_P$ (that is in term of $M'$), \pref{alg:MVP-SSP} ensures 
$\rR_{M'} = \tilo{(\B SA\Tmax(\Delta_c + \B^2\Delta_P))^{1/3}{M'}^{2/3}}$.
However, this does not directly imply a bound on $\rR_M$ since $M$ is a random variable (and the tunning above would depend on $M$). 
Fortunately, to resolve this it suffices to perform a doubling trick on the number of intervals, that is, first make a guess on $M$, and then double the guess whenever $M$ exceeds it.
We summarize this idea in \pref{alg:MVP-SSP-Restart}. 
Finally, combining it with \pref{alg:fha}, \pref{lem:fha}, and the simplified analysis of~\citep{chen2021improved} which is able to bound the total number of intervals $M$ in terms of the total number of episodes $K$ (\pref{lem:bound M}), we obtain the following result (all proofs are deferred to \pref{app:subopt}).

\begin{theorem}
	\label{thm:any episode}
	With probability at least $1-22\delta$, applying \pref{alg:fha} with $\frA$ being \pref{alg:MVP-SSP-Restart} ensures $R_{K'}=\tilo{(\B SA\Tmax(\Delta_c + \B^2\Delta_P))^{1/3}{K'}^{2/3}}$ (ignoring lower order terms) for any $K'\leq K$.
\end{theorem}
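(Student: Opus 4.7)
The plan is to compose \pref{lem:fha}, \pref{thm:mvp}, and \pref{lem:bound M}, handling the unknown $M$ via a doubling schedule on the interval count (that is the whole purpose of \pref{alg:MVP-SSP-Restart}). By \pref{lem:fha} it suffices to bound $\rR_{M'}$, where $M'$ is the number of intervals completed by the end of episode $K'$, and then to relate $M'$ to $K'$.

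First I would fix the window sizes by balancing the two summands in \pref{thm:mvp}. Matching $M'\sqrt{\B SA/W_c}$ against $\Delta_cW_c\Tmax$ yields $W_c \asymp \bigl(M'\sqrt{\B SA}/(\Delta_c\Tmax)\bigr)^{2/3}$, and matching $M'\sqrt{\B^2 SA/W_P}$ against $\B\Delta_PW_P\Tmax$ yields $W_P \asymp \bigl(M'\sqrt{SA}/(\Delta_P\Tmax)\bigr)^{2/3}$. Plugged in, both dominant pieces become $\tilo{(\B SA\Tmax(\Delta_c+\B^2\Delta_P))^{1/3}{M'}^{2/3}}$, while the residual terms $M'\B SA(1/W_c + S/W_P)$ scale only as ${M'}^{1/3}$ and are lower order in $M'$.

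Because $M'$ is random, \pref{alg:MVP-SSP-Restart} uses a doubling guess $N_j=2^{j-1}$: at the start of phase $j$ it wipes all statistics, restarts \pref{alg:MVP-SSP} with $W_c^{(j)}, W_P^{(j)}$ tuned as above using $N_j$ in place of $M'$, and ends the phase as soon as the interval counter exceeds $N_j$. Within phase $j$ at most $N_j$ intervals occur, so \pref{thm:mvp} contributes regret $\tilo{(\B SA\Tmax(\Delta_c+\B^2\Delta_P))^{1/3}N_j^{2/3}}$ to $\rR$. Summing the geometric series $\sum_{j}N_j^{2/3}$ up through the phase containing interval $M'$ gives $\rR_{M'}=\tilo{(\B SA\Tmax(\Delta_c+\B^2\Delta_P))^{1/3}{M'}^{2/3}}$. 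Invoking \pref{lem:bound M} to replace $M'$ by $\tilo{K'}$ and \pref{lem:fha} to get $R_{K'}\leq \rR_{M'}+\B$ then completes the target bound.

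The main technical care is in preserving the $1-22\delta$ guarantee of \pref{thm:mvp} uniformly across all phases and all prefixes $K'\leq K$. The number of phases is $O(\log M)$, so a union bound rescales $\delta$ only by a logarithmic factor, which is absorbed into $\tilo{\cdot}$; one must also note that the tuning itself depends on $\Delta_c,\Delta_P$ (assumed known here) and that setting $W_c,W_P\to M'$ when $\Delta_c$ or $\Delta_P$ vanishes gives the correct degenerate bound. A secondary bookkeeping issue is checking that the within-phase restarts triggered by \pref{line:reset M} and \pref{line:reset N} of \pref{alg:MVP-SSP}, which occur every $W_c^{(j)}$ and $W_P^{(j)}$ intervals, are consistent with the tuning used in the invocation of \pref{thm:mvp} for that phase. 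This is the main bookkeeping obstacle; the algebraic balancing itself is routine once \pref{thm:mvp} is available.
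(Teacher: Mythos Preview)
Your proposal is correct and follows essentially the same route as the paper: tune $W_c,W_P$ by balancing the two groups of terms in \pref{thm:mvp}, apply the doubling schedule of \pref{alg:MVP-SSP-Restart} to get an anytime bound $\rR_{M'}=\tilo{(\B SA\Tmax(\Delta_c+\B^2\Delta_P))^{1/3}{M'}^{2/3}}$ (this is \pref{thm:double} in the paper), then invoke \pref{lem:bound M} and the anytime version of \pref{lem:fha} (namely \pref{lem:bound reg}) to pass from $M'$ to $K'$. Two minor remarks: the concentration lemmas underlying \pref{thm:mvp} are already anytime over the full trajectory, so no explicit union bound over phases is needed; and since the statement is for every $K'\leq K$, you should cite \pref{lem:bound reg} rather than \pref{lem:fha}.
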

Note that \pref{thm:any episode} actually provides an anytime regret guarantee (that is, holds for any $K' \leq K$), which is important in following sections.
Compared to our lower bound in \pref{thm:lb}, the only sub-optimality is in replacing $\T$ with the larger quantity $\Tmax$.
Despite its sub-optimality for SSP, however, as a side result our algorithm in fact implies the first model-based finite-horizon algorithm that achieves the optimal dependency on $SA$ and matches the minimax lower bound of~\citep{mao2020model}.
Specifically, in previous works, the optimal $SA$ dependency is only achievable by model-free algorithms, which unfortunately have sub-optimal dependency on the horizon by the current analysis (see \citep[Lemma 10]{mao2020model}).
On the other hand, existing model-based algorithms for finite state-action space all follow the idea of extended value iteration, which gives sub-optimal dependency on $S$ and also brings difficulty in incorporating entry-wise Bernstein confidence sets.\footnote{Note that the transition non-stationarity $\Delta_P$ is defined via $L_1$ norm. Thus, naively applying entry-wise confidence widening to Bernstein confidence sets introduces extra dependency on $S$.}
Our approach, however, resolves all these issues. See \pref{app:finite horizon} for more discussions.

\paragraph{Technical Highlights} The key step of our proof for \pref{thm:mvp} is to bound the term $\summp\sumhm\fV(P^m_{s^m_h, a^m_h}, V^{\star,m}_{h+1} - V^m_{h+1})$, where $V^m_{h+1}$ is the value of $V_{h+1}$ at the beginning of interval $m$, and $V^{\star,m}_{h+1}$ is the optimal value function of $\rcalM$ in interval $m$ (formally defined in \pref{app:pre}).
The standard analysis on bounding this term requires $V^{\star,m}_{h+1}(s) - V^m_{h+1}(s) \geq 0$, which is only true in a stationary environment due to optimism.
To handle this in non-stationarity environments, we carefully choose a set of constants $\{z^m_h\}$ so that $V^{\star,m}_{h+1}(s) + z^m_h - V^m_{h+1}(s) \geq 0$ (\pref{lem:opt Q}), and then apply similar analysis on $\summp\sumhm\fV(P^m_{s^m_h, a^m_h}, V^{\star,m}_{h+1} - V^m_{h+1})=\summp\sumhm\fV(P^m_{s^m_h, a^m_h}, V^{\star,m}_{h+1} + z^m_h - V^m_{h+1})$.
See \pref{lem:opt-til var sum} for more details.

\begin{algorithm}[t]
	\caption{Non-Stationary MVP with a Doubling Trick}
	\label{alg:MVP-SSP-Restart}
	\For{$n=1,2,\ldots$}{
		Initialize an instance of \pref{alg:MVP-SSP} with $W_c=\ceil{(\B SA)^{1/3}(2^{n-1}/(\Delta_c\Tmax))^{2/3}}$ and $W_P=\ceil{(SA)^{1/3}(2^{n-1}/(\Delta_P\Tmax))^{2/3}}$, and execute it in intervals $m=2^{n-1},\ldots,2^n-1$.
	}
\end{algorithm}

\section{A Minimax Optimal Algorithm}
\label{sec:opt}

\setcounter{AlgoLine}{0}
\begin{algorithm}[t]
	\caption{MVP with Non-Stationarity Tests}
	\label{alg:mvp-test}
	\SetKwFunction{update}{Update}
	\SetKwFunction{resetc}{ResetC}
	\SetKwFunction{resetp}{ResetP}
	\SetKwProg{proc}{Procedure}{}{}
	
	
	\textbf{Parameters:} window sizes $W_c$ and $W_P$, coefficients $c_1$, $c_2$, sample probability $p$, and failure probability $\delta$.
	
	\textbf{Initialize:} \resetc{}, \resetp{}, \update{$1$}.
	
	\For{$m=1,\ldots,M$}{	
		\For{$h=1,\ldots,H$}{
			Play action $a^m_h\leftarrow\argmin_a\cQ_h(s^m_h, a)$, receive cost $c^m_h$ and next state $s^m_{h+1}$.
			
			$\C(s^m_h, a^m_h)\overset{+}{\leftarrow} c^m_h$, $\M(s^m_h, a^m_h)\overset{+}{\leftarrow}1$, $\N(s^m_h, a^m_h)\overset{+}{\leftarrow}1$, $\N(s^m_h, a^m_h, s^m_{h+1})\overset{+}{\leftarrow}1$.
			
			\nl $\hatchi^c\overset{+}{\leftarrow}c^m_h - \hatc(s^m_h, a^m_h)$, $\hatchi^P\overset{+}{\leftarrow}\cV_{h+1}(s^m_{h+1}) - \P_{s^m_h, a^m_h}\cV_{h+1}$. \label{line:chi_estimate}
			
			\If{$s^m_{h+1}=g$ or $\M(s^m_h, a^m_h)=2^l$ or $\N(s^m_h, a^m_h)=2^l$ for some integer $l\geq 0$}{
				\textbf{break} (which start a new interval).
			}
		}
		
		\nl \lIf{$\hatchi^c>\chi^c_m$ (defined in \pref{lem:chic})}{\resetc{}. \textbf{(Test 1)}}\label{line:test 1}
		
		\nl \lIf{$\hatchi^P > \chi^P_m$ (defined in \pref{lem:chip})}{\resetc{} and \resetp{}. \textbf{(Test 2)}}\label{line:test 2}
		
		\nl \lIf{$\nu^c=W_c$}{\resetc{}.}\label{line:period c}
		
		\nl \lIf{$\nu^P=W_P$}{\resetc{} and \resetp{}.}\label{line:period p}
	
		$\nu^c\overset{+}{\leftarrow} 1$, $\nu^P\overset{+}{\leftarrow} 1$, \update{$m+1$}.
	
		\nl \If{$\norm{\cV_h}_{\infty}>B/2$ for some $h$ \textbf{(Test 3)}}{ \label{line:test 3}
			\resetc{}, with probability $p$ execute \resetp{},
		     and \update{$m+1$}.
		}
	}
	
	\proc{\update{$m$}}{
		$\cV_{H+1}(s)\leftarrow2\B\Ind\{s\neq g\}$, $\cV_h(g)\leftarrow0$ for all $h\leq H$, and $\iota\leftarrow 2^{11}\cdot\ln\big(\frac{2SAHKm}{\delta}\big)$.
		
		\nl $\rho^c\leftarrow \min\{\frac{c_1}{\sqrt{\nu^c}}, \frac{1}{2^8H}\}$, $\rho^P\leftarrow\min\{\frac{c_2}{\sqrt{\nu^P}}, \frac{1}{2^8H}\}$, $\eta\leftarrow \rho^c + B\rho^P$. \label{line:correct}
		
		\For{all $(s, a)$}{
			$\N^+(s, a)\leftarrow\max\{1, \N(s, a)\}$, $\M^+(s, a)\leftarrow\max\{1, \M(s, a)\}$, $\barc(s, a)\leftarrow \frac{\C(s, a)}{\M^+(s, a)}$,  
			
			$\P_{s, a}(\cdot)\leftarrow\frac{\N(s, a, \cdot)}{\N^+(s, a)}$, 
			$\hatc(s, a)\leftarrow \max\Big\{0, \barc(s, a) - \sqrt{\frac{\barc(s, a)\iota}{\M^+(s, a)}} - \frac{\iota}{\M^+(s, a)}\Big\}$, 
			
			\nl $\cc(s, a)\leftarrow\hatc(s, a) + 8\eta$.\label{line:compute}
		}
		
		\For{$h=H,\ldots,1$}{
		    $b_h(s, a)\leftarrow \max\cbr{7\sqrt{\frac{\fV(\P_{s, a}, \cV_{h+1})\iota}{\Np(s, a)}}, \frac{49B\sqrt{S}\iota}{\Np(s, a)}}$ for all $(s, a)$.
		
			$\cQ_h(s, a)=\max\{0, \cc(s, a) + \P_{s, a}\cV_{h+1} - b_h(s, a)\}$ all $(s, a)$.
			
			$\cV_h(s)=\argmin_a\cQ_h(s, a)$ for all $s$.
		}
	}
	
	\proc{\resetc{}}{
		$\nu^c\leftarrow 1$, $\hatchi^c\leftarrow 0$, $\C(s, a)\leftarrow 0$, $\M(s, a)\leftarrow 0$ for all $(s, a)$.
	}
	\proc{\resetp{}}{
		 $\nu^P\leftarrow 1$, $\hatchi^P\leftarrow 0$, $\N(s, a, s')\leftarrow 0$, $\N(s, a)\leftarrow 0$ for all $(s, a, s')$.
	}
\end{algorithm}

In this section, we present an improved algorithm that achieves the minimax optimal regret bound up to logarithmic terms, starting with a refined version of \pref{alg:MVP-SSP} shown in \pref{alg:mvp-test}.
Below, we focus on describing the new elements introduced in \pref{alg:mvp-test} (that is, Lines~\ref{line:chi_estimate}-\ref{line:test 2} and \ref{line:test 3}-\ref{line:compute}).\footnote{\pref{line:period c} and \pref{line:period p}, althogh written in a different form, are similar to \pref{line:reset M} and \pref{line:reset N} of \pref{alg:MVP-SSP}.}

The main challenge in replacing $\Tmax$ with $\T$ is that the regret due to non-stationarity accumulates along the learner's trajectory, which can be as large as $\bigo{(\Delta_c+\B\Delta_P)H}$ since the horizon is $H$ (recall $H=\tilo{\Tmax}$).
Moreover, bounding the number of steps needed for the learner's policy to reach the goal is highly non-trivial due to the changing transitions.
Our main idea to address these issues is to incorporate a correction term $\eta$ (computed in \pref{line:correct}) into the estimated cost (\pref{line:compute}) to penalize policies that take too long to reach the goal.
This correction term is set to be an upper bound of the learner's average regret per interval (defined through $\rho^c$ and $\rho^P$ in \pref{line:correct}).
It introduces the effect of canceling the non-stationarity along the learner's trajectory when it is not too large.
When the non-stationarity is large, on the other hand, we detect it through two non-stationary tests (\pref{line:test 1} and \pref{line:test 2}), and reset the knowledge of the environment (more details to follow).

However, this correction leads to one issue: we cannot perform adaptive confidence widening (that is, the $-x$ bias) anymore as it would cancel out the correction term.
To address this, we introduce another test (\pref{line:test 3}, \textbf{Test 3}) to directly check whether the magnitude of the estimated value function is bounded as desired.
If not, we reset again since that is also an indication of large non-stationarity.


We now provide some intuitions on the design of \textbf{Test 1} and \textbf{Test 2}.
First, one can show that the two quantities $\hatchi^c$ and $\hatchi^P$ we maintain in \pref{line:chi_estimate} are such that their sum is roughly an upper bound on the estimated accumulated regret. 
So directly checking whether $\hatchi^c+\hatchi^P$ is too large would be similar to the second test of the MASTER algorithm~\citep{wei2021non}.
Here, however, we again break it into two tests where \textbf{Test 1} only guards the non-stationarity in cost, and \textbf{Test 2} mainly guards the non-stationarity in transition.
Note that \textbf{Test 2} also involves cost information through $\cV$,
but 
our observation is that we can still achieve the desired regret bound as long as the ratio of the number of resets caused by procedures ResetC() and ResetP() is of order $\tilo{\B}$.
This inspires us to reset both the cost and the transition estimation when \textbf{Test 2} fails, but reset the  transition estimation only with some probability $p$ (eventually set to $1/\B$) when \textbf{Test 3} fails.


For analysis, we first establish a regret guarantee of \pref{alg:mvp-test} in an ideal situation where the first state of each interval is always $\sinit$. (Proofs of this section are deferred to \pref{app:opt}.)


\begin{theorem}
	\label{thm:mvp-test}
	Let $c_1=\sqrt{\B SA}/\T$, $c_2=\sqrt{SA}/\T$, $W_c=\ceil{(\B SA)^{1/3}(K/(\Delta_c\T))^{2/3}}$, $W_P=\ceil{(SA)^{1/3}(K/(\Delta_P\T))^{2/3}}$, and $p=1/\B$.
	Suppose $s^m_1=\sinit$ for all $m\leq K$, 
	then \pref{alg:mvp-test} ensures $\rR_K =\tilo{(\B SA\T(\Delta_c + \B^2\Delta_P))^{1/3}K^{2/3}}$ (ignoring lower order terms) with probability at least $1-40\delta$.
\end{theorem}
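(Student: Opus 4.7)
The plan is to adapt the analysis of \pref{thm:mvp} for \pref{alg:MVP-SSP} while redoing the three places where \pref{alg:mvp-test} differs: the replacement of the adaptive widening $x$ by the deterministic correction $\eta=\rho^c+B\rho^P$, the per-interval non-stationarity \textbf{Test 1}/\textbf{Test 2}, and the magnitude check \textbf{Test 3}. Starting from the per-step decomposition
\[
\cQ_h(s,a)-\optQ^m_h(s,a)=\bigl(\hatc(s,a)+8\eta-c^m(s,a)\bigr)+(\P_{s,a}-P^m_{s,a})\cV_{h+1}+P^m_{s,a}(\cV_{h+1}-\optV^m_{h+1})-b_h(s,a),
\]
I would carry through the same telescoping and Freedman-style concentration arguments used in the proof of \pref{thm:mvp} to obtain a Bernstein leading contribution of order $\tilo{K\sqrt{\B SA/W_c}+K\sqrt{\B^2 SA/W_P}}$. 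The assumption $s^m_1=\sinit$ for all $m\leq K$ is what lets us bound the total number of intervals by $K$ plus a lower-order contribution from the doubling-visit restart in \pref{line:request_new_interval}; \textbf{Test 3} further ensures $\norm{\cV_h}_\infty\leq B/2$ throughout the execution, so that interval terminations caused by $h=H$ only contribute a lower-order term via a standard Markov-inequality argument on hitting times.

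The crucial new ingredient that turns $\Tmax$ into $\T$ is the correction: the modified cost $\cc=\hatc+8\eta$ makes value iteration prefer short-horizon policies, so by optimism of $\cQ$ under the modified cost at $\sinit$,
\[
\cV^m_1(\sinit)\leq \optV^m_1(\sinit)+8\eta\,T^{\optpi_m}_m(\sinit)\leq \B+8\eta\,\T,
\]
and a symmetric inequality bounds the learner's realized cost by $\optV^m_1(\sinit)+\order(\eta\,\T)$ plus bonuses and martingales. Because $\eta\leq \rho^c+B\rho^P=\tilo{c_1/\sqrt{\nu^c}+Bc_2/\sqrt{\nu^P}}$, summing $\eta\,\T$ across all $K$ intervals and averaging over each window produces a contribution of $\tilo{K\T(c_1/\sqrt{W_c}+Bc_2/\sqrt{W_P})}$ which, with $c_1=\sqrt{\B SA}/\T$ and $c_2=\sqrt{SA}/\T$, matches the Bernstein term exactly. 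The periodic resets of \pref{line:period c} and \pref{line:period p} then contribute at most $\tilo{(\Delta_cW_c+\B\Delta_PW_P)\T}$ from in-window drift, and Freedman-style concentration on the martingales $\hatchi^c$, $\hatchi^P$ ensures that \textbf{Test 1} and \textbf{Test 2} fire only when the corresponding drift within the current window is already non-trivial, so the count of triggered resets is absorbed into the same term after summing $\Delta_c$ and $\Delta_P$ against the thresholds.

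The step I expect to be the main obstacle is \textbf{Test 3}. Unlike \pref{alg:MVP-SSP}, where the adaptive widening $x$ provides an a priori bound on $\norm{\cV_h}_\infty$, here we must verify (i) that in the absence of non-stationarity \textbf{Test 3} never fires, so each firing can be charged to real drift, and (ii) that restarting the transition model only with probability $p=1/\B$ when \textbf{Test 3} fires still keeps $\P$ close enough to the true $P^m$ in aggregate. Point (ii) is exactly what preserves the asymmetry between $\Delta_c$ and $\B^2\Delta_P$ in the final bound: the number of triggered transition resets is smaller than the number of cost resets by a factor of $\B$, which absorbs the extra $\B$ that would otherwise appear in the transition term. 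Combining the Bernstein, correction, periodic-reset, and test-triggered contributions, and balancing them through the stated $W_c,W_P,c_1,c_2,p$, yields the advertised $\tilo{(\B SA\T(\Delta_c+\B^2\Delta_P))^{1/3}K^{2/3}}$ bound. The $1-40\delta$ probability comes from a union bound over the Bernstein concentrations used for $\hatc$ and the bonus $b_h$, the Freedman bounds on $\hatchi^c$ and $\hatchi^P$, the hitting-time control of the learner's policy via the correction, and the random transition reset within \textbf{Test 3}.
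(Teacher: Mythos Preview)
Your proposal captures several correct intuitions—the role of the correction $\eta$, the periodic-reset drift term $(W_c\Delta_c+\B W_P\Delta_P)\T$, and the $p=1/\B$ randomization in \textbf{Test 3}—but there is a genuine gap in how you obtain $\T$-dependence rather than $H$-dependence on the drift. You propose to ``carry through the same telescoping \ldots\ used in the proof of \pref{thm:mvp}'', but that telescoping accumulates non-stationarity \emph{per step along the learner's trajectory}, producing $\summp(\Delta_{c,m}+\B\Delta_{P,m})H$ with $H=\tilO{\Tmax}$, which is precisely what \pref{alg:mvp-test} is designed to avoid. The paper's argument instead decomposes $\rR_K=\summk(C^m-\cV^m_1(s^m_1))+\summk(\cV^m_1(s^m_1)-\cV^{\optpi,m}_1(s^m_1))+8\T\summk\eta_m$, where $\cV^{\optpi,m}$ is the value of $\optpi_{k(m)}$ under the \emph{corrected} cost $c^m+8\eta_m$. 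The first sum is bounded \emph{directly by \textbf{Test 1} and \textbf{Test 2}}: on each segment between resets, $\hatchi^c,\hatchi^P$ stay below the thresholds $\chi^c_m,\chi^P_m$ (\pref{lem:chic}, \pref{lem:chip}), and because those thresholds are computable and contain only $\sum\rho^c_m,\sum\eta_m$ rather than the true drift, the residuals are canceled by the $-8\eta_m$ appearing in $\sum_h(b^m_h-8\eta_m)$. The tests are not merely reset triggers; they are what replaces the per-step drift accumulation in the realized-cost term.

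The second gap is your appeal to ``optimism of $\cQ$ under the modified cost'' to get $\cV^m_1(\sinit)\leq\optV^m_1(\sinit)+8\eta\T$. In the non-stationary setting $\hatc^m,\P^m$ are built from data with different $c^{m'},P^{m'}$, so optimism fails by a drift term; the question is whether that failure scales with $H$ or with $\T$. The paper's \pref{lem:Q diff} shows $\cQ^m_h(s,a)\leq\cQ^{\optpi,m}_h(s,a)+(\Delta_{c,m}+B\Delta_{P,m})T^{\optpi,m}_h(s,a)$, with the error proportional to the \emph{optimal policy's hitting time} $T^{\optpi,m}_h$ rather than $H-h+1$. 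Proving this requires the monotonicity of the MVP bonus (\pref{lem:mvp}): the inductive step must push the inequality $\cV^m_{h+1}\leq\cV^{\optpi,m}_{h+1}+x^m_{h+1}$ through $\P_{s,a}(\cdot)-b^m(s,a,\cdot)$, which is nontrivial because $b^m$ depends on its argument via a variance. This lemma (flagged as a key step in the paper's proof sketch) is what makes the second sum contribute $(\Delta_{c,m}+B\Delta_{P,m})\T$ and hence $(W_c\Delta_c+\B W_P\Delta_P)\T$, and you have not identified it. Finally, your implicit identification $L_c\approx K/W_c$, $L_P\approx K/W_P$ ignores test-triggered resets; the paper needs a separate argument (\pref{lem:bound L}) combining the contrapositives of \pref{lem:chic}, \pref{lem:chip}, \pref{lem:bound V} with the counting \pref{lem:bound l} to show these extra resets do not inflate $L_c,L_P$ beyond the required order.
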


\begin{algorithm}[t]
	\caption{A Two-Phase Variant of \pref{alg:fha}}
	\label{alg:two phase}
	\textbf{Initialize:} Phase $1$ algorithm instance $\frA_1$ and Phase $2$ algorithm instance $\frA_2$.
	
	Execute \pref{alg:fha} with $\frA=\frA_1$ for every first interval of an episode, and $\frA=\frA_2$ otherwise.
	
%
\end{algorithm}

The reason that we only analyze this ideal case is that, if the initial state is not $\sinit$, then even the optimal policy does not guarantee $\T$ hitting time by definition.
This also inspires us to eventually deploy a two-phase algorithm slightly modifying \pref{alg:fha}: feed the first interval of each episode into an instance of \pref{alg:mvp-test}, and the rest of intervals into an instance of \pref{alg:MVP-SSP-Restart} (see \pref{alg:two phase}).
Thanks to the large terminal cost, we are able to show that the regret in the second phase is upper bounded by a constant, leading to the following final result.

\begin{theorem}
	\label{thm:two phase}
	\pref{alg:two phase} with $\frA_1$ being \pref{alg:mvp-test} and $\frA_2$ being \pref{alg:MVP-SSP-Restart} ensures $R_K=\tilo{(\B SA\T(\Delta_c + \B^2\Delta_P))^{1/3}K^{2/3}}$ (ignoring lower order terms) with probability at least $1-64\delta$.
\end{theorem}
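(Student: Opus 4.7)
The plan is to apply \pref{lem:fha} to reduce $R_K$ to $\rR_M$, then decompose $\rR_M = \rR^{(1)} + \rR^{(2)}$ according to whether each interval is the first of its episode (fed to $\frA_1$, i.e., \pref{alg:mvp-test}) or not (fed to $\frA_2$, i.e., \pref{alg:MVP-SSP-Restart}), and bound each contribution separately.

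For Phase~1, the subsequence of intervals routed to $\frA_1$ consists of exactly $K$ intervals all starting at $\sinit$ by construction of \pref{alg:two phase}, and the non-stationarity of $\{c_k\},\{P_k\}$ restricted to this subsequence is at most the global $\Delta_c,\Delta_P$ (any subsequence has no larger total variation). Hence \pref{thm:mvp-test} applies verbatim to yield $\rR^{(1)} = \tilo{(\B SA\T(\Delta_c+\B^2\Delta_P))^{1/3}K^{2/3}}$ with probability at least $1-40\delta$.

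For Phase~2, the key observation, made possible by the large terminal cost $c_f(s) = 2\B\Ind\{s\ne g\}$, is that whenever a non-first interval $m_i$ in episode $k$ exists, its predecessor $m_{i-1}$ must have paid the full terminal cost $c_f^{(m_{i-1})} = 2\B$, while by the choice $H = 4\Tmax\ln(8K)$ and a standard bootstrapping/Markov argument the optimal value function at $s^{m_i}_1$ satisfies $V^{\optpi_k,m_i}_1(s^{m_i}_1) \le \optV_k(s^{m_i}_1) + O(\B/K) \le \B + O(\B/K)$. I would insert these two observations into the telescoping that underlies \pref{lem:fha} rather than use the lemma as a black box: decomposing $R_K = \rR^{(1)} + \rR^{(2)} + \sum_k \Delta_k$ with $\Delta_k := \sum_i V^{\optpi_k,m_i}_1(s^{m_i}_1) - \sum_i c_f^{(m_i)} - \optV_k(\sinit)$, the summand $V^{\optpi_k,m_i}_1(s^{m_i}_1) - c_f^{(m_{i-1})} \le -\B + O(\B/K)$ of $\Delta_k$ for each $i \ge 2$ pairs cleanly with the corresponding per-interval term $\sum_h c^{m_i}_h + c_f^{(m_i)} - V^{\optpi_k,m_i}_1(s^{m_i}_1)$ of $\rR^{(2)}$, producing a $\B$ "refund" per non-first interval. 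After cancellation plus a Bernstein-type concentration on the actual costs $\sum_h c^{m_i}_h$ around their conditional means, the net Phase~2 contribution to $R_K$ collapses to an additive $O(\B)$ that does not grow with $K$.

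The main obstacle is this bookkeeping step, because the refund provided by $\Delta_k$ only cancels the \emph{expected} Phase~2 regret, so a concentration argument is needed to bound deviations uniformly over the random number of intervals $M$; this is exactly where an anytime control on $M$ (via \pref{lem:bound M} and the anytime guarantee of \pref{thm:any episode} on $\frA_2$) is needed rather than a fixed-horizon one. A union bound over the $40\delta$ failure event of \pref{thm:mvp-test}, the $22\delta$ failure event for $\frA_2$'s anytime guarantee, and an additional $2\delta$ for the hitting-time concentration behind the bound on $V^{\optpi_k,m_i}_1$ yields probability at least $1-64\delta$, and assembling $R_K \le \rR^{(1)} + O(\B) + \B$ delivers the stated bound.
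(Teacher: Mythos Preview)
Your approach is essentially the paper's, sharing the key insight that the terminal cost $2\B$ provides a credit that absorbs the Phase~2 contribution. The paper's bookkeeping is a bit more direct: it splits $R_K$ (without passing through $\rR_M$) into a Phase~1 piece $\sum_k\big(\sum_h c^{m^k_1}_h + c_f(s^{m^k_1}_{H_{m^k_1}+1}) - \optV_k(\sinit)\big)$, bounded via \pref{thm:mvp-test}, and a Phase~2 piece $\sum_k\big(\sum_{m\in\calI_k\setminus\{m^k_1\}}\sum_h c^m_h - c_f(s^{m^k_1}_{H_{m^k_1}+1})\big)$. Writing $K_f$ for the number of episodes with more than one interval, the paper then views Phase~2 as a fresh SSP problem with $K_f$ episodes (each starting at $s^{m^k_2}_1$) and applies $\frA_2$'s \emph{SSP-level} anytime guarantee (\pref{thm:any episode}) to get $\le R^2(K_f)-\B K_f$, which is a $K$-independent constant since $R^2$ is sublinear.

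Two small corrections to your write-up. First, the credit that actually does the work is $\B$ per \emph{episode} with a Phase~2 portion (i.e., $\B K_f$), not $\B$ per non-first interval; your per-interval framing can be made to work too, but then the natural tool is the interval-level guarantee \pref{thm:double} rather than \pref{thm:any episode}. Second, no separate ``Bernstein-type concentration on the actual costs'' is needed or sufficient on its own: the Phase~2 costs are controlled entirely by $\frA_2$'s regret bound (concentration alone cannot bound $\sum_h c^{m_i}_h$, which can be as large as $H$ per interval under a bad policy), and the resulting lower-order constant depends on all model parameters ($S,A,\Tmax,\Delta_c,\Delta_P$), not just $\B$.
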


Ignoring logarithmic and lower-order terms, our bound is minimax optimal.
Also note that the bound is sub-linear (in $K$) as long as $\Delta_c$ and $\Delta_P$ are sub-linear (that is, not the worst case). 

\section{Learning without Knowing $\Delta_c$ and $\Delta_P$}
\label{sec:unknown}

To handle unknown non-stationarity, we combine our algorithmic ideas in previous sections with a new variant of the MASTER algorithm \citep{wei2021non}.
The original MASTER algorithm is a blackbox reduction that takes a base algorithm for (near) stationary environments as input, and turns it into another algorithm for non-stationarity environments.
For many problems (including multi-armed bandits, contextual bandits, linear bandits, finite-horizon or infinite-horizon MDPs), \citet{wei2021non} show that the final algorithm achieves optimal regret without knowing the non-stationarity. 
While powerful, MASTER can not be directly used in our problem to achieve the same strong result.
As we will discuss, some modification is needed, and even with this modification, some extra difficulty unique to SSP still prevents us from eventually obtaining the optimal regret.

Specifically, in order to obtain $\T$ dependency, we again follow the two-phase procedure \pref{alg:two phase} and instantiate a MASTER algorithm with a different base algorithm in each phase.
In Phase $1$, since it is unclear how to update cost and transition estimation independently under the framework of MASTER, we adopt a simpler version of \pref{alg:mvp-test} as the base algorithm, which performs synchronized cost and transition estimation and a simpler non-stationarity test; see \pref{alg:mvp-base} (all algorithms/proofs in this section are deferred to \pref{app:unknown} due to space limit).
In Phase $2$, we use \pref{alg:MVP-SSP} as the base algorithm.

Our version of the MASTER algorithm (\pref{alg:master}) requires a different \textbf{Test 1} compared to that in~\citep{wei2021non}, which is essential due to the presence of the correction terms in \pref{alg:mvp-base}.
Specifically, it no longer makes sense to simply maintain the maximum of estimated value functions over the past intervals, since the cost function combined with the correction term is changing adaptively, and a large correction term will interfere with the detection of a small amount of non-stationarity.
Our key observation is that for a base algorithm scheduled on a given range by MASTER, the average of its correction terms within the same range is of the desired order that does not interfere with non-stationarity detection. 
This inspires us to maintain multiple running averages of the estimated value functions with different scales (see \pref{line:U} of \pref{alg:master}).
Then, to detect a certain level of non-stationarity, we refer to the running average with the matching scale (see \pref{line:master test 1}).

We show that the algorithm described above achieves the following regret guarantee without knowledge of the non-stationarity.

\begin{theorem}
	\label{thm:unknown}
	Let $\frA_1$ be an instance of \pref{alg:master} with \pref{alg:mvp-base} as the base algorithm and $\frA_2$ be an instance of \pref{alg:master} with \pref{alg:MVP-SSP} as the base algorithm. Then \pref{alg:two phase} with $\frA_1$ and $\frA_2$ ensures with high probability (ignoring lower order terms):
	\begin{align*}
		R_K = \tilO{ \min\cbr{\B S\sqrt{ALK}, \B S\sqrt{AK} + (\B^2S^2A(\Delta_c+\B\Delta_P)\T)^{1/3}{K}^{2/3}} },
	\end{align*}
where $L=1+\sum_{k=1}^{K-1}\Ind\{P_{k+1}\neq P_k \;\text{or}\; c_{k+1}\neq c_k\}$ is the number changes of the environment (plus one).
Moreover, this is achieved without the knowledge of $\Delta_c$, $\Delta_P$, or $L$.
\end{theorem}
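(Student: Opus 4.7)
The plan is to analyze the two-phase procedure of \pref{alg:two phase} and then combine. For Phase 2 (all intervals of an episode after the first), the argument from \pref{thm:two phase} carries over verbatim: the large terminal cost $c_f = 2\B$ together with the $\tilo{\Tmax}$ horizon guarantees that the total Phase 2 regret is bounded by a constant and absorbed into lower-order terms. The substantive analysis is for Phase 1, where each first-interval is routed to a MASTER instance $\frA_1$ scheduling copies of \pref{alg:mvp-base}, a simplification of \pref{alg:mvp-test} that uses a single synchronized window and a single non-stationarity test. For this base algorithm, a stationary-style guarantee of order $\tilo{\B S\sqrt{AN}}$ regret over an $N$-interval window (plus a correction overhead proportional to the summed $\eta$) follows from a straightforward adaptation of the proof of \pref{thm:mvp}.

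First I would make precise how \pref{alg:master} schedules base-algorithm instances at geometric lengths and how the per-instance regret feeds into the MASTER analysis of \citet{wei2021non}. The only piece that cannot be taken off-the-shelf is Test 1, because \pref{alg:mvp-base} injects an adaptive correction $\eta = \rho^c + B\rho^P$ into the cost; a single running maximum of $\cV$, compared to a fixed threshold as in the original MASTER, would both false-trigger at the start of a scheduled epoch (when $\eta$ is large) and miss moderate non-stationarity near its end (when $\eta$ has shrunk). The repair is the multi-scale running averages $U^{(j)}$ on \pref{line:U}: averaging $\cV$ together with its implicit $\eta$ component over a window of length $2^j$ leaves a residual of order $1/\sqrt{2^j}$ from the correction, exactly matching the fluctuation threshold appropriate for detecting non-stationarity of that scale. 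I would formalize this via (i) a Freedman-style concentration showing that $U^{(j)}$ tracks the window-averaged value function up to the matching scale, and (ii) a martingale-difference argument using the anytime regret guarantee of the base algorithm to show that any true non-stationarity of size $\Delta$ triggers Test 1 on \pref{line:master test 1} at the scale $2^j \asymp 1/\Delta^2$.

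With the test certified, the rest of the proof is standard MASTER bookkeeping. In the piecewise-stationary regime the number of base-instance restarts is at most $L$, and summing the $\tilo{\B S\sqrt{AN}}$ guarantees across epochs of total length $K$ via Cauchy--Schwarz yields the $\tilo{\B S\sqrt{ALK}}$ branch. In the total-variation regime, the usual MASTER balancing between per-epoch stationary regret and within-epoch accumulated non-stationarity gives the $\tilo{(\B^2 S^2 A(\Delta_c+\B\Delta_P)\T)^{1/3} K^{2/3}}$ branch, with the correction overhead contributing the additive $\tilo{\B S\sqrt{AK}}$. Taking the smaller of the two (which MASTER selects automatically by monitoring performance) recovers the stated minimum. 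The main obstacle is the multi-scale Test 1 analysis: proving no false positives in near-stationary epochs while simultaneously guaranteeing that the correct scale fires once non-stationarity exceeds its threshold requires concentration bounds for $U^{(j)}$ whose variance scales both with the moving correction $\eta$ and with $B = 16\B$, a combination not handled by the MASTER blackbox and precisely what motivates the multi-scale modification we introduce.
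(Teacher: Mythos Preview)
Your proposal is essentially correct and mirrors the paper's approach: apply \pref{thm:two phase general} so that Phase~2 contributes only lower-order terms, verify that \pref{alg:mvp-base} satisfies the base-algorithm assumption needed by MASTER (the paper's \pref{assum:base}, with $R(m)=\tilo{\B S\sqrt{Am}+\B S^2A}$ and non-stationarity measure $\Delta(m)=\tilo{(\Delta_{c,[m,m+1]}+B\Delta_{P,[m,m+1]})\T}$), and then invoke the general MASTER guarantee (\pref{thm:master}) whose proof hinges precisely on the multi-scale running-average Test~1 you describe. A few minor inaccuracies to fix: \pref{alg:mvp-base} uses a single correction $\eta=\min\{\B S\sqrt{A}/(\T\sqrt{m}),1/(2^8H)\}$ rather than $\rho^c+B\rho^P$ (that form is from \pref{alg:mvp-test}), it carries two internal tests rather than one, and MASTER does not ``select'' between the $L$- and $\Delta$-bounds---both hold simultaneously, and the $\min$ is just taken in the final statement.
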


The advantage of this result compared to \pref{thm:two phase} is two-fold.
First, it adapts to different levels of non-stationarity ($\Delta_c$, $\Delta_P$, and $L$) automatically.
Second, it additionally achieves a bound of order $\tilo{\B S\sqrt{ALK}}$, which could be much better than that in \pref{thm:two phase}; for example, when $L = \bigo{1}$, the former is a $\sqrt{K}$-order bound while the latter is of order $K^{2/3}$.
As discussed in~\citep{wei2021non}, this is a unique benefit brought by the MASTER algorithm and is not achieved by any other algorithms even with the knowledge of $L$.

The disadvantage of \pref{thm:unknown}, on the other hand, is its sub-optimality in the $\B$ dependency for the $\Delta_c$-related term and the $S$ dependency for both terms.
The extra $\B$ dependency is due to the synchronized cost and transition estimation.
As mentioned, it is unclear how to update cost and transition estimation independently as we do in \pref{alg:mvp-test} under the framework of MASTER, which we leave as an important future direction.
On the other hand, the extra $S$ dependency comes from the fact that the lower-order term in the regret bound of the base algorithm affects the final regret bound (see the statement of \pref{thm:master}).
Specifically, the lower-order term is $\B S^2A$ instead of $\B SA$, which eventually leads to extra $S$ dependency.
How to remove the extra $S$ factor in the base algorithm, or eliminate the undesirable lower-order term effect brought by the MASTER algorithm, is another important future direction.

\section{Conclusion}
In this work, we develop the first set of results for dynamic regret minimization in non-stationary SSP, including a (near) minimax optimal algorithm and two others that are either simpler or advantageous in some other cases.
Besides the immediate next step such as improving our results when the non-stationarity is unknown,
our work also opens up many other possible future directions on this topic, such as extension to 
more general settings with function approximation.
It would also be interesting to study more adaptive dynamic regret bounds in this setting. For example, our $\B$ and $\T$ are defined as the maximum optimal expected cost and hitting time over all episodes,
which is undesirable if only a few episodes admit a large optimal expected cost or hitting time.
Ideally, some kind of (weighted) average would be a more reasonable measure in these cases.

\begin{ack}
	The authors thank Aviv Rosenberg and Chen-Yu Wei for many helpful discussions.
\end{ack}

\bibliographystyle{plainnat}
\bibliography{ref}

\newpage
\appendix

{\large\textbf{Contents of Appendix}}

\startcontents[section]
\printcontents[section]{ }{1}{}

\section{Preliminaries}
\label{app:pre}

\paragraph{Extra Notations} We first define (or restate) some notations used throughout the whole Appendix.
\begin{itemize}
	\item Let $\Delta_{c,[i,j]}=\sum_{\tau=i}^{j-1}\norm{c^{\tau+1}-c^{\tau}}_{\infty}$, $\Delta_{P,[i,j]}=\sum_{\tau=i}^{j-1}\max_{s, a}\norm{P_{s, a}^{\tau+1} - P_{s, a}^{\tau}}_1$.
It is straightforward to verify that $\Delta_{c,[1,M]}=\Delta_c$ and $\Delta_{P,[1,M]}=\Delta_P$.
	\item Define $\Delta_{c, m}=\Delta_{c,[i^c_m, m]}$ and $\Delta_{P,m}=\Delta_{P,[i^P_m, m]}$, where $i^c_m$ and $i^P_m$ are the first intervals after the last resets of $\M$ and $\N$ before interval $m$ respectively.
	\item For all algorithms, denote by $\hatc^m$, $\barc^m$, $\P^m_{s, a}$, $b^m_h$, $\Np_m$, $\Mp_m$, $\iota_m$ the value of $\hatc$, $\barc$, $\P_{s, a}$, $b_h$, $\Np$, $\Mp$, $\iota$ at the beginning of interval $m$, and define $\hatc^m_h=\hatc^m(s^m_h, a^m_h)$, $\barc^m_h=\barc(s^m_h, a^m_h)$, $\N^m_h=\Np(s^m_h, a^m_h)$, and $\M^m_h=\Mp(s^m_h, a^m_h)$.
We also slightly abuse the notation and write $b^m(s^m_h, a^m_h)$ as $b^m_h$ when there is no confusion.
	\item Define $\tilc^m(s, a)=\frac{1}{\Mp_m(s, a)}\sum_{m'=i^c_m}^{m-1}\sum_{h=1}^{H_{m'}}c^{m'}(s, a)\Ind\{(s^{m'}_h, a^{m'}_h)=(s, a)\}$, $\tilc^m_h=\tilc^m(s^m_h, a^m_h)$, $\tilP^m_{s, a}=\frac{1}{\Np_m(s, a)}\sum_{m'=i^P_m}^{m-1}\sum_{h=1}^{H_{m'}}P^{m'}_{s, a}\Ind\{(s^{m'}_h, a^{m'}_h)=(s, a)\}$, $\P^m_h=\P^m_{s^m_h, a^m_h}$, and $\tilP^m_h=\tilP^m_{s^m_h, a^m_h}$.
	\item Denote by $L_{c,[i,j]}$ and $L_{P, [i,j]}$ one plus the number of resets of $\M$ and $\N$ within intervals $[i, j]$ respectively, and define $L_{c,m}=L_{c,[1,m]}$, $L_{P,m}=L_{P,[1,m]}$, $L_m=L_{c,m}+L_{P,m}$ for any $m\geq 1$.
	\item Define $f^c(m)$ (or $f^P(m)$) as the earliest interval at or after interval $m$ in which the learner resets $\M$ (or $\N$).
	\item Define $\m^m_h=\Ind\{\M^m(s^m_h, a^m_h)=0\}$, $\n^m_h=\Ind\{\N^m(s^m_h, a^m_h)=0\}$, $C_{M'}=\summp\sum_{h=1}^{H_m+1}c^m_h$, and bonus function $b^m(s, a, V)=\max\cbr{7\sqrt{\frac{\fV(\P^m_{s, a}, V)\iota_m}{\Np_m(s, a)}}, \frac{49B\sqrt{S}\iota_m}{\Np_m(s, a)}}$.
	\item Define $T^{\optpi,m}_h(s)$ (or $T^{\optpi,m}_h(s, a)$) as the hitting time (reaching $g$ or layer $H+1$) of $\optpi_{k(m)}$ starting from state $s$ (or state-action pair $(s, a)$) in layer $h$ w.r.t transition $P^m$, such that $T^{\optpi,m}_h(s, a)=1 + P^m_{s, a}T^{\optpi,m}_{h+1}$, $T^{\optpi,m}_h(s)=T^{\optpi,m}_h(s, \optpi_{k(m)}(s))$, and $T^{\optpi,m}_{H+1}(s)=T^{\optpi,m}_{H+1}(s, a)=T^{\optpi,m}_h(g)=T^{\optpi,m}_h(g, a)=0$.
	\item For notational convenience, we often write $V^{\optpi_{k(m)},m}_h$ as $V^{\optpi,m}_h$.
	\item Define $(x)_+=\max\{0, x\}$.
\end{itemize}


\paragraph{Optimal Value Functions of $\rcalM$} We denote by $Q^{\star,m}_h$ and $V^{\star,m}_h$ the optimal value functions in interval $m$.
It is not hard to see that they can be defined recursively as follows: $V^{\star,m}_{H+1}=c_f$ and for $h\leq H$,
\begin{align*}
	Q^{\star,m}_h(s, a) = c^m(s, a) + P^m_{s, a}V^{\star,m}_{h+1}, \qquad V^{\star,m}_h(s) = \min_aQ^{\star,m}_h(s, a).
\end{align*}
For notational convenience, we also let $Q^{\star,m}_{H+1}(s, a)=V^{\star,m}_{H+1}(s)$ for any $(s, a)\in\SA$.

\begin{lemma}
	\label{lem:bound optQ}
	For any $m\geq1$ and $h\leq H+1$, $Q^{\star,m}_h(s, a) \leq Q^{\optpi,m}_h(s, a) \leq 4\B$.
\end{lemma}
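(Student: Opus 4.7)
The first inequality $Q^{\star,m}_h(s,a) \le Q^{\optpi,m}_h(s,a)$ is immediate from the definition of $Q^{\star,m}_h$ as the optimal finite-horizon Q-function in $\rcalM$, since $\optpi_{k(m)}$ is merely one admissible (stationary, deterministic) policy in $\rcalM$. So the only real task is the deterministic upper bound $Q^{\optpi,m}_h(s,a) \le 4\B$, and here the plan is just to decouple the two sources of cost that make up the finite-horizon value of $\optpi_{k(m)}$ in interval $m$.

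Concretely, I would unroll $Q^{\optpi,m}_h(s,a)$ as the expectation of the cumulative running cost over the at-most $H-h+1$ remaining steps of the interval, plus the terminal cost $c_f(s^m_{H_m+1})$ paid at the end of the interval, where the trajectory is generated by $\optpi_{k(m)}$ under the transition kernel $P^m = P_{k(m)}$ and cost function $c^m = c_{k(m)}$. Since $c^m \ge 0$, the expected running cost is upper bounded by extending the sum to the full SSP trajectory until hitting $g$, which gives exactly $Q^{\optpi_{k(m)}}_{k(m)}(s,a) = c^m(s,a) + P^m_{s,a}\optV_{k(m)}$. Using $c^m \le 1$ entrywise and the definition $\B = \max_{k,s} \optV_k(s)$, this is bounded by $1+\B$. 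For the terminal cost, $c_f(s') = 2\B\Ind\{s'\neq g\} \le 2\B$ pointwise, so its contribution is at most $2\B$ regardless of whether $g$ is reached by step $H+1$.

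Putting the two pieces together yields $Q^{\optpi,m}_h(s,a) \le (1+\B) + 2\B = 1 + 3\B$, and finally invoking the standing assumption $\B \ge 1$ collapses this to $4\B$, as required. The boundary case $h = H+1$ is handled separately but trivially, since by the convention $Q^{\star,m}_{H+1}(s,a) = Q^{\optpi,m}_{H+1}(s,a) = c_f(s) \le 2\B \le 4\B$.

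There is no real obstacle here; the only subtle point worth being careful about is to make sure that extending the truncated-interval sum to the full infinite SSP trajectory is valid, which relies on (i) nonnegativity of costs and (ii) the fact that $\optpi_{k(m)}$ is proper in the original SSP instance for episode $k(m)$ (so $\optV_{k(m)}$ is finite) — both of which are given by the problem setup. I would state these two observations explicitly at the start of the proof and then carry out the two-line computation above.
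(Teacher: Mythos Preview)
Your proposal is correct and follows essentially the same route as the paper's one-line proof, which simply writes $Q^{\optpi,m}_h(s,a) \le 1 + \max_s \optV_k(s) + 2\B \le 4\B$. You have just unpacked this bound more explicitly by separating the running cost (upper bounded via the infinite-horizon SSP value) from the terminal cost and noting the use of $\B \ge 1$; the substance is identical.
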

\begin{proof}
	This is simply by $Q^{\optpi,m}_h(s, a) \leq 1 + \max_s\optV_k(s) + 2\B \leq 4\B$.
\end{proof}

\paragraph{Auxiliary Lemmas} Below we provide auxiliary lemmas used throughout the whole Appendix and for all algorithms.

\begin{lemma}
	\label{lem:c diff}
	With probability at least $1-3\delta$, $\summp\sumhm(c^m(s^m_h, a^m_h) - \hatc^m_h) \leq 3\summp\sumhm\rbr{ \sqrt{\frac{\barc^m_h\iota_m}{\M^m_h}} + \frac{\iota_m}{\M^m_h}} + \summp\sumhm\Delta_{c,m} \leq \tilO{ \sqrt{SA\Lc C_{M'}} + SA\Lc } + 2\summp\sumhm\Delta_{c,m}$ and $\summp\sumhm\rbr{\sqrt{\frac{\barc^m_h\iota_m}{\M^m_h}} + \frac{\iota_m}{\M^m_h} }\leq\tilo{\sqrt{SA\Lc C_{M'}} + SA\Lc + \sqrt{SAL_{c,M'}\summp \sumhm\Delta_{c,m}}}$ for any $M'\leq M$.
\end{lemma}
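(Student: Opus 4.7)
}

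The plan is to decompose $c^m(s^m_h, a^m_h) - \hatc^m_h$ through the auxiliary quantity $\tilc^m_h$, handling the sampling error (empirical vs.\ conditional mean) by Bernstein and the non-stationarity error (conditional mean vs.\ current mean) by definition of $\Delta_{c,m}$. Specifically, since the centered variables $c^{m'}_h - c^{m'}(s^{m'}_h,a^{m'}_h)$ form a bounded martingale difference sequence once we condition on visits to each $(s,a)$ within a reset window, a standard Freedman/Bernstein inequality (combined with a union bound over all $(s,a)$, all possible count values $\Mp(s,a)$, and all reset windows, which costs only logarithmic factors absorbed into $\iota_m$) yields, with probability at least $1-\delta$,
\begin{align*}
\abr{\barc^m(s,a) - \tilc^m(s,a)} \;\leq\; \sqrt{\tfrac{2\tilc^m(s,a)\iota_m}{\Mp_m(s,a)}} + \tfrac{\iota_m}{\Mp_m(s,a)}
\end{align*}
for all $m, (s,a)$ simultaneously. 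Solving this self-bounding inequality (using that $\tilc \in [0,1]$) lets one replace $\tilc$ on the right by $\barc$, yielding $|\barc^m_h-\tilc^m_h| \leq O(\sqrt{\barc^m_h\iota_m/\M^m_h}+\iota_m/\M^m_h)$, and hence by the definition of $\hatc$ we get $0 \leq \tilc^m_h - \hatc^m_h \leq 2\sqrt{\barc^m_h\iota_m/\M^m_h}+2\iota_m/\M^m_h$.

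Next, because $\tilc^m(s,a)$ is a weighted average of $c^{m'}(s,a)$ for $m' \in [i^c_m, m-1]$, the triangle inequality applied to $c^{m'}(s,a)-c^m(s,a)$ telescoped across the reset window gives $|\tilc^m(s,a)-c^m(s,a)| \leq \Delta_{c,m}$. Adding this to the previous bound establishes
\begin{align*}
c^m(s^m_h,a^m_h) - \hatc^m_h \;\leq\; 3\bigl(\sqrt{\tfrac{\barc^m_h\iota_m}{\M^m_h}} + \tfrac{\iota_m}{\M^m_h}\bigr) + \Delta_{c,m},
\end{align*}
and summing over $m, h$ yields the first stated inequality.

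For the second inequality the plan is to split into the two pieces and use standard visitation counting. The inverse-count piece $\summp\sumhm \iota_m/\M^m_h$ telescopes within each reset window: for each $(s,a)$ and each of the $L_{c,M'}$ windows, $\sum_{k=1}^{N} 1/k = O(\log N)$, so this sum is $\tilO{SA\,L_{c,M'}}$. The square-root piece is handled by Cauchy--Schwarz:
\begin{align*}
\summp\sumhm \sqrt{\tfrac{\barc^m_h\iota_m}{\M^m_h}} \;\leq\; \sqrt{\bigl(\summp\sumhm \barc^m_h\bigr)\cdot\bigl(\summp\sumhm \tfrac{\iota_m}{\M^m_h}\bigr)}.
\end{align*}
To bound $\summp\sumhm \barc^m_h$, I apply the empirical-vs-mean bound once more in the reverse direction to get $\barc^m_h \leq c^m(s^m_h,a^m_h) + \Delta_{c,m} + O(\sqrt{\barc^m_h\iota_m/\M^m_h}+\iota_m/\M^m_h)$, together with an Azuma/Freedman bound on the martingale $\summp\sumhm (c^m(s^m_h,a^m_h) - c^m_h)$ (which contributes the remaining probability budget), to conclude $\summp\sumhm \barc^m_h = \tilO{C_{M'} + SA\,L_{c,M'} + \summp\sumhm\Delta_{c,m}}$. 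Plugging back and using $\sqrt{a+b+c} \leq \sqrt{a}+\sqrt{b}+\sqrt{c}$ gives exactly the claimed $\tilo{\sqrt{SA L_{c,M'}C_{M'}} + SA L_{c,M'} + \sqrt{SA L_{c,M'}\summp\sumhm \Delta_{c,m}}}$.

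The main technical obstacle is the self-bounding step: the Bernstein bound naturally features the true variance $\tilc$, and switching it to the observed $\barc$ (needed because $\hatc$ is defined via $\barc$) must be done carefully so that the resulting constants are small enough to give the factor $3$ in the first inequality and so that the same argument is reusable in the reverse direction when bounding $\summp\sumhm \barc^m_h$ in terms of $C_{M'}$.
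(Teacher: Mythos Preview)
Your proposal is correct and follows essentially the same route as the paper: decompose through $\tilc^m_h$, control $|\barc^m_h-\tilc^m_h|$ by Freedman then self-bound to put $\barc$ on the right, control $|c^m-\tilc^m_h|$ by $\Delta_{c,m}$, and for the second statement use Cauchy--Schwarz against $\summp\sumhm 1/\M^m_h=\tilO{SA\,L_{c,M'}}$ together with a self-bounding/quadratic argument to relate $\summp\sumhm\barc^m_h$ to $C_{M'}$. One small omission: when $\M_m(s^m_h,a^m_h)=0$ the quantity $\tilc^m_h$ is vacuously $0$ so $|c^m-\tilc^m_h|\leq\Delta_{c,m}$ fails; the paper covers this with the indicator $\m^m_h=\Ind\{\M_m=0\}\leq 1/\M^m_h\leq\iota_m/\M^m_h$, which is exactly what bumps your factor $2$ up to the stated factor $3$.
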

\begin{proof}
	First note that by \pref{lem:freedman}, with probability at least $1-\delta$, for any $m\geq 1$ and $(s, a)\in\SA$,
	\begin{equation}
		\tilc^m(s, a) - \barc^m(s, a) \leq \sqrt{\frac{\barc^m(s^m_h, a^m_h)}{\Mp_m(s, a)}} + \frac{1}{\Mp_m(s, a)}.\label{eq:tilc-barc}
	\end{equation}
	For the first inequality in the first statement, note that
	\begin{align*}
		&\summp\sumhm(c^m(s^m_h, a^m_h) - \hatc^m_h)\\
		&\leq \summp\sumhm\rbr{\tilc^m(s^m_h, a^m_h) - \barc^m(s^m_h, a^m_h) + \sqrt{\frac{\barc^m_h\iota_m}{\M^m_h}} + \frac{\iota_m}{\M^m_h} + \m^m_h} + \summp\sumhm \Delta_{c,m} \tag{definition of $\hatc^m_h$ and $c^m(s^m_h, a^m_h) \leq \tilc^m(s^m_h, a^m_h) + \Delta_{c,m} + \m^m_h$}\\
		&\leq 3\summp\sumhm\rbr{ \sqrt{\frac{\barc^m_h\iota_m}{\M^m_h}} + \frac{\iota_m}{\M^m_h}}  + \summp\sumhm\Delta_{c,m} \tag{\pref{eq:tilc-barc} and $\m^m_h\leq\frac{1}{\M^m_h}$}.
	\end{align*}
	The second inequality in the first statement simply follows from applying AM-GM inequality on the second statement.
	To prove the second statement, first note that by \pref{lem:freedman}, Cauchy-Schwarz inequality, and \pref{lem:sum}, with probability at least $1-\delta$,
	\begin{align*}
		\summp\sumhm\barc^m_h &= \tilO{\summp\sumhm\rbr{\tilc^m_h + \sqrt{\frac{\barc^m_h}{\M^m_h}} + \frac{1}{\M^m_h}} }\\
		&= \tilO{\summp\sumhm\tilc^m_h + \sqrt{SA\Lc\summp\sumhm\barc^m_h} + SA\Lc}.
	\end{align*}
	Solving a quadratic inequality w.r.t $\summp\sumhm\barc^m_h$ (\pref{lem:quad}) gives $\summp\sumhm \barc^m_h = \tilo{\summp\sumhm \tilc^m_h + SA\Lc}$.
	Therefore, with probability at least $1-\delta$,
	\begin{align*}
		&\summp\sumhm \rbr{\sqrt{\frac{\barc^m_h\iota_m}{\M^m_h}} + \frac{\iota_m}{\M^m_h} } = \tilO{\sqrt{SAL_{c,M'}\summp\sumhm\barc^m_h} + SAL_{c,M'}} \tag{Cauchy-Schwarz inequality and \pref{lem:sum}} \\
		&=\tilO{ \sqrt{SAL_{c,M'}\summp\sumhm\tilc^m_h} + SAL_{c,M'}}\\
		&=\tilO{ \sqrt{SAL_{c,M'}\summp \sumhm\Delta_{c,m}} + \sqrt{SAL_{c,M'}\summp\sumhm c^m(s^m_h, a^m_h)} + SAL_{c,M'}}\\
		&= \tilO{ \sqrt{SAL_{c,M'}\summp \sumhm\Delta_{c,m}} + \sqrt{SAL_{c,M'}C_{M'}} + SAL_{c,M'}}. \tag{\pref{lem:e2r}}
	\end{align*}
	This completes the proof.
\end{proof}


\begin{lemma}
	\label{lem:P diff}
	With probability at least $1-\delta$, for any $m\geq 1$, $(s,a)\in\SA$ and $s'\in\calS_+$,
	$\abr{\tilP^m_{s, a}(s') - \P^m_{s, a}(s')} \leq \sqrt{\frac{\tilP^m_{s, a}(s')\iota_m}{2\Np_m(s, a)}} + \frac{\iota_m}{2\Np_m(s, a)} \leq \sqrt{\frac{\P^m_{s, a}(s')\iota_m}{\Np_m(s, a)}} + \frac{\iota_m}{\Np_m(s, a)}$.
\end{lemma}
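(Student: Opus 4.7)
The plan is to recognize the stated inequality as a standard empirical-Bernstein concentration for transition estimation, where the ``ground-truth mean'' is the weighted average $\tilP^m_{s,a}$ of the true transition probabilities over the visits of $(s,a)$ since the last reset. I would proceed in three steps.

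First, I would rewrite
\begin{align*}
    \Np_m(s,a)\bigl(\bar{P}^m_{s,a}(s') - \tilP^m_{s,a}(s')\bigr) = \sum_{m'=i^P_m}^{m-1}\sum_{h=1}^{H_{m'}} \bigl(\Ind\{s^{m'}_{h+1}=s'\} - P^{m'}_{s,a}(s')\bigr)\Ind\{(s^{m'}_h,a^{m'}_h)=(s,a)\},
\end{align*}
which is a sum of bounded martingale differences with respect to the natural filtration generated by the trajectories. Note that conditional on $\calF_{m',h-1}$ and on the event $\{(s^{m'}_h,a^{m'}_h)=(s,a)\}$, the indicator $\Ind\{s^{m'}_{h+1}=s'\}$ is Bernoulli with mean $P^{m'}_{s,a}(s')$, so its conditional variance is at most $P^{m'}_{s,a}(s')$. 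Summing the conditional variances therefore gives a total predictable variance of at most $\Np_m(s,a)\cdot\tilP^m_{s,a}(s')$.

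Second, I would apply Freedman's inequality (the version already used elsewhere in the appendix) and take a union bound over all $m\leq K\cdot\text{poly}$, all triples $(s,a,s')\in\SA\times\calS_+$, and all dyadic values of $\Np_m(s,a)$, absorbing the resulting logarithms into the constant in $\iota_m=2^{11}\ln(2SAHKm/\delta)$. Because $i^P_m$ is determined by the algorithm's own history and is thus a stopping time with respect to the same filtration, the martingale structure is preserved across resets and Freedman applies inside each ``epoch'' $[i^P_m,m-1]$. This yields the first stated bound
$|\tilP^m_{s,a}(s') - \bar{P}^m_{s,a}(s')| \leq \sqrt{\tilP^m_{s,a}(s')\iota_m / (2\Np_m(s,a))} + \iota_m/(2\Np_m(s,a))$.

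Third, for the second inequality I would invoke the standard algebraic trick: let $D=|\tilP^m_{s,a}(s')-\bar{P}^m_{s,a}(s')|$, $a=\iota_m/(2\Np_m(s,a))$, so that $D\leq\sqrt{2a\tilP^m_{s,a}(s')}+a$. Using $\tilP^m_{s,a}(s')\leq\bar{P}^m_{s,a}(s')+D$ and the inequality $\sqrt{u+v}\leq\sqrt{u}+\sqrt{v}$ gives $D\leq\sqrt{2a\bar{P}^m_{s,a}(s')}+\sqrt{2aD}+a$; applying AM-GM to the middle term as $\sqrt{2aD}\leq D/2 + a$ and solving for $D$ yields $D\leq 2\sqrt{2a\bar{P}^m_{s,a}(s')}+4a\leq\sqrt{\bar{P}^m_{s,a}(s')\iota_m/\Np_m(s,a)}+\iota_m/\Np_m(s,a)$ after adjusting the leading constant in $\iota_m$.

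I do not expect any serious obstacle here, since the entire argument is routine Freedman concentration plus a quadratic inversion. The only mild subtlety is the union bound over the (unbounded) index $m$ and the dyadic scales of $\Np_m$, but this is exactly the reason $\iota_m$ already carries a $\log(SAHKm)$ factor in the algorithm specification.
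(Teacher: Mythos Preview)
Your proposal is correct. The first inequality is handled exactly as in the paper: Freedman's inequality applied to the martingale of transition indicators, with a union bound over $(s,a,s')$ and $m$ absorbed into $\iota_m$.

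For the second inequality you take a different route from the paper. The paper does \emph{not} bootstrap algebraically from the first inequality; instead it invokes a separate multiplicative concentration bound (their \pref{lem:e2r}) to obtain $\tilP^m_{s,a}(s')\leq 2\P^m_{s,a}(s')+\frac{\iota_m}{2\Np_m(s,a)}$ directly, and then substitutes this into the first inequality together with $\sqrt{a+b}\leq\sqrt{a}+\sqrt{b}$. This yields the stated constants exactly. Your self-bounding argument is also valid and avoids the second concentration step, but it produces $D\leq 2\sqrt{a\P}+3a$ rather than $\sqrt{2a\P}+2a$, so the constants are slightly loose; your remark about ``adjusting the leading constant in $\iota_m$'' is really relying on the slack in the $2^{11}$ factor of $\iota_m$ from the first step. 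Both approaches are fine for the paper's purposes, but the paper's route is cleaner if one wants the inequality exactly as stated.
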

\begin{proof}
	The first inequality hold with probability at least $1-\delta/2$ by applying \pref{lem:freedman} for each $(s, a)\in\SA$ and $s'\in\calS_+$.
	Also by \pref{lem:e2r}, we have $\tilP^m_{s, a}(s')\leq 2\P^m_{s, a}(s') + \frac{\iota_m}{2\Np_m(s, a)}$ for any $(s, a)\in\SA, s'\in\calS_+$ with probability at least $1-\delta/2$.
	Substituting this back and applying $\sqrt{a+b}\leq\sqrt{a}+\sqrt{b}$ proves the second inequality.
\end{proof}

\begin{lemma}
	\label{lem:dc}
	With probability at least $1-\delta$, for any $(s, a)\in\SA$ and $m\geq 1$, $\hatc^m(s, a) \leq c^m(s, a) + \Delta_{c,m}$.
\end{lemma}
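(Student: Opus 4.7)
The plan is to chain two inequalities: first show $\hatc^m(s,a) \leq \tilc^m(s,a)$ by Bernstein-style concentration, then show $\tilc^m(s,a) \leq c^m(s,a) + \Delta_{c,m}$ purely from the definitions. Combining these immediately gives the claim, and the failure probability $\delta$ is spent entirely on the concentration step.

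For the first inequality, I would apply Freedman's inequality (\pref{lem:freedman}) to the martingale difference sequence formed by $c^{m'}_h - c^{m'}(s^{m'}_h, a^{m'}_h)$ restricted to the visits to $(s,a)$ between the last reset of $\M(s,a)$ and interval $m$, in the direction that upper bounds $\barc^m(s,a) - \tilc^m(s,a)$. This is exactly the mirror of \pref{eq:tilc-barc} used in the proof of \pref{lem:c diff}, and since costs lie in $[0,1]$ a Bernstein-type bound yields, with probability at least $1-\delta$ (via a union bound over all $m$ and all $(s,a)$, absorbed into $\iota_m$):
\begin{align*}
\barc^m(s,a) - \tilc^m(s,a) \leq \sqrt{\frac{\barc^m(s,a)\iota_m}{\Mp_m(s,a)}} + \frac{\iota_m}{\Mp_m(s,a)}.
\end{align*}
Rearranging and using $\tilc^m(s,a) \geq 0$, this means
\begin{align*}
\hatc^m(s,a) = \max\cbr{0,\, \barc^m(s,a) - \sqrt{\frac{\barc^m(s,a)\iota_m}{\Mp_m(s,a)}} - \frac{\iota_m}{\Mp_m(s,a)}} \leq \tilc^m(s,a).
\end{align*}

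For the second inequality, note that by definition $\tilc^m(s,a)$ is a convex combination of the true cost means $\{c^{m'}(s,a)\}$ over intervals $m' \in [i^c_m, m-1]$ (with weights proportional to visit counts, and the nontrivial case being $\Mp_m(s,a) \geq 1$). For each such $m'$, the triangle inequality over the chain of one-step changes gives
\begin{align*}
c^{m'}(s,a) \leq c^m(s,a) + \sum_{\tau=m'}^{m-1}\nbr{c^{\tau+1}-c^\tau}_\infty \leq c^m(s,a) + \Delta_{c,[i^c_m, m]} = c^m(s,a) + \Delta_{c,m}.
\end{align*}
Taking the convex combination preserves this upper bound, hence $\tilc^m(s,a) \leq c^m(s,a) + \Delta_{c,m}$. (When $\Mp_m(s,a) = 1$ because no samples have been collected, $\tilc^m(s,a) = 0 \leq c^m(s,a) + \Delta_{c,m}$ trivially.) Combining with the first step closes the proof.

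The only mildly delicate point is that the reset schedule makes $i^c_m$ data-dependent, so the martingale in the first step must be defined relative to the filtration that includes the reset decisions; however, since each reset only shrinks the sum over which we concentrate and the Freedman bound is uniform over stopping times when paired with the standard union-bound over $m$ and $(s,a)$ already baked into $\iota_m$, no new difficulty arises. This is exactly the same setup used implicitly in \pref{lem:c diff}, so invoking the analogous one-sided version is routine.
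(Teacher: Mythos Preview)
Your proposal is correct and follows essentially the same two-step structure as the paper's proof: a concentration step showing $\hatc^m(s,a)\leq\tilc^m(s,a)$ via Freedman (the paper states this as \pref{eq:barc-tilc}, deriving the $\barc^m$-form of the bound from the $\tilc^m$-form through \pref{lem:e2r}), followed by the deterministic bound $\tilc^m(s,a)\leq c^m(s,a)+\Delta_{c,m}$. The only cosmetic difference is that the paper writes the chain as $\hatc^m-c^m\leq\hatc^m-\tilc^m+\Delta_{c,m}\leq\Delta_{c,m}$ using $\max\{0,a\}-\max\{0,b\}\leq\max\{0,a-b\}$, whereas you separate the two inequalities explicitly; the content is identical.
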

\begin{proof}
	For any $(s, a)$ and $m\geq 1$, when $\M_m(s, a)=0$, the statement clearly holds since $\barc^m(s, a)=0$.
	Otherwise, by \pref{lem:freedman} and \pref{lem:e2r}, with probability at least $1-\delta$, for all $(s, a)$ and $m\geq 1$ simultaneously,
	\begin{align}
		&|\barc^m(s, a) - \tilc^m(s, a)| \leq 3\sqrt{\frac{\tilc^m(s, a)}{\Mp_m(s, a)}\ln\frac{32SAm^5}{\delta}} + \frac{2\ln\frac{32SAm^5}{\delta}}{\Mp_m(s, a)}\notag\\
		&\leq 3\sqrt{\frac{\rbr{2\barc^m(s, a) + \frac{12\ln\frac{4SAm}{\delta}}{\Mp_m(s, a)}}}{\Mp_m(s, a)}\ln\frac{32SAm^5}{\delta}} + \frac{2\ln\frac{32SAm^5}{\delta}}{\Mp_m(s, a)} \leq \sqrt{\frac{\barc^m(s, a)\iota_m}{\Mp_m(s, a)}} + \frac{\iota_m}{\Mp_m(s, a)}.\label{eq:barc-tilc}
	\end{align}
	Therefore, by $\max\{0, a\}-\max\{0, b\}\leq\max\{0, a-b\}$,
	\begin{align*}
		&\hatc^m(s, a) - c^m(s, a) \leq \hatc^m(s, a) - \tilc^m(s, a) + \Delta_{c,m}\\
		&\leq \max\cbr{0, \barc^m(s, a) - \tilc^m(s, a) - \sqrt{\frac{\barc^m(s, a)\iota_m}{\Mp_m(s, a)}} - \frac{\iota_m}{\Mp_m(s, a)} } + \Delta_{c,m} \leq \Delta_{c,m},
	\end{align*}
	where the last step is by \pref{eq:barc-tilc}.
\end{proof}

\begin{lemma}
	\label{lem:dPv}
	Given function $V \in [-B, B]^{\calS_+}$ for some $B>0$, we have with probability at least $1-\delta$,
	$|(\tilP^m_{s, a} - \P^m_{s, a})V| \leq \tilO{\sqrt{\frac{S\fV(P^m_{s, a}, V)}{\Np_m(s, a)}} + \frac{SB}{\Np_m(s, a)}} + \frac{B\Delta_{P,m}}{64}$ for any $m\geq 1$.
\end{lemma}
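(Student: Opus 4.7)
The plan is to combine the entry-wise transition bound of \pref{lem:P diff} with a centering/variance trick, and then transport the variance from the empirical estimator $\P^m_{s,a}$ to the true transition $P^m_{s,a}$ by passing through the average $\tilP^m_{s,a}$.

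First, since $\tilP^m_{s,a}$ and $\P^m_{s,a}$ are both probability distributions over $\calS_+$, we have $(\tilP^m_{s,a}-\P^m_{s,a})V = (\tilP^m_{s,a}-\P^m_{s,a})(V - \P^m_{s,a}V)$. Applying \pref{lem:P diff} entry-wise to the shifted function and then Cauchy--Schwarz to the weighted sum $\sum_{s'}\sqrt{\P^m_{s,a}(s')}\,|V(s') - \P^m_{s,a}V|$ yields
\begin{align*}
|(\tilP^m_{s,a}-\P^m_{s,a})V| \;\le\; \sqrt{\tfrac{(S+1)\,\fV(\P^m_{s,a},V)\,\iota_m}{2\Np_m(s,a)}} \;+\; \tfrac{(S+1) B \iota_m}{\Np_m(s,a)},
\end{align*}
where the second (lower-order) term uses the crude bound $|V(s')-\P^m_{s,a}V|\le 2B$.

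Next, I would replace $\fV(\P^m_{s,a},V)$ by $\fV(P^m_{s,a},V)$ in two hops. Since $\tilP^m_{s,a}$ is a convex combination of $P^{m'}_{s,a}$ for $m'\in[i^P_m,m-1]$, each satisfying $\|P^{m'}_{s,a}-P^m_{s,a}\|_1 \le \Delta_{P,m}$, we get $\|\tilP^m_{s,a}-P^m_{s,a}\|_1 \le \Delta_{P,m}$. Expanding $\fV(P,V)=PV^2-(PV)^2$ and using $\|V^2\|_\infty\le B^2$ together with $|\tilP^m_{s,a}V|,|P^m_{s,a}V|\le B$ then gives $|\fV(\tilP^m_{s,a},V)-\fV(P^m_{s,a},V)| \le 3B^2\Delta_{P,m}$. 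A second invocation of \pref{lem:P diff} applied to $V^2$ (via another Cauchy--Schwarz of the same shape as above) bounds $|\fV(\P^m_{s,a},V)-\fV(\tilP^m_{s,a},V)|$ by $\tilO{B^2\sqrt{S/\Np_m(s,a)}+B^2 S/\Np_m(s,a)}$, which is purely lower order.

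Finally, applying $\sqrt{a+b}\le\sqrt{a}+\sqrt{b}$ separates the desired leading $\tilO{\sqrt{S\,\fV(P^m_{s,a},V)/\Np_m(s,a)}}$ term from a residual of order $B\sqrt{S\iota_m\Delta_{P,m}/\Np_m(s,a)}$ plus pieces already absorbed into $\tilO{SB/\Np_m(s,a)}$. A single AM--GM step, with weight chosen so that $B\sqrt{S\iota_m\Delta_{P,m}/\Np_m(s,a)} \le B\Delta_{P,m}/64 + \tilO{BS\iota_m/\Np_m(s,a)}$, then extracts exactly the stated $B\Delta_{P,m}/64$ term while folding everything else into the $\tilO{SB/\Np_m(s,a)}$ tail. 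The main obstacle is purely bookkeeping: verifying that each transition along the chain $\P^m_{s,a}\to\tilP^m_{s,a}\to P^m_{s,a}$ contributes only lower-order variance error. Once that is checked, no new probabilistic tool beyond the already-established \pref{lem:P diff} is required.
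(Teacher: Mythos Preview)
Your overall strategy is sound, but there is a real gap in the ``Hop 1'' step where you transport the variance from $\fV(\P^m_{s,a},V)$ to $\fV(\tilP^m_{s,a},V)$. The crude bound you state, $|\fV(\P^m_{s,a},V)-\fV(\tilP^m_{s,a},V)|=\tilO{B^2\sqrt{S/\Np_m(s,a)}+B^2S/\Np_m(s,a)}$, is \emph{not} ``purely lower order'': once you feed it back into $\sqrt{S\iota_m\,\fV(\P^m_{s,a},V)/\Np_m(s,a)}$ it produces a residual of size $\tilO{B\,(S\iota_m/\Np_m(s,a))^{3/4}}$, and for $\Np_m(s,a)\gg S\iota_m$ this is strictly larger than the claimed $\tilO{SB/\Np_m(s,a)}$ tail (take $\fV(P^m_{s,a},V)=0$ to see there is nothing else to absorb it). So the final AM--GM step cannot close the argument as written.

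The paper sidesteps this entirely by a different choice of centering: it subtracts the \emph{true} mean $P^m_{s,a}V$ rather than the empirical one, and uses the $\tilP$-based (first) inequality of \pref{lem:P diff}. After Cauchy--Schwarz this yields $\tilP^m_{s,a}(V-P^m_{s,a}V)^2$, which converts to $\fV(P^m_{s,a},V)+O(B^2\Delta_{P,m})$ in a single hop via $\norm{\tilP^m_{s,a}-P^m_{s,a}}_1\le\Delta_{P,m}$, with no $\P\!\to\!\tilP$ concentration step needed at the variance level. If you prefer to keep your centering, the fix is to bound the Hop-1 variance difference in a variance-aware way (use \pref{lem:var X2} to get $\fV(\P^m_{s,a},V^2)\le 4B^2\fV(\P^m_{s,a},V)$, obtain $|\fV(\P^m_{s,a},V)-\fV(\tilP^m_{s,a},V)|=\tilO{B\sqrt{S\fV(\P^m_{s,a},V)/\Np_m(s,a)}+B^2S/\Np_m(s,a)}$, and solve the resulting quadratic in $\fV(\P^m_{s,a},V)$); this works but is noticeably more laborious than the paper's one-line route.
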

\begin{proof}
	Note that with probability at least $1-\delta$,
	\begin{align*}
		|(\tilP^m_{s, a} - \P^m_{s, a})V| &= |(\tilP^m_{s, a} - \P^m_{s, a})(V-P^m_{s, a}V)|\\ 
		&= \tilO{\sum_{s'}\rbr{\sqrt{\frac{\tilP^m_{s, a}(s')}{\Np_m(s, a)}}|V(s') - P^m_{s, a}V| + \frac{B}{\Np_m(s, a)}} } \tag{\pref{lem:P diff}}\\
		&= \tilO{\sqrt{\frac{S\tilP^m_h(V - P^m_{s, a}V)^2}{\Np_m(s, a)}} + \frac{SB}{\Np_m(s, a)}}\tag{Cauchy-Schwarz inequality}\\
		&= \tilO{\sqrt{\frac{SP^m_h(V - P^m_{s, a}V)^2}{\Np_m(s, a)}} + \frac{SB}{\Np_m(s, a)} + B\sqrt{\frac{S\Delta_{P,m}}{\Np_m(s, a)}} }.
	\end{align*}
	Applying AM-GM inequality completes the proof.
\end{proof}

\begin{lemma}
	\label{lem:emp var}
	With probability at least $1-\delta$, $\fV(\P^m_h, V^m_{h+1}) \leq 2\fV(P^m_h, V^m_{h+1}) + \tilO{ \frac{SB^2}{\N^m_h} } + 2B^2\Delta_{P,m}$ for any $m\geq 1$.
\end{lemma}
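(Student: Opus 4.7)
Write $W = V^m_{h+1}$ so $W \in [0, B]$, and expand
\begin{align*}
\fV(\P^m_h, W) - \fV(P^m_h, W) = (\P^m_h - P^m_h) W^2 - \bigl[(\P^m_h W)^2 - (P^m_h W)^2\bigr].
\end{align*}
My plan is to bound the two differences on the right separately, in each case routing through $\tilP^m_h$ via the triangle inequality: the ``statistical'' leg $\tilP^m_h - \P^m_h$ is controlled by \pref{lem:dPv}, and the ``non-stationarity'' leg $\tilP^m_h - P^m_h$ is bounded deterministically using $\lVert \tilP^m_h - P^m_h\rVert_1 \leq \Delta_{P,m}$ since $\tilP^m_h$ is a convex combination of true transitions $P^{m'}_h$ with $m' \in [i^P_m, m-1]$.

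For the second-moment difference, apply \pref{lem:dPv} to the function $W^2 \in [0, B^2]$:
\begin{align*}
\abr{(\tilP^m_h - \P^m_h) W^2} \leq \tilO{\sqrt{\tfrac{S\fV(P^m_h, W^2)}{\N^m_h}} + \tfrac{SB^2}{\N^m_h}} + \tfrac{B^2\Delta_{P,m}}{64},
\end{align*}
and combine with $\abr{(\tilP^m_h - P^m_h)W^2} \leq B^2 \Delta_{P,m}$. The key auxiliary estimate I will need is
\begin{align*}
\fV(P^m_h, W^2) \leq 10 B^2 \fV(P^m_h, W),
\end{align*}
which follows from writing $W^2 - P^m_h W^2 = (W - P^m_h W)(W + P^m_h W) - \fV(P^m_h, W)$, using $(a+b)^2 \leq 2a^2 + 2b^2$, the bound $\abr{W + P^m_h W} \leq 2B$, and $\fV(P^m_h, W) \leq B^2$. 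After this estimate, $\sqrt{S\fV(P^m_h,W^2)/\N^m_h}$ reduces to $B\sqrt{S\fV(P^m_h,W)/\N^m_h}$ up to constants, which AM-GM splits as $\tfrac14 \fV(P^m_h, W) + \tilO{SB^2/\N^m_h}$.

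For the squared-mean difference, use $(\P^m_h W)^2 - (P^m_h W)^2 = (\P^m_h W - P^m_h W)(\P^m_h W + P^m_h W)$ with the second factor bounded by $2B$, then bound $\abr{\P^m_h W - P^m_h W}$ by triangle inequality through $\tilP^m_h$, invoking \pref{lem:dPv} on $W$ itself and the $L_1$ bound for the non-stationary leg. This yields a term $O(B\sqrt{S\fV(P^m_h,W)/\N^m_h}) + \tilO{SB^2/\N^m_h} + O(B^2 \Delta_{P,m})$, and AM-GM again absorbs the $\sqrt{\fV}$ piece into $\tfrac14 \fV(P^m_h, W) + \tilO{SB^2/\N^m_h}$. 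Summing the two contributions gives
\begin{align*}
\fV(\P^m_h, W) \leq \fV(P^m_h, W) + \tfrac12 \fV(P^m_h, W) + \tilO{\tfrac{SB^2}{\N^m_h}} + 2B^2 \Delta_{P,m},
\end{align*}
which is the claimed bound (any small slack in the $\Delta_{P,m}$ constant is absorbed by choosing the $B\Delta_{P,m}/64$ constant in \pref{lem:dPv} accordingly). The only real subtlety is the auxiliary estimate $\fV(P, W^2) \lesssim B^2 \fV(P, W)$; everything else is routine triangle-inequality bookkeeping, and the union bound over $m$ is already absorbed into the $\tilO{\cdot}$ notation via the failure probability of \pref{lem:dPv}.
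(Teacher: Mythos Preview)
Your proof is correct, but takes a different route from the paper. You expand $\fV(q, W) = qW^2 - (qW)^2$ and bound the second-moment and squared-mean differences separately, which forces you to control $\fV(P^m_h, W^2)$ via your auxiliary estimate $\fV(P, W^2) \lesssim B^2 \fV(P, W)$ (this is in fact already available in the paper as \pref{lem:var X2}, with the sharper constant $4$). The paper instead uses the variational characterization of variance: since $\P^m_h W$ minimizes $z \mapsto \P^m_h (W-z)^2$, one has
\[
\fV(\P^m_h, W) \leq \P^m_h (W - P^m_h W)^2 = \fV(P^m_h, W) + (\P^m_h - P^m_h)(W - P^m_h W)^2,
\]
so there is a \emph{single} correction term $(\P^m_h - P^m_h)(W - P^m_h W)^2$ to control. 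Routing this through $\tilP^m_h$ and applying the entrywise bound \pref{lem:P diff} with Cauchy--Schwarz gives $\tilO{B\sqrt{S\fV(P^m_h, W)/\N^m_h} + SB^2/\N^m_h} + B^2\Delta_{P,m}$, and one AM--GM finishes. The paper's argument is more economical (one term, no auxiliary variance-of-square lemma), while yours is a more literal expansion; both reach the same conclusion with the same dependence on all parameters. One small remark: your worry about the union bound is unnecessary, since \pref{lem:dPv} is a deterministic consequence of the entrywise event in \pref{lem:P diff}, which already holds uniformly over all $m$ and $(s,a,s')$; thus applying it to both $W$ and $W^2$ costs nothing extra.
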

\begin{proof}
	Note that:
	\begin{align*}
		\fV(\P^m_h, V^m_{h+1}) &\leq \P^m_h(V^m_{h+1} - P^m_hV^m_{h+1})^2 \tag{$\frac{\sum_ip_ix_i}{\sum_ip_i}=\argmin_z\sum_ip_i(x_i-z)^2$}\\
		&= \fV(P^m_h, V^m_{h+1}) + (\P^m_h-P^m_h)(V^m_{h+1} - P^m_hV^m_{h+1})^2\\
		&\leq \fV(P^m_h, V^m_{h+1}) + (\P^m_h-\tilP^m_h)(V^m_{h+1}-P^m_hV^m_{h+1})^2 + B^2\Delta_{P,m}\\
		&\leq \fV(P^m_h, V^m_{h+1}) + \tilO{B\sqrt{\frac{S\tilP^m_h(V^m_{h+1}-P^m_hV^m_{h+1})^2}{\N^m_h}} + \frac{SB^2}{\N^m_h} } + B^2\Delta_{P,m} \tag{\pref{lem:P diff} and Cauchy-Schwarz inequality}\\
		&\leq \fV(P^m_h, V^m_{h+1}) + \tilO{B\sqrt{\frac{S\fV(P^m_h, V^m_{h+1})}{\N^m_h}} + B^2\sqrt{\frac{S\Delta_{P,m}}{\N^m_h}} + \frac{SB^2}{\N^m_h}} + B^2\Delta_{P,m}\\
		&\leq 2\fV(P^m_h, V^m_{h+1}) + \tilO{\frac{SB^2}{\N^m_h}} + 2B^2\Delta_{P,m}. \tag{AM-GM inequality}
	\end{align*}
\end{proof}

\begin{lemma}
	\label{lem:dPV}
	Given an oblivious set of value functions $\calV$ with $|\calV|\leq (2HK)^6$ and $\norm{V}_{\infty}\leq B$ for any $V\in\calV$, we have with probability at least $1-\delta$, for any $V\in\calV$, $(s, a)\in\SA$, and $m\geq 1$, $|(\P^m_{s,a} - \tilP^m_{s,a})V| \leq \sqrt{\frac{\fV(P^m_{s, a}, V)\iota_m}{\Np_m(s, a)}} + \frac{17B\iota_m}{\Np_m(s, a)} + \frac{B\Delta_{P,m}}{64}$ and $|(\P^m_{s,a} - \tilP^m_{s,a})V| \leq \sqrt{\frac{2\fV(\P^m_{s, a}, V)\iota_m}{\Np_m(s, a)}} + \frac{3B\sqrt{S}\iota_m}{\Np_m(s, a)}$.
\end{lemma}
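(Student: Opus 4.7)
The plan is to apply a Freedman-type martingale concentration to the difference $(\P^m_{s,a}-\tilP^m_{s,a})V$ and then, for the two bounds, control the resulting conditional variance in two different ways---either by relating past true transitions to the current one via the non-stationarity $\Delta_{P,m}$ (for the first bound), or by passing through the empirical distribution $\P^m_{s,a}$ via Jensen's inequality and an $L_1$ concentration (for the second bound).

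First, I would fix $V\in\calV$, $(s,a)\in\SA$, $m\geq 1$ and write
\[
    \Np_m(s,a)\,(\P^m_{s,a}-\tilP^m_{s,a})V \;=\; \sum_{\tau}\bigl(V(s'_\tau)-P^{m'_\tau}_{s,a}V\bigr),
\]
where $\tau$ indexes the past visits to $(s,a)$ since the last reset of $\N$, and $s'_\tau\sim P^{m'_\tau}_{s,a}$. This is a martingale difference sequence with conditional variance $\fV(P^{m'_\tau}_{s,a},V)$ and bounded differences $\leq 2B$. By \pref{lem:freedman}, a union bound over $V\in\calV$ (of size at most $(2HK)^6$), $(s,a)\in\SA$, and $m$, all absorbed into $\iota_m=2^{11}\ln(2SAHKm/\delta)$, I would obtain with high probability
\[
    |(\P^m_{s,a}-\tilP^m_{s,a})V| \;\leq\; \sqrt{\frac{\sigma^2_m\,\iota_m}{\Np_m(s,a)}}\;+\;\frac{B\iota_m}{\Np_m(s,a)},\qquad
    \sigma^2_m \;\defeq\; \frac{1}{\Np_m(s,a)}\sum_\tau\fV(P^{m'_\tau}_{s,a},V).
\]

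For the first bound, since $V\mapsto\fV(P,V)$ is $B^2$-Lipschitz with respect to $\|\cdot\|_1$ in the distribution $P$, I would show $\fV(P^{m'_\tau}_{s,a},V)\leq\fV(P^m_{s,a},V)+B^2\|P^{m'_\tau}_{s,a}-P^m_{s,a}\|_1\leq\fV(P^m_{s,a},V)+B^2\Delta_{P,m}$, so $\sigma^2_m\leq\fV(P^m_{s,a},V)+B^2\Delta_{P,m}$. Splitting via $\sqrt{a+b}\leq\sqrt{a}+\sqrt{b}$ and then bounding the cross term by AM-GM, $B\sqrt{\Delta_{P,m}\iota_m/\Np_m(s,a)}\leq \tfrac{B\Delta_{P,m}}{64}+16\cdot\frac{B\iota_m}{\Np_m(s,a)}$, exactly produces the $\tfrac{B\Delta_{P,m}}{64}$ additive term and yields the $17B\iota_m/\Np_m(s,a)$ low-order term after combining with the Freedman remainder.

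For the second bound, which must avoid $\Delta_{P,m}$, I would use Jensen's inequality on the quadratic term $(\cdot)^2$: since $\tilP^m_{s,a}=\frac{1}{\Np_m(s,a)}\sum_\tau P^{m'_\tau}_{s,a}$,
\[
    \sigma^2_m \;=\; \tilP^m_{s,a}V^2 \;-\; \frac{1}{\Np_m(s,a)}\sum_\tau (P^{m'_\tau}_{s,a}V)^2 \;\leq\; \tilP^m_{s,a}V^2 - (\tilP^m_{s,a}V)^2 \;=\; \fV(\tilP^m_{s,a},V).
\]
Then I would relate $\fV(\tilP^m_{s,a},V)$ to $\fV(\P^m_{s,a},V)$ via $|\fV(\tilP^m_{s,a},V)-\fV(\P^m_{s,a},V)|\leq 3B^2\|\tilP^m_{s,a}-\P^m_{s,a}\|_1$, and control the $L_1$ distance by summing the entry-wise bound from \pref{lem:P diff}, giving $\|\tilP^m_{s,a}-\P^m_{s,a}\|_1=\tilO{\sqrt{S\iota_m/\Np_m(s,a)}+S\iota_m/\Np_m(s,a)}$. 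Plugging back and using $\sqrt{a+b+c}\leq\sqrt{a}+\sqrt{b}+\sqrt{c}$, the piece from $B^2 S\iota_m/\Np_m$ gives directly $O(B\sqrt{S}\iota_m/\Np_m)$, while the mixed term $\sqrt{B^2\iota_m\sqrt{S\iota_m/\Np_m}/\Np_m}=B(S\iota_m^3/\Np_m^3)^{1/4}$ is absorbed by a Young-type inequality $(S\iota_m^3/\Np_m^3)^{1/4}\leq\tfrac{1}{2}\sqrt{S\iota_m/\Np_m}+\tfrac{1}{2}\iota_m/\Np_m$ and dominated by the $B\sqrt{S}\iota_m/\Np_m$ low-order term up to constants.

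\paragraph{Main obstacle.} The conceptually clean step---Freedman plus bound relating $\sigma^2_m$ to either $\fV(P^m_{s,a},V)$ or $\fV(\P^m_{s,a},V)$---is straightforward. The fiddly part is keeping the constants tight in the second bound: the mixed cross term involves a fourth-root factor $(S\iota_m/\Np_m)^{1/4}$ that does not immediately match the $\frac{3B\sqrt{S}\iota_m}{\Np_m}$ low-order term, and careful application of Young/AM-GM is required to absorb it, as sketched above. The union-bound structure is standard because $|\calV|\leq(2HK)^6$ is oblivious, so $\log|\calV|$ is absorbed into $\iota_m$ without any adaptive covering argument.
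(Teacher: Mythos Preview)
Your argument for the first inequality is correct and essentially identical to the paper's (the paper phrases the variance comparison via the minimizer identity $\fV(P^{m_i},V)\le P^{m_i}(V-P^m V)^2$ rather than a Lipschitz statement, but the two are equivalent here).

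The second inequality, however, has a genuine gap. Your Young-type split
\[
(S\iota_m^3/\Np_m^3)^{1/4}\le \tfrac{1}{2}\sqrt{S\iota_m/\Np_m}+\tfrac{1}{2}\,\iota_m/\Np_m
\]
is valid, but the resulting term $\tfrac{B}{2}\sqrt{S\iota_m/\Np_m}$ scales as $\Np_m^{-1/2}$, whereas the claimed low-order term $B\sqrt{S}\,\iota_m/\Np_m$ scales as $\Np_m^{-1}$. When $\Np_m>\iota_m$ (the interesting regime), the former dominates the latter, so it is \emph{not} ``dominated by the $B\sqrt{S}\iota_m/\Np_m$ low-order term up to constants.'' Your route therefore yields only
\[
|(\P^m_{s,a}-\tilP^m_{s,a})V|\le \sqrt{\fV(\P^m_{s,a},V)\iota_m/\Np_m}+O\!\bigl(B\sqrt{S\iota_m/\Np_m}\bigr)+O\!\bigl(B\sqrt{S}\,\iota_m/\Np_m\bigr),
\]
which is strictly weaker than the lemma.

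The loss comes from your Step~2: bounding $\fV(\tilP^m_{s,a},V)-\fV(\P^m_{s,a},V)$ by $3B^2\|\tilP^m_{s,a}-\P^m_{s,a}\|_1$ is too crude because the $L_1$ distance contributes a $\sqrt{S\iota_m/\Np_m}$ piece that does not carry a factor of $\fV(\P^m_{s,a},V)$ to absorb against. The paper instead centers at $\P^m_{s,a}V$ via the minimizer identity,
\[
\sigma_m^2\le \tilP^m_{s,a}(V-\P^m_{s,a}V)^2=\fV(\P^m_{s,a},V)+(\tilP^m_{s,a}-\P^m_{s,a})(V-\P^m_{s,a}V)^2,
\]
and then bounds the last term entry-wise with \pref{lem:P diff} and Cauchy--Schwarz to get $B\sqrt{S\fV(\P^m_{s,a},V)\iota_m/\Np_m}+SB^2\iota_m/\Np_m$. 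The cross term now contains $\sqrt{\fV(\P^m_{s,a},V)}$ and is absorbed by AM--GM into $\fV(\P^m_{s,a},V)+O(SB^2\iota_m/\Np_m)$, giving $\sigma_m^2\le 2\fV(\P^m_{s,a},V)+2SB^2\iota_m/\Np_m$ and hence the stated bound. Your Jensen step $\sigma_m^2\le\fV(\tilP^m_{s,a},V)$ is fine; just follow it with the minimizer identity centered at $\P^m_{s,a}V$ instead of the $L_1$ Lipschitz bound.
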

\begin{proof}
	For each $(s, a)\in\SA$ and $V\in\calV$, by \pref{lem:freedman}, with probability at least $1-\frac{\delta}{2SA(2HK)^6}$, for any $m\geq 1$
	\begin{equation}
		|(\P^m_{s,a} - \tilP^m_{s,a})V| \leq \frac{1}{\Np_m(s, a)}\rbr{\sqrt{\sum_{i=1}^{\N_m(s, a)}\fV(P^{m_i}_{s, a}, V)\iota_m} + B\iota_m}.\label{eq:bar til PV}
	\end{equation}
	Denote by $m_i$ the interval where the $i$-th visits to $(s, a)$ lies in among those $\N_m(s, a)$ visits, we have
	\begin{align*}
		&\frac{1}{\Np_m(s, a)}\sum_{i=1}^{\N_m(s, a)}\fV(P^{m_i}_{s, a}, V) = \frac{1}{\Np_m(s, a)}\sum_{i=1}^{\N_m(s, a)}P^{m_i}_{s, a}(V - P^{m_i}_{s, a}V)^2\\
		&\leq \frac{1}{\Np_m(s, a)}\sum_{i=1}^{\N_m(s, a)}P^{m_i}_{s, a}(V - P^m_{s, a}V)^2 \leq \fV(P^m_{s, a}, V) + B^2\Delta_{P,m},
	\end{align*}
	where the second last inequality is by $\frac{\sum_ip_ix_i}{\sum_ip_i}=\argmin_z\sum_ip_i(x_i-z)^2$.
	Thus by \pref{eq:bar til PV},
	\begin{align*}
		|(\P^m_{s,a} - \tilP^m_{s,a})V| &\leq \sqrt{\frac{\fV(P^m_{s, a}, V)\iota_m}{\Np_m(s, a)}} + \frac{B\iota_m}{\Np_m(s, a)} + B\sqrt{\frac{\Delta_{P,m}\iota_m}{\Np_m(s, a)}}\\
		&\leq \sqrt{\frac{\fV(P^m_{s, a}, V)\iota_m}{\Np_m(s, a)}} + \frac{17B\iota_m}{\Np_m(s, a)} + \frac{B\Delta_{P,m}}{64}. \tag{AM-GM inequality}
	\end{align*}
	Moreover, again by $\frac{\sum_ip_ix_i}{\sum_ip_i}=\argmin_z\sum_ip_i(x_i-z)^2$,
	\begin{align*}
		&\frac{1}{\Np_m(s, a)}\sum_{i=1}^{\N_m(s, a)}\fV(P^{m_i}_{s, a}, V) \leq \frac{1}{\Np_m(s, a)}\sum_{i=1}^{\N_m(s, a)}P^{m_i}_{s, a}(V - \P^m_{s, a}V)^2\\
		&\leq \fV(\P^m_{s, a}, V) + (\tilP^m_{s, a} - \P^m_{s, a})(V-\P^m_{s, a}V)^2 \leq \fV(\P^m_{s, a}, V) + B\sqrt{\frac{S\fV(\P^m_{s, a}, V)\iota_m}{\Np_m(s, a)}} + \frac{SB^2\iota_m}{\Np_m(s, a)} \tag{\pref{lem:P diff} and Cauchy-Schwarz inequality}\\
		&\leq 2\fV(\P^m_{s, a}, V) + \frac{2SB^2\iota_m}{\Np_m(s, a)}. \tag{AM-GM inequality}
	\end{align*}
	Thus by \pref{eq:bar til PV}, $|(\P^m_{s,a} - \tilP^m_{s,a})V| \leq \sqrt{\frac{2\fV(\P^m_{s, a}, V)\iota_m}{\Np_m(s, a)}} + \frac{3B\sqrt{S}\iota_m}{\Np_m(s, a)}$.
\end{proof}

\begin{lemma}
	\label{lem:sum var}
	For any sequence of value functions $\{V^m_h\}_{m, h}$ with $\norm{V^m_h}_{\infty}\in [0, B]$, we have with probability at least $1-\delta$, for all $M'\geq 1$, 
	$\summp\sumhm\fV(P^m_h, V^m_{h+1}) = \tilO{\summp V^m_{H_m+1}(s^m_{H_m+1})^2 + \summp\sumhm B(V^m_h(s^m_h) - P^m_hV^m_{h+1})_+ + B^2} $.
\end{lemma}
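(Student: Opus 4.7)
The plan is to decompose each variance term into a telescoping component, a martingale component, and a bias component, and then resolve the resulting self-bounding inequality.

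First, expand $\fV(P^m_h, V^m_{h+1}) = P^m_h(V^m_{h+1})^2 - (P^m_hV^m_{h+1})^2$ and insert two correction terms by adding and subtracting $V^m_{h+1}(s^m_{h+1})^2$ and $V^m_h(s^m_h)^2$. The target double sum $\summp\sumhm\fV(P^m_h, V^m_{h+1})$ then splits into three pieces: (i) a martingale sum $\summp\sumhm [P^m_h(V^m_{h+1})^2 - V^m_{h+1}(s^m_{h+1})^2]$; (ii) a within-interval telescoping sum $\summp\sumhm [V^m_{h+1}(s^m_{h+1})^2 - V^m_h(s^m_h)^2]$, which collapses to $\summp [V^m_{H_m+1}(s^m_{H_m+1})^2 - V^m_1(s^m_1)^2] \leq \summp V^m_{H_m+1}(s^m_{H_m+1})^2$; and (iii) a bias sum $\summp\sumhm [V^m_h(s^m_h)^2 - (P^m_hV^m_{h+1})^2]$.

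For piece (iii), apply the factorization $a^2 - b^2 = (a-b)(a+b)$ with $a = V^m_h(s^m_h)$, $b = P^m_hV^m_{h+1}$, both in $[0,B]$, to obtain $V^m_h(s^m_h)^2 - (P^m_hV^m_{h+1})^2 \leq 2B\bigl(V^m_h(s^m_h) - P^m_hV^m_{h+1}\bigr)_+$, which matches the claimed bias term up to a constant. For piece (i), since $V^m_{h+1}$ is determined at the start of interval $m$, the increments $P^m_h(V^m_{h+1})^2 - V^m_{h+1}(s^m_{h+1})^2$ form a martingale difference sequence bounded by $B^2$ with conditional variance $\fV(P^m_h, (V^m_{h+1})^2)$. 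Since each $V^m_{h+1}(s')\in[0,B]$, the pointwise identity $(X^2-\mu^2)^2=(X-\mu)^2(X+\mu)^2\leq 4B^2(X-\mu)^2$ (with $\mu=P^m_hV^m_{h+1}$) yields $\fV(P^m_h, (V^m_{h+1})^2) \leq 4B^2\,\fV(P^m_h, V^m_{h+1})$. Freedman's inequality (\pref{lem:freedman}), together with a union bound over $M'\in[M]$ to obtain the anytime guarantee, then gives piece (i) $=\tilO{B\sqrt{\summp\sumhm\fV(P^m_h, V^m_{h+1})} + B^2}$.

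Adding the three pieces produces a self-bounding inequality of the form $Y \leq Z + \tilO{B\sqrt{Y} + B^2}$, where $Y=\summp\sumhm\fV(P^m_h, V^m_{h+1})$ and $Z=\summp V^m_{H_m+1}(s^m_{H_m+1})^2 + \summp\sumhm 2B\bigl(V^m_h(s^m_h)-P^m_hV^m_{h+1}\bigr)_+$. Solving this quadratic-type inequality via AM-GM (or \pref{lem:quad}) delivers $Y=\tilO{Z+B^2}$, which is exactly the stated bound. The main obstacle is the verification that $V^m_{h+1}$ is predictable with respect to the filtration at step $(m, h+1)$ so that the Freedman step is legitimate; this holds because every algorithm in the paper only refreshes its value functions at interval boundaries (inside \texttt{Update}), so $V^m_{\cdot}$ is measurable w.r.t.\ the $\sigma$-algebra generated by everything before interval $m$. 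A secondary care point is that the implicit constants in $\tilO{\cdot}$ absorb the $\log M$ factor from the union bound, which is why we can quote the bound uniformly for every $M'\leq M$.
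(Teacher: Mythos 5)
Your proof is correct and follows essentially the same route as the paper's: the identical three-way decomposition into a martingale part (handled by Freedman's inequality together with $\fV(P,X^2)\leq 4B^2\fV(P,X)$, which is the paper's \pref{lem:var X2}), a within-interval telescoping part bounded by $\summp V^m_{H_m+1}(s^m_{H_m+1})^2$, and a bias part bounded via $a^2-b^2\leq (a+b)(a-b)_+$, followed by solving the resulting self-bounding quadratic inequality (\pref{lem:quad}). The extra remarks on predictability of $V^m_{h+1}$ and the union bound are fine and consistent with the paper's use of the anytime Freedman inequality.
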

\begin{proof}
	We decompose the sum of variance as follows:
	\begin{align*}
		&\summp\sumhm\fV(P^m_h, V^m_{h+1}) = \summp\sumhm\rbr{P^m_h(V^m_{h+1})^2 - V^m_{h+1}(s^m_{h+1})^2}\\
		&\qquad + \summp\sumhm\rbr{ V^m_{h+1}(s^m_{h+1})^2 - V^m_h(s^m_h)^2 } + \summp\sumhm\rbr{V^m_h(s^m_h)^2 - (P^m_hV^m_{h+1})^2}.
	\end{align*}
	For the first term, by \pref{lem:freedman} and \pref{lem:var X2}, with probability at least $1-\delta$,
	\begin{align*}
		\summp\sumhm\rbr{P^m_h(V^m_{h+1})^2 - V^m_{h+1}(s^m_{h+1})^2} &= \tilO{ \sqrt{\summp\sumhm\fV(P^m_h, (V^m_{h+1})^2 )} + B^2 }\\ 
		&= \tilO{B\sqrt{\summp\sumhm\fV(P^m_h, V^m_{h+1})} + B^2}.
	\end{align*}
	The second term is clearly upper bounded by $\summp V^m_{H_m+1}(s^m_{H_m+1})^2$, and the third term is upper bounded by $2B\summp\sumhm (V^m_h(s^m_h) - P^m_hV^m_{h+1})_+$ by $a^2-b^2\leq(a+b)(a-b)_+$.
	Putting everything together and solving a quadratic inequality (\pref{lem:quad}) w.r.t $\summp\sumhm\fV(P^m_h, V^m_{h+1})$ completes the proof.
\end{proof}

\begin{lemma}
	\label{lem:sum bV}
	For any value functions $\{V^m_h\}_{m, h}$ such that $\norm{V^m_h}_{\infty}\leq B$, with probability at least $1-\delta$, for any $M'\geq 1$,
	\begin{align*}
		&\summp\sumhm b^m(s^m_h, a^m_h, V^m_{h+1})\\
		&=\tilO{ \sqrt{SA\Lp\summp\sumhm\fV(P^m_h, V^m_{h+1})} + BS^{1.5}A\Lp + B\sqrt{SA\Lp\summp\sumhm\Delta_{P,m} } }.
	\end{align*} 
\end{lemma}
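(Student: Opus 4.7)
The plan is to split the bonus into its Bernstein and linear branches, bound each separately, and then convert empirical variance into true variance via \pref{lem:emp var}, picking up the $\Delta_P$ correction.

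First I would use $\max\{a,b\} \leq a+b$ to write
\begin{align*}
b^m(s^m_h, a^m_h, V^m_{h+1}) \leq 7\sqrt{\frac{\fV(\P^m_h, V^m_{h+1})\iota_m}{\N^m_h}} + \frac{49B\sqrt{S}\iota_m}{\N^m_h}.
\end{align*}
The second term is easy: by the standard pigeonhole bound $\summp\sumhm \iota_m/\N^m_h = \tilO{SA\Lp}$ (which I would invoke as \pref{lem:sum}, noting that the visit counter $\N$ is reset $\Lp$ times), this contributes $\tilO{BS^{1.5}A\Lp}$ to the final bound, matching the second term of the claim.

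Next, for the Bernstein branch, I apply Cauchy--Schwarz over $(m,h)$:
\begin{align*}
\summp\sumhm \sqrt{\frac{\fV(\P^m_h, V^m_{h+1})\iota_m}{\N^m_h}}
\leq \sqrt{\Bigl(\summp\sumhm \tfrac{\iota_m}{\N^m_h}\Bigr)\Bigl(\summp\sumhm \fV(\P^m_h, V^m_{h+1})\Bigr)} = \tilO{\sqrt{SA\Lp\summp\sumhm \fV(\P^m_h, V^m_{h+1})}}
\end{align*}
using the same pigeonhole bound. It then remains to pass from empirical variance $\fV(\P^m_h, V^m_{h+1})$ to true variance $\fV(P^m_h, V^m_{h+1})$. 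Applying \pref{lem:emp var} to each $(m,h)$ (with a union bound absorbed in $\tilO{\cdot}$) gives
\begin{align*}
\summp\sumhm \fV(\P^m_h, V^m_{h+1}) \leq 2\summp\sumhm\fV(P^m_h, V^m_{h+1}) + \tilO{SB^2\summp\sumhm \tfrac{1}{\N^m_h}} + 2B^2\summp\sumhm \Delta_{P,m},
\end{align*}
and invoking $\summp\sumhm 1/\N^m_h = \tilO{SA\Lp}$ once more bounds the middle term by $\tilO{S^2AB^2\Lp}$.

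Finally I would combine these via $\sqrt{a+b+c} \leq \sqrt{a}+\sqrt{b}+\sqrt{c}$ on the square root, which yields exactly three pieces: $\sqrt{SA\Lp \summp\sumhm \fV(P^m_h, V^m_{h+1})}$ (the leading term), $\sqrt{SA\Lp \cdot S^2AB^2\Lp} = BS^{1.5}A\Lp$ (which merges with the earlier $\tilO{BS^{1.5}A\Lp}$ contribution), and $\sqrt{SA\Lp \cdot B^2\summp\sumhm \Delta_{P,m}} = B\sqrt{SA\Lp \summp\sumhm \Delta_{P,m}}$. Together these match the three terms in the statement. The argument is essentially routine once the right lemmas are lined up; the only subtlety is ensuring \pref{lem:emp var} (and implicitly \pref{lem:P diff}) can be applied for each running value function $V^m_{h+1}$ in the sequence, which is fine because those concentration bounds depend only on the transition estimation and hold uniformly over $V$ up to a standard covering argument hidden inside $\tilO{\cdot}$.
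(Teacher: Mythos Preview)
Your proposal is correct and follows essentially the same route as the paper: split the bonus via $\max\{a,b\}\leq a+b$, apply Cauchy--Schwarz together with \pref{lem:sum} to the Bernstein part, then convert empirical to true variance via \pref{lem:emp var} and use $\sqrt{a+b+c}\leq\sqrt{a}+\sqrt{b}+\sqrt{c}$. One small remark: no covering argument is actually needed, since \pref{lem:emp var} rests on the entrywise concentration of \pref{lem:P diff}, which holds uniformly over arbitrary (even adaptive) bounded value functions.
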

\begin{proof}
	Note that:
	\begin{align*}
		&\summp\sumh b^m(s^m_h, a^m_h, V^m_{h+1}) = \tilO{\summp\sumhm\rbr{\sqrt{\frac{\fV(\P^m_h, V^m_{h+1})}{\N^m_h}} + \frac{B\sqrt{S}}{\N^m_h}}}\\
		&=\tilO{ \sqrt{SA\Lp\summp\sumhm\fV(\P^m_h, V^m_{h+1})} + BS^{1.5}A\Lp}\tag{Cauchy-Schwarz inequality and \pref{lem:sum}}\\
		&= \tilO{\sqrt{SA\Lp\summp\sumhm\fV(P^m_h, V^m_{h+1})} + BS^{1.5}A\Lp + B\sqrt{SA\Lp\summp\sumhm\Delta_{P,m}} }. \tag{\pref{lem:emp var}, \pref{lem:sum}, and $\sqrt{a+b}\leq\sqrt{a}+\sqrt{b}$}
	\end{align*}
\end{proof}

\begin{lemma}
	\label{lem:sum}
	For any $M'\geq 1$, $\summp\sumhm\frac{1}{\M^m_h}=\tilo{SA\Lc}$ and $\summp\sumhm\frac{1}{\N^m_h}=\tilo{SA\Lp}$.
\end{lemma}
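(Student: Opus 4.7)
The plan is to prove the first bound $\summp \sumhm 1/\M^m_h = \tilo{SA\Lc}$ by a double counting argument that exploits the doubling-triggered interval restart on \pref{line:request_new_interval} of \pref{alg:MVP-SSP}; the second bound then follows by the verbatim argument with $\M$ replaced by $\N$ and $\Lc$ replaced by $\Lp$. First I would partition the index range $[1, M']$ into maximal \emph{epochs} separated by resets of the counter $\M$, of which there are at most $\Lc$ by definition. Within each epoch the counter $\M(s,a)$ only increases, and the restart condition ensures that every interval in the epoch terminates as soon as $\M(s^m_h, a^m_h)$ reaches a power of two.

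Next, for a fixed $(s, a)$ and a fixed epoch, I would group the visits to $(s, a)$ into \emph{doubling phases}: for $l \geq 1$, phase $l$ consists of those visits whose post-visit count falls in $(2^{l-1}, 2^l]$, and phase $0$ contains the single first visit. The key observation is that phase $l \geq 1$ contains at most $2^{l-1}$ visits, and every such visit belongs to some interval $m$ with $\Mp_m(s, a) \geq 2^{l-1}$, because (i) the count at the start of that interval already reflects all prior visits of the epoch, which is at least $2^{l-1}$ once phase $l$ begins, and (ii) the restart condition prevents the count from overshooting $2^l$ within a single interval. Consequently the contribution of phase $l$ to the inner sum is at most $2^{l-1}\cdot 2^{-(l-1)} = 1$, and phase $0$ contributes $1/\max(1,0)=1$ as well.

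Finally, the total count of any $(s, a)$ in one epoch is trivially bounded by the total number of steps $\summp H_m \leq HM' = \tilo{\Tmax K}$, so the number of non-empty phases per $(s, a)$ per epoch is $\tilo{1}$, yielding a $\tilo{1}$ contribution per $(s, a)$ per epoch. Summing over the at most $SA$ state-action pairs and at most $\Lc$ epochs gives the claimed $\tilo{SA\Lc}$ bound. I do not anticipate any technical obstacle --- this is standard doubling-trick accounting --- but one subtlety worth noting is that the restart on \pref{line:request_new_interval} is triggered by the currently visited $(s,a)$ rather than all pairs, so the count for a fixed $(s, a)$ may span multiple intervals within the same doubling phase, which is precisely why the correct bound to use is the inequality $\Mp_m(s, a) \geq 2^{l-1}$ rather than equality.
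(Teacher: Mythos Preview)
Your proposal is correct and follows exactly the approach the paper has in mind: partition into epochs between resets, then show the per-epoch sum is $\tilo{SA}$ via the doubling-trick accounting you spell out. The paper's own proof is a one-line sketch (``the sum of $1/\M^m_h$ between consecutive resets of $\M$ is of order $\tilo{SA}$''), and your write-up is precisely the standard expansion of that sentence, including the correct handling of the subtlety that the break on \pref{line:request_new_interval} guarantees $\Mp_m(s,a)\geq 2^{l-1}$ for every phase-$l$ visit.
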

\begin{proof}
	This simply follows from the fact that the sum of $\frac{1}{\M^m_h}$ (or $\frac{1}{\N^m_h}$) between consecutive resets of $\M^m_h$ (or $\N^m_h$) is of order $\tilo{SA}$.
\end{proof}

\begin{lemma}
	\label{lem:new interval}
	$\summp\Ind\{H_m<H, s^m_{H_m+1}\neq g\} = \tilo{SAL_{M'}}$ for any $M'\leq M$.
\end{lemma}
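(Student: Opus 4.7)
The plan is to identify what causes the indicator $\Ind\{H_m < H, s^m_{H_m+1}\neq g\}$ to equal one, and then count those events via a pigeonhole argument on the doubling conditions in the new-interval triggers.

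First, I would observe that by inspection of \pref{line:new interval} in \pref{alg:fha} and the corresponding ``break'' condition inside \pref{alg:MVP-SSP} (and analogously \pref{alg:mvp-test}), an interval $m$ ends strictly before step $H$ and without reaching the goal only in the third case: the base algorithm $\frA$ requested a new interval. For our instantiations, the only way this request is issued is when the visitation counter satisfies $\M(s^m_{H_m}, a^m_{H_m}) = 2^l$ or $\N(s^m_{H_m}, a^m_{H_m}) = 2^l$ for some integer $l\ge 0$. Hence $\sum_{m=1}^{M'}\Ind\{H_m < H,\, s^m_{H_m+1}\neq g\}$ is bounded by the total number of times either of these doubling events is triggered across the first $M'$ intervals.

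Next, I would count each type of doubling event separately. Between two consecutive resets of the cost counter $\M$, each pair $(s,a)\in\SA$ can trigger the condition $\M(s,a) = 2^l$ at most $\lceil \log_2(KH) \rceil + 1 = \tilO{1}$ times, because the counter only ever increases and is bounded by the total number of steps $KH$. Summing over the $SA$ pairs and the $L_{c, M'}$ windows of $\M$ gives at most $\tilO{SA\, L_{c,M'}}$ triggers coming from the $\M$-doubling condition. A symmetric argument for $\N$ gives $\tilO{SA\, L_{P,M'}}$ triggers from the $\N$-doubling condition.

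Adding these two contributions and using $L_{M'} = L_{c,M'} + L_{P,M'}$ yields the claimed $\tilO{SA\, L_{M'}}$ bound. There is no real obstacle here; the only mild subtlety is to make sure the bookkeeping accommodates \pref{alg:mvp-test}, whose \testblock-like resets in \resetc{} and \resetp{} are precisely what $L_{c,M'}$ and $L_{P,M'}$ were defined to count, so the same pigeonhole argument transfers without change.
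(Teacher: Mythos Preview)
Your proposal is correct and essentially identical to the paper's argument: the paper's one-line proof just states that between consecutive resets of $\M$ or $\N$ the number of doubling events at some $(s,a)$ is $\tilo{SA}$, and you have simply spelled out that counting in full detail (bounding each counter by $KH$, getting $\tilo{1}$ doublings per pair per window, then summing over $SA$ pairs and $L_{c,M'}+L_{P,M'}$ windows).
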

\begin{proof}
	This simply follows from the fact that between consecutive resets of $\M$ or $\N$, the number of times that the number of visits to some $(s, a)$ is doubled is $\tilo{SA}$.
\end{proof}

\begin{lemma}
	\label{lem:bound l}
	Suppose $r(m)=\min\{\frac{c_1}{\sqrt{m}} + c_2, c_3\}$, $\Delta\in \fR_+^{\fN_+}$ is a non-stationarity measure, and define $\Delta_{[i,j]}=\sum_{i=1}^{j-1}\Delta(i)$.
	If for a given interval $\calJ$, there is a way to partition $\calJ$ into $\ell$ intervals $\{\calI_i\}_{i=1}^{\ell}$ with $\calI_i=[s_i, e_i]$ such that $\Delta_{[s_i, e_i+1]}>r(|\calI_i|+1)$ for $i\leq\ell-1$ (note that $|\calI_i|=e_i-s_i+1$), then $\ell \leq 1 + (2c_1^{-1}\Delta_{\calJ})^{2/3}|\calJ|^{1/3} + c_3^{-1}\Delta_{\calJ}$.
\end{lemma}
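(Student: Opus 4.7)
The plan is to partition the $\ell-1$ qualifying intervals according to which branch of $r(m) = \min\{c_1/\sqrt{m}+c_2, c_3\}$ is active, and bound the count in each group separately. Setting $n_i = |\calI_i|$, since the $\{\calI_i\}_{i=1}^\ell$ partition $\calJ$, summing the hypothesis over $i \in [1,\ell-1]$ yields
\begin{equation*}
\sum_{i=1}^{\ell-1} r(n_i+1) \;<\; \sum_{i=1}^{\ell-1} \Delta_{[s_i,e_i+1]} \;\leq\; \Delta_\calJ.
\end{equation*}

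Next I would split $[1,\ell-1]$ into $\calA = \{i : c_1/\sqrt{n_i+1}+c_2 \leq c_3\}$ and $\calB = \{i : c_1/\sqrt{n_i+1}+c_2 > c_3\}$. On $\calB$ we have $r(n_i+1)=c_3$, so $|\calB|\cdot c_3 < \Delta_\calJ$ gives $|\calB| < c_3^{-1}\Delta_\calJ$. On $\calA$ we have $r(n_i+1) \geq c_1/\sqrt{n_i+1}$ (dropping the nonnegative $c_2$ only weakens $r$ in our favor), and therefore $\sum_{i\in\calA}(n_i+1)^{-1/2} < c_1^{-1}\Delta_\calJ$.

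The main step is to convert the two bounds $\sum_{i\in\calA}(n_i+1)^{-1/2} < c_1^{-1}\Delta_\calJ$ and $\sum_{i\in\calA}(n_i+1) \leq |\calJ|+|\calA| \leq 2|\calJ|$ (using $n_i\geq 1$ so $|\calA|\leq\ell\leq|\calJ|$) into a bound on $|\calA|$. Applying H\"older's inequality with exponents $p=3/2$ and $q=3$, writing $1 = (n_i+1)^{-1/3}\cdot(n_i+1)^{1/3}$, gives
\begin{equation*}
|\calA| \;=\; \sum_{i\in\calA} 1 \;\leq\; \Bigl(\sum_{i\in\calA}(n_i+1)^{-1/2}\Bigr)^{2/3}\Bigl(\sum_{i\in\calA}(n_i+1)\Bigr)^{1/3} \;\leq\; (2c_1^{-1}\Delta_\calJ)^{2/3}|\calJ|^{1/3}.
\end{equation*}
Combining with $\ell = 1 + |\calA| + |\calB|$ yields the claimed bound $\ell \leq 1 + (2c_1^{-1}\Delta_\calJ)^{2/3}|\calJ|^{1/3} + c_3^{-1}\Delta_\calJ$.

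The only nontrivial choice is picking the H\"older exponents $(3/2, 3)$: one must trade the $(n_i+1)^{-1/2}$ budget on one side against the total length $\sum(n_i+1)\leq 2|\calJ|$ on the other so that the $1/\sqrt{\cdot}$ becomes a $2/3$-power of $\Delta_\calJ$ multiplied by a $1/3$-power of $|\calJ|$. Once this is in place, the rest of the argument is routine bookkeeping, and the constant $c_2$ simply drops out of the analysis because it only makes the threshold $r$ larger.
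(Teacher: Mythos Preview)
Your proposal is correct and follows essentially the same approach as the paper: split the $\ell-1$ intervals according to which branch of the $\min$ is active, bound the $c_3$-branch count directly, and bound the $c_1/\sqrt{\cdot}$-branch count via H\"older's inequality with exponents $(3/2,3)$. The only cosmetic difference is that the paper first replaces $(|\calI_i|+1)^{-1/2}$ by $\tfrac{1}{2}|\calI_i|^{-1/2}$ and uses $\sum|\calI_i|\le|\calJ|$, whereas you keep $n_i+1$ throughout and use $\sum(n_i+1)\le 2|\calJ|$; both routes land on the same constant.
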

\begin{proof}
	Note that
	\begin{align*}
		\Delta_{\calJ}&\geq \sum_{i=1}^{\ell-1}\Delta_{[s_i, e_i + 1]} > \sum_{i=1}^{\ell-1} r(|\calI_i|+1) \geq \sum_{i=1}^{\ell-1}\min\cbr{c_1(|\calI_i|+1)^{-1/2}, c_3}\\
		&\geq \sum_{i=1}^{\ell-1}\min\cbr{\frac{c_1}{2}|\calI_i|^{-1/2}, c_3} = \sum_{i=1}^{\ell_1}\frac{c_1}{2}|\calI_i|^{-1/2} + \ell_2c_3,
	\end{align*}
	where in the last step we assume $|\calI_i|$ is decreasing in $i$ without loss of generality and $\ell_1+\ell_2=\ell-1$.
	The inequality above implies $\ell_2\leq c_3^{-1}\Delta_{\calJ}$ and
	\begin{align*}
		\ell_1 = \sum_{i=1}^{\ell_1}|\calI_i|^{-\frac{1}{3}}|\calI_i|^{\frac{1}{3}} \leq \rbr{\sum_{i=1}^{\ell_1}|\calI_i|^{-1/2}}^{\frac{2}{3}}\rbr{\sum_{i=1}^{\ell_1}|\calI_i|}^{\frac{1}{3}} \leq \rbr{\frac{2\Delta_{\calJ}}{c_1}}^{\frac{2}{3}}|\calJ|^{\frac{1}{3}} \tag{H\"older's inequality with $p=\frac{3}{2}$ and $q=3$}
	\end{align*}
	Combining them completes the proof.
\end{proof}

\section{Omitted Details in \pref{sec:lb}}
\label{app:lb}

In this section we provide omitted proofs and discussions in \pref{sec:lb}.

\subsection{Optimal Value Change w.r.t Non-stationarity}
\label{app:optV diff}

Below we provide a bound on the change of optimal value functions w.r.t cost and transition non-stationarity.

\begin{lemma}
	\label{lem:value bound}
	For any $k_1,k_2\in[K]$, $\optV_{k_1}(\sinit) - \optV_{k_2}(\sinit)\leq (\Delta_c+\B\Delta_P)\T$.
\end{lemma}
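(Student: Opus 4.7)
The plan is to bound the one-sided difference $\optV_{k_1}(\sinit) - \optV_{k_2}(\sinit)$ by fixing $\pi = \optpi_{k_2}$, unrolling an inequality involving $\optV_{k_1}$ along the random trajectory generated by $\pi$ under $P_{k_2}$ (where $\pi$ is guaranteed to have expected hitting time at most $\T$), and absorbing the discrepancy between $P_{k_1}$ and $P_{k_2}$ into a correction term controlled by $\B$.

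First I would invoke Bellman optimality for $\optV_{k_1}$: for any non-goal state $s$,
\[
\optV_{k_1}(s) \;\le\; c_{k_1}(s,\pi(s)) + P_{k_1,s,\pi(s)}\optV_{k_1},
\]
and then add and subtract $P_{k_2,s,\pi(s)}\optV_{k_1}$ to rewrite the RHS as $c_{k_1}(s,\pi(s)) + P_{k_2,s,\pi(s)}\optV_{k_1} + (P_{k_1}-P_{k_2})_{s,\pi(s)}\optV_{k_1}$. The point of this rewriting is that the middle term now matches the $P_{k_2}$-transition we are about to sample from.

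Next I would consider the trajectory $(s_1=\sinit, s_2, s_3, \dots)$ produced by running $\pi$ under $P_{k_2}$, with hitting time $T$. Because $\pi = \optpi_{k_2}$ and $s_1 = \sinit$, we have $\E[T] = T^{\optpi_{k_2}}_{k_2}(\sinit) \le \T$ by the very definition of $\T$, and $\pi$ is proper so $T<\infty$ a.s. Taking expectation of the rewritten Bellman inequality at $s_t$, telescoping over $t=1,\dots,T$, and using $\optV_{k_1}(g)=0$ gives
\[
\optV_{k_1}(\sinit) \;\le\; \E\!\sum_{t=1}^{T} c_{k_1}(s_t,\pi(s_t)) \;+\; \E\!\sum_{t=1}^{T}(P_{k_1}-P_{k_2})_{s_t,\pi(s_t)}\optV_{k_1}.
\]
Meanwhile, the Bellman equation for $V^{\pi}_{k_2} = \optV_{k_2}$ along the same $P_{k_2}$-trajectory yields $\optV_{k_2}(\sinit) = \E\sum_{t=1}^T c_{k_2}(s_t,\pi(s_t))$, so subtracting and taking absolute values inside each sum gives
\[
\optV_{k_1}(\sinit) - \optV_{k_2}(\sinit) \;\le\; \E[T]\,\|c_{k_1}-c_{k_2}\|_\infty \;+\; \E[T]\,\|\optV_{k_1}\|_\infty \max_{s,a}\|P_{k_1,s,a}-P_{k_2,s,a}\|_1.
\]
Finally, I would bound $\|\optV_{k_1}\|_\infty \le \B$, apply the triangle inequality $\|c_{k_1}-c_{k_2}\|_\infty \le \Delta_c$ and $\max_{s,a}\|P_{k_1,s,a}-P_{k_2,s,a}\|_1 \le \Delta_P$ (summing episode-to-episode changes), and use $\E[T]\le \T$ to conclude $\optV_{k_1}(\sinit)-\optV_{k_2}(\sinit)\le (\Delta_c + \B\Delta_P)\T$.

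The main obstacle, and the whole reason for proceeding through an asymmetric bound rather than a standard simulation lemma, is the choice of trajectory under which to telescope: unrolling under $P_{k_1}$ would introduce the hitting time of $\optpi_{k_2}$ under $P_{k_1}$, which need not be bounded by $\T$ (or even finite). The trick is that unrolling under $P_{k_2}$ lets us use $\T$ honestly, at the price of an extra $(P_{k_1}-P_{k_2})_{\cdot,\cdot}\optV_{k_1}$ term—and crucially this term involves $\optV_{k_1}$, which is bounded by $\B$, giving exactly the $\B\Delta_P$ factor in the claim rather than some uncontrolled value-function norm.
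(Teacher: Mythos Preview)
Your proof is correct and essentially the same as the paper's. The paper invokes the extended value difference lemma of \citet{efroni2020optimistic} to write $\optV_{k_1}(\sinit) - \optV_{k_2}(\sinit)$ in terms of the occupancy measure $\optq_{k_2}(s,a)$ of $\optpi_{k_2}$ under $P_{k_2}$, while you derive the same identity from scratch by telescoping the Bellman optimality inequality for $\optV_{k_1}$ along a $P_{k_2}$-trajectory of $\optpi_{k_2}$; but $\E\sum_{t=1}^T f(s_t,\pi(s_t))$ is exactly $\sum_{s,a}\optq_{k_2}(s,a)f(s,a)$, so the two arguments are the same computation in different notation, and both finish with the identical bounds $\|\optV_{k_1}\|_\infty\le\B$, $\sum_{s,a}\optq_{k_2}(s,a)=T^{\optpi_{k_2}}_{k_2}(\sinit)\le\T$, and the triangle inequalities $\|c_{k_1}-c_{k_2}\|_\infty\le\Delta_c$, $\max_{s,a}\|P_{k_1,s,a}-P_{k_2,s,a}\|_1\le\Delta_P$.
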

\begin{proof}
	Denote by $\optq_{k_2}(s, a)$ (or $\optq_{k_2}(s)$) the number of visits to $(s, a)$ (or $s$) before reaching $g$ following $\optpi_{k_2}$.
	By the extended value difference lemma \citep[Lemma 1]{efroni2020optimistic} (note that their result is for finite-horizon MDP, but the nature generalization to SSP holds), we have
	\begin{align*}
		&\optV_{k_1}(\sinit) - \optV_{k_2}(\sinit)\\ 
		&= \sum_s\optq_{k_2}(s)(\optV_{k_1}(s) - \optQ_{k_1}(s, \optpi_{k_2}(s))) + \sum_{s, a}\optq_{k_2}(s, a)( \optQ_{k_1}(s, a) - c_{k_2}(s, a) - P_{k_2,s,a}\optV_{k_1} )\\
		&\leq \sum_{s, a}\optq_{k_2}(s, a)( c_{k_1}(s, a) - c_{k_2}(s, a) + (P_{k_1,s,a}-P_{k_2,s,a})\optV_{k_1} ) \leq (\Delta_c+\B\Delta_P)\T.
	\end{align*}
	where in the last inequality we apply $\norm{c_{k_1}-c_{k_2}}_{\infty}\leq\Delta_c$,$(P_{k_1,s,a}-P_{k_2,s,a})\optV_{k_1}\leq\max_{s, a}\norm{P_{k_1,s,a}-P_{k_2,s,a}}_1\norm{ \optV_{k_1} }_{\infty}\leq \B\Delta_P$, and $\sum_{s, a}\optq_{k_2}(s, a)\leq \T$.
\end{proof}

We also give an example showing that the bound in \pref{lem:value bound} is tight up to a multiplication factor.
Consider an SSP instance with only one state $\sinit$ and one action $a_g$, such that $c(\sinit, a_g)=\frac{\B}{\T}$,  $P(g|\sinit, a_g)=\frac{1}{\T}$, and $P(\sinit|\sinit, a_g)=1-P(g|\sinit, a_g)$ with $1\leq\B\leq\T$.
The optimal value of this instance is clearly $\B$.
Now consider another SSP instance with perturbed cost function $c'(\sinit, a_g)=\frac{\B}{\T}+\Delta_c$ and perturbed transition function $P'(g|\sinit, a_g)=\frac{1}{\T}-\frac{\Delta_P}{2}$, $P'(\sinit|\sinit, a_g)=1-P'(g|\sinit, a_g)$ with $\max\{\Delta_c, \Delta_P\}\leq\frac{1}{\T}$.
The optimal value function in this instance is
\begin{align*}
	\frac{\frac{\B}{\T}+ \Delta_c}{\frac{1}{\T}-\frac{\Delta_P}{2}} &= \frac{\B + \T\Delta_c}{1 - \frac{\T\Delta_P}{2}} \leq (\B+\T\Delta_c)(1 + \T\Delta_P) = \B + (\Delta_c+\B\Delta_P)\T + \T^2\Delta_c\Delta_P\\
	&\leq \B + 2(\Delta_c+\B\Delta_P)\T,
\end{align*}
where in the first inequality we apply $\frac{1}{1-x}\leq 1+2x$ for $x\in[0, \frac{1}{2}]$.
Thus the optimal value difference between these two SSPs is of the same order of the upper bound in \pref{lem:value bound}.

\subsection{\pfref{thm:lb}}
\label{app:pf lb}

For the ease of analysis, in this section we consider SSP instances with different action set at different state similar to \citep{chen2021minimax}.
The meaning of $SA$ is still the total number of state-action pairs in the SSP instance.

For any $\B, \T, SA, K$ with $\B\geq 1$, $\T\geq 3\B$, and $K\geq SA\geq 10$, we define a set of SSP instances $\{\calM^K_{i,j}\}_{i,j}$ with $i,j\in\{0,1,\ldots,N\}$ and $N=SA$.
The instance $\calM^K_{\istar, \jstar}$ is constructed as follows:
\begin{itemize}
	\item There are $N+1$ states $\{\sinit, s_1,\ldots,s_N\}$.
	\item At $\sinit$, there are $N$ actions $a_1,\ldots,a_N$; at $s_i$ for $i\in[N]$ there is only one action $a_g$.
	\item $c(\sinit, a_i)=0$ and $c(s_i, a_g)\sim\bernoulli(\frac{\B+\epsilon_{c,K}\Ind\{i\neq\istar\}}{\T})$ for $i\in[N]$, where $\epsilon_{c,K}=\frac{1-1/N}{4}\sqrt{N\B/K}$.
	\item $P(s_i|\sinit, a_i)=1$, $P(g|s_j, a_g)=\frac{1+\epsilon_{P,K}\Ind\{j=\jstar\}}{\T}$, and $P(s_j|s_j, a_g)=1-P(g|s_j, a_g)$, where $\epsilon_{P,K} = \frac{1-1/N}{4}\sqrt{N/K}$.
\end{itemize}

Note that for any $\calM^K_{i,j}$, the expected hitting time is upper bounded by $\T+1$, the expected cost of optimal policy is upper bounded by $2\B$, and the number of state-action pairs is upper bounded by $2N$.
We then use $\{\calM^K_{i,j}\}_{i,j}$ to prove static regret lower bounds (note that static regret and dynamic regret are the same without non-stationarity, that is, $\Delta_c=\Delta_P=0$) based on cost perturbation and transition perturbation respectively, which serve as the cornerstones of the proof of \pref{thm:lb}.

\begin{theorem}
	\label{thm:lb cost}
	For any $\B, \T, SA, K$ with $\B\geq 1$, $\T\geq 3\B$, $K\geq SA\geq 10$, and any learner, there exists an SSP instance based on cost perturbation such that the regret of the learner after $K$ episodes is at least $\lowo{\sqrt{\B SAK}}$.
\end{theorem}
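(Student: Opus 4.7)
The plan is to reduce Theorem~\ref{thm:lb cost} to a classical multi-armed bandit lower bound, carefully adapted to the SSP setting. At $\sinit$ there are effectively $N = SA$ ``arms'': choosing $a_i$ commits the learner to traverse the ``corridor'' through $s_i$, producing a Geometric$(1/\T)$ number of independent Bernoulli$(p_i)$ cost samples with $p_i = (\B + \epsilon_{c,K}\Ind\{i\neq\istar\})/\T$. A single episode of the construction $\calM^K_{\istar,0}$ therefore has expected cost $\B$ if $a_{\istar}$ is chosen and $\B + \epsilon_{c,K}$ otherwise, so denoting by $N_i$ the number of episodes in which $a_i$ is played, $\E_{\istar}[R_K] = \epsilon_{c,K}(K - \E_{\istar}[N_{\istar}])$.

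First I would pass to a uniform prior over $\istar \in [N]$, reducing the goal to showing $\tfrac{1}{N}\sum_{\istar=1}^{N}\E_{\istar}[N_{\istar}] \le 3K/4$. For each $\istar$, I compare $\calM^K_{\istar,0}$ to the reference instance $\calM^K_{0,0}$ in which every arm has the perturbed cost mean $(\B+\epsilon_{c,K})/\T$. Since these two instances share identical transitions and differ only in the Bernoulli cost at $s_{\istar}$, the chain rule for KL on trajectory laws yields $\KL(\Pr_{\calM^K_{0,0}}\,\|\,\Pr_{\calM^K_{\istar,0}}) \le \E_0[V_{\istar}] \cdot \KL\bigl(\bernoulli((\B+\epsilon_{c,K})/\T)\,\|\,\bernoulli(\B/\T)\bigr)$, where $V_{\istar}$ counts total visits to $s_{\istar}$ across the $K$ episodes. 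The geometric hitting time gives $\E_0[V_{\istar}] = \T \cdot \E_0[N_{\istar}]$, and the Bernoulli KL bound $(p-q)^2/(q(1-q))$ is of order $\epsilon_{c,K}^2/(\B\T)$, so the total KL is $O(\epsilon_{c,K}^2 \E_0[N_{\istar}]/\B)$.

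Next I would apply Pinsker's inequality to the $[0,1]$-valued statistic $N_{\istar}/K$, yielding $\E_{\istar}[N_{\istar}] \le \E_0[N_{\istar}] + K\sqrt{\KL/2}$. Summing over $\istar \in [N]$, using $\sum_{\istar}\E_0[N_{\istar}] \le K$ and Cauchy--Schwarz on $\sum_{\istar}\sqrt{\E_0[N_{\istar}]}$, I would get $\sum_{\istar}\E_{\istar}[N_{\istar}] \le K + O(K\epsilon_{c,K}\sqrt{NK/\B})$. Substituting $\epsilon_{c,K} = \Theta(\sqrt{N\B/K})$ makes the second term $O(KN)$, and the constant $\tfrac{1-1/N}{4}$ in the definition of $\epsilon_{c,K}$ is chosen precisely so that this term is at most $KN/4$. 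Consequently the average regret is at least $\tfrac{\epsilon_{c,K}}{N}(NK - 3KN/4) = \Omega(\epsilon_{c,K} K) = \Omega(\sqrt{\B SA K})$, and by averaging there must exist at least one $\istar$ on which the learner suffers this much regret.

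The main obstacle I anticipate is a fully rigorous proof of the trajectory-level KL chain rule, since episode lengths are random stopping times rather than fixed horizons. I would handle this by writing the joint likelihood of all observed cost and transition samples as a product over time steps, noting that the factors coming from transitions and from cost observations at states other than $s_{\istar}$ cancel in the KL, and then invoking a Wald-style identity to convert the per-step expected KL contribution into the visitation-count bound $\E_0[V_{\istar}] = \T\,\E_0[N_{\istar}]$. A secondary technical check is that the Bernoulli distributions are well-defined and the quadratic KL estimate $(p-q)^2/(q(1-q))$ is valid, which amounts to verifying $\epsilon_{c,K} \le \B$ and $(\B+\epsilon_{c,K})/\T \le 1/2$; both follow from the assumptions $K \ge SA \ge 10$ and $\T \ge 3\B \ge 3$ together with the explicit form of $\epsilon_{c,K}$.
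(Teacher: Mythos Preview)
Your proposal is correct and essentially identical to the paper's proof: both compute the Bayesian regret over the uniform prior on $\istar\in[N]$, compare each $\calM^K_{\istar,0}$ to the null instance $\calM^K_{0,0}$ via Pinsker, use the divergence decomposition lemma (the paper cites \citep[Lemma~15.1]{lattimore2020bandit} where you invoke a Wald-style chain rule) to get $\KL\le \T\,\E_0[K_{\istar}]\cdot\KL\bigl(\bernoulli((\B+\epsilon_c)/\T)\,\|\,\bernoulli(\B/\T)\bigr)$, bound the Bernoulli KL by $2\epsilon_c^2/\B$, and finish with Cauchy--Schwarz. The only cosmetic differences are notation ($N_{\istar}$ and $V_{\istar}$ versus the paper's single $K_i$) and your explicit arithmetic target of $3K/4$, which is slightly loose relative to the paper's exact constant $\tfrac{(1-1/N)^2}{8}$ but immaterial for the $\lowo{\cdot}$ claim.
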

\begin{proof}
	Consider a distribution of SSP instances which is uniform over $\{\calM^K_{i, 0}\}_i$ for $i\in[N]$.
	Let $\E_i$ be the expectation w.r.t $\calM^K_{i, 0}$, $P_i$ be the distribution of learner's observations w.r.t $\calM^K_{i, 0}$, and $K_i$ the number of visits to state $i$ in $K$ episodes.
	Also let $\epsilon_c=\epsilon_{c,K}$.
	The expected regret over this distribution of SSPs can be lower bounded as
	\begin{align*}
		\E[R_K] &= \frac{1}{N}\sum_{i=1}^N\E_i[R_K] \geq \frac{1}{N}\sum_{i=1}^N\E_i[K-K_i]\epsilon_c = \epsilon_c\rbr{K - \frac{1}{N}\sum_{i=1}^N\E_i[K_i]}.
	\end{align*}
	Note that $\calM^K_{0, 0}$ has no ``good'' state.
	By Pinsker's inequality:
	\begin{align*}
		\E_i[K_i] - \E_0[K_i] \leq K\norm{P_i - P_0}_1 \leq K\sqrt{2\KL(P_0, P_i)}.
	\end{align*}
	By the divergence decomposition lemma \citep[Lemma~15.1]{lattimore2020bandit}, we have:
	\begin{align*}
		\KL(P_0, P_i) &= \E_0[K_i]\cdot \T\cdot\KL(\bernoulli((\B+\epsilon_c)/\T), \bernoulli(\B/\T))\\
		&\leq \E_0[K_i]\cdot\T\cdot \frac{\epsilon_c^2/\T^2}{\frac{\B}{\T}(1-\frac{\B}{\T})} \leq \frac{2\epsilon_c^2}{\B}\E_0[K_i]. \tag{\citep[Lemma 6]{gerchinovitz2016refined}}
	\end{align*}
	Therefore, by Cauchy-Schwarz inequality,
	\begin{align*}
		\sum_{i=1}^N\E_i[K_i] \leq \sum_{i=1}^N\rbr{ \E_0[K_i] + 2\epsilon_cK\sqrt{\E_0[K_i]/\B} } \leq K + 2\epsilon_cK\sqrt{NK/\B}.
	\end{align*}
	Plugging this back and by the definition of $\epsilon_c$, we obtain
	\begin{align*}
		\E[R_K] \geq \epsilon_cK\rbr{1 - \frac{1}{N} - 2\epsilon_c\sqrt{\frac{K}{N\B}}} = \frac{(1-1/N)^2}{8}\sqrt{\B NK}=\Omega(\sqrt{\B SAK}).
	\end{align*}
	This completes the proof.
\end{proof}

\begin{theorem}
	\label{thm:lb transition}
	For any $\B, \T, SA, K$ with $\B\geq 1$, $\T\geq 3\B$, $K\geq SA\geq 10$, and any learner, there exists an SSP instance based on transition perturbation such that the regret of the learner after $K$ episodes is at least $\lowo{\B\sqrt{SAK}}$.
\end{theorem}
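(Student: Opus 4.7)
The plan is to mirror the proof of \pref{thm:lb cost} almost verbatim, but using the transition-perturbation family $\{\calM^K_{0,j}\}_{j\in[N]}$ (with a good hidden state $\jstar$ having a slightly larger escape probability to $g$) instead of the cost-perturbation family. First I would set up a uniform prior over $\jstar\in[N]$, let $\E_j$ and $P_j$ denote expectation and distribution of observations under $\calM^K_{0,j}$, and let $K_j$ be the number of visits to $s_j$ in $K$ episodes. The per-episode suboptimality gap comes from the optimal cost from $s_j$: following $a_g$ until reaching $g$ gives expected total cost $\B/(1+\epsilon_{P,K}\Ind\{j=\jstar\})$, so each visit to a ``bad'' state $s_j$ with $j\neq\jstar$ incurs regret of order $\B - \B/(1+\epsilon_{P,K}) = \Theta(\B\epsilon_{P,K})$ (using $\epsilon_{P,K}\leq 1/2$). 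Thus $\E_j[R_K]\gtrsim \B\epsilon_{P,K}\cdot \E_j[K-K_j]$.

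Next I would control $\frac{1}{N}\sum_j\E_j[K_j]$ by comparing to the null instance $\calM^K_{0,0}$ via Pinsker and the divergence decomposition lemma. The key quantity is the per-visit KL divergence at $s_j$, but unlike the cost case, each visit to $s_j$ generates \emph{multiple} Bernoulli samples of the transition (one per step until $g$ is reached), with an expected number of $\T$ samples per visit under $P_0$. Concretely, $\KL(P_0,P_j)\leq \E_0[K_j]\cdot \T\cdot\KL\!\left(\bernoulli(1/\T),\bernoulli((1+\epsilon_{P,K})/\T)\right)$, and the inner KL is at most $\frac{(\epsilon_{P,K}/\T)^2}{(1/\T)(1-1/\T)}\leq \frac{2\epsilon_{P,K}^2}{\T}$, so the $\T$ factor cancels and $\KL(P_0,P_j)\leq 2\epsilon_{P,K}^2\,\E_0[K_j]$ --- the same form as in \pref{thm:lb cost} but with $\B$ replaced by $1$.

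Combining Pinsker's inequality with Cauchy--Schwarz exactly as in \pref{thm:lb cost} gives $\sum_j\E_j[K_j]\leq K+2\epsilon_{P,K}K\sqrt{NK}$, and substituting the definition $\epsilon_{P,K}=\frac{1-1/N}{4}\sqrt{N/K}$ yields
\begin{align*}
\E[R_K]\;\gtrsim\;\B\epsilon_{P,K}\,K\Bigl(1-\tfrac{1}{N}-2\epsilon_{P,K}\sqrt{K/N}\Bigr)\;=\;\Omega(\B\sqrt{NK})\;=\;\Omega(\B\sqrt{SAK}),
\end{align*}
as desired. Finally, \pref{thm:lb} follows by piecing these two stationary lower bounds across a piecewise-stationary construction of the type in~\citep{mao2020model}: partition the $K$ episodes into $M$ equal blocks of length $K/M$, in each block plant an independent hard instance from either \pref{thm:lb cost} or \pref{thm:lb transition}, and optimize $M$ against the budgets $\Delta_c$ and $\B^2\Delta_P$ respectively, giving the claimed $(\B SA\T(\Delta_c+\B^2\Delta_P))^{1/3}K^{2/3}$ bound.

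The main technical subtlety I anticipate is the KL calculation above: it is tempting to think a transition change of magnitude $\epsilon_{P,K}$ should be as easy to detect as a cost change of magnitude $\epsilon_{P,K}$, which would lead to a bound scaling like $\B\cdot\sqrt{\T SAK}$ and hence an extra $\sqrt{\T}$ factor. The cancellation of $\T$ (because each visit to $s_j$ already generates $\T$ Bernoulli trials in expectation) is what allows $\epsilon_{P,K}$ to be chosen as small as $\sqrt{N/K}$ rather than $\sqrt{N/(\T K)}$, and is precisely the source of the surprising extra $\B^2$ factor on $\Delta_P$ in \pref{thm:lb} discussed after its statement; making this cancellation rigorous (rather than heuristic) is the step that requires the most care.
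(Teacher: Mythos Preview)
Your proposal is correct and follows essentially the same argument as the paper's proof: uniform prior over $\jstar\in[N]$, per-visit regret gap $\Theta(\B\epsilon_{P,K})$, Pinsker plus divergence decomposition, the geometric-to-Bernoulli KL reduction giving $\KL(P_0,P_j)\leq 2\epsilon_{P,K}^2\,\E_0[K_j]$, Cauchy--Schwarz, and the choice of $\epsilon_{P,K}$. The only cosmetic difference is that the paper writes the per-visit observation KL explicitly as $\KL(\geo(1/\T),\geo((1+\epsilon_{P,K})/\T))$ before converting to $\T\cdot\KL(\bernoulli(\cdot),\bernoulli(\cdot))$, whereas you phrase it directly as ``expected $\T$ Bernoulli trials per visit''; also note your intermediate chi-squared bound should have denominator $\frac{1+\epsilon_{P,K}}{\T}\bigl(1-\frac{1+\epsilon_{P,K}}{\T}\bigr)$ rather than $\frac{1}{\T}(1-\frac{1}{\T})$, though the final $2\epsilon_{P,K}^2/\T$ conclusion is unaffected.
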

\begin{proof}
	Consider a distribution of SSP instances which is uniform over $\{\calM^K_{0, j}\}_j$ for $j\in[N]$.
	Let $\E_j$ be the expectation w.r.t $\calM^K_{0, j}$, $P_j$ be the distribution of learner's observations w.r.t $\calM^K_{0, j}$, and $K_j$ the number of visits to state $j$ in $K$ episodes.
	Also let $\epsilon_P=\epsilon_{P,K}$.
	The expected regret over this distribution of SSPs can be lower bounded as
	\begin{align*}
		\E[R_K] &= \frac{1}{N}\sum_{j=1}^N\E_j[R_K] \geq \frac{1}{N}\sum_{j=1}^N\E_j[K-K_j]\cdot\B\rbr{1-\frac{1}{1+\epsilon_P}}\\
		&\geq \frac{\B\epsilon_P}{2}\rbr{K - \frac{1}{N}\sum_{j=1}^N\E_j[K_j]}.
	\end{align*}
	Note that $\calM^K_{0, 0}$ has no ``good'' state.
	By Pinsker's inequality:
	\begin{align*}
		\E_j[K_j] - \E_0[K_j] \leq K\norm{P_j - P_0}_1 \leq K\sqrt{2\KL(P_0, P_j)}.
	\end{align*}
	By the divergence decomposition lemma \citep[Lemma~15.1]{lattimore2020bandit}, we have:
	\begin{align*}
		\KL(P_0, P_j) &= \E_0[K_j]\cdot\KL(\geo(1/\T), \geo((1+\epsilon_P)/\T))\\
		&= \E_0[K_j]\cdot \T\cdot\KL(\bernoulli(1/\T), \bernoulli((1+\epsilon_P)/\T))\\
		&\leq \E_0[K_j]\cdot\T\cdot \frac{\epsilon_P^2/\T^2}{\frac{1+\epsilon_P}{\T}(1-\frac{1+\epsilon_P}{\T})} \leq 2\epsilon_P^2\E_0[K_j]. \tag{\citep[Lemma 6]{gerchinovitz2016refined} and $\epsilon_P\leq\frac{1}{4}$}
	\end{align*}
	Therefore, by Cauchy-Schwarz inequality,
	\begin{align*}
		\sum_{j=1}^N\E_j[K_j] \leq \sum_{j=1}^N\rbr{ \E_0[K_j] + 2\epsilon_P K\sqrt{\E_0[K_j]} } \leq K + 2\epsilon_P K\sqrt{NK}.
	\end{align*}
	Plugging this back and by the definition of $\epsilon_P$, we obtain
	\begin{align*}
		\E[R_K] \geq \frac{\B\epsilon_P K}{2}\rbr{1 - \frac{1}{N} - 2\epsilon_P\sqrt{\frac{K}{N}}} \geq \frac{(1-1/N)^2}{16}\B\sqrt{NK}=\lowo{\B\sqrt{SAK}}.
	\end{align*}
	This completes the proof.
\end{proof}

Now we are ready to prove \pref{thm:lb}.
\begin{proof}[\pfref{thm:lb}]
	We construct a hard non-stationary SSP instance as follows: we divide $K$ episodes into $L=L_c+L_P$ epochs.
	Each of the first $L_c$ epochs has length $\frac{K}{2L_c}$, and the corresponding SSP is uniformly sampled from $\{\calM^{K/(2L_c)}_{i,0}\}_{i\in[N]}$ independently; each of the last $L_P$ epochs has length $\frac{K}{2L_P}$, and the corresponding SSP is uniformly sampled from $\{\calM^{K/(2L_P)}_{0,j}\}_{j\in[N]}$ independently.
	By \pref{thm:lb cost} and \pref{thm:lb transition}, the regrets in each of the first $L_c$ epochs and each of the last $L_P$ epochs are of order $\lowo{\sqrt{\B SAK/L_c}}$ and $\lowo{\B\sqrt{SAK/L_P}}$ respectively.
	Moreover, the total change in cost and transition functions are upper bounded by $\frac{\epsilon_cL_c}{\T}$ and $\frac{2\epsilon_PL_P}{\T}$ respectively with $\epsilon_c=\epsilon_{c,\frac{K}{2L_c}}$ and $\epsilon_P=\epsilon_{P,\frac{K}{2L_P}}$.
	Now let $\frac{\epsilon_cL_c}{\T}=\Delta_c$ and $\frac{2\epsilon_PL_P}{\T}=\Delta_P$, we have $L_c=(\frac{4\Delta_c\T}{1-1/N})^{2/3}(\frac{K}{2N\B})^{1/3}$ and $L_P=(\frac{2\Delta_P\T}{1-1/N})^{2/3}(\frac{K}{2N})^{1/3}$, and the dynamic regret is of order $\lowo{L_c\cdot\sqrt{\B SAK/L_c} + L_P\cdot\B\sqrt{SAK/L_P}}=\lowo{(\B SA\T(\Delta_c+\B^2\Delta_P))^{1/3}K^{2/3}}$.
\end{proof}

\section{Omitted Details in \pref{sec:fha}}
\label{app:fha}

\paragraph{Notations} Under the protocol of \pref{alg:fha}, for any $k\in [K]$, denote by $M_k$ the number of intervals in the first $k$ episodes.
Clearly, $M=M_K$.

The following lemma is a more general version of \pref{lem:fha}.

\begin{lemma}
	\label{lem:bound reg}
	For any $K'\in[K]$, $R_{K'}\leq\rR_{M_{K'}}+\B$.
\end{lemma}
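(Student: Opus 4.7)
The plan is to turn $R_{K'}-\rR_{M_{K'}}$ into a sum over intervals via the two regret definitions, replace each interval's finite-horizon benchmark by the SSP benchmark up to a tiny truncation tail, and then telescope within each episode so that the terminal costs absorb the benchmark values at the start of every interval except the first.

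Since the intervals partition each episode's trajectory, $\sum_{i=1}^{I_k} c_i^k=\sum_{m:\,k(m)=k}\sum_{h=1}^{H_m} c_h^m$, and subtracting the definitions of $R_{K'}$ and $\rR_{M_{K'}}$ gives the identity
\[
R_{K'}-\rR_{M_{K'}} \;=\; \sum_{m=1}^{M_{K'}}\bigl(V^{\optpi_{k(m)},m}_1(s_1^m)-c_f(s^m_{H_m+1})\bigr) \;-\; \sum_{k=1}^{K'}\optV_k(\sinit).
\]

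Next, I would compare $V^{\optpi_{k(m)},m}_1(s)$ with $\optV_{k(m)}(s)$. Decomposing on whether the rollout of $\optpi_{k(m)}$ reaches $g$ within $H$ steps, the finite-horizon cost equals the SSP cost on the non-truncation event, while on the truncation event the finite-horizon cost exceeds the truncated SSP cost by at most the terminal cost $2\B$. Hence $V^{\optpi_{k(m)},m}_1(s)\leq \optV_{k(m)}(s)+2\B\,p_H(s,m)$, where $p_H(s,m)$ is the probability that $\optpi_{k(m)}$ in $P_{k(m)}$ fails to reach $g$ within $H$ steps from $s$. Because its expected hitting time from any state is at most $\Tmax$ by the definition of $\Tmax$, iterated Markov's inequality combined with $H=4\Tmax\ln(8K)$ bounds $p_H(s,m)$ by $(8K)^{-2\ln 2}$, so the per-interval slack $\tau$ satisfies $\tau=\tilo{\B/K^{2\ln 2}}$.

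The telescoping is the main step. Fix an episode $k$ and let $m_1<\cdots<m_{J_k}$ be its intervals. Lines~\ref{line:next_episode}--\ref{line:next_interval} of \pref{alg:fha} enforce $s_1^{m_1}=\sinit$, $s_1^{m_{j+1}}=s^{m_j}_{H_{m_j}+1}$ for $j<J_k$, and $s^{m_{J_k}}_{H_{m_{J_k}}+1}=g$, so $c_f(s^{m_{J_k}}_{H_{m_{J_k}}+1})=0$. Since $\optV_k(s)\leq\B\leq 2\B=c_f(s)$ whenever $s\neq g$,
\[
\sum_{j=1}^{J_k}\optV_k(s_1^{m_j}) \;=\; \optV_k(\sinit)+\sum_{j=1}^{J_k-1}\optV_k(s^{m_j}_{H_{m_j}+1}) \;\leq\; \optV_k(\sinit)+\sum_{j=1}^{J_k}c_f(s^{m_j}_{H_{m_j}+1}).
\]
Summing over $k\leq K'$ and combining with the pointwise comparison yields $R_{K'}-\rR_{M_{K'}}\leq M_{K'}\tau$. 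An a priori polynomial-in-$K$ bound on $M_{K'}$, using that every interval ends by reaching $g$, running the full $H=\tilo{\Tmax}$ steps, or triggering a visitation-doubling event (whose total count over all episodes is $\tilo{SAK}$), combined with the very small $\tau$, yields $R_{K'}-\rR_{M_{K'}}\leq\B$.

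The main obstacle is the tail step: making the per-interval truncation slack small enough that, summed uniformly over the random number of intervals, the total remains below a single additive $\B$. The logarithmic factor in $H=4\Tmax\ln(8K)$ is calibrated precisely for this; everything else is bookkeeping and index manipulation within each episode.
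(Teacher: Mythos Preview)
Your decomposition and telescoping are correct and match the paper's route. The gap is in the final step: you pay the truncation slack $\tau$ once per interval, arriving at $R_{K'}-\rR_{M_{K'}}\le M_{K'}\tau$, and then appeal to an ``a priori polynomial-in-$K$ bound on $M_{K'}$.'' No such bound exists at the level of generality of this lemma. \pref{alg:fha} allows $\frA$ to request a new interval whenever it likes, and episode lengths $I_k$ are themselves uncontrolled here, so $M_{K'}$ can be arbitrarily large relative to $K$. Your suggested counting via visitation-doubling events is both algorithm-specific (the lemma is stated for generic $\frA$) and circular even for the concrete algorithms, since the number of resets scales with $M/W_c+M/W_P$.

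The fix is a small reordering that the paper exploits: do not route through $\optV_{k(m)}$ for non-first intervals. For $j\ge 2$ you already have $V^{\optpi,m_j}_1(s_1^{m_j})\le \optV_{k}(s_1^{m_j})+\tau\le \B+\tfrac{\B}{2}\le 2\B=c_f(s^{m_{j-1}}_{H_{m_{j-1}}+1})$, so telescope $V^{\optpi,m_j}_1(s_1^{m_j})$ directly against the previous interval's terminal cost and these terms are nonpositive. Only the first interval of each episode needs the comparison $V^{\optpi,m_1}_1(\sinit)\le \optV_k(\sinit)+\tau$. This yields $R_{K'}-\rR_{M_{K'}}\le K'\tau\le K\cdot\tfrac{\B}{2K}\le \B$, with no need to bound $M_{K'}$. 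Everything else in your write-up (the identity for $R_{K'}-\rR_{M_{K'}}$, the tail bound via iterated Markov or \pref{lem:hitting}, and the per-episode telescoping) is fine.
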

\begin{proof}
	Let $\calI_k$ be the set of intervals in episode $k$.
	Then the regret in episode $k$ satisfies
	\begin{align*}
		\sum_{m\in\calI_k}\sumhm c^m_h - \optV_k(s^k_1) &= \sum_{m\in\calI_k}\rbr{\sumhm c^m_h -  V^{\optpi,m}_1(s^m_1)} + \sum_{m\in\calI_k}V^{\optpi,m}_1(s^m_1)  - \optV_k(s^k_1)\\
		&\leq \sum_{m\in\calI_k}(C^m - V^{\optpi,m}_1(s^m_1)) + \frac{\B}{2K},
	\end{align*}
	where the last step is by the definition of $c^m_{H_m+1}$ and $V^{\optpi,m}_1(s^m_1)\leq \optV_k(s^m_1) + \frac{\B}{2K}\leq\frac{3}{2}\B$ by \pref{lem:hitting}.
	Summing up over $k$ completes the proof.
\end{proof}

\begin{lemma}
	\label{lem:bound M}
	Suppose algorithm $\frA$ ensures $\rR_{M'} = \tilo{\gamma_0 + \gamma_1{M'}^{1/3} + \gamma_{\frac{1}{2}}{M'}^{1/2} + \gamma_2{M'}^{2/3} }$ for any number of intervals $M'\leq M$ with cetain probability.
	Then with the same probability, $M_{K'} =\tilo{ K' + \gamma_0/\B + (\gamma_1/\B)^{3/2} + (\gamma_{\frac{1}{2}}/\B)^2 + (\gamma_2/\B)^3}$ and $\rR_{M_{K'}} = \tilo{ \gamma_1{K'}^{1/3} + \gamma_{\frac{1}{2}}{K'}^{1/2} + \gamma_2{K'}^{2/3} + \gamma_1^{3/2}/\B^{1/2} + \gamma_{\frac{1}{2}}^2/\B +  \gamma_2^3/\B^2 + \gamma_0}$ for any $K'\in[K]$.
\end{lemma}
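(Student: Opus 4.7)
The plan is to first establish a deterministic lower bound on $\rR_{M_{K'}}$ in terms of $M_{K'}$ and $K'$, then combine it with the hypothesized upper bound to obtain a self-bounding inequality for $M_{K'}$, solve it, and finally plug the resulting bound back into the hypothesis.

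For the lower bound, the key observation is that an interval $m$ either ends by reaching the goal (in which case $c^m_{H_m+1}=0$), or ends by hitting the step limit $H$ or by $\frA$ requesting a restart; in the latter two cases $c^m_{H_m+1}=2\B$. Exactly one interval per completed episode ends at the goal, so among the $M_{K'}$ intervals in the first $K'$ episodes, exactly $M_{K'}-K'$ have terminal cost $2\B$. Combining this with $c^m_h\ge 0$ and the bound $V^{\optpi,m}_1(s^m_1)\le \optV_{k(m)}(s^m_1)+\tfrac{\B}{2K}\le \tfrac{3}{2}\B$ (derived in the proof of \pref{lem:bound reg}) yields
\begin{align*}
    \rR_{M_{K'}} \;\ge\; 2\B(M_{K'}-K') - \tfrac{3}{2}\B\, M_{K'} \;=\; \tfrac{\B}{2}\, M_{K'} - 2\B K',
\end{align*}
which rearranges to $M_{K'}\le 4K' + 2\rR_{M_{K'}}/\B$.

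Invoking the hypothesis $\rR_{M_{K'}} = \tilo{\gamma_0 + \gamma_1 M_{K'}^{1/3} + \gamma_{\frac{1}{2}} M_{K'}^{1/2} + \gamma_2 M_{K'}^{2/3}}$ (which is applicable since $M_{K'}\le M$) produces the self-bounding inequality
\begin{align*}
    M_{K'} = \tilo{K' + \gamma_0/\B + (\gamma_1/\B) M_{K'}^{1/3} + (\gamma_{\frac{1}{2}}/\B) M_{K'}^{1/2} + (\gamma_2/\B) M_{K'}^{2/3}}.
\end{align*}
Using the standard fact that $x\le a + b_1 x^{1/3} + b_2 x^{1/2} + b_3 x^{2/3}$ implies $x = O(a + b_1^{3/2} + b_2^2 + b_3^3)$ (each additive term cannot exceed $x/4$ unless it saturates the claimed inversion), I extract the first claim
\begin{align*}
    M_{K'} = \tilo{K' + \gamma_0/\B + (\gamma_1/\B)^{3/2} + (\gamma_{\frac{1}{2}}/\B)^2 + (\gamma_2/\B)^3}.
\end{align*}

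Plugging this back into the hypothesis and using subadditivity $(\sum_i a_i)^c\le \sum_i a_i^c$ for $c\in(0,1)$ produces, for each exponent $c\in\{1/3, 1/2, 2/3\}$, a ``main'' term $\gamma_c {K'}^c$ together with cross-terms such as $\gamma_1(\gamma_0/\B)^{1/3}$, $\gamma_1\gamma_2/\B$, or $\gamma_{\frac{1}{2}}(\gamma_1/\B)^{3/4}$. The main obstacle is to verify that every such cross-term is dominated by the four ``pure'' terms $\gamma_0,\ \gamma_1^{3/2}/\B^{1/2},\ \gamma_{\frac{1}{2}}^2/\B,\ \gamma_2^3/\B^2$ appearing in the target bound; this is done by weighted AM-GM after matching the exponents of each $\gamma_i$ and $\B$, e.g.\
\begin{align*}
    \gamma_1\gamma_2/\B = (\gamma_1^{3/2}/\B^{1/2})^{2/3}(\gamma_2^3/\B^2)^{1/3} \le \tfrac{2}{3}\,\gamma_1^{3/2}/\B^{1/2} + \tfrac{1}{3}\,\gamma_2^3/\B^2,
\end{align*}
and every remaining cross-term decomposes analogously (occasionally invoking $\B\ge 1$ to absorb leftover $\B$-powers). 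Summing yields the stated bound on $\rR_{M_{K'}}$.
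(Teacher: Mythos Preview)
Your proposal is correct and mirrors the paper's argument: both derive the same lower bound $\rR_{M_{K'}}\ge \tfrac{\B}{2}M_{K'}-2\B K'$ from the terminal-cost structure and $V^{\optpi,m}_1\le\tfrac{3}{2}\B$, solve the resulting self-bounding inequality for $M_{K'}$, and then substitute back. The only cosmetic difference is that the paper bundles the extra terms as $\gamma=\gamma_0/\B+(\gamma_1/\B)^{3/2}+(\gamma_{\frac12}/\B)^2+(\gamma_2/\B)^3$ and applies Young's inequality once to each of $\gamma_1\gamma^{1/3},\ \gamma_{\frac12}\gamma^{1/2},\ \gamma_2\gamma^{2/3}$, whereas you expand first and handle each cross-term individually by AM-GM; these are equivalent.
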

\begin{proof}
	Fix a $K'\in[K]$.
	For any $M'\leq M_{K'}$, let $\calC_g=\{m\in[M']: s^m_{H_m+1}=g\}$.
	Then,
	\begin{align}
		\rR_{M'} &= \sum_{m\in\calC_g}(C^m - V^{\optpi,m}_1(s^m_1)) + \sum_{m\notin\calC_g}(C^m - V^{\optpi,m}_1(s^m_1))\notag\\ 
		&= \tilO{\gamma_0 + \gamma_1{M'}^{1/3} + \gamma_{\frac{1}{2}}{M'}^{1/2} + \gamma_2{M'}^{2/3}}.\label{eq:rR}
	\end{align}
	Note that $V^{\optpi,m}_1(s^m_1)\leq \optV_{k(m)}(s^m_1) + \frac{\B}{2K}\leq\frac{3}{2}\B$ by \pref{lem:hitting}.
	Moreover, $C^m\geq 2\B$ when $m\notin \calC_g$.
	Therefore, $C^m - V^{\optpi,m}_1(s^m_1)\geq -\frac{3\B}{2}$ for $m\in\calC_g$ and $C^m - V^{\optpi,m}_1(s^m_1)\geq \frac{\B}{2}$ for $m\notin\calC_g$.
	Reorganizing terms and by $|\calC_g|\leq K'$, we get:
	\begin{align*}
		\frac{\B M'}{2} \leq 2\B K' + \tilO{\gamma_0 + \gamma_1{M'}^{1/3} + \gamma_{\frac{1}{2}}{M'}^{1/2} + \gamma_2{M'}^{2/3}}.
	\end{align*}
	Solving a quadratic inequality w.r.t.~$M'$, we get $M' =\tilo{ K' + \gamma_0/\B + (\gamma_1/\B)^{3/2} + (\gamma_{\frac{1}{2}}/\B)^2 + (\gamma_2/\B)^3 }$.
	Define $\gamma=\gamma_0/\B + (\gamma_1/\B)^{3/2} + (\gamma_{\frac{1}{2}}/\B)^2 + (\gamma_2/\B)^3$.
	Plugging the bound on $M'$ back to \pref{eq:rR}, we have
	\begin{align*}
		\rR_{M'} &= \tilO{ \gamma_0 + \gamma_1{K'}^{1/3} + \gamma_{\frac{1}{2}}{K'}^{1/2} + \gamma_2{K'}^{2/3} + \gamma_1\gamma^{1/3} + \gamma_{\frac{1}{2}}{\gamma}^{1/2} + \gamma_2{\gamma}^{2/3} }\\
		&= \tilO{ \gamma_0 + \gamma_1{K'}^{1/3} + \gamma_{\frac{1}{2}}{K'}^{1/2} + \gamma_2{K'}^{2/3} + \gamma_1^{3/2}/\B^{1/2} + \gamma_{\frac{1}{2}}^2/\B + \gamma_2^3/\B^2 + \B\gamma }\\
		&= \tilO{ \gamma_0 + \gamma_1{K'}^{1/3} + \gamma_{\frac{1}{2}}{K'}^{1/2} + \gamma_2{K'}^{2/3} + \gamma_1^{3/2}/\B^{1/2} + \gamma_{\frac{1}{2}}^2/\B + \gamma_2^3/\B^2 },
	\end{align*}
	where in the second last step we apply Young's inequality for product ($xy\leq x^p/p + y^q/q$ for $x\geq0$, $y\geq 0$, $p>1$, $q>1$, and $\frac{1}{p}+\frac{1}{q}=1$).
	Putting everything together and setting $M'=M_{K'}$ completes the proof.
\end{proof}

\section{Omitted Details in \pref{sec:subopt}}
\label{app:subopt}

\paragraph{Extra Notations} Let $Q^m_h$, $V^m_h$, $x_m$ be the value of $Q_h$, $V_h$, and $x$ at the beginning of interval $m$, and $Q^m_{H+1}(s, a)=V^m_{H+1}(s)$ for any $(s, a)\in\SA$.

\subsection{\pfref{thm:mvp}}
We first prove two lemmas related to the optimism of $Q^m_h$.
Define the following reference value function: $\rQ^m_h(s, a) = (\hatc^m(s, a) + \P^m_{s, a}\rV^m_{h+1} - b^m(s, a, \rV^m_{h+1}) - \rx_m)_+$ for $h\in[H]$, where $\rV^m_h(s)=\argmin_a\rQ^m_h(s, a)$ for $h\in[H]$, $\rV^m_{H+1}=c_f$, $\rQ^m_{H+1}(s, a)=\rV^m_{H+1}(s)$ for any $(s, a)\in\SA$, and $\rx_m=\Delta_{c,m}+4\B\Delta_{P,m}$.

\begin{lemma}
	\label{lem:tilopt}
	With probability at least $1-2\delta$, $\rQ^m_h(s, a) \leq Q^{\star,m}_h(s, a)$ for $m \leq M$.
\end{lemma}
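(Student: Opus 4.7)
The plan is to prove the bound by downward induction on $h \in \{H+1, H, \ldots, 1\}$, noting that taking the pointwise minimum over $a$ turns $\rQ^m_h \leq Q^{\star,m}_h$ into $\rV^m_h \leq V^{\star,m}_h$, which feeds into the next inductive step. The base case $h=H+1$ is immediate since both sides equal $c_f$. For the inductive step the positive-part clipping dispatches the case $\rQ^m_h(s,a)=0$ (because $Q^{\star,m}_h\geq0$), so the real task is to verify, whenever the unclipped value is positive, that
\begin{equation*}
(\hatc^m(s,a) - c^m(s,a)) + \P^m_{s,a}\rV^m_{h+1} - P^m_{s,a} V^{\star,m}_{h+1} \leq b^m(s,a,\rV^m_{h+1}) + \rx_m.
\end{equation*}
Using the inductive hypothesis I would write $\P^m_{s,a}\rV^m_{h+1} - P^m_{s,a}V^{\star,m}_{h+1} = \P^m_{s,a}(\rV^m_{h+1}-V^{\star,m}_{h+1}) + (\P^m_{s,a}-P^m_{s,a})V^{\star,m}_{h+1}$ and keep the non-positive term $\P^m_{s,a}(\rV^m_{h+1}-V^{\star,m}_{h+1}) = -\P^m_{s,a}(V^{\star,m}_{h+1}-\rV^m_{h+1})$ on the left rather than simply dropping it; this reservoir of negative mass is what will absorb a residual slack later.

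For the cost bias I would invoke \pref{lem:dc}, which on an event of probability $1-\delta$ gives $\hatc^m(s,a)-c^m(s,a) \leq \Delta_{c,m}$, absorbed by the $\Delta_{c,m}$ component of $\rx_m = \Delta_{c,m}+4\B\Delta_{P,m}$. For the transition bias I would decompose $(\P^m_{s,a}-P^m_{s,a})V^{\star,m}_{h+1} = (\P^m_{s,a}-\tilP^m_{s,a})V^{\star,m}_{h+1} + (\tilP^m_{s,a}-P^m_{s,a})V^{\star,m}_{h+1}$. The ``non-stationarity'' piece $(\tilP^m_{s,a}-P^m_{s,a})V^{\star,m}_{h+1}$ is bounded in absolute value by $\|\tilP^m_{s,a}-P^m_{s,a}\|_1\|V^{\star,m}_{h+1}\|_\infty \leq 4\B\Delta_{P,m}$, using \pref{lem:bound optQ} and the fact that $\tilP^m$ is a window average whose deviation from $P^m$ is controlled by $\Delta_{P,m}$; this is absorbed by the $4\B\Delta_{P,m}$ component of $\rx_m$. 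The concentration piece $(\P^m_{s,a}-\tilP^m_{s,a})V^{\star,m}_{h+1}$ is handled by the second form of \pref{lem:dPV} applied to the oblivious function $V^{\star,m}_{h+1}$, on an event of probability $1-\delta$, producing a bound of the shape $\sqrt{2\fV(\P^m_{s,a},V^{\star,m}_{h+1})\iota_m/\Np_m(s,a)}+\order(\B\sqrt{S}\iota_m/\Np_m(s,a))$. The additive part fits comfortably inside the $49B\sqrt{S}\iota_m/\Np_m(s,a)$ branch of $b^m$ since $B=16\B$.

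The main obstacle is reconciling the empirical variance of $V^{\star,m}_{h+1}$ produced by the concentration with the empirical variance of $\rV^m_{h+1}$ used in the algorithmic bonus. I would bridge this gap via the elementary bound $\fV(\P^m_{s,a},V^{\star,m}_{h+1})\leq 2\fV(\P^m_{s,a},\rV^m_{h+1})+2\fV(\P^m_{s,a},V^{\star,m}_{h+1}-\rV^m_{h+1})$, which follows from $(a+b)^2\leq 2a^2+2b^2$. The first summand feeds directly into the $7\sqrt{\cdot}$ branch of $b^m(s,a,\rV^m_{h+1})$. For the second summand, the inductive hypothesis gives $0\leq V^{\star,m}_{h+1}-\rV^m_{h+1}\leq 4\B$, so $\fV(\P^m_{s,a},V^{\star,m}_{h+1}-\rV^m_{h+1})\leq 4\B\cdot\P^m_{s,a}(V^{\star,m}_{h+1}-\rV^m_{h+1})$. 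Applying AM--GM to $\sqrt{4\B\cdot\P^m_{s,a}(V^{\star,m}_{h+1}-\rV^m_{h+1})\iota_m/\Np_m(s,a)}$ converts it into $\tfrac{1}{2}\P^m_{s,a}(V^{\star,m}_{h+1}-\rV^m_{h+1}) + \order(\B\iota_m/\Np_m(s,a))$; the first piece is cancelled by the reservoir $-\P^m_{s,a}(V^{\star,m}_{h+1}-\rV^m_{h+1})$ I carefully preserved from the inductive step, and the second is absorbed by the additive branch of $b^m$. Assembling the cost, non-stationarity, variance and lower-order contributions shows the required inequality and closes the induction; union-bounding the two good events from \pref{lem:dc} and \pref{lem:dPV} yields the stated $1-2\delta$ probability.
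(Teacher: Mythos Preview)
Your proof is correct, but it takes a different route from the paper's. The paper exploits the monotonicity property of the MVP bonus (\pref{lem:mvp}, property~1): since $V\mapsto \P^m_{s,a}V - b^m(s,a,V)$ is monotone in $V$, the inductive hypothesis $\rV^m_{h+1}\le V^{\star,m}_{h+1}$ immediately yields
\[
\P^m_{s,a}\rV^m_{h+1} - b^m(s,a,\rV^m_{h+1}) \;\le\; \P^m_{s,a}V^{\star,m}_{h+1} - b^m(s,a,V^{\star,m}_{h+1}).
\]
After this swap the bonus involves $\fV(\P^m_{s,a},V^{\star,m}_{h+1})$, which is \emph{exactly} the variance appearing in the concentration bound from \pref{lem:dPV}, so the two cancel directly with no bridging.

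You instead keep the bonus at $b^m(s,a,\rV^m_{h+1})$ and bridge the variance mismatch via $\fV(\P,V^{\star})\le 2\fV(\P,\rV)+2\fV(\P,V^{\star}-\rV)$, then use $\fV(\P,V^{\star}-\rV)\le 4\B\,\P(V^{\star}-\rV)$, AM--GM, and the reserved negative term $-\P(V^{\star}-\rV)$ to close the loop. This is essentially a bare-hands re-derivation of the monotonicity in \pref{lem:mvp} for this specific situation; it works because the constants $7$ and $49$ in the bonus are generous enough (and because $\max\{a,b\}\ge\tfrac12(a+b)$ lets you use both branches simultaneously). The paper's route is cleaner and reusable---the same monotonicity step powers \pref{lem:opt Q}, \pref{lem:bound cQ star}, \pref{lem:Q diff}, and \pref{lem:bound V}---whereas your argument is more elementary but tailored to this lemma and sensitive to the numerical constants.
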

\begin{proof}
	We prove this by induction on $h$.
	The base case of $h=H+1$ is clearly true.
	For $h\leq H$, by \pref{lem:mvp}, for any $(s, a)\in\SA$:
	\begin{align*}
		\rQ^m_h(s, a) &= \hatc^m(s, a) + \P^m_{s, a}\rV^m_{h+1} - b^m(s, a, \rV^m_{h+1}) - \rx_m\\
		&\leq \hatc^m(s, a) + \P^m_{s, a}V^{\star, m}_{h+1} - b^m(s, a, V^{\star, m}_{h+1}) - \rx_m \tag{by the induction step}\\
		&= \hatc^m(s, a) + \tilP^m_{s, a}V^{\star, m}_{h+1} + (\P^m_{s,a} - \tilP^m_{s,a})V^{\star,m}_{h+1} - b^m(s, a, V^{\star, m}_{h+1}) - \rx_m \\
		&\overset{\text{(i)}}{\leq} \hatc^m(s, a) + \tilP^m_{s, a}V^{\star, m}_{h+1} - \rx_m \overset{\text{(ii)}}{\leq} c^m(s, a) + P^m_{s, a}V^{\star,m}_{h+1} = Q^{\star,m}_h(s, a),
	\end{align*}
	where in (i) we apply \pref{lem:dPV} with $|\{V^{\star,m}_h\}_{m,h}|\leq HK+1$ to obtain $(\P^m_{s,a} - \tilP^m_{s,a})V^{\star,m}_{h+1} - b^m(s, a, V^{\star, m}_{h+1})\leq 0$; in (ii) we apply \pref{lem:dc}, \pref{lem:bound optQ}, and the definition of $\rx_m$.
\end{proof}

\begin{lemma}
	\label{lem:opt Q}
	With probability at least $1-2\delta$, $Q^m_h(s, a) \leq Q^{\star,m}_h(s, a) + (\Delta_{c,m}+4\B\Delta_{P,m})(H-h+1)$ and $x_m\leq\max\{\frac{1}{mH}, 2(\Delta_{c,m}+4\B\Delta_{P,m})\}$.
\end{lemma}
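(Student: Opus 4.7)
The plan is to prove the two claims in sequence: first the bound on $x_m$, and then the pointwise upper bound on $Q^m_h$.

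For the first claim, my starting point is \pref{lem:tilopt}, which shows that the reference iterates computed with the fixed subtraction $\rx_m=\Delta_{c,m}+4\B\Delta_{P,m}$ satisfy $\rQ^m_h(s,a) \leq Q^{\star,m}_h(s,a) \leq 4\B = B/4$ (the last step using \pref{lem:bound optQ} together with $B=16\B$). I would then observe that running the inner value iteration of \texttt{Update} with $x=\rx_m$ produces exactly the iterates $\rQ^m_h$, so the break test on \pref{line:double_search} would pass at $x=\rx_m$. Since the doubling loop starts at $x=1/(mH)$ and doubles thereafter, the accepted value satisfies $x_m \leq \max\{1/(mH),\, 2\rx_m\}$: if $\rx_m \leq 1/(mH)$ the initial guess already clears the test, and otherwise the smallest power-of-two multiple of $1/(mH)$ exceeding $\rx_m$ is at most $2\rx_m$.

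For the second claim, I would induct on $h$ from $H+1$ downward with the hypothesis $V^m_{h+1}(s) \leq V^{\star,m}_{h+1}(s) + \rx_m(H-h)$; the base case is immediate since $V^m_{H+1}=V^{\star,m}_{H+1}=c_f$. Dropping the $(\cdot)_+$ and plugging in the update rule gives $Q^m_h(s,a) \leq \hatc^m(s,a) + \P^m_{s,a}V^m_{h+1} - b^m(s,a,V^m_{h+1}) - x_m$. The key step is that the bonus exactly dominates the error $(\P^m-\tilP^m)V^m_{h+1}$ by the second form of \pref{lem:dPV}, so
\[
\P^m_{s,a}V^m_{h+1} - b^m(s,a,V^m_{h+1}) \leq \tilP^m_{s,a}V^m_{h+1}.
\]
Crucially, once the doubling loop terminates we have $\|V^m_{h+1}\|_\infty \leq B/4 = 4\B$, so the remaining transition drift is bounded by $|(\tilP^m_{s,a}-P^m_{s,a})V^m_{h+1}| \leq \|V^m_{h+1}\|_\infty\,\Delta_{P,m} \leq 4\B\Delta_{P,m}$. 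Combining these with \pref{lem:dc} ($\hatc^m(s,a) \leq c^m(s,a) + \Delta_{c,m}$) and using the induction hypothesis to replace $V^m_{h+1}$ by $V^{\star,m}_{h+1} + \rx_m(H-h)$ inside $P^m_{s,a}(\cdot)$ yields
\[
Q^m_h(s,a) \leq Q^{\star,m}_h(s,a) + \Delta_{c,m} + 4\B\Delta_{P,m} + \rx_m(H-h) - x_m = Q^{\star,m}_h(s,a) + \rx_m(H-h+1) - x_m,
\]
and dropping the nonpositive $-x_m$ closes the induction. Taking the minimum over $a$ also propagates the bound to $V^m_h$.

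The main obstacle I anticipate is that \pref{lem:dPV} is stated for an oblivious family of value functions, whereas $V^m_{h+1}$ is itself data-dependent. I would handle this in the standard way via a covering argument: take an $\epsilon$-net of nonnegative $\calS_+$-valued functions bounded by $B/4$ with $\epsilon$ at most an inverse polynomial in $H,K,S,B$; the net has polynomial size and its logarithm gets absorbed into $\iota_m$, so \pref{lem:dPV} applies simultaneously to every $V$ on the net, and the discretization error contributes only a lower-order term. All other ingredients -- $\|V^m_{h+1}\|_\infty \leq B/4$ from the doubling termination, the cost bound from \pref{lem:dc}, and the optimism of $\rQ$ from \pref{lem:tilopt} -- are already in place.
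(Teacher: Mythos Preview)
Your argument for the $x_m$ bound is essentially the paper's (both rely implicitly on the fact that the iterates produced by \texttt{Update} are monotone nonincreasing in $x$, which follows from part~1 of \pref{lem:mvp}). The gap is in the inductive proof of the bound on $Q^m_h$.

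The covering step does not go through. An $\epsilon$-net of $[0,B/4]^{\calS_+}$ has cardinality $\Theta((B/\epsilon)^{|\calS_+|})$, which is \emph{exponential} in $S$, not polynomial. Taking a union bound in the Freedman argument behind \pref{lem:dPV} over such a net replaces $\iota_m$ by a quantity of order $S\,\iota_m$, so the resulting concentration bound on $(\P^m_{s,a}-\tilP^m_{s,a})V^m_{h+1}$ has leading term of order $\sqrt{S\,\fV(\P^m_{s,a},V^m_{h+1})\iota_m/\Np_m(s,a)}$. The algorithm's bonus $b^m_h$ has leading term $7\sqrt{\fV(\P^m_{s,a},V^m_{h+1})\iota_m/\Np_m(s,a)}$ with \emph{no} $\sqrt{S}$ factor, so it does not dominate the covering bound once $S$ is moderately large, and your inequality $\P^m_{s,a}V^m_{h+1}-b^m(s,a,V^m_{h+1})\leq\tilP^m_{s,a}V^m_{h+1}$ fails. (The entry-wise route via \pref{lem:dPv} hits the same obstruction: it also carries an extra $\sqrt{S}$ on the variance term.)

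The paper avoids ever invoking concentration on the data-dependent $V^m_{h+1}$. Instead it compares $Q^m_h$ to the reference $\rQ^m_h$ using only the deterministic monotonicity of \pref{lem:mvp}: from the induction hypothesis and the clipping at $B/4$ one has $V^m_{h+1}\leq\rV^m_{h+1}+y^m_{h+1}$ pointwise with $y^m_{h+1}$ a \emph{constant}, whence
\[
\P^m_{s,a}V^m_{h+1}-b^m(s,a,V^m_{h+1})\;\leq\;\P^m_{s,a}(\rV^m_{h+1}+y^m_{h+1})-b^m(s,a,\rV^m_{h+1}+y^m_{h+1})
\;=\;\P^m_{s,a}\rV^m_{h+1}-b^m(s,a,\rV^m_{h+1})+y^m_{h+1},
\]
since a constant shift leaves the variance (hence the bonus) unchanged. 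This yields $Q^m_h\leq\rQ^m_h+\rx_m(H-h+1)$; the single use of \pref{lem:dPV} is confined to \pref{lem:tilopt}, where it is applied to the oblivious family $\{V^{\star,m}_{h+1}\}_{m,h}$ of size at most $HK+1$ to obtain $\rQ^m_h\leq Q^{\star,m}_h$.
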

\begin{proof}
	The second statement simply follows from \pref{lem:tilopt}, $Q^{\star,m}_h(s, a)\leq Q^{\optpi,m}_h(s, a)\leq 4\B=B/4$ by \pref{lem:bound optQ}, and the computing procedure of $x_m$.
	We now prove $Q^m_h(s, a) \leq \rQ^m_h(s, a) + (\Delta_{c,m}+4\B\Delta_{P,m})(H-h+1)$ by induction on $h$, and the first statement simply follows from $\rQ^m_h(s, a) \leq Q^{\star,m}_h(s, a)$ (\pref{lem:tilopt}).
	The statement is clearly true for $h=H+1$.
	For $h\leq H$, by the induction step and $\norm{V^m_{h+1}}_{\infty}\leq B/4$ from the update rule, we have $V^m_{h+1}(s)\leq\min\{B/4, \rV^m_{h+1}(s)+(\Delta_{c,m}+4\B\Delta_{P,m})(H-h)\}\leq\rV^m_{h+1}(s)+y^m_{h+1}\leq B$ for any $s\in\calS_+$, where $y^m_h=\min\{B/4,(\Delta_{c,m}+4\B\Delta_{P,m})(H-h+1)\}$.
	Thus,
	\begin{align*}
		&\P^m_{s, a}V^m_{h+1} - b^m(s, a, V^m_{h+1}) - x_m \leq \P^m_{s, a}(\rV^m_{h+1}+y^m_{h+1}) - b^m(s, a, \rV^m_{h+1}+y^m_{h+1}) \tag{\pref{lem:mvp} and $x_m\geq 0$}\\
		&\leq \P^m_{s, a}\rV^m_{h+1} - b^m(s, a, \rV^m_{h+1}) - \rx_m + (\Delta_{c,m}+4\B^m\Delta_{P,m})(H-h+1),
	\end{align*}
	where in the last inequality we apply definition of $\rx_m$ and $b^m(s, a, \rV^m_{h+1}+y^m_{h+1})=b^m(s, a, \rV^m_{h+1})$ since constant offset does not change the variance.
	Then, $Q^m_h(s, a) \leq \rQ^m_h(s, a) + (\Delta_{c,m}+4\B\Delta_{P,m})(H-h+1)$ by the update rule of $Q^m_h$ and the definition of $\rQ^m_h$.
\end{proof}

We are now ready to prove the main theorem, from which \pref{thm:mvp} is a simple corollary.

\begin{theorem}
	\label{thm:MVP-SSP}
	\pref{alg:MVP-SSP} ensures with probability at least $1-22\delta$, for any $M'\leq M$,  
	$\rR_{M'} = \tilo{\sqrt{\B SAL_{c,M'}M'} + \B\sqrt{SAL_{P,M'}M'} + \B SAL_{c,M'} + \B S^2AL_{P,M'} + \summp (\Delta_{c,m}+\B\Delta_{P,m})H}$.
\end{theorem}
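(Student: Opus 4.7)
The plan is to carry out a UCBVI/MVP-style regret decomposition for $\rR_{M'}$, carefully adapted to handle (i) the approximate optimism that results from non-stationarity, (ii) the clipping and adaptive bias $x_m$ in the update rule, and (iii) the independent cost/transition resets.

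First, I would use \pref{lem:opt Q} to pass from $\optV^{\star,m}_1(s^m_1)$ to the algorithm's $V^m_1(s^m_1)$: since $V^m_1(s^m_1) \leq V^{\star,m}_1(s^m_1) + (\Delta_{c,m} + 4\B\Delta_{P,m})H$, we immediately shed the $\summp(\Delta_{c,m}+\B\Delta_{P,m})H$ term in the stated bound and reduce the task to controlling $\summp \bigl( \sum_{h=1}^{H_m+1} c^m_h - V^m_1(s^m_1)\bigr)$. Next, I would write the Bellman-style decomposition using the fact that $V^m_h(s^m_h) = Q^m_h(s^m_h, a^m_h) \ge \hatc^m_h + \P^m_h V^m_{h+1} - b^m_h - x_m$ (dropping the positive part only loosens the inequality in the direction we need) together with the boundary identity $c^m_{H_m+1} = V^m_{H_m+1}(s^m_{H_m+1})$. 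Telescoping along the trajectory yields
\begin{align*}
\sum_{h=1}^{H_m+1} c^m_h - V^m_1(s^m_1)
\le \sumhm \Bigl[ (c^m_h - c^m(s^m_h,a^m_h)) + (c^m(s^m_h,a^m_h) - \hatc^m_h) + (\P^m_h - P^m_h) V^m_{h+1} + (P^m_h V^m_{h+1} - V^m_{h+1}(s^m_{h+1})) + b^m_h + x_m \Bigr].
\end{align*}

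Then I would bound the six pieces on the right-hand side after summing over $m \leq M'$. The observation-noise term and the Bellman-noise term are handled by Freedman's inequality (\pref{lem:freedman}), the latter introducing the variance proxy $\summp\sumhm\fV(P^m_h, V^m_{h+1})$. The empirical-versus-true cost bias is exactly \pref{lem:c diff}, producing the $\sqrt{\B SA\Lc M'} + \B SA\Lc$ type terms together with $\summp\sumhm \Delta_{c,m}$. The empirical-versus-true transition term $\sumsa(\P^m_h - P^m_h) V^m_{h+1}$ is bounded via \pref{lem:dPV} (applied to the data-dependent but $B$-bounded value functions $V^m_{h+1}$, relying on the standard covering argument that the paper assumes available), yielding something of the form $\tilo{ b^m_h + B\Delta_{P,m}/64 + BS\iota/\N^m_h }$, which is absorbed into the bonus term. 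The bonus sum $\summp\sumhm b^m_h$ is controlled by \pref{lem:sum bV}, giving $\sqrt{SA\Lp \summp\sumhm\fV(P^m_h,V^m_{h+1})} + BS^{1.5}A\Lp + B\sqrt{SA\Lp \summp\sumhm\Delta_{P,m}}$. Finally the $x_m$ contribution is bounded using $x_m \leq \max\{1/(mH), 2(\Delta_{c,m}+4\B\Delta_{P,m})\}$ from \pref{lem:opt Q}, contributing an $\tilo{1} + \sum_m H(\Delta_{c,m}+\B\Delta_{P,m})$ term that matches the stated non-stationarity term.

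The main obstacle, and the reason this is not just a line-by-line port of stationary MVP, is closing the variance sum $\summp\sumhm\fV(P^m_h, V^m_{h+1})$. Using \pref{lem:sum var} rewrites this as $\tilo{\summp V^m_{H_m+1}(s^m_{H_m+1})^2 + B\summp\sumhm (V^m_h(s^m_h) - P^m_hV^m_{h+1})_+ + B^2}$. In the stationary case one uses optimism $V^m_h \le V^{\star,m}_h$ to bound $(V^m_h(s^m_h) - P^m_hV^m_{h+1})_+$ by $c^m(s^m_h,a^m_h)$, recovering a $B \cdot C_{M'}$-type bound. Here, however, optimism fails by the additive slack of \pref{lem:opt Q}. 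As the technical highlight indicates, I would fix this by choosing constants $z^m_h$ with $V^{\star,m}_{h+1} + z^m_h - V^m_{h+1} \ge 0$ (taking the slack from \pref{lem:opt Q}, which can be absorbed into the horizon-constant), so that $\fV(P^m_h, V^{\star,m}_{h+1} - V^m_{h+1}) = \fV(P^m_h, V^{\star,m}_{h+1} + z^m_h - V^m_{h+1})$ and the standard analysis goes through on the shifted, nonnegative quantity. This closes a quadratic-in-$\sqrt{\summp\sumhm\fV(\cdot)}$ inequality, which I solve via \pref{lem:quad} and combine with all the earlier pieces. Summing and collecting the leading-order terms yields exactly the advertised bound, and the $1-22\delta$ failure probability comes from a union bound over the finitely many high-probability events invoked (Lemmas \ref{lem:c diff}, \ref{lem:dc}, \ref{lem:dPV}, \ref{lem:emp var}, \ref{lem:sum var}, \ref{lem:sum bV}, \ref{lem:opt Q}, and the two Freedman applications).
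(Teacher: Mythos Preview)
Your overall skeleton is right, but there is a real gap in how you handle the data-dependent value functions $V^m_{h+1}$, and it propagates to both the transition-error term and the variance closure.

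For the term $(\P^m_h - P^m_h)V^m_{h+1}$ you invoke \pref{lem:dPV} ``relying on the standard covering argument that the paper assumes available.'' No such covering is available here: \pref{lem:dPV} explicitly requires an \emph{oblivious} family $\calV$ of cardinality at most $(2HK)^6$, while $V^m_{h+1}$ depends on the empirical transitions $\P^m$ and hence on the random trajectory. The paper does not cover these functions. Instead it first passes from $P^m_h$ to $\tilP^m_h$ (paying $B\Delta_{P,m}$), then splits $(\tilP^m_h-\P^m_h)V^m_{h+1}$ into $(\tilP^m_h-\P^m_h)V^{\star,m}_{h+1}$, to which \pref{lem:dPV} applies because $|\{V^{\star,m}_h\}_{m,h}|\le HK+1$, and $(\tilP^m_h-\P^m_h)(V^m_{h+1}-V^{\star,m}_{h+1})$, which is handled by the cruder per-entry bound \pref{lem:dPv}. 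That second piece is exactly where the extra $\sqrt{S}$ enters and produces the $\B S^2A\Lp$ term; your shortcut would not recover it.

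This split is also what drives the variance argument, and your description of that step is conflated. The paper never bounds $\summp\sumhm\fV(P^m_h, V^m_{h+1})$ directly. It uses $\var[X+Y]\le 2\var[X]+2\var[Y]$ to reduce to $\fV(P^m_h, V^{\star,m}_{h+1})$ (controlled by \pref{lem:optvar sum}, using the Bellman equation for $V^{\star,m}$) and $\fV(P^m_h, V^{\star,m}_{h+1}-V^m_{h+1})$ (controlled by \pref{lem:opt-til var sum}). The shift $z^m_h$ you mention is applied to the \emph{difference} $V^{\star,m}_h - V^m_h$, not to $(V^m_h(s^m_h)-P^m_hV^m_{h+1})_+$: one sets $z^m_h$ so that $V^{\star,m}_h+z^m_h-V^m_h\ge 0$, then applies \pref{lem:sum var} to the shifted differences and closes a quadratic in $\summp\sumhm\fV(P^m_h, V^{\star,m}_{h+1}-V^m_{h+1})$. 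Your text says you are bounding $\fV(P^m_h, V^m_{h+1})$ but then invokes the shift trick that only makes sense for the difference; as written, the loop does not close.

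A minor additional point: the ``boundary identity'' $c^m_{H_m+1}=V^m_{H_m+1}(s^m_{H_m+1})$ fails when an interval ends early due to the doubling trigger (\pref{line:request_new_interval}); the paper absorbs those intervals via \pref{lem:new interval}, contributing the $\tilO{\B SAL_{M'}}$ term. You should account for this rather than assume the identity.
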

\begin{proof}
	Note that with probability at least $1-2\delta$:
	\begin{align*}
		\rR_{M'} &\leq \sum_{m=1}^{M'}\rbr{\sumhm c^m_h + c^m_{H_m+1} - V^{\star, m}_1(s^m_1)} \tag{$V^{\star,m}_1(s^m_1)\leq V^{\optpi,m}_1(s^m_1)$}\\ 
		&\leq \sum_{m=1}^{M'}\rbr{\sumhm c^m_h + c^m_{H_m+1} - V^m_1(s^m_1)} + \summp(\Delta_{c,m}+4\B\Delta_{P,m})H \tag{\pref{lem:opt Q}}\\
		&\leq \sum_{m=1}^{M'}\sumhm\rbr{ c^m_h + V^m_{h+1}(s^m_{h+1}) - V^m_h(s^m_h) } + \summp(\Delta_{c,m}+4\B\Delta_{P,m})H + \tilO{\B SAL_{M'}} \tag{$c^m_{H_m+1}=\tilo{\B}$ and \pref{lem:new interval}}\\
		&\leq \sum_{m=1}^{M'}\sumhm\rbr{(c^m_h - \hatc^m_h) + (V^m_{h+1}(s^m_{h+1}) - P^m_hV^m_{h+1}) + (P^m_h-\P^m_h)V^m_{h+1} + b^m_h }\\
		&\qquad + 2\summp(\Delta_{c,m}+4\B\Delta_{P,m})H + \tilO{\B SAL_{M'}},
	\end{align*}
	where the last step is by the definitions of $V^m_h(s^m_h)$, $x_m\leq \max\{\frac{1}{mH}, 2(\Delta_{c,m}+4\B\Delta_{P,m})\}$ (\pref{lem:opt Q}), $\max\{a, b\}\leq\frac{a+b}{2}$, and $\summp\sumhm\frac{1}{mH}=\tilo{1}$.
	Now we bound the first three sums separately.
	For the first term, with probability at least $1-4\delta$,
	\begin{align*}
		&\summp\sumhm (c^m_h - \hatc^m_h) = \summp\sumhm(c^m_h - c^m(s^m_h, a^m_h)) + \summp\sumhm(c^m(s^m_h, a^m_h) - \hatc^m_h)\\
		&\leq \tilO{\sqrt{C_{M'}} + \sqrt{SAL_{c,M'}C_{M'}} + SAL_{c,M'}} + 2\summp \Delta_{c,m}H. \tag{\pref{lem:freedman} and \pref{lem:c diff}}
	\end{align*}
	For the second term, by \pref{lem:freedman}, with probability at least $1-\delta$,
	\begin{align*}
		&\sum_{m=1}^{M'}\sumhm (V^m_{h+1}(s^m_{h+1}) - P^m_hV^m_{h+1}) = \tilO{ \sqrt{\summp\sumhm\fV(P^m_h, V^m_{h+1})} + \B }\\
		&=\tilO{ \sqrt{\summp\sumhm\fV(P^m_h, V^{\star,m}_{h+1})} + \sqrt{\summp\sumhm\fV(P^m_h, V^{\star,m}_{h+1}-V^m_{h+1})} + \B}, \tag{$\var[X+Y]\leq 2(\var[X] + \var[Y])$ and $\sqrt{a+b}\leq\sqrt{a}+\sqrt{b}$}
	\end{align*}
	which is dominated by the upper bound of the third term below.
	For the third term, by $P^m_hV^m_{h+1}\leq \tilP^m_hV^m_{h+1} + 4\B(\Delta_{P,m}+\n^m_h)$, with probability at least $1-2\delta$,
	\begin{align*}
		&\summp\sumhm(P^m_h-\P^m_h)V^m_{h+1} \leq \summp\sumhm(\tilP^m_h - \P^m_h)V^m_{h+1} + \summp\sumhm 4\B(\Delta_{P,m} + \n^m_h)\\
		&\leq \summp\sumhm\rbr{(\tilP^m_h - \P^m_h)V^{\star,m}_{h+1} + (\tilP^m_h - \P^m_h)(V^m_{h+1} - V^{\star,m}_{h+1}) + 4\B\n^m_h } + \summp 4\B\Delta_{P,m}H\\
		&= \tilO{\summp\sumhm\rbr{\sqrt{ \frac{ \fV(P^m_h, V^{\star,m}_{h+1}) }{\N^m_h} } + \frac{S\B}{\N^m_h} + \sqrt{\frac{S\fV(P^m_h, V^m_{h+1} - V^{\star,m}_{h+1})}{\N^m_h}} } + \summp \B\Delta_{P,m}H }  \tag{$\n^m_h\leq\frac{1}{\N^m_h}$, \pref{lem:dPV} with $|\{V^{\star,m}_{h+1}\}_{m,h}| \leq HK+1$, and \pref{lem:dPv}}\\
		&= \tilO{\sqrt{SAL_{P,M'}\summp\sumhm\fV(P^m_h, V^{\star,m}_{h+1})} + \sqrt{S^2AL_{P,M'}\summp\sumhm\fV(P^m_h, V^{\star,m}_{h+1} - V^m_{h+1})} }\\
		&\qquad + \tilO{\B S^2AL_{P,M'} + \summp \B\Delta_{P,m}H} \tag{Cauchy-Schwarz inequality and \pref{lem:sum}}.
	\end{align*}
	Moreover, by \pref{lem:sum bV}, with probability at least $1-\delta$,
	\begin{align*}
		\summp\sumhm b^m_h &= \tilO{ \sqrt{SAL_{P,M'}\summp\sumhm\fV(P^m_h, V^m_{h+1})} + \B S^{1.5}A\Lp + \B\sqrt{SAH\Lp\summp\Delta_{P,m} } }\\
		&= \tilO{ \sqrt{SA\Lp\summp\sumhm\fV(P^m_h, V^{\star,m}_{h+1})} + \sqrt{SA\Lp\summp\sumhm\fV(P^m_h, V^m_{h+1}-V^{\star,m}_{h+1})} }\\
		&\qquad + \tilO{\B S^{1.5}A\Lp + \summp\B\Delta_{P,m}H}. \tag{$\var[X+Y]\leq2\var[X]+2\var[Y]$, $\sqrt{a+b}\leq\sqrt{a}+\sqrt{b}$, and AM-GM inequality}
	\end{align*}
	 which is dominated by the upper bound of the third term above.
	 Putting everything together, we have with probability at least $1-11\delta$,
	\begin{align*}
		\rR_{M'} &= \tilO{ \sqrt{SA\Lc C_{M'}} + \B SA\Lc + \sqrt{SA\Lp\summp\sumhm\fV(P^m_h, V^{\star,m}_{h+1})} + \B S^2A\Lp}\\ 
		&\qquad + \tilO{\sqrt{S^2A\Lp\summp\sumhm\fV(P^m_h, V^{\star,m}_{h+1} - V^m_{h+1})} + \summp (\Delta_{c,m}+\B\Delta_{P,m})H }\\
		&= \tilO{ \sqrt{SA\Lc C_{M'}} + \sqrt{\B SA\Lp C_{M'}} + \B SA\Lc + \B S^2A\Lp }\\
		&\qquad + \tilO{ \summp (\Delta_{c,m}+\B\Delta_{P,m})H }. \tag{\pref{lem:optvar sum}, \pref{lem:opt-til var sum} and AM-GM inequality}
	\end{align*}
	Note that $\rR_{M'} = \summp(C^m - V^{\optpi,m}_1(s^m_1)) \geq C_{M'} - 4\B M'$ (\pref{lem:bound optQ}).
	Reorganizing terms and solving a quadratic inequality (\pref{lem:quad}) w.r.t $C_{M'}$ gives $C_{M'}=\tilo{\B M'}$ ignoring lower order terms.
	Plugging this back completes the proof.
\end{proof}

\begin{proof}[\pfref{thm:mvp}]
	Note that by by \pref{line:reset M} and \pref{line:reset N} of \pref{alg:MVP-SSP}, we have $L_c\leq\ceil{\frac{M'}{W_c}}$, $L_P\leq\ceil{\frac{M'}{W_P}}$, and the number of intervals between consecutive resets of $\M$ (or $\N$) are upper bounded by $W_c$ (or $W_P$), which gives
	\begin{align*}
		\summp(\Delta_{c,m}+\B\Delta_{P,m})H \leq \summp(\Delta_{c,f^c(m)}+\B\Delta_{P,f^P(m)})H \leq (W_c\Delta_c+\B W_P\Delta_P)H
	\end{align*}
	Applying \pref{thm:MVP-SSP} completes the proof.
\end{proof}

\subsection{\pfref{thm:any episode}}

We first show that \pref{alg:MVP-SSP-Restart} ensures an anytime regret bound in $\rcalM$.

\begin{theorem}
	\label{thm:double}
	With probability at least $1-22\delta$, \pref{alg:MVP-SSP-Restart} ensures for any $M'\leq M$, $\rR_{M'}= \tilo{(\B SA\Tmax\Delta_c)^{1/3}{M'}^{2/3} + \B(SA\Tmax\Delta_P)^{1/3}{M'}^{2/3} + (\B SA\Tmax\Delta_c)^{2/3}{M'}^{1/3} + \B(S^{2.5}A\Tmax\Delta_P)^{2/3}{M'}^{1/3} + (\Delta_c+\B\Delta_P)\Tmax }$ .
\end{theorem}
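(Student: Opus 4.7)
The plan is to apply \pref{thm:mvp} epoch-by-epoch to the fresh instances created by the doubling schedule of \pref{alg:MVP-SSP-Restart}, then combine the resulting regret contributions via a geometric series argument. Fix $M'\leq M$ and let $N$ be the unique integer with $2^{N-1}\leq M'<2^N$, so that interval $M'$ lies in epoch $N$. For each $n=1,\ldots,N$, let $\calI_n$ be the intervals executed by the $n$-th instance of \pref{alg:MVP-SSP} that fall among the first $M'$ intervals, and let $\rR^{(n)}$ be the regret accumulated over $\calI_n$. Since the base algorithm is restarted at the start of each epoch, $|\calI_n|\leq 2^{n-1}$ and $\rR_{M'}=\sum_{n=1}^N\rR^{(n)}$. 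Define $\Delta_c^{(n)}$ and $\Delta_P^{(n)}$ analogously as the cost/transition non-stationarity restricted to $\calI_n$, so that $\sum_n\Delta_c^{(n)}\leq\Delta_c$ and $\sum_n\Delta_P^{(n)}\leq\Delta_P$.

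The next step is to invoke \pref{thm:mvp} for each of the $N$ instances, with per-instance failure probability $\delta/N$; a union bound over epochs then preserves the overall $1-22\delta$ guarantee, and the extra $\log N=\tilo{1}$ factor is absorbed into $\iota$. Plugging the epoch-$n$ window sizes $W_c^{(n)}=\lceil(\B SA)^{1/3}(2^{n-1}/(\Delta_c\Tmax))^{2/3}\rceil$ and $W_P^{(n)}=\lceil(SA)^{1/3}(2^{n-1}/(\Delta_P\Tmax))^{2/3}\rceil$ into the bound of \pref{thm:mvp}, the four ``MDP'' terms each collapse to a constant times $(2^{n-1})^{2/3}$ or $(2^{n-1})^{1/3}$; for example, $2^{n-1}\sqrt{\B SA/W_c^{(n)}}\leq(\B SA\Delta_c\Tmax)^{1/3}(2^{n-1})^{2/3}$ and $2^{n-1}\B SA/W_c^{(n)}\leq(\B SA\Delta_c\Tmax)^{2/3}(2^{n-1})^{1/3}$, and analogously the $W_P^{(n)}$ terms yield $\B(SA\Delta_P\Tmax)^{1/3}(2^{n-1})^{2/3}$ and $\B(S^{2.5}A\Delta_P\Tmax)^{2/3}(2^{n-1})^{1/3}$. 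The non-stationarity penalty, after using $\lceil x\rceil\leq x+1$, contributes $(\Delta_c^{(n)} W_c^{(n)}+\B\Delta_P^{(n)}W_P^{(n)})\Tmax \leq \frac{\Delta_c^{(n)}}{\Delta_c^{2/3}}(\B SA\Tmax)^{1/3}(2^{n-1})^{2/3} + \B\frac{\Delta_P^{(n)}}{\Delta_P^{2/3}}(SA\Tmax)^{1/3}(2^{n-1})^{2/3} + (\Delta_c^{(n)}+\B\Delta_P^{(n)})\Tmax$.

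Finally, summing across $n=1,\ldots,N$ uses two elementary facts: (i) for any $\alpha>0$, $\sum_{n=1}^N(2^{n-1})^\alpha=O((2^N)^\alpha)=O({M'}^\alpha)$, since the series is dominated by its largest term; and (ii) $\sum_{n=1}^N\Delta_c^{(n)}(2^{n-1})^{2/3}\leq(2^{N-1})^{2/3}\sum_n\Delta_c^{(n)}\leq{M'}^{2/3}\Delta_c$, and likewise for $\Delta_P^{(n)}$. Applying (i) to the four MDP terms and combining (i) with (ii) for the non-stationarity terms yields exactly the five terms in the target bound, together with the residual $(\Delta_c+\B\Delta_P)\Tmax$ coming from the ``$+1$'' part of the ceiling.

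The main technical delicacy is handling the ceiling in $W_c^{(n)}$ and $W_P^{(n)}$, specifically in the small-$n$ regime where $(\B SA)^{1/3}(2^{n-1}/(\Delta_c\Tmax))^{2/3}<1$ and hence $W_c^{(n)}=1$, so the lower-order term $2^{n-1}\B SA/W_c^{(n)}=2^{n-1}\B SA$ no longer obeys the clean bound above. Fortunately, the very condition $W_c^{(n)}=1$ forces $2^{n-1}\leq\Delta_c\Tmax/\sqrt{\B SA}$, from which a direct computation gives $2^{n-1}\B SA\leq(\B SA\Delta_c\Tmax)^{2/3}(2^{n-1})^{1/3}$, recovering the desired bound uniformly over both regimes. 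An entirely analogous case split handles $W_P^{(n)}$.
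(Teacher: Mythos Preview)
Your proposal is correct and follows essentially the paper's approach: apply \pref{thm:mvp} to each doubling epoch with its specified window sizes, then sum the resulting per-epoch bounds via a geometric series in $2^{n}$. Your treatment is slightly more detailed than the paper's (you track per-epoch non-stationarity $\Delta_c^{(n)},\Delta_P^{(n)}$ whereas the paper simply upper-bounds these by the global $\Delta_c,\Delta_P$), and note that the case split in your final paragraph is in fact unnecessary: since $\lceil x\rceil\geq x$ always, the bound $1/W_c^{(n)}\leq (\B SA)^{-1/3}(\Delta_c\Tmax/2^{n-1})^{2/3}$ holds uniformly without distinguishing the regime $W_c^{(n)}=1$.
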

\begin{proof}
	It suffices to prove the desired inequality for $M'\in\{2^n-1\}_{n\in\fN_+}$.
	Suppose $M'=2^N-1$ for some $N\geq 1$.
	By the doubling scheme, we run \pref{alg:MVP-SSP} on intervals $[2^{n-1}, 2^n-1]$ for $n=1,\ldots,N$, and the regret on intervals $[2^{n-1}, 2^n-1]$ is of order $\tilo{(\B SA\Tmax\Delta_c)^{1/3}(2^{n-1})^{2/3} + \B(SA\Tmax\Delta_P)^{1/3}(2^{n-1})^{2/3} + (\B SA\Tmax\Delta_c)^{2/3}(2^{n-1})^{1/3} + \B(S^{2.5}A\Tmax\Delta_c)^{2/3}(2^{n-1})^{1/3} + (\Delta_c+\B\Delta_P)\Tmax }$ by \pref{thm:mvp} and the choice of $W_c$ and $W_P$.
	Summing over $n$ completes the proof.
\end{proof}

\begin{proof}[\pfref{thm:any episode}]
	By \pref{lem:bound M} and \pref{thm:double} with $\gamma_0=(\Delta_c+\B\Delta_P)\Tmax$, $\gamma_1=(\B SA\Tmax\Delta_c)^{2/3} + \B(S^{2.5}A\Tmax\Delta_P)^{2/3}$, $\gamma_{\frac{1}{2}}=0$, and $\gamma_2= (\B SA\Tmax\Delta_c)^{1/3} + \B(SA\Tmax\Delta_P)^{1/3}$,
	we have $\gamma_1^{3/2}/\B^{1/2}=\tilo{\B^{1/2}SA\Tmax\Delta_c + \B S^{2.5}A\Tmax\Delta_P}$,
	$\gamma_2^3/\B^2 = \tilo{SA\Tmax\Delta_c/\B^2 + \B SA\Tmax\Delta_P}$,
	and thus $\rR_{M_{K'}} =\tilo{ (\B SA\Tmax\Delta_c)^{1/3}{K'}^{2/3} + \B(SA\Tmax\Delta_P)^{1/3}{K'}^{2/3} + (\B SA\Tmax\Delta_c)^{2/3}{K'}^{1/3} + \B(S^{2.5}A\Tmax\Delta_P)^{2/3}{K'}^{1/3} + \B^{1/2}SA\Tmax\Delta_c + \B S^{2.5}A\Tmax\Delta_P }$ for any $K'\in[K]$.
	Then by \pref{lem:bound reg}, we obtain the same bound as $\rR_{M_{K'}}$ for $R_{K'}$.
\end{proof}

\subsection{Auxiliary Lemmas}

\begin{lemma}
	\label{lem:optvar sum}
	With probability at least $1-2\delta$, 
	$\summp\sumhm\fV(P^m_h, V^{\star,m}_{h+1}) =\tilO{ \B C_{M'} + \B^2}$ for any $M'\leq M$.
\end{lemma}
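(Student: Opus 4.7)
The plan is to invoke \pref{lem:sum var} with the choice $V^m_h = V^{\star,m}_h$, and then control each of the three resulting terms by the total observed cost plus a constant. Since $\|V^{\star,m}_h\|_\infty \leq 4\B$ by \pref{lem:bound optQ}, we may take $B = 4\B$ in that lemma, which yields with probability at least $1-\delta$
\begin{align*}
\summp\sumhm \fV(P^m_h, V^{\star,m}_{h+1})
= \tilO{ \summp V^{\star,m}_{H_m+1}(s^m_{H_m+1})^2
 + \B \summp\sumhm \bigl(V^{\star,m}_h(s^m_h) - P^m_h V^{\star,m}_{h+1}\bigr)_+ + \B^2 }.
\end{align*}

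For the terminal term, recall $V^{\star,m}_{H+1}(s) = c_f(s) = 2\B \Ind\{s\neq g\}$, so $V^{\star,m}_{H_m+1}(s^m_{H_m+1})^2$ vanishes whenever the interval ends at $g$, while otherwise $c^m_{H_m+1} = 2\B$. Hence $V^{\star,m}_{H_m+1}(s^m_{H_m+1})^2 \leq 2\B \cdot c^m_{H_m+1}$ in both cases, and summing gives $\summp V^{\star,m}_{H_m+1}(s^m_{H_m+1})^2 \leq 2\B\, C_{M'}$.

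For the drift term, I will use Bellman optimality: since $a^m_h$ is the greedy action chosen by the algorithm (any action suffices here), we have
\begin{align*}
V^{\star,m}_h(s^m_h) \leq Q^{\star,m}_h(s^m_h, a^m_h) = c^m(s^m_h, a^m_h) + P^m_h V^{\star,m}_{h+1},
\end{align*}
so $(V^{\star,m}_h(s^m_h) - P^m_h V^{\star,m}_{h+1})_+ \leq c^m(s^m_h, a^m_h)$. A Freedman-style concentration applied to the martingale $c^m(s^m_h, a^m_h) - c^m_h$ (together with $c^m_h \in [0,1]$) yields with probability at least $1-\delta$ that $\summp\sumhm c^m(s^m_h, a^m_h) = \tilO{C_{M'} + 1}$ uniformly over $M'\leq M$. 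Combining the three bounds gives the claimed $\tilO{\B C_{M'} + \B^2}$ by a union bound.

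The only subtle point is that \pref{lem:sum var} is stated uniformly over $M'$, but needs to be applied to a concrete, \emph{oblivious} sequence of value functions; I will simply fix the sequence $\{V^{\star,m}_h\}$ which is determined by the (adversarial but non-adaptive) environment and then the lemma and Freedman bound apply verbatim. No delicate argument about $V^{\star,m}_h - V^m_h$ being nonnegative is needed here (unlike in \pref{lem:opt-til var sum}), because the optimal value functions are fixed by the environment; this is precisely why the present lemma is cleaner than its estimator-based counterpart.
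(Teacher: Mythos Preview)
Your approach is essentially identical to the paper's: apply \pref{lem:sum var} with $\|V^{\star,m}_h\|_\infty \leq 4\B$ from \pref{lem:bound optQ}, bound the drift term via $V^{\star,m}_h(s^m_h) \leq Q^{\star,m}_h(s^m_h, a^m_h) = c^m(s^m_h,a^m_h)+P^m_hV^{\star,m}_{h+1}$, and convert $\summp\sumhm c^m(s^m_h,a^m_h)$ to $\tilO{C_{M'}}$ (the paper uses \pref{lem:e2r}, your Freedman argument gives the same thing).

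One small slip worth fixing: your claim $V^{\star,m}_{H_m+1}(s^m_{H_m+1})^2 \leq 2\B\, c^m_{H_m+1}$ relies on $V^{\star,m}_{H_m+1}=c_f$, but that holds only when $H_m=H$. Intervals can also end early with $s^m_{H_m+1}\neq g$ via the doubling trigger in \pref{line:request_new_interval}, and then $V^{\star,m}_{H_m+1}(s^m_{H_m+1})$ is only guaranteed to be $\leq 4\B$ (not $2\B$). Replacing the constant $2\B$ by $8\B$ (so that $(4\B)^2\leq 8\B\cdot 2\B$) fixes this and is harmless inside the $\tilO{\cdot}$; the paper's proof uses exactly this cruder bound implicitly.
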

\begin{proof}
	Applying \pref{lem:sum var} with $\norm{V^{\star,m}_h}_{\infty}\leq 4\B$ (\pref{lem:bound optQ}), with probability at least $1-2\delta$,
	\begin{align*}
		&\summp\sumhm\fV(P^m_h, V^{\star,m}_{h+1})\\ 
		&= \tilO{\summp V^{\star,m}_{H_m+1}(s^m_{H_m+1})^2 + \summp\sumhm \B(V^{\star, m}_h(s^m_h) - P^m_hV^{\star, m}_{h+1})_+ + \B^2}\\
		&= \tilO{\B C_{M'} + \B^2 },
	\end{align*}
	where in the last step we apply
	\begin{align*}
		(V^{\star,m}_h(s^m_h) - P^m_hV^{\star,m}_{h+1})_+ \leq (Q^{\star,m}_h(s^m_h, a^m_h) - P^m_hV^{\star,m}_{h+1})_+ \leq c^m(s^m_h, a^m_h),
	\end{align*}
	and $\summp\sumhm c^m(s^m_h, a^m_h)=\tilo{\summp\sumhm c^m_h}$ by \pref{lem:e2r}.
\end{proof}

\begin{lemma}
	\label{lem:opt-til var sum}
	With probability at least $1-9\delta$, for any $M'\leq M$, $\summp\sumhm\fV(P^m_h, V^{\star,m}_{h+1} - V^m_{h+1}) = \tilo{\B\sqrt{\B SA\Lp C_{M'}} + \B\sqrt{SA\Lc C_{M'}} + \B^2 S^2A\Lp + \B^2SA\Lc + \summp \B(\Delta_{c,m}+\B\Delta_{P,m})H}$.
\end{lemma}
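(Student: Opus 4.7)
The plan is to exploit the shift-invariance of the conditional variance operator together with \pref{lem:opt Q}. Set $z^m := (\Delta_{c,m}+4\B\Delta_{P,m})H$ and $W^m_h(s) := V^{\star,m}_h(s) - V^m_h(s) + z^m$. Since \pref{lem:opt Q} gives $V^m_{h+1}(s) - V^{\star,m}_{h+1}(s) \leq (\Delta_{c,m}+4\B\Delta_{P,m})H$, we have $W^m_h \geq 0$ on $\calS_+$ and $\norm{W^m_h}_{\infty}\leq 4\B+z^m =: B'_m$, while shift-invariance yields $\fV(P^m_h,V^{\star,m}_{h+1}-V^m_{h+1})=\fV(P^m_h,W^m_{h+1})$. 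This reduces the task to the standard variance-sum analysis for the non-negative $W^m$, which is the whole point of the ``carefully chosen $z^m_h$'' flagged in the Technical Highlights paragraph.

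With $\Sigma:=\summp\sumhm\fV(P^m_h,W^m_{h+1})$, I would copy the decomposition from the proof of \pref{lem:sum var}: $\Sigma = T_1+T_2+T_3$ with $T_1=\summp\sumhm[P^m_h(W^m_{h+1})^2-W^m_{h+1}(s^m_{h+1})^2]$, $T_2=\summp\sumhm[W^m_{h+1}(s^m_{h+1})^2-W^m_h(s^m_h)^2]$, and $T_3=\summp\sumhm[W^m_h(s^m_h)^2-(P^m_hW^m_{h+1})^2]$. Freedman (\pref{lem:freedman}) together with \pref{lem:var X2} controls $T_1$ at the price of a self-bounding $\sqrt{\Sigma}$ term; $T_2$ telescopes to $\summp W^m_{H_m+1}(s^m_{H_m+1})^2$; and the identity $a^2-b^2\leq(a+b)(a-b)_+$ yields $T_3\leq 2(\max_m B'_m)\summp\sumhm(W^m_h(s^m_h)-P^m_hW^m_{h+1})_+$. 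The terminal sum splits according to whether $s^m_{H_m+1}=g$: goal hits contribute $(z^m)^2$, and non-goal early breaks contribute $O(\B^2)$ each with count $\tilo{SAL_{M'}}$ by \pref{lem:new interval}; using $\summp(z^m)^2\leq(\max_m z^m)\summp z^m$, this collapses into $(\max_m B'_m)\summp(\Delta_{c,m}+\B\Delta_{P,m})H$ plus $\tilo{\B^2 SAL_{M'}}$, matching the allowed form.

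For the one-step gap in $T_3$, combine $V^{\star,m}_h(s^m_h)\leq Q^{\star,m}_h(s^m_h,a^m_h)=c^m(s^m_h,a^m_h)+P^m_hV^{\star,m}_{h+1}$ with $V^m_h(s^m_h)=Q^m_h(s^m_h,a^m_h)\geq\hatc^m_h+\P^m_hV^m_{h+1}-b^m_h-x_m$ (the outer $\max\{0,\cdot\}$ in the update of $Q^m_h$ only raises this lower bound) to conclude
\[
W^m_h(s^m_h)-P^m_h W^m_{h+1} \leq (c^m(s^m_h,a^m_h)-\hatc^m_h) + (P^m_h-\P^m_h)V^m_{h+1} + b^m_h + x_m.
\]
Summed over $(m,h)$, the four pieces are controlled by \pref{lem:c diff} (first), by \pref{lem:dPV} applied to the oblivious family $\{V^{\star,m}\}$ plus \pref{lem:dPv} applied to the residual $V^m_{h+1}-V^{\star,m}_{h+1}$ (second), by \pref{lem:sum bV} (third), and by $x_m\leq\max\{\frac{1}{mH},2(\Delta_{c,m}+4\B\Delta_{P,m})\}$ from \pref{lem:opt Q} (fourth). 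The crucial substitution $\fV(P^m_h,V^m_{h+1})\leq 2\fV(P^m_h,V^{\star,m}_{h+1})+2\fV(P^m_h,V^{\star,m}_{h+1}-V^m_{h+1})$ then invokes \pref{lem:optvar sum} on the first piece, producing the $\sqrt{\B C_{M'}}$ factor that multiplies with the $B'_m$ prefactor to give $\B\sqrt{\B SA\Lp C_{M'}}$, while the second piece reintroduces $\sqrt{\Sigma}$ on the right.

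The main obstacle is the resulting self-bounding inequality, roughly $\Sigma\leq\tilo{B'_m\sqrt{SA\Lp\,\Sigma}+\text{claimed RHS}}$, which is resolved by \pref{lem:quad}. Because both $T_1$ and the $\sum|(P^m_h-\P^m_h)V^m_{h+1}|$ contribution carry a $\sqrt{\Sigma}$, and because both $T_2$ and $\sum x_m$ scale with $z^m$, care is required so that (i) the non-stationarity pieces collapse into the single permitted term $\summp\B(\Delta_{c,m}+\B\Delta_{P,m})H$ rather than appearing quadratically via $(z^m)^2$, and (ii) the $B'_m$ prefactor pulled out of $T_3$ only produces the admissible $\B$-weighted terms $\B\sqrt{\B SA\Lp C_{M'}}$ and $\B\sqrt{SA\Lc C_{M'}}$ after invoking \pref{lem:c diff} on the cost side.
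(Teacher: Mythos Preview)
Your plan matches the paper's approach almost exactly: shift by some $z^m$, use \pref{lem:sum var} on the shifted (hence nonnegative) differences, bound the one-step increments via the Bellman/update inequalities, invoke \pref{lem:optvar sum} after splitting $\fV(P^m_h,V^m_{h+1})$, and close with \pref{lem:quad}. There is, however, one genuine gap that breaks the bound.

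You set $z^m=(\Delta_{c,m}+4\B\Delta_{P,m})H$ with no cap. The paper instead takes
\[
z^m_h=\min\{B/4,\,(\Delta_{c,m}+4\B\Delta_{P,m})H\}\Ind\{h\leq H\},
\]
and the clipping at $B/4$ is essential. With your choice, $B'_m=4\B+z^m$ can be $\Theta(H)=\Theta(\Tmax)$ for any $m$ with nontrivial $\Delta_{c,m}$ or $\Delta_{P,m}$, so $\max_m B'_m$ is not $O(\B)$. This contaminates three places at once: (i) the $T_3$ prefactor you pull out is $2\max_m B'_m$, so ``the $B'_m$ prefactor only produces the admissible $\B$-weighted terms'' is unjustified; (ii) ``non-goal early breaks contribute $O(\B^2)$ each'' is false, they contribute $O((B'_m)^2)$; (iii) your reduction $\summp(z^m)^2\leq(\max_m z^m)\summp z^m$ lands on $(\max_m B'_m)\summp(\Delta_{c,m}+\B\Delta_{P,m})H$, which is \emph{not} the permitted $\B\summp(\Delta_{c,m}+\B\Delta_{P,m})H$ term. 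The Freedman step in $T_1$ also leaks a $(\max_m B'_m)^2$ additive constant.

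The fix is exactly the paper's: clip $z^m_h$ at $B/4$. Nonnegativity of $V^{\star,m}_h+z^m_h-V^m_h$ is preserved because the algorithm's doubling search guarantees $\|V^m_h\|_\infty\leq B/4$ and $V^{\star,m}_h\geq 0$, so $V^{\star,m}_h+B/4\geq V^m_h$; combined with \pref{lem:opt Q} this yields $V^{\star,m}_h+z^m_h\geq V^m_h$ for the clipped $z^m_h$. Now the shifted functions are uniformly bounded by $B=16\B$, you may apply \pref{lem:sum var} with the fixed $B$, and all three problematic prefactors become $O(\B)$ as required. The indicator $\Ind\{h\leq H\}$ further forces $z^m_{H+1}=0$, making the $H_m=H$ terminal contributions vanish identically (since $V^{\star,m}_{H+1}=V^m_{H+1}=c_f$) and simplifying the telescoping in $T_2$.
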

\begin{proof}
	Let $z^m_h=\min\{B/4, (\Delta_{c,m}+4\B\Delta_{P,m})H\}\Ind\{h\leq H\}$.
	By \pref{lem:opt Q} and $\norm{V^m_h}_{\infty}\leq B/4$, we have $V^{\star,m}_h(s) + z^m_h \geq V^m_h(s)$ for all $s\in\calS_+$.
	Moreover, by \pref{lem:new interval},
	\begin{align*}
		&\summp (V^{\star,m}_{H_m+1}(s^m_{H_m+1}) + z^m_{H_m+1} - V^m_{H_m+1}(s^m_{H_m+1}))^2\\
		&\leq \summp (z^m_{H_m+1})^2\Ind\{s^m_{H_m+1}=g\} + \tilO{\B^2\summp\Ind\{H_m<H, s^m_{H_m+1}\neq g\} }\\
		&= 4\B\summp(\Delta_{c,m}+4\B\Delta_{P,m})H + \tilO{\B^2SA\L}.
	\end{align*}
	Also note that
	\begin{align*}
		(*)&=\summp \B\sumhm(V^{\star, m}_h(s^m_h) - V^m_h(s^m_h) - P^m_hV^{\star, m}_{h+1} + P^m_hV^m_{h+1} + z^m_h - z^m_{h+1})_+\\
		&\leq \summp \B\sumhm\rbr{c^m(s^m_h, a^m_h) + \tilP^m_hV^m_{h+1} - V^m_h(s^m_h) + 4\B\n^m_h}_+ + 2\summp \B(\Delta_{c,m} + 4\B\Delta_{P,m})H \tag{$V^{\star, m}_h(s^m_h)\leq Q^{\star, m}_h(s^m_h, a^m_h)$, $z^m_h\geq z^m_{h+1}$, and $P^m_hV^m_{h+1}\leq \tilP^m_hV^m_{h+1} + 4\B(\n^m_h+\Delta_{P,m})$}\\
		&\leq \summp \B\sumhm( c^m(s^m_h, a^m_h) - \hatc^m_h + (\tilP^m_h - \P^m_h)V^{\star,m}_{h+1} + (\tilP^m_h - \P^m_h)(V^m_{h+1} - V^{\star,m}_{h+1}) + b^m_h )_+\\ 
		&\qquad + 3\summp \B(\Delta_{c,m} + 4\B\Delta_{P,m})H + 4\B^2\summp\sumhm\n^m_h + \tilO{\B}, 
	\end{align*}
	where the last step is by the definitions of $V^m_h(s^m_h)$, $x_m\leq \max\{\frac{1}{mH}, 2(\Delta_{c,m}+4\B\Delta_{P,m})\}$ (\pref{lem:opt Q}), $\max\{a, b\}\leq\frac{a+b}{2}$, and $\summp\sumhm\frac{1}{mH}=\tilo{1}$.
	Now by \pref{lem:c diff}, \pref{lem:dPV}, \pref{lem:dPv}, and $\n^m_h\leq\frac{1}{\N^m_h}$, we continue with
	\begin{align*}
		(*)&= \tilO{\B\rbr{\sqrt{SA\Lc C_{M'}} + SA\Lc + \summp\sumhm\rbr{\sqrt{\frac{\fV(P^m_h, V^{\star,m}_{h+1})}{\N^m_h}} + \sqrt{\frac{S\fV(P^m_h, V^m_{h+1} - V^{\star,m}_{h+1})}{\N^m_h}} }  }  }\\ 
		&\qquad + \tilO{ \summp\sumhm\frac{\B^2S}{\N^m_h} + \summp \B(\Delta_{c,m}+\B\Delta_{P,m})H + \B\summp\sumhm b^m_h}\\
		&= \tilO{\B\sqrt{SA\Lc C_{M'}} + \B SA\Lc + \B\sqrt{SA\Lp\summp\sumhm\fV(P^m_h, V^{\star,m}_{h+1})} + \B^2 S^2A\Lp }\\
		&\qquad + \tilO{\B\sqrt{S^2A\Lp\summp\sumhm\fV(P^m_h, V^{\star,m}_{h+1} - V^m_{h+1})} + \summp \B(\Delta_{c,m}+\B\Delta_{P,m})H},
	\end{align*}
	where in the last step we apply Cauchy-Schwarz inequality, \pref{lem:sum}, \pref{lem:sum bV}, $\var[X+Y]\leq2\var[X]+2\var[Y]$, and AM-GM inequality.
	Finally, by \pref{lem:optvar sum}, we continue with
	\begin{align*}
		(*) &= \tilO{\B\sqrt{SA\Lc C_{M'}} + \B SA\Lc + \B\sqrt{\B SA\Lp C_{M'}} + \B^2 S^2A\Lp }\\
		&\qquad + \tilO{\B\sqrt{S^2A\Lp\summp\sumhm\fV(P^m_h, V^{\star,m}_{h+1} - V^m_{h+1})} + \summp \B(\Delta_{c,m}+\B\Delta_{P,m})H}.
	\end{align*}
	Applying \pref{lem:sum var} on value functions $\{ V^{\star,m}_h + z^m_h - V^m_h \}_{m,h}$ (constant offset does not change the variance) and plugging in the bounds above, we have
	\begin{align*}
		&\summp\sumhm\fV(P^m_h, V^{\star,m}_{h+1} - V^m_{h+1}) = \summp\sumhm\fV(P^m_h, V^{\star,m}_{h+1} + z^m_h - V^m_{h+1})\\
		&=\tilO{\B\sqrt{SA\Lc C_{M'}} + \B^2SA\Lc + \B\sqrt{\B SA\Lp C_{M'}} + \B^2 S^2A\Lp }\\
		&\qquad + \tilO{\B\sqrt{S^2A\Lp\summp\sumhm\fV(P^m_h, V^{\star,m}_{h+1} - V^m_{h+1})} + \summp \B(\Delta_{c,m}+\B\Delta_{P,m})H}.
	\end{align*}
	Then solving a quadratic inequality w.r.t $\summp\sumhm\fV(P^m_h, V^{\star,m}_{h+1} - V^m_{h+1})$ (\pref{lem:quad}) completes the proof.
\end{proof}

\subsection{Minimax Optimal Bound in Finite-Horizon MDP}
\label{app:finite horizon}

Here we give a high level arguments on why \pref{alg:MVP-SSP} implies a minimax optimal dynamic regret bound in the finite-horizon setting.
To adapt \pref{alg:MVP-SSP} to the non-homogeneous finite-horizon setting, we maintain empirical cost and transition functions for each layer $h\in[H]$ and let $c_f(s)=0$.
Following similar arguments and substituting $\B$, $\Tmax$ by horizon $H$, \pref{thm:mvp} implies (ignoring lower order terms)
\begin{align*}
	\rR_{M'} &= \tilO{\sqrt{SAH^2/W_c}M' + \sqrt{SAH^3/W_P}M' + (\Delta_cW_c + H\Delta_PW_P)H}\\
	&=\tilO{ H(SA\Delta_c)^{1/3}{M'}^{2/3} + (SAH^5\Delta_P)^{1/3}{M'}^{2/3} },
\end{align*}
where the extra $\sqrt{H}$ dependency in the first two terms comes from estimating the cost and transition functions of each layer independently, and we set $W_c=(SA)^{1/3}(M'/\Delta_c)^{2/3}$, $W_P=(SA/H)^{1/3}(M'/\Delta_P)^{2/3}$.
Note that the lower bound construction in \citep{mao2020model} only make use of non-stationary transition.
The lower bound they prove is $\lowo{(SA\Delta)^{1/3}(HT)^{2/3}}$ (their Theorem 5), which actually matches our upper bound $\tilo{(SAH^5\Delta_P)^{1/3}{M'}^{2/3}}$ for non-stationary transition since $T=M'H$ and $\Delta=H\Delta_P$ by their definition of non-stationarity.
It is also straightforward to show that the lower bound for non-stationary cost matches our upper bound following similar arguments in proving \pref{thm:lb}.

\section{Omitted Details in \pref{sec:opt}}
\label{app:opt}

\paragraph{Notations} 
Denote by $\rho^c_m$ and $\rho^P_m$ the values of $\rho^c$ and $\rho^P$ at the beginning of interval $m$ respectively, that is, $\rho^c_m=g^c(\nu^c_m)$ and $\rho^P_m=g^P(\nu^P_m)$, where $g^c(m)=\min\{\frac{c_1}{\sqrt{m}}, \frac{1}{2^8H}\}$ and $g^P(m)=\min\{\frac{c_2}{\sqrt{m}}, \frac{1}{2^8H}\}$.
Denote by $\cc^m$ the value of $\cc$ at the beginning of interval $m$ and define $\cc^m_h=\cc(s^m_h, a^m_h)$.
Define $\cQ^{\optpi,m}_h$ and $\cV^{\optpi,m}_h$ as the action-value function and value function w.r.t cost $c^m + 8\eta_m$, transition $P^m$, and policy $\optpi_{k(m)}$; and $C_{[i,j]}=\sum_{m=i}^j\sumhm C^m$.
Let $\cQ^{\star,m}_h$ and $\cV^{\star,m}_h$ be the optimal value functions w.r.t cost function $c^m + 8\eta_m$ and transition function $P^m$.
It is not hard to see that they can be defined recursively as follows: $\cV^{\star,m}_{H+1}=c_f$ and for $h\leq H$,
\begin{equation*}
	\cQ^{\star,m}_h(s, a) = c^m(s, a) + 8\eta_m + P^m_{s, a}\cV^{\star,m}_{h+1}, \qquad \cV^{\star,m}_h(s) = \min_aQ^{\star,m}_h(s, a).
\end{equation*}
For notational convenience, define $\cQ^m_{H+1}(s, a)=\cV^m_{H+1}(s)$, $\cQ^{\optpi,m}_{H+1}(s, a)=\cV^{\optpi,m}_{H+1}(s)$, and $\cQ^{\star,m}_{H+1}(s, a)=\cV^{\star,m}_{H+1}(s)$ for any $(s, a)\in\SA$; let $L_c=L_{c,[1,K]}$ and $L_P=L_{P,[1,K]}$.

\paragraph{Proof Sketch of \pref{thm:mvp-test}}
We give a high level idea on the analysis of the main theorem and also point out the key technical challenges.
We decompose the regret as follows:
\begin{align*}
	\rR_K &= \summk(C^m - \cV^m_1(s^m_1)) + \summk(\cV^m_1(s^m_1) - \cV^{\optpi,m}_1(s^m_1)) + 8\T\summk\eta_m\\
	&\lesssim \summk\sumh\rbr{c^m_h - \hatc^m_h + \cV^m_{h+1}(s^m_{h+1}) - \P^m_h\cV^m_{h+1} + b^m_h - 8\eta_m} \tag{definition of $\cV^m_h(s^m_h)$}\\
	&\qquad + \summk(\cV^m_1(s^m_1) - \cV^{\optpi,m}_1(s^m_1)) + 8\T\summk\eta_m.
\end{align*}
We bound the three terms above separately.
For the second term, we first show that $\cV^m_1(s^m_1) - \cV^{\optpi,m}_1(s^m_1)\leq (\Delta_{c,m}+B\Delta_{P,m})\T$, where $\Delta_{c,m}=\Delta_{c,[i^c_m,m]}$, $\Delta_{P,m}=\Delta_{P,[i^P_m,m]}$ are the accumulated cost and transition non-stationarity since the last reset respectively.
Although proving such a bound is straightforward when $\cV^m_h$ is indeed a value function (similar to \pref{lem:value bound}), it is non-trivial under the UCBVI update rule as the bonus term $b$ depends on the next-step value function and can not be simply treated as part of the cost function.
A key step here is to make use of the monotonic property (\pref{lem:mvp}) of the bonus function; see \pref{lem:Q diff} for more details.
Now by the periodic resets of cost and transition counters (\pref{line:period c} and \pref{line:period p}), the number of intervals between consecutive resets of cost and transition estimation is upper bounded by $W_c$ and $W_P$ respectively.
Thus,
\begin{align*}
	&\summk(\Delta_{c,m}+B\Delta_{P,m})\T \leq \summk(\Delta_{c,f^c(m)}+B\Delta_{P,f^P(m)})\T \leq (W_c\Delta_c+BW_P\Delta_P)\T\\
	&=\tilO{(\B SA\T\Delta_c)^{1/3}K^{2/3} + \B(SA\T\Delta_P)^{1/3}K^{2/3} + (\Delta_c + \B\Delta_P)\T}.
\end{align*}
where the last step is simply by the chosen values of $W_c$ and $W_P$.

For the third term, we have:
\begin{align*}
	\T\summk\eta_m & \leq \T\summk\rbr{\frac{c_1}{\sqrt{\nu^c_m}} + \frac{Bc_2}{\sqrt{\nu^P_m}}} = \tilO{\T\rbr{c_1\sum_{i=1}^{L_c}\sqrt{M^c_i} + \B c_2\sum_{i=1}^{L_P}\sqrt{M^P_i} } }\\ 
	&= \tilO{ \T (c_1\sqrt{L_cK} + \B c_2\sqrt{L_PK}) } = \tilO{ \sqrt{\B SAL_cK} + \B\sqrt{SAL_PK} },
\end{align*}
where $M^c_i$ (or $M^P_i$) is the number of intervals between the $i$-th and $(i+1)$-th reset of cost (or transition) estimation, and the second last step is by Cauchy-Schwarz inequality.
Finally we bound the first term, simply by \textbf{Test 1} and \textbf{Test 2}, we have (only keeping the dominating terms)
\begin{align*}
	&\summk\sumh\rbr{c^m_h - \hatc^m_h + \cV^m_{h+1}(s^m_{h+1}) - \P^m_h\cV^m_{h+1} + b^m_h - 8\eta_m}\\
	&=\sum_{i=1}^{L_c}\sum_{m\in\calI^c_i}\sumhm(c^m_h - \hatc^m_h) + \sum_{i=1}^{L_P}\sum_{m\in\calI^P_i}\sumhm(\cV^m_{h+1}(s^m_{h+1}) - \P^m_h\cV^m_{h+1}) + \summp\sumhm(b^m_h - 8\eta_m)\\
	&\lesssim \summp\sumhm\rbr{\sqrt{\frac{\barc^m_h}{\M^m_h}} + \sqrt{\frac{\fV(\P^m_h, \cV^m_{h+1})}{\N^m_h}}} = \tilO{\sqrt{\B SAL_cK} + \B\sqrt{SAL_PK}}.
\end{align*}
where $\{\calI^c_i\}_{i=1}^{L_c}$ (or $\{\calI^P_i\}_{i=1}^{L_P}$) is a partition of $K$ episodes such that $\M$ (or $\N$) is reseted in the last interval of each $\calI^c_i$ (or $\calI^P_i$) for $i<L_c$ (or $i<L_P$) and the last interval of $\calI^c_{L_c}$ (or $\calI^P_{L_P}$) is $K$, and in the second last step we apply the definition of $\chi^c_m$ (\pref{lem:chic}) and $\chi^P_m$ (\pref{lem:chip}).
Note that the regret of non-stationarity along the learner's trajectory is cancelled out by the negative correction term $-8\eta_m$.
Now it suffices to bound $L_c$ and $L_P$.
It can be shown that the reset rules of the non-stationarity tests guarantee that
\begin{align*}
	L_c = \tilO{K/W_c + \B K/W_P}, \quad L_P = \tilO{K/W_P + K/(\B W_c)}.
\end{align*}
Details are deferred to \pref{lem:bound L}.
Putting everything together completes the proof.

Next, we present three lemmas related to the optimism and magnitude (\textbf{Test 3}) of estimated value function.

\begin{lemma}
	\label{lem:bound cQ star}
	With probability at least $1-2\delta$, for all $m\leq K$, $\cQ^m_h(s, a)\leq \cQ^{\star,m}_h(s, a) + (\Delta_{c,m} + B\Delta_{P,m})(H-h+1)$.
\end{lemma}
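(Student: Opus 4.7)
The plan is to prove the statement by backward induction on $h$, with base case $h=H+1$ where both sides equal $c_f(s)$. Let $y_h:=(\Delta_{c,m}+B\Delta_{P,m})(H-h+1)$, so it suffices to show $\cQ^m_h(s,a)\le \cQ^{\star,m}_h(s,a)+y_h$ for all $(s,a)$, since taking $\min_a$ then propagates the bound to $\cV^m_h \le \cV^{\star,m}_h + y_h$ and feeds the next induction step.

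For the inductive step, since $\cQ^{\star,m}_h+y_h\ge 0$, I can drop the $\max\{0,\cdot\}$ in the definition of $\cQ^m_h$ and bound $\cQ^m_h(s,a) \le \cc^m(s,a) + \P^m_{s,a}\cV^m_{h+1} - b^m(s,a,\cV^m_{h+1})$. I would first handle the cost term by \pref{lem:dc}, giving $\cc^m(s,a)=\hatc^m(s,a)+8\eta_m \le c^m(s,a)+8\eta_m+\Delta_{c,m}$. The transition term requires more care: using the inductive hypothesis $\cV^m_{h+1}\le \cV^{\star,m}_{h+1}+y_{h+1}$ and the monotonicity of $V\mapsto \P^m_{s,a}V - b^m(s,a,V)$ from \pref{lem:mvp}, together with the fact that the bonus is invariant under constant shifts of $V$ (since variance is), I replace $\cV^m_{h+1}$ by $\cV^{\star,m}_{h+1}$ at the cost of adding $y_{h+1}$.

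It remains to relate $\P^m_{s,a}\cV^{\star,m}_{h+1} - b^m(s,a,\cV^{\star,m}_{h+1})$ to the true $P^m_{s,a}\cV^{\star,m}_{h+1}$. I would split the deviation through the intermediate $\tilP^m_{s,a}$: the piece $(\P^m_{s,a}-\tilP^m_{s,a})\cV^{\star,m}_{h+1}$ is absorbed by $b^m(s,a,\cV^{\star,m}_{h+1})$ via \pref{lem:dPV} (applicable because the family $\{\cV^{\star,m}_h\}_{m,h}$ is oblivious, depending only on the environment and not on the learner), while the piece $(\tilP^m_{s,a}-P^m_{s,a})\cV^{\star,m}_{h+1}$ is at most $B\Delta_{P,m}$ because $\tilP^m_{s,a}$ is a weighted average of $\{P^{m'}_{s,a}\}_{m'\in[i^P_m,m-1]}$ (so their $\ell_1$ distance is bounded by $\Delta_{P,m}$) and $\|\cV^{\star,m}_{h+1}\|_\infty\le B$. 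This last bound holds since $\cV^{\star,m}_{h+1}(s)\le \cV^{\optpi,m}_{h+1}(s)\le V^{\optpi,m}_{h+1}(s)+8\eta_m H\le 4\B + B/16 \le B$ using \pref{lem:bound optQ} and the cap $\eta_m\le B/(2^7 H)$ from \pref{line:correct}.

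Summing the pieces gives $\cQ^m_h(s,a)\le \cQ^{\star,m}_h(s,a)+\Delta_{c,m}+B\Delta_{P,m}+y_{h+1}=\cQ^{\star,m}_h(s,a)+y_h$, which closes the induction. A union bound over \pref{lem:dc} (failure probability $\delta$) and \pref{lem:dPV} (failure probability $\delta$) gives overall probability $1-2\delta$. The main subtlety I expect is ensuring the monotonicity-plus-shift-invariance move on the bonus is applied cleanly—i.e., writing $\cV^m_{h+1}\le \cV^{\star,m}_{h+1}+y_{h+1}$ and invoking \pref{lem:mvp} on the pair $(\cV^m_{h+1},\,\cV^{\star,m}_{h+1}+y_{h+1})$—and carefully verifying that the $\|\cV^{\star,m}\|_\infty\le B$ bound holds uniformly in $m$ using the explicit cap on $\eta_m$.
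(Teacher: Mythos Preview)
Your outline follows the paper's backward induction and invokes the same lemmas, but there is a real gap at the monotonicity step. \pref{lem:mvp} requires both value functions to lie in $[0,B]^{\calS_+}$; verifying $\|\cV^{\star,m}_{h+1}\|_\infty\le B$ is not sufficient, because the function you actually feed to the lemma is $\cV^{\star,m}_{h+1}+y_{h+1}$, and $y_{h+1}=(\Delta_{c,m}+B\Delta_{P,m})(H-h)$ carries no a priori bound. The paper closes this gap by (i) using \textbf{Test~3} to guarantee $\|\cV^m_{h+1}\|_\infty\le B/2$, and (ii) replacing $y_{h+1}$ with the truncated constant $x^m_{h+1}=\min\{B/2,\,y_{h+1}\}$. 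Then $\cV^m_{h+1}\le \min\{B/2,\,\cV^{\star,m}_{h+1}+y_{h+1}\}\le \cV^{\star,m}_{h+1}+x^m_{h+1}$, and since $\cV^{\star,m}_{h+1}+x^m_{h+1}\le B/3+B/2<B$, \pref{lem:mvp} legitimately applies; the target $y_h$ is recovered at the end because $x^m_{h+1}\le y_{h+1}$. Without \textbf{Test~3} you cannot even justify $\cV^m_{h+1}\le \cV^{\star,m}_{h+1}+x^m_{h+1}$ in the regime $y_{h+1}>B/2$, so this ingredient is essential rather than cosmetic.

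A secondary inaccuracy: the family $\{\cV^{\star,m}_h\}_{m,h}$ is not oblivious as you claim, since $\cV^{\star,m}$ depends on $\eta_m=\rho^c_m+B\rho^P_m$, which is determined by the learner's reset counters $\nu^c_m,\nu^P_m$. The paper instead applies \pref{lem:dPV} to the shifted family $\{\cV^{\star,m}_h+x^m_h\}$ ranged over all possible values of $(c^m,P^m,\rho^c_m,\rho^P_m,\Delta_{c,m},\Delta_{P,m})$, which has cardinality at most $(HK+1)^6$ and is genuinely oblivious.
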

\begin{proof}
	We prove this by induction on $h$.
	The base case of $h=H+1$ is clearly true.
	For $h\leq H$, by \textbf{Test 3} and the induction step, we have $\cV^m_{h+1}(s)\leq\min\{B/2, \cV^{\star, m}_{h+1}(s) + (\Delta_{c,m} + B\Delta_{P,m})(H-h)\}\leq \cV^{\star, m}_{h+1}(s) + x^m_{h+1}\leq B$ where $x^m_h=\min\{B/2, (\Delta_{c,m} + B\Delta_{P,m})(H-h+1)\}$ and $\cV^{\star,m}_h(s)\leq \cV^{\optpi,m}_h(s)\leq\frac{B}{4}+8H\eta_m\leq \frac{B}{3}$.
	Thus, with probability at least $1-2\delta$,
	\begin{align*}
		&\cc^m(s, a) + \P^m_{s, a}\cV^m_{h+1} - b^m(s, a, \cV^m_{h+1})\\
		&\leq \cc^m(s, a) + \P^m_{s, a}(\cV^{\star, m}_{h+1} + x^m_{h+1}) - b^m(s, a, \cV^{\star, m}_{h+1} + x^m_{h+1}) \tag{\pref{lem:mvp}}\\
		&\overset{\text{(i)}}{\leq} \cc^m(s, a) + \tilP^m_{s, a}(\cV^{\star, m}_{h+1} + x^m_{h+1}) \tag{\pref{lem:dPV}}\\
		&\leq c^m(s, a) + 8\eta_m + \Delta_{c,m} + P^m_{s, a}(\cV^{\star, m}_{h+1} + x^m_{h+1}) + B\Delta_{P,m} \tag{\pref{lem:dc}}\\
		&\leq \cQ^{\star,m}_h(s, a) + (\Delta_{c,m}+B\Delta_{P,m})(H-h+1).
	\end{align*}
	Note that in (i) we use the fact that $|\{ \cV^{\star, m}_h + x^m_h \}_{m,h}| \leq (HK+1)^6$ since $|\{(c^m, P^m)\}_m|\leq K$, $|\{\rho^c_m\}_m|\leq K$, $|\{\rho^P_m\}_m|\leq K$, $|\{\Delta_{c,m}\}_m|\leq K+1$, and $|\{\Delta_{P,m}\}_m|\leq K+1$ ($\Delta_{c,m}=\Delta_{P,m}=0$ when $m$ is not the first interval of some episode).
\end{proof}

\begin{lemma}
	\label{lem:Q diff}
	With probability at least $1-2\delta$, for all $m\leq K$, $\cQ^m_h(s, a)\leq \cQ^{\optpi,m}_h(s, a) + (\Delta_{c,m} + B\Delta_{P,m})T^{\optpi,m}_h(s, a)$.
\end{lemma}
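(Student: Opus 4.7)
The plan is to induct on $h$ from $H+1$ down to $1$, mirroring the structure of the companion \pref{lem:bound cQ star} but replacing the worst-case horizon $H-h+1$ by the policy-specific hitting time $T^{\optpi,m}_h(s,a)$. The base case $h=H+1$ is immediate since both $\cQ^m_{H+1}$ and $\cQ^{\optpi,m}_{H+1}$ equal $c_f$ and $T^{\optpi,m}_{H+1}\equiv 0$. For the inductive step at level $h$, I first convert the action-value induction hypothesis at $h+1$ into a value-function bound: since $\cV^m_{h+1}(s')=\min_a \cQ^m_{h+1}(s',a)\leq \cQ^m_{h+1}(s',\optpi_{k(m)}(s'))$, combined with the identities $\cQ^{\optpi,m}_{h+1}(s',\optpi_{k(m)}(s'))=\cV^{\optpi,m}_{h+1}(s')$ and $T^{\optpi,m}_{h+1}(s',\optpi_{k(m)}(s'))=T^{\optpi,m}_{h+1}(s')$, the induction hypothesis yields $\cV^m_{h+1}(s')\leq \cV^{\optpi,m}_{h+1}(s')+(\Delta_{c,m}+B\Delta_{P,m})T^{\optpi,m}_{h+1}(s')$.

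Next I introduce the clipped offset $y^m_{h+1}(s)=\min\{B/2,\,(\Delta_{c,m}+B\Delta_{P,m})T^{\optpi,m}_{h+1}(s)\}$ with $y^m_{h+1}(g)=0$. Using the Test~3 guarantee $\|\cV^m_{h+1}\|_\infty\leq B/2$ together with $\cV^{\optpi,m}_{h+1}\geq 0$, the inductive bound strengthens to $\cV^m_{h+1}(s')\leq \cV^{\optpi,m}_{h+1}(s')+y^m_{h+1}(s')$, while the right-hand side is bounded by $B$ since $\cV^{\optpi,m}_{h+1}\leq 4\B+8H\eta_m\leq B/2$. These two properties let me invoke the monotonicity of the UCBVI-style update (\pref{lem:mvp}) to get $\cQ^m_h(s,a)\leq \cc^m(s,a)+\P^m_{s,a}(\cV^{\optpi,m}_{h+1}+y^m_{h+1})-b^m(s,a,\cV^{\optpi,m}_{h+1}+y^m_{h+1})$.

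The third step is to kill the bonus using the \emph{second} form of \pref{lem:dPV}, namely $|(\P^m_{s,a}-\tilP^m_{s,a})V|\leq \sqrt{2\fV(\P^m_{s,a},V)\iota_m/\Np_m(s,a)}+3B\sqrt{S}\iota_m/\Np_m(s,a)$, after a short case split (on whether the variance or the constant term defines the max in $b^m$) to verify that the bonus dominates this estimation error. A union bound over the polynomial-size oblivious family $\{\cV^{\optpi,m}_h+y^m_h\}$ (there are at most $K$ possibilities for each of $(c_k,P_k)$, $\rho^c_m$, $\rho^P_m$, $\Delta_{c,m}$, $\Delta_{P,m}$) validates this application. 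The resulting bound $\cQ^m_h(s,a)\leq \cc^m(s,a)+\tilP^m_{s,a}(\cV^{\optpi,m}_{h+1}+y^m_{h+1})$ is then massaged by (i)~\pref{lem:dc} to replace $\hatc^m(s,a)$ by $c^m(s,a)+\Delta_{c,m}$ (the $+8\eta_m$ in $\cc^m$ matches the $8\eta_m$ inside $\cQ^{\optpi,m}_h$ exactly), and (ii)~the triangle-inequality estimate $|\tilP^m_{s,a}V-P^m_{s,a}V|\leq B\Delta_{P,m}$ that follows from $\tilP^m_{s,a}$ being a convex combination of past $P^{m'}_{s,a}$ with $m'\in[i^P_m,m-1]$.

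Collecting terms gives $\cQ^m_h(s,a)\leq \cQ^{\optpi,m}_h(s,a)+\Delta_{c,m}+B\Delta_{P,m}+P^m_{s,a}y^m_{h+1}$. The hitting-time recursion $T^{\optpi,m}_h(s,a)=1+P^m_{s,a}T^{\optpi,m}_{h+1}$ and the pointwise bound $y^m_{h+1}(s')\leq(\Delta_{c,m}+B\Delta_{P,m})T^{\optpi,m}_{h+1}(s')$ close the induction, since the additive overhead becomes exactly $(\Delta_{c,m}+B\Delta_{P,m})T^{\optpi,m}_h(s,a)$. The main obstacle I anticipate is picking the right form of \pref{lem:dPV}: the first form carries an extra $B\Delta_{P,m}/64$ additive term that would inflate the coefficient of $\Delta_{P,m}$ beyond $B$, breaking the clean recursion; the second form avoids this but forces a tighter arithmetic check that the bonus still dominates, and a careful argument that the relevant family of value functions is polynomial-sized (so that the union bound in \pref{lem:dPV} applies with the same $1-\delta$ probability as in \pref{lem:bound cQ star}).
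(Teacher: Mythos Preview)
Your proposal is correct and follows essentially the same approach as the paper's proof: backward induction on $h$, converting the action-value hypothesis to a value-function bound via the greedy choice of $\optpi_{k(m)}$, introducing the clipped offset (the paper calls it $x^m_{h+1}$), applying the monotonicity of \pref{lem:mvp}, killing the bonus with the second (empirical-variance) form of \pref{lem:dPV}, and then unwinding via \pref{lem:dc} together with $\|\tilP^m_{s,a}-P^m_{s,a}\|_1\le\Delta_{P,m}$ and the hitting-time recursion. The only cosmetic difference is that the paper bounds $\cV^{\optpi,m}_{h+1}\le B/3$ (rather than your $B/2$) and avoids your explicit case split by invoking part~2 of \pref{lem:mvp}, but these do not change the argument.
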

\begin{proof}
	We prove this by induction on $h$.
	The base case of $h=H+1$ is clearly true.
	For $h\leq H$, by \textbf{Test 3} and the induction step, we have $\cV^m_{h+1}(s)\leq\min\{B/2, \cV^{\optpi, m}_{h+1}(s) + (\Delta_{c,m} + B\Delta_{P,m})T^{\optpi,m}_{h+1}(s)\}\leq \cV^{\optpi, m}_{h+1}(s) + x^m_{h+1}(s)\leq B$ where $x^m_h(s)=\min\{B/2, (\Delta_{c,m} + B\Delta_{P,m})T^{\optpi,m}_h(s)\}$ and $\cV^{\optpi,m}_h(s)\leq\frac{B}{4}+8\eta_mT^{\optpi,m}_h(s)\leq\frac{B}{4}+8H\eta_m\leq \frac{B}{3}$.
	Thus, with probability at least $1-2\delta$,
	\begin{align*}
		&\cc^m(s, a) + \P^m_{s, a}\cV^m_{h+1} - b^m(s, a, \cV^m_{h+1})\\
		&\leq \cc^m(s, a) + \P^m_{s, a}(\cV^{\optpi, m}_{h+1} + x^m_{h+1}) - b^m(s, a, \cV^{\optpi, m}_{h+1} + x^m_{h+1}) \tag{\pref{lem:mvp}}\\
		&\overset{\text{(i)}}{\leq} \cc^m(s, a) + \tilP^m_{s, a}(\cV^{\optpi, m}_{h+1} + x^m_{h+1}) \tag{\pref{lem:dPV}}\\
		&\leq c^m(s, a) + 8\eta_m + \Delta_{c,m} + P^m_{s, a}(\cV^{\optpi, m}_{h+1} + x^m_{h+1}) + B\Delta_{P,m} \tag{\pref{lem:dc}}\\
		&\leq \cQ^{\optpi,m}_h(s, a) + (\Delta_{c,m}+B\Delta_{P,m})T^{\optpi,m}_h(s, a).
	\end{align*}
	Note that in (i) we use the fact that $|\{ \cV^{\optpi, m}_h + x^m_h \}_{m,h}| \leq (HK+1)^6$ since $|\{V^{\optpi,m}_h\}_{m, h}|\leq HK+1$, $|\{\rho^c_m\}_m|\leq K$, $|\{\rho^P_m\}_m|\leq K$, $|\{\Delta_{c,m}\}_m|\leq K+1$, $|\{\Delta_{P,m}\}_m|\leq K+1$ ($\Delta_{c,m}=\Delta_{P,m}=0$ when $m$ is not the first interval of some episode), and $|\{T^{\optpi,m}_h\}_{m,h}|\leq HK+1$. 
\end{proof}

\begin{lemma}
	\label{lem:bound V}
	With probability at least $1-2\delta$, for all $m\leq K$,
	if $\Delta_{c, m}\leq\rho^c_m$ and $\Delta_{P, m}\leq\rho^P_m$, then $\cQ^m_h(s, a)\leq \cQ^{\optpi,m}_h(s, a) + \eta_mT^{\optpi,m}_h(s, a)\leq B/2$.
	Moreover, if \textbf{Test 3} fails in interval $m$, then $\Delta_{c, [i^c_m,m+1]}>g^c(\nu^c_m+1)$ or $\Delta_{P, [i^P_m, m+1]}>g^P(\nu^P_m+1)$.
\end{lemma}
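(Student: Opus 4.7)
The plan is to prove the two statements in sequence: Part 1 by a downward induction on $h$, and Part 2 as a contrapositive argument that bootstraps off Part 1 applied to interval $m+1$. Structurally, Part 1 is just \pref{lem:Q diff} with the per-step correction $\Delta_{c,m} + B\Delta_{P,m}$ replaced by its hypothesized upper bound $\eta_m = \rho^c_m + B\rho^P_m$, plus an extra argument that the resulting bound is $\leq B/2$.

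For Part 1, I would fix $m$ and induct on $h$ from $H+1$ down to $1$. The base case $h = H+1$ is immediate: both sides equal $c_f$ and $T^{\optpi,m}_{H+1} \equiv 0$. For the inductive step, assume $\cV^m_{h+1}(s) \leq \cV^{\optpi,m}_{h+1}(s) + \eta_m T^{\optpi,m}_{h+1}(s)$. One first checks that the right-hand side is at most $B$ (using $Q^{\optpi,m}_{h+1} \leq 4\B$ and $\eta_m H \leq B/128$ from $\rho^c_m,\rho^P_m \leq 1/(2^8 H)$), so \pref{lem:dPV} applies. Then (i) apply the monotonicity of the bonus-adjusted Bellman operator (\pref{lem:mvp}) to upper bound $\P^m_{s,a}\cV^m_{h+1} - b^m(s,a,\cV^m_{h+1})$ by the same expression with $\cV^{\optpi,m}_{h+1} + \eta_m T^{\optpi,m}_{h+1}$ in place of $\cV^m_{h+1}$; (ii) invoke \pref{lem:dPV} to absorb the bonus and replace $\P^m_{s,a}$ by $\tilP^m_{s,a}$; (iii) use \pref{lem:dc} to replace $\hatc^m(s,a)$ by $c^m(s,a) + \Delta_{c,m}$ and $\|\tilP^m_{s,a}-P^m_{s,a}\|_1 \leq \Delta_{P,m}$ to replace $\tilP^m_{s,a}$ by $P^m_{s,a}$; and (iv) combine the hypothesis $\Delta_{c,m} + B\Delta_{P,m} \leq \eta_m$ with the identity $1 + P^m_{s,a} T^{\optpi,m}_{h+1} = T^{\optpi,m}_h(s,a)$ and the definition $\cQ^{\optpi,m}_h(s,a) = c^m(s,a) + 8\eta_m + P^m_{s,a}\cV^{\optpi,m}_{h+1}$. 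The final $\leq B/2$ claim follows from $\cQ^{\optpi,m}_h \leq B/4 + 8\eta_m H$ and $\eta_m T^{\optpi,m}_h \leq \eta_m H$, both dominated by $B/4 + 9B/128 < B/2$.

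For Part 2, I would argue the contrapositive: suppose $\Delta_{c,[i^c_m, m+1]} \leq g^c(\nu^c_m+1)$ and $\Delta_{P,[i^P_m, m+1]} \leq g^P(\nu^P_m+1)$. A short case analysis on whether Tests 1-2 or the periodic checks fire in the post-processing of interval $m$ shows that in every case $\Delta_{c,m+1} \leq \rho^c_{m+1}$ and $\Delta_{P,m+1} \leq \rho^P_{m+1}$: if ResetC does not fire then $i^c_{m+1} = i^c_m$ and $\nu^c_{m+1} = \nu^c_m + 1$, so the hypothesis yields $\Delta_{c,m+1} = \Delta_{c,[i^c_m,m+1]} \leq g^c(\nu^c_m+1) = \rho^c_{m+1}$; if ResetC fires then $i^c_{m+1} = m+1$ and $\Delta_{c,m+1} = 0$; the transition side is symmetric. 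Applying Part 1 to interval $m+1$ then gives $\cV^{m+1}_h(s) \leq B/2$ for all $h$, so Test 3 does not fire, contradicting the hypothesis.

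The main technical obstacle is step (ii) of Part 1: invoking \pref{lem:dPV} requires the value functions $\{\cV^{\optpi,m}_{h+1} + \eta_m T^{\optpi,m}_{h+1}\}_{m,h}$ to form an oblivious family of size at most $(2HK)^6$. To handle this I would count as in \pref{lem:Q diff}: $|\{V^{\optpi,m}_h\}_{m,h}|, |\{T^{\optpi,m}_h\}_{m,h}| \leq HK+1$, and $\eta_m$ is determined by the pair $(\nu^c_m, \nu^P_m)$ which takes at most $K^2$ values, so the product family has cardinality $\tilO{K^4} \leq (2HK)^6$. The remaining care needed is verifying that $\cV^{\optpi,m}_{h+1} + \eta_m T^{\optpi,m}_{h+1} \leq B$ before applying \pref{lem:dPV}, which follows from the same arithmetic ($B/3 + B/128 < B$) used in the final $\leq B/2$ step.
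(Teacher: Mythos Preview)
Your proposal is correct and follows essentially the same approach as the paper: Part~1 is the same induction via \pref{lem:mvp}, \pref{lem:dPV}, and \pref{lem:dc} (the paper uses the slightly sharper bound $\rho^P_m(B/3 + H\eta_m)$ instead of your $B\Delta_{P,m}$, but both close the induction). For Part~2 the paper simply writes ``by the contraposition of the first statement''; your case analysis on whether ResetC/ResetP fire before Test~3 is exactly what is needed to make that one-liner rigorous, since the tentative $(i^c_{m+1},\nu^c_{m+1})$ may differ from $(i^c_m,\nu^c_m+1)$.
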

\begin{proof}
	First note that $\cQ^{\optpi,m}_h(s, a)\leq\frac{B}{4}+8\eta_mT^{\optpi,m}_h(s, a)\leq\frac{B}{4}+8H\eta_m\leq \frac{B}{3}$.
	We prove the first statement by induction on $h$.
	The base case of $h=H+1$ is clearly true.
	For $h\leq H$, note that:
	\begin{align*}
		&\cc^m(s, a) + \P^m_{s, a}\cV^m_{h+1} - b^m(s, a, \cV^m_{h+1})\\
		&\leq \cc^m(s, a) + \P^m_{s, a}(\cV^{\optpi, m}_{h+1} + \eta_mT^{\optpi,m}_{h+1}) - b^m(s, a, \cV^{\optpi, m}_{h+1} + \eta_mT^{\optpi,m}_{h+1}) \tag{induction step and \pref{lem:mvp}}\\
		&\overset{\text{(i)}}{\leq} \cc^m(s, a) + \tilP^m_{s, a}(\cV^{\optpi, m}_{h+1} + \eta_mT^{\optpi,m}_{h+1}) \tag{\pref{lem:dPV}}\\
		&\leq c^m(s, a) + 8\eta_m + \rho^c_m + P^m_{s, a}(\cV^{\optpi, m}_{h+1} + \eta_mT^{\optpi,m}_{h+1}) + \rho^P_m(B/3 + H\eta_m) \tag{\pref{lem:dc}, $\Delta_{c,m}\leq\rho^c_m$, and $\Delta_{P,m}\leq\rho^P_m$}\\
		&\leq \cQ^{\optpi,m}_h(s, a) + \eta_mT^{\optpi,m}_h(s, a). \tag{$H\eta_m\leq B/12$}
	\end{align*}
	Note that in (i) we use the fact that $|\{ \cV^{\optpi, m}_h + \eta_mT^{\optpi,m}_h \}_{m,h}| \leq (HK+1)^6$ since $|\{V^{\optpi,m}_h\}_{m, h}|\leq HK+1$, $|\{\rho^c_m\}_m|\leq K$, $\{\rho^P_m\}_m\leq K$, and $|\{T^{\optpi,m}_h\}_{m,h}|\leq HK+1$. 
	The second statement is simply by the contraposition of the first statement.
\end{proof}

The next two lemmas are about \textbf{Test 1} and \textbf{Test 2}.

\begin{lemma}
	\label{lem:chic}
	With probability at least $1-4\delta$, for any $M'\leq K$, if $\Delta_{c,M'}\leq \rho^c_{M'}$, then
	\begin{align*}
		\sumic\sumhm(c^m_h - \hatc^m_h) \leq \tilO{ \sqrt{C_{[i^c_{M'}, M']}} + \sumic\sumhm\rbr{\sqrt{\frac{\barc^m_h}{\M^m_h}} + \frac{1}{\M^m_h} } } + \sumic\sumhm\rho^c_m \triangleq \chi^c_{M'}.
	\end{align*}
	Moreover, if \textbf{Test 1} fails in interval $M'$, then $\Delta_{c,M'} > \rho^c_{M'}$.
\end{lemma}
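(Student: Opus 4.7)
The plan is to prove this as a window-localized version of \pref{lem:c diff}, restricted to the range $[i^c_{M'}, M']$ (the intervals since the most recent reset of the cost counters). First I would split
\[
\sumic\sumhm(c^m_h - \hatc^m_h) = \sumic\sumhm(c^m_h - c^m(s^m_h, a^m_h)) + \sumic\sumhm(c^m(s^m_h, a^m_h) - \hatc^m_h).
\]
The first sum is a martingale-difference sum whose conditional variance is at most $c^m(s^m_h, a^m_h)\leq 1$ (since $c^m_h\in[0,1]$); by Freedman's inequality together with \pref{lem:e2r}, it is $\tilo{\sqrt{C_{[i^c_{M'},M']}} + 1}$, supplying the first term in $\chi^c_{M'}$.

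For the second sum I would mirror the analysis of \pref{lem:c diff}, but only over the current window. Since $i^c_m = i^c_{M'}$ for every $m\in[i^c_{M'}, M']$, I can write $c^m(s^m_h, a^m_h) \leq \tilc^m(s^m_h, a^m_h) + \Delta_{c,m} + \m^m_h$. Combining with $\hatc^m_h = \max\{0,\barc^m_h - \sqrt{\barc^m_h\iota_m/\M^m_h} - \iota_m/\M^m_h\}$ and a Freedman bound on $|\barc^m - \tilc^m|$ (as in \pref{eq:tilc-barc}), I obtain
\[
c^m(s^m_h, a^m_h) - \hatc^m_h \;\lesssim\; \sqrt{\tfrac{\barc^m_h\iota_m}{\M^m_h}} + \tfrac{\iota_m}{\M^m_h} + \Delta_{c,m},
\]
which matches the second term of $\chi^c_{M'}$ except that I still need to convert the residual non-stationarity bias $\sum \Delta_{c,m}$ into $\sum \rho^c_m$.

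The key step — and the only mildly delicate point — is this conversion, which is where the hypothesis $\Delta_{c,M'}\leq\rho^c_{M'}$ is used. Within the window $[i^c_{M'}, M']$ no reset has occurred, so $\nu^c_m$ is monotonically non-decreasing in $m$, which makes $\rho^c_m = \min\{c_1/\sqrt{\nu^c_m}, 1/(2^8 H)\}$ non-increasing in $m$. Hence for every $m$ in the window, $\Delta_{c,m}\leq\Delta_{c,M'}\leq\rho^c_{M'}\leq\rho^c_m$, and so $\sumic\sumhm\Delta_{c,m}\leq\sumic\sumhm\rho^c_m$. Summing the bounds from the two pieces yields $\hatchi^c = \sumic\sumhm(c^m_h - \hatc^m_h)\leq\chi^c_{M'}$, as required.

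For the second statement I would argue by contraposition: if $\Delta_{c,M'}\leq\rho^c_{M'}$, the first part shows $\hatchi^c\leq\chi^c_{M'}$, so \textbf{Test 1} does not trigger at interval $M'$; equivalently, a failure of \textbf{Test 1} at $M'$ forces $\Delta_{c,M'}>\rho^c_{M'}$. I do not expect any major obstacle here beyond carefully tracking the window structure and invoking the monotonicity of $\rho^c_m$ within a block, since once the Freedman/lower-confidence-bound calculations are in place, the assumption $\Delta_{c,M'}\leq\rho^c_{M'}$ is the sole ingredient needed to close the inequality.
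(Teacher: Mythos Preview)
Your proposal is correct and follows essentially the same approach as the paper: the same two-term decomposition, the Freedman bound for the martingale part, the \pref{lem:c diff}-style argument for the bias part, and the identical monotonicity chain $\Delta_{c,m}\leq\Delta_{c,M'}\leq\rho^c_{M'}\leq\rho^c_m$ within the window (the paper just phrases the windowing as ``without loss of generality assume $i^c_{M'}=1$''). The contraposition for the second statement is likewise identical.
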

\begin{proof}
	Note that for any given $M'\leq M$, without loss of generality, we can offset the intervals and assume $i^c_{M'}=1$. 
	Then with probability at least $1-4\delta$, for any $M'\leq K$, assuming $i^c_{M'}=1$ we have
	\begin{align*}
		\summp\sumhm(c^m_h - \hatc^m_h) &= \summp\sumhm (c^m_h - c^m(s^m_h, a^m_h)) + \summp\sumhm (c^m(s^m_h, a^m_h) - \hatc^m_h) \\
		&\leq \tilO{\sqrt{C_{M'}}} + \summp\sumhm(c^m(s^m_h, a^m_h) - \hatc^m_h) \tag{\pref{lem:freedman} and \pref{lem:e2r}}\\
		&\leq \tilO{\sqrt{C_{M'}} + \summp\sumhm\rbr{\sqrt{\frac{\barc^m_h}{\M^m_h}} + \frac{1}{\M^m_h}} } + \summp\sumhm\rho^c_m. \tag{\pref{lem:c diff}, and $\Delta_{c,m}\leq \Delta_{c,M'}\leq \rho^c_{M'} \leq \rho^c_m$}
	\end{align*}
	The first statement is then proved by noting $i^c_{M'}=1$.
	The second statement is simply by the contraposition of the first statement.
\end{proof}

\begin{lemma}
	\label{lem:chip}
	With probability at least $1-16\delta$, for any $M'\leq K$, if $\Delta_{c,[i^P_{M'}, M']}\leq \bar{\rho}^c_{M'}\triangleq\min\{\frac{\B^{1.5}c_1}{\sqrt{\nu^P_{M'}}}, \frac{1}{2^8H}\}$ and $\Delta_{P,M'}\leq\rho^P_{M'}$, then
	\begin{align*}
		&\sumip\sumhm\rbr{\cV^m_{h+1}(s^m_{h+1}) - \P^m_h\cV^m_{h+1}} \leq \tilO{\sqrt{\sumip\sumhm\fV(\P^m_h, \cV^m_{h+1})} + \sumip\sumhm \sqrt{\frac{\fV(\P^m_h, \cV^m_{h+1})}{\N^m_h}} } \\
		&+ \tilO{ \sqrt{SA(\B + L_{c,[i^P_{M'}, M']})C_{[i^P_{M'}, M']}} + \sqrt{\B SA\nu^P_{M'}} + \B^{2.5}S^2AHL_{c,[i^P_{M'},M']} } + 4\sumip\sumhm\eta_m \triangleq \chi^P_{M'}.
	\end{align*}
	Moreover, if \textbf{Test 2} fails in interval $M'$, then $\Delta_{c,[i^P_{M'}, M']} > \bar{\rho}^c_{M'}$ or $\Delta_{P,M'}>\rho^P_{M'}$.
\end{lemma}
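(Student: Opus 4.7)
The plan is to decompose $\cV^m_{h+1}(s^m_{h+1}) - \P^m_h\cV^m_{h+1}$ into a true-transition martingale $\cV^m_{h+1}(s^m_{h+1}) - P^m_h\cV^m_{h+1}$ plus a transition-bias term $(P^m_h - \P^m_h)\cV^m_{h+1}$, and handle each piece separately. For the martingale, Freedman's inequality (\pref{lem:freedman}) yields $\tilO{\sqrt{\sumip\sumhm\fV(P^m_h, \cV^m_{h+1})} + B}$; for the transition bias, the empirical-variance form (second inequality) of \pref{lem:dPV} yields $\tilO{\sumip\sumhm\sqrt{\fV(\P^m_h, \cV^m_{h+1})/\N^m_h} + BS^{1.5}A}$, where the lower-order constant uses $L_{P,[i^P_{M'},M']} = 1$ together with \pref{lem:sum}. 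This second bound is already in the form of term (b) of $\chi^P_{M'}$; using the $\Delta_{P,m}$-free second bound of \pref{lem:dPV} is intentional, to avoid propagating transition non-stationarity through the bias piece.

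Next, convert the martingale variance from $\fV(P^m_h,\cdot)$ to the empirical $\fV(\P^m_h,\cdot)$ via a mirror image of \pref{lem:emp var}: $\fV(P^m_h, \cV^m_{h+1}) \leq 2\fV(\P^m_h, \cV^m_{h+1}) + \tilO{SB^2/\N^m_h} + 2B^2\Delta_{P,m}$. After summation, the $SB^2/\N^m_h$ piece contributes only a $\tilO{S^2AB^2}$ constant, and $B^2\sum\Delta_{P,m} \leq B\sum\eta_m$ by the hypothesis $\Delta_{P,m}\leq\rho^P_m\leq\eta_m/B$; taking the outer square root, $\sqrt{B\sum\eta_m}$ splits by AM-GM into $B+\sum\eta_m$, which is absorbed by the $4\sumip\sumhm\eta_m$ allowance. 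What survives from the martingale is exactly $\sqrt{\sumip\sumhm\fV(\P^m_h, \cV^m_{h+1})}$, i.e., term (a).

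The heart of the argument is bounding $\sumip\sumhm\fV(P^m_h, \cV^m_{h+1})$ via \pref{lem:sum var}. The telescoping piece is $\sumip\cV^m_{H_m+1}(s^m_{H_m+1})^2 \leq 2\B\sumip c_f(s^m_{H_m+1}) \leq 2\B C_{[i^P_{M'},M']}$. For the positive-part piece, I use the algorithm's Bellman update $\cV^m_h(s^m_h) = \cc^m_h + \P^m_h\cV^m_{h+1} - b^m_h$ (when $\cQ^m_h \geq 0$; otherwise both sides vanish) together with the bonus-dominance $|(\P^m_h - P^m_h)\cV^m_{h+1}| \leq b^m_h$ from \pref{lem:dPV}, to conclude $(\cV^m_h(s^m_h) - P^m_h\cV^m_{h+1})_+ \leq \hatc^m_h + 8\eta_m$. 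Summing, using $\hatc^m_h \leq \barc^m_h$ plus the $\barc^m$-control from the proof of \pref{lem:c diff}, and passing from $c^m(s^m_h,a^m_h)$ to $c^m_h$ via \pref{lem:e2r}, gives $\sumip\sumhm(\cV^m_h - P^m_h\cV^m_{h+1})_+ \leq \tilO{C + \sqrt{SAL_c C} + SAL_c + \sumip\sumhm\Delta_{c,m}} + 8\sumip\sumhm\eta_m$. Multiplying by $B$ and applying the outer square root, the contribution $\sqrt{BC + BSAL_c}$ is bounded by $\sqrt{SA(\B+L_c)C}$ (as $SA\geq 1$); the cost-non-stationarity residual $B\sumip\sumhm\Delta_{c,m}$ is controlled using the saturation bound $\bar\rho^c_{M'} \leq 1/(2^8H)$, giving $\sumip\sumhm\Delta_{c,m} \leq H\nu^P_{M'}\bar\rho^c_{M'} \leq \nu^P_{M'}/256$ and hence a contribution $\tilO{\sqrt{B\nu^P_{M'}}} \leq \tilO{\sqrt{\B SA\nu^P_{M'}}}$. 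The remaining constants collapse into the stated lower-order $\B^{2.5}S^2AHL_c$ term.

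The principal obstacle is bookkeeping the cost-related quantities across a transition-reset window $[i^P_{M'},M']$ that can span many cost-reset windows: the $\B^{1.5}$ inflation in $\bar\rho^c_{M'}$ relative to the standard $\rho^c$ scale is engineered precisely so that $B\sum\Delta_{c,m}$ ends up comparable to $\sqrt{\B SA\nu^P_{M'}}$ rather than corrupting the leading term. A secondary subtlety is that $\cV^m_h$ is adaptive, so \pref{lem:dPV} must be applied over a polynomial-sized covering of the class of possible value functions—this is fine because $\cV^m$ is determined by $(\hatc^m, \P^m, \eta_m)$, each taking polynomially many distinct values across intervals. The second statement is the contrapositive of the first: if Test 2 fires at $M'$, then $\hatchi^P > \chi^P_{M'}$, so at least one of the two stated hypotheses must be violated.
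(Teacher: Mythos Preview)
There is a genuine gap in your application of \pref{lem:dPV} to the adaptive value function $\cV^m_{h+1}$. That lemma requires $V$ to come from an \emph{oblivious} set $\calV$ with $|\calV|\leq(2HK)^6$, and your covering argument does not go through: $\cV^m_{h+1}$ is determined by $(\hatc^m,\P^m,\Np_m,\eta_m)$, but $\hatc^m$ is a function of $\barc^m(s,a)=\C(s,a)/\Mp(s,a)$ and $\C(s,a)$ is a sum of continuous cost samples, so $\hatc^m$ takes uncountably many values. Even setting costs aside, $\P^m$ is determined by the count vector $(\N(s,a,s'))_{s,a,s'}$, which ranges over a set of cardinality exponential in $S^2A$, far beyond the polynomial budget. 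Hence neither your direct bound on $(\tilP^m_h-\P^m_h)\cV^m_{h+1}$ via the second inequality of \pref{lem:dPV}, nor the bonus-dominance claim $|(\P^m_h-P^m_h)\cV^m_{h+1}|\leq b^m_h$ that you use inside the \pref{lem:sum var} step, is justified.

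The paper resolves this adaptivity issue through the decomposition $\cV^m_{h+1}=\cV^{\star,m}_{h+1}+(\cV^m_{h+1}-\cV^{\star,m}_{h+1})$, where $\cV^{\star,m}$ is the optimal value function with respect to the \emph{true} cost $c^m+8\eta_m$ and transition $P^m$. Because $\cV^{\star,m}_h$ is determined by $(c^m,P^m,\eta_m,h)$ and $\eta_m$ takes only polynomially many values, the family $\{\cV^{\star,m}_h\}_{m,h}$ is polynomial-sized and oblivious, so \pref{lem:dPV} legitimately applies to it and produces the $\sumip\sumhm\sqrt{\fV(\P^m_h,\cV^m_{h+1})/\N^m_h}$ term (after one more triangle-inequality split on the variance). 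For the adaptive remainder $\cV^m_{h+1}-\cV^{\star,m}_{h+1}$, the paper instead uses the entry-wise \pref{lem:dPv}, which holds for arbitrary $V$ at the cost of an extra $\sqrt{S}$; the resulting $\sqrt{S^2A\sumip\sumhm\fV(P^m_h,\cV^m_{h+1}-\cV^{\star,m}_{h+1})}$ is then controlled by \pref{lem:opt-check var sum}, a shifted-variance recursion analogous to \pref{lem:opt-til var sum}. This last lemma is the real technical core of the proof, and it is where the terms $\sqrt{SA(\B+L_{c,[i^P_{M'},M']})C_{[i^P_{M'},M']}}$ and $\sqrt{\B SA\nu^P_{M'}}$ ultimately come from (together with the hypothesis $\Delta_{c,[i^P_{M'},M']}\leq\bar\rho^c_{M'}$, which gives $\Delta_{c,m}\leq\B^{1.5}\rho^c_m$). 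Your plan is missing this entire mechanism; trying to short-circuit it by bounding $\sum\fV(P^m_h,\cV^m_{h+1})$ directly via \pref{lem:sum var} plus bonus dominance would, even if you patched in \pref{lem:dPv}, yield an extra $\sqrt{S}$ in the leading variance term and not match the stated $\chi^P_{M'}$.
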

\begin{proof}	
	For any $M'\leq K$, without loss of generality, we can offset the intervals and assume $i^P_{M'}=1$.
	Moreover, for any $m\leq M'$, we have $\Delta_{P,m}\leq\Delta_{P,M'}\leq\rho^P_{M'}\leq\rho^P_m$.
	Thus, with probability at least $1-2\delta$,
	\begin{align*}
		&\summp\sumhm\rbr{\cV^m_{h+1}(s^m_{h+1}) - \P^m_h\cV^m_{h+1}}\\
		&\leq \summp\sumhm(\cV^m_{h+1}(s^m_{h+1}) - P^m_h\cV^m_{h+1}) + \summp\sumhm(\tilP^m_h - \P^m_h)\cV^m_{h+1} + \summp\sumhm B(\rho^P_m + \n^m_h) \tag{$P^m_h\cV^m_{h+1}\leq \tilP^m_h\cV^m_{h+1} + B(\Delta_{P,m} + \n^m_h)$ and $\Delta_{P,m}\leq\rho^P_m$}\\
		&\leq \tilO{\sqrt{\summp\sumhm\fV(P^m_h, \cV^m_{h+1})} + \B SA} + \summp\sumhm(\tilP^m_h - \P^m_h)\cV^m_{h+1} + \summp\sumhm B\rho^P_m\tag{\pref{lem:freedman} and $\summp\sumhm\n^m_h\leq\summp\sumhm\frac{1}{\N^m_h}\leq SA$ by $L_{P,M'}=1$}\\
		&\leq \tilO{\sqrt{\summp\sumhm\fV(\P^m_h, \cV^m_{h+1})} + \B SA} + \summp\sumhm(\tilP^m_h - \P^m_h)\cV^m_{h+1} + \summp\sumhm 2B\rho^P_m,
	\end{align*}
	where the last inequality is by
	\begin{align*}
		\fV(P^m_h, \cV^m_{h+1}) &\leq P^m_h(\cV^m_{h+1} - \P^m_h\cV^m_{h+1})^2 \leq \tilP^m_h(\cV^m_{h+1} - \P^m_h\cV^m_{h+1})^2 + B^2(\Delta_{P,m} + \n^m_h) \tag{$\frac{\sum_ip_ix_i}{\sum_ip_i}=\argmin_z\sum_ip_i(x_i-z)^2$}\\
		&\leq 2\fV(\P^m_h, \cV^m_{h+1}) + \tilO{\frac{SB^2}{\N^m_h}} + B^2\rho^P_m, \tag{$\tilP^m_h(s')\leq 2\P^m_h(s') + \frac{1}{\N^m_h}$ by \pref{lem:e2r}, $\n^m_h\leq\frac{1}{\N^m_h}$, and $\Delta_{P,m}\leq\rho^P_m$}
	\end{align*}
	\pref{lem:sum}, $L_{P,M'}=1$, and AM-GM inequality.
	Now note that with probability at least $1-3\delta$,
	\begin{align*}
		&\summp\sumhm (\tilP^m_h - \P^m_h)\cV^m_{h+1} = \summp\sumhm \rbr{(\tilP^m_h - \P^m_h)\cV^{\star,m}_{h+1} + (\tilP^m_h - \P^m_h)(\cV^m_{h+1} - \cV^{\star,m}_{h+1})}\\
		&\leq \tilO{\summp\sumhm\rbr{\sqrt{\frac{\fV(\P^m_h, \cV^{\star, m}_{h+1})}{\N^m_h}} + \frac{S\B}{\N^m_h}} + \sqrt{S^2A\summp\sumhm\fV(P^m_h, \cV^m_{h+1} - \cV^{\star, m}_{h+1})} } + \summp\sumhm \frac{B\rho^P_m}{32} \tag{\pref{lem:dPV}, \pref{lem:dPv}, Cauchy-Schwarz inequality, \pref{lem:sum}, and $\Delta_{P,m}\leq\rho^P_m$}\\
		&\leq \tilO{\summp\sumhm\rbr{ \sqrt{\frac{\fV(\P^m_h, \cV^m_{h+1})}{\N^m_h}} + \frac{S\B}{\N^m_h}} + \sqrt{S^2A\summp\sumhm\fV(P^m_h, \cV^m_{h+1} - \cV^{\star, m}_{h+1})} } + \summp\sumhm\frac{B\rho^P_m}{16},
	\end{align*}
	where in the last step we apply 
	$$\summp\sumhm \sqrt{\frac{\fV(\P^m_h, \cV^{\star,m}_{h+1})}{\N^m_h}}\leq \summp\sumhm \rbr{\sqrt{\frac{\fV(\P^m_h, \cV^m_{h+1})}{\N^m_h}} + \sqrt{\frac{\fV(\P^m_h, \cV^m_{h+1} - \cV^{\star,m}_{h+1})}{\N^m_h}} }$$
	by $\sqrt{\var[X+Y]}\leq\sqrt{\var[X]}+\sqrt{\var[Y]}$ \citep[Lemma E.3]{cohen2021minimax} and
	\begin{align*}
		&\summp\sumhm\sqrt{\frac{\fV(\P^m_h, \cV^m_{h+1} - \cV^{\star,m}_{h+1})}{\N^m_h}} \leq \summp\sumhm\sqrt{ \frac{\P^m_h((\cV^m_{h+1} - \cV^{\star,m}_{h+1}) - P^m_h(\cV^m_{h+1} - \cV^{\star,m}_{h+1}))^2}{\N^m_h}} \tag{$\frac{\sum_ip_ix_i}{\sum_ip_i}=\argmin_z\sum_ip_i(x_i-z)^2$}\\
		&\leq \summp\sumhm\sqrt{ \frac{2\tilP^m_h((\cV^m_{h+1} - \cV^{\star,m}_{h+1}) - P^m_h(\cV^m_{h+1} - \cV^{\star,m}_{h+1}))^2}{\N^m_h}} + \tilO{\summp\sumhm\frac{B\sqrt{S}}{\N^m_h}} \tag{$\P^m_h(s')\leq 2\tilP^m_h(s') + \tilO{\frac{1}{\N^m_h}}$ by \pref{lem:e2r}}\\
		&\leq \summp\sumhm\sqrt{ \frac{2\fV(P^m_h, \cV^m_{h+1} - \cV^{\star,m}_{h+1})}{\N^m_h}} + \tilO{\summp\sumhm\frac{B\sqrt{S}}{\N^m_h} + \summp\sumhm B\sqrt{\frac{\Delta_{P,m}}{\N^m_h}} }\\
		&\leq \tilO{\sqrt{SA\summp\sumhm\fV(P^m_h, \cV^m_{h+1} - \cV^{\star,m}_{h+1})} + \summp\sumhm\frac{B\sqrt{S}}{\N^m_h}} + \summp\sumhm\frac{B\rho^P_m}{32}. \tag{Cauchy-Schwarz inequality, \pref{lem:sum}, $L_{P,M'}=1$, AM-GM inequality, and $\Delta_{P,m}\leq\rho^P_m$}
	\end{align*}
	Now by \pref{lem:opt-check var sum}, $L_{P,M'}=1$, and AM-GM inequality, we have with probability $1-10\delta$,
	\begin{align*}
		&\sqrt{S^2A\summp\sumhm\fV(P^m_h, \cV^m_{h+1} - \cV^{\star, m}_{h+1})} \leq \tilO{\sqrt{SA\Lc C_{M'}} + \sqrt{\B SA (C_{M'}+M')} }\\ 
		&\qquad + \tilO{\B S^2A + \B S^{1.5}A\Lc + \sqrt{\B S^2A\summp (\Delta_{c,m}+\B\Delta_{P,m})H} }.
	\end{align*}
	Moreover, by $i^c_m\geq i^P_m$ and $\nu^c_m\leq \nu^P_m$ due to the reset rules, we have $\Delta_{c,m}\leq\Delta_{c,[i^P_{M'}, m]}\leq \Delta_{c,[i^P_{M'}, M']} \leq \bar{\rho}^c_{M'} \leq \bar{\rho}^c_m\leq \B^{1.5}\min\{\frac{c_1}{\sqrt{\nu^P_m}}, \frac{1}{2^8H}\} \leq \B^{1.5}\min\{\frac{c_1}{\sqrt{\nu^c_m}}, \frac{1}{2^8H}\} \leq \B^{1.5}\rho^c_m$. 
	Therefore, by $\Delta_{P,m}\leq\rho^P_m$ and AM-GM inequality,
	\begin{align*}
		\sqrt{\B S^2A\summp\sumhm (\Delta_{c,m}+\B\Delta_{P,m})} \leq \sqrt{\B^{2.5}S^2AH\summp(\rho^c_m + \B\rho^P_m)} \leq \B^{2.5}S^2AH + \summp\eta_m.
	\end{align*}
	Plugging these back, and by \pref{lem:sum}, $L_{P,M'}=1$, we obtain
	\begin{align*}
		&\summp\sumhm (\tilP^m_h - \P^m_h)\cV^m_{h+1} \leq \tilO{\summp\sumhm\rbr{ \sqrt{\frac{\fV(\P^m_h, \cV^m_{h+1})}{\N^m_h}} } + \sqrt{\B SA (C_{M'}+M')} }\\
		&\qquad + \tilO{ \sqrt{SA\Lc C_{M'}} + \B S^{1.5}A\Lc + \B^{2.5}S^2AH} + 2\summp\sumhm\eta_m.
	\end{align*}
	Plugging this back and noting $i^P_{M'}=1$ completes the proof of the first statement.
	The second statement is simply by the contraposition of the first statement.
\end{proof}

\subsection{\pfref{thm:mvp-test}}
\begin{proof}
	By $s^m_1=\sinit$, we decompose the regret as follows, with probability at least $1-2\delta$,
	\begin{align*}
		\rR_K &= \summk\rbr{\sumhm c^m_h + c^m_{H_m+1} - V^{\optpi, m}_1(s^m_1)}\\ 
		&= \summk\rbr{\sumhm c^m_h + c^m_{H_m+1} -  \cV^m_1(s^m_1)} + \summk\rbr{\cV^m_1(s^m_1) - \cV^{\optpi,m}_1(s^m_1)} + 8\T\summk\eta_m\\
		&\leq \summk\rbr{\sumhm c^m_h + c^m_{H_m+1} -  \cV^m_1(s^m_1)} + \summk (\Delta_{c,m} + B\Delta_{P,m})\T + 8\T\summk\eta_m \tag{\pref{lem:Q diff}}\\
	\end{align*}
	We first bound the first and the third term above separately.
	For the third term, we have:
	\begin{align*}
		\T\summk\eta_m & \leq \T\summk\rbr{\frac{c_1}{\sqrt{\nu^c_m}} + \frac{Bc_2}{\sqrt{\nu^P_m}}} = \tilO{\T\rbr{c_1\sum_{i=1}^{L_c}\sqrt{M^c_i} + \B c_2\sum_{i=1}^{L_P}\sqrt{M^P_i} } } \tag{$\sum_{i=1}^j\frac{1}{\sqrt{i}}=\bigo{\sqrt{j}}$}\\
		&= \tilO{ \T (c_1\sqrt{L_cK} + \B c_2\sqrt{L_PK}) } = \tilO{ \sqrt{\B SAL_cK} + \B\sqrt{SAL_PK} },
	\end{align*}
	where $M^c_i$ (or $M^P_i$) is the number of intervals between the $i$-th and $(i+1)$-th reset of cost (or transition) estimation, and the second last step is by Cauchy-Schwarz inequality.
	For the first term, define $\{\calI^c_i\}_{i=1}^{L_c}$ (or $\{\calI^P_i\}_{i=1}^{L_P}$) as a partition of $K$ episodes such that $\M$ (or $\N$) is reset in the last interval of each $\calI^c_i$ (or $\calI^P_i$) for $i<L_c$ (or $i<L_P$) and the last interval of $\calI^c_{L_c}$ (or $\calI^P_{L_P}$) is $K$.
	Also let $L=L_c+L_P$.
	Then with probability at least $1-20\delta$,
	\begin{align*}
		&\summk\rbr{\sumhm c^m_h + c^m_{H_m+1} -  \cV^m_1(s^m_1)} \leq \summk\sumhm \rbr{c^m_h + \cV^m_{h+1}(s^m_{h+1}) - \cV^m_h(s^m_h)} + \tilO{\B SAL} \tag{\pref{lem:new interval}}\\
		&\leq \summk\sumhm\rbr{ c^m_h - \hatc^m_h + \cV^m_{h+1}(s^m_{h+1}) - \P^m_h\cV^m_{h+1} + b^m_h - 8\eta_m} + \tilO{\B SAL} \tag{definition of $\cV^m_h(s^m_h)$}\\
		&= \sum_{i=1}^{L_c}\sum_{m\in\calI^c_i}\sumhm(c^m_h - \hatc^m_h) + \sum_{i=1}^{L_P}\sum_{m\in\calI^P_i}\sumhm(\cV^m_{h+1}(s^m_{h+1}) - \P^m_h\cV^m_{h+1}) + \summk\sumhm(b^m_h - 8\eta_m) + \tilO{\B SAL}\\
		&=\tilO{ \sqrt{L_cC_K} + \summk\sumhm\rbr{\sqrt{\frac{\barc^m_h}{\M^m_h}} + \frac{1}{\M^m_h} } + \sqrt{L_P\summk\sumhm\fV(\P^m_h, \cV^m_{h+1})} + \summk\sumhm b^m_h }\\
		&\qquad + \tilO{\B^{2.5}S^2AHL_c + \sqrt{\B SAL_P(C_K+K)} + \sqrt{SAL_cC_K} + HL_c + \B HL_P}, \tag{\textbf{Test 1} (\pref{lem:chic}), \textbf{Test 2} (\pref{lem:chip}), and Cauchy-Schwarz inequality}
	\end{align*}
	where $\tilo{HL_c + \B HL_P}$ is upper bound of the costs in intervals where \textbf{Test 1} fails or \textbf{Test 2} fails.
	By \pref{lem:c diff} and AM-GM inequality, with probability at least $1-3\delta$,
	\begin{align*}
		\summk\sumhm\rbr{\sqrt{\frac{\barc^m_h}{\M^m_h}} + \frac{1}{\M^m_h} } = \tilO{SAHL_c + \sqrt{SAL_cC_K}} + \summk\Delta_{c,m}.
	\end{align*}
	Following the proof of \pref{lem:sum bV}, we have $\sqrt{L_P\summk\sumhm\fV(\P^m_h, \cV^m_{h+1})}$ is dominated by the upper bound of $\summp\sumhm b^m_h$. 
	Thus with probability at least $1-\delta$,
	\begin{align*}
		&\sqrt{L_P\summk\sumhm\fV(\P^m_h, \cV^m_{h+1})} + \summk\sumhm b^m_h\\
		&=\tilO{ \sqrt{SAL_P\summk\sumhm\fV(P^m_h, \cV^m_{h+1})} + \B S^{1.5}AL_P + \B\sqrt{SAL_P\summk\sumhm\Delta_{P,m} } }\\
		&= \tilO{\sqrt{\B SAL_P(C_K+K)} + \sqrt{SAL_cC_K} + \B S^{1.5}AHL} + \summk(\Delta_{c,m}+\B\Delta_{P,m}),
	\end{align*}
	where in the last inequality we apply AM-GM inequality on $\B\sqrt{SAL_P\summk\sumhm\Delta_{P,m} } $, and note that with probability at least $1-11\delta$,
	\begin{align*}
		&\sqrt{SAL_P\summk\sumhm\fV(P^m_h, \cV^m_{h+1})}\\ 
		&=\tilO{ \sqrt{SAL_P\summk\sumhm\fV(P^m_h, \cV^{\star,m}_{h+1})} + \sqrt{SAL_P\summk\sumhm\fV(P^m_h, \cV^m_{h+1} - \cV^{\star,m}_{h+1})} }\tag{$\var[X+Y]\leq2\var[X]+2\var[Y]$ and $\sqrt{a+b}\leq\sqrt{a}+\sqrt{b}$}\\
		&= \tilO{\sqrt{\B SAL_P(C_K+K)} + \sqrt{SAL_cC_K} + \B S^{1.5}AHL} + \summp(\Delta_{c,m}+\B\Delta_{P,m}). \tag{\pref{lem:optvar check sum}, \pref{lem:opt-check var sum}, and AM-GM inequality}
	\end{align*}
	Putting everything together, we have
	\begin{align*}
		\rR_K &= \tilO{\sqrt{SA(L_c+\B L_P)(C_K+\B K)} + \B^{2.5} S^2AHL + \summk(\Delta_{c,m}+\B\Delta_{P,m})\T}.
	\end{align*}
	Now by $\rR_K\geq C_K - 4\B K$, solving a quadratic inequality (\pref{lem:quad}) w.r.t $C_K$ and plugging the bound on $C_K$ back, we obtain
	\begin{align*}
		\rR_K = \tilO{ \sqrt{\B SAL_cK} + \B\sqrt{SAL_PK}+ \B^{2.5} S^2AHL + \summk(\Delta_{c,m}+\B\Delta_{P,m})\T }.
	\end{align*}
	It suffices to bound the last term above.
	By the periodic resets of $\M$ and $\N$ (\pref{line:period c} and \pref{line:period p} of \pref{alg:mvp-test}), the number of intervals between consecutive resets of $\M$ and $\N$ are upper bounded by $W_c$ and $W_P$ respectively.
	Thus,
	\begin{align*}
		&\summk(\Delta_{c,m}+\B\Delta_{P,m})\T \leq \summk(\Delta_{c,f^c(m)}+\B\Delta_{P,f^P(m)})\T \leq (W_c\Delta_c+\B W_P\Delta_P)\T\\
		&=\tilO{(\B SA\T\Delta_c)^{1/3}K^{2/3} + \B(SA\T\Delta_P)^{1/3}K^{2/3} + (\Delta_c + \B\Delta_P)\T},
	\end{align*}
	where the last step is simply by the chosen values of $W_c$ and $W_P$.
	Plugging this back and applying \pref{lem:bound L} completes the proof.
\end{proof}

\begin{lemma}
	\label{lem:bound L}
	With probability at least $1-2\delta$, \pref{alg:mvp-test} with $p=1/\B$ ensures
	\begin{align*}
		L_c &= \tilO{(\B SA)^{-1/3}(\T\Delta_c)^{2/3}K^{1/3} + \B(SA)^{-1/3}(\T\Delta_P)^{2/3}K^{1/3} + H(\Delta_c + \B \Delta_P)},\\
		L_P &= \tilO{(\B SA)^{-1/3}(\T\Delta_c)^{2/3}K^{1/3}/\B + (SA)^{-1/3}(\T\Delta_P)^{2/3}K^{1/3} + H(\Delta_c + \Delta_P) }.
	\end{align*}
\end{lemma}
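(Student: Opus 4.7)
The plan is to decompose $L_c$ and $L_P$ by the source of each reset and bound each contribution separately. The reset sources are: the periodic triggers in \pref{line:period c} and \pref{line:period p} (contributing at most $\lceil K/W_c\rceil$ and $\lceil K/W_P\rceil$ respectively), Test 1 failures (resetting only $\M$), Test 2 failures (resetting both), and Test 3 failures (always resetting $\M$ and additionally resetting $\N$ with probability $p=1/\B$). By \pref{lem:chic}, each Test 1 failure is preceded by $\Delta_c>\rho^c=\min\{\sqrt{\B SA}/(\T\sqrt{\nu^c}),\,1/(2^8H)\}$ over the $\M$-segment since the last reset of $\M$; since these segments are disjoint, \pref{lem:bound l} applied with $c_1=\sqrt{\B SA}/\T$ and $c_3=1/(2^8H)$ yields $\tilO{K/W_c+H\Delta_c}$. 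By \pref{lem:chip}, each Test 2 failure is preceded by either $\Delta_c>\bar\rho^c$ (with effective constant $\B^2\sqrt{SA}/\T$) or $\Delta_P>\rho^P$ (with constant $\sqrt{SA}/\T$) over disjoint $\N$-segments; two applications of \pref{lem:bound l} then give $\tilO{K/(\B W_c)+K/W_P+H(\Delta_c+\Delta_P)}$.

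Let $N_V$ denote the total number of Test 3 failures. By \pref{lem:bound V}, each such failure at interval $m$ satisfies either (a) $\Delta_{c,[i^c_m,m+1]}>g^c(\nu^c_m+1)$ or (b) $\Delta_{P,[i^P_m,m+1]}>g^P(\nu^P_m+1)$, where $g^c(\nu)=\min\{\sqrt{\B SA}/(\T\sqrt{\nu}),\,1/(2^8H)\}$ and $g^P(\nu)=\min\{\sqrt{SA}/(\T\sqrt{\nu}),\,1/(2^8H)\}$. I partition $N_V=N_V^{(a)}+N_V^{(b)}$ by assigning each failure to (a) whenever condition (a) holds and to (b) otherwise. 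Since every Test 3 failure resets $\M$, the segments $[i^c_m,m+1]$ for consecutive (a) events are disjoint, and \pref{lem:bound l} gives $N_V^{(a)}=\tilO{K/W_c+H\Delta_c}$.

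The main obstacle is bounding $N_V^{(b)}$: a (b) event only resets $\N$ with probability $p$, so the $\N$-segments across consecutive (b) events may overlap, preventing a direct application of \pref{lem:bound l}. I circumvent this by restricting attention to the subset $N_V^{(b,\star)}\subseteq N_V^{(b)}$ of (b) events whose Bernoulli trial succeeds (which do reset $\N$). These events have disjoint $\N$-segments with $\Delta_P>g^P$, so \pref{lem:bound l} with $c_2=\sqrt{SA}/\T$ yields $N_V^{(b,\star)}=\tilO{K/W_P+H\Delta_P}$. Conditional on the non-stationarity trajectory, $N_V^{(b,\star)}\sim\textrm{Binomial}(N_V^{(b)},p)$, so a Chernoff lower-tail bound yields $N_V^{(b)}\leq 2N_V^{(b,\star)}/p+\tilO{1/p}=\tilO{\B K/W_P+\B H\Delta_P}$ with high probability. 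Summing $N_V^{(a)}$ and $N_V^{(b)}$ together with the Test 1/2 and periodic bounds gives the claimed bound on $L_c$. For $L_P$, the Test 3 contribution $L_P^{(3)}$ equals $N_V^{(b,\star)}$ plus the analogous Case-(a)-with-Bernoulli-success count, which by a Chernoff upper-tail bound on $\textrm{Binomial}(N_V^{(a)},p)$ is $\tilO{pN_V^{(a)}+\log}=\tilO{K/(\B W_c)+H\Delta_c/\B}$; combining these with the periodic and Test 2 contributions gives the claimed bound on $L_P$.
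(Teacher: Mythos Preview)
Your proposal is correct and follows essentially the same route as the paper: decompose the resets by source, bound Test~1 and Test~2 failures directly via \pref{lem:bound l}, and for Test~3 split failures by whether the $\rho^c$-condition or the $\rho^P$-condition is violated, using the Bernoulli subsampling to relate the full count to the count of those that actually reset $\N$. The paper's partition for the (a)/(b) analogs (its $N_c$ and $N_P$) and its use of \pref{lem:e2r} to pass between $\sum_m\fA_m$ and $\sum_m\fA'_m$ are exactly your $N_V^{(a)}$, $N_V^{(b,\star)}$ and concentration steps, just with overlapping rather than disjoint labels.

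One technical slip: your claim that ``conditional on the non-stationarity trajectory, $N_V^{(b,\star)}\sim\mathrm{Binomial}(N_V^{(b)},p)$'' is not literally true. Whether a Test~3 failure is classified as (b) depends on $\nu^P_m$, which in turn depends on earlier Bernoulli outcomes (through earlier $\N$-resets and hence the policy and trajectory). So $N_V^{(b)}$ is itself a random stopping-type quantity that is not fixed under your conditioning. The fix is exactly what the paper does: observe that the indicator of a (b,$\star$) event at $m$ has conditional mean $p$ times the indicator of a (b) event at $m$ given the filtration up to (and including) the Test~3 check but excluding the coin flip, and invoke the martingale-type concentration of \pref{lem:e2r} (both directions) rather than a binomial Chernoff bound. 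With that replacement, both of your tail inequalities go through and the rest of the argument is intact.
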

\begin{proof}
	We consider the number of resets of $\M$ and $\N$ from each test separately.
	By \pref{lem:chic} and \pref{lem:bound l}, there are at most $\tilo{(c_1^{-1}\Delta_c)^{2/3}K^{1/3} + H\Delta_c}$ resets of $\M$ triggered by \textbf{Test 1}.
	By \pref{lem:chip} and \pref{lem:bound l}, there are at most $\tilo{((\B^{-1.5}c_1^{-1}\Delta_c)^{2/3} + (c_2^{-1}\Delta_P)^{2/3})K^{1/3} + H(\Delta_c+\Delta_P)}$ resets of $\M$ and $\N$ triggered by \textbf{Test 2}.
	
	Next, we consider \textbf{Test 3}.
	Define $\Ind^c_m=\Ind\{\Delta_{c, [i^c_m, m+1]}>g^c(\nu^c_m+1)\}$ and $\Ind^P_m=\Ind\{\Delta_{P, [i^P_m, m+1]}>g^P(\nu^P_m+1)\}$.
	Note that whenever \textbf{Test 3} fails in interval $m$, we have $\Ind^c_m=1$ or $\Ind^P_m=1$ by \pref{lem:bound V}.
	We partition $K$ intervals into segments $\calI_1,\ldots,\calI_{N_c}$, such that in the last interval of each $\calI_i$ with $i<N_c$ denoted by $m$, \textbf{Test 3} fails and $\Ind^c_m=1$.
	Since $\nu^c$ is reset whenever \textbf{Test 3} fails, we have $\Delta_{\calI_i\cup\{m+1\}}\geq\Delta_{[i^c_m, m+1]}>g^c(\nu^c_m+1)\geq g^c(|\calI_i|+1)$.
	By \pref{lem:bound l}, we obtain $N_c=\tilo{(c_1^{-1}\Delta_c)^{2/3}K^{1/3} + H\Delta_c}$.
	
	Now define $\fA_m$ as the indicator that \textbf{Test 3} fails in interval $m$ and $\Ind^P_m=1$.
	Also define $\fA'_m$ as the indicator that \textbf{Test 3} fails and $\N$ is reset in interval $m$, and $\Ind^P_m=1$.
	We then partition $K$ intervals into segments $\calI'_1,\ldots,\calI'_{N_P}$, such that in the last interval of each $\calI'_i$ with $i<N_P$ denoted by $m$, $\fA'_m=1$.
	Since $\nu^P$ is reset in interval $m$ when $\fA'_m=1$, we have $\Delta_{\calI'_i\cup\{m+1\}}\geq\Delta_{[i^P_m, m+1]}>g^P(\nu^P_m+1)\geq g^P(|\calI'_i|+1)$.
	By \pref{lem:bound l}, we have $N_P=\tilo{(c_2^{-1}\Delta_P)^{2/3}K^{1/3} + H\Delta_P}$.
	Moreover, by \pref{lem:e2r} and the reset rule of \textbf{Test 3}, we have $p\sum_m\fA_m=\tilo{\sum_m\fA'_m}$ with probability at least $1-\delta$, which gives $\sum_m\fA_m=\tilo{N_P/p}$.
	
	Since $\Ind^c_m=1$ or $\Ind^P_m=1$ when \textbf{Test 3} fails in interval $m$, the total number of times that \textbf{Test 3} fails $N_3 \leq N_c + \sum_m\fA_m=\tilo{ (c_1^{-1}\Delta_c)^{2/3}K^{1/3} + \B(c_2^{-1}\Delta_P)^{2/3}K^{1/3} + H(\Delta_c+\B\Delta_P) }$.
	Now by the reset rule of \textbf{Test 3}, the number of times $\M$ is reset due to \textbf{Test 3} is upper bounded by $N_3$, and the number of times $\N$ is reset due to \textbf{Test 3} is upper bounded by $\tilo{pN_3}$ with probability at least $1-\delta$ by \pref{lem:e2r}.
	Finally, by \pref{line:period c} and \pref{line:period p} of \pref{alg:mvp-test}, there are at most $\frac{K}{W_c}$ resets of $\M$ and $\frac{K}{W_P}$ resets of $\N$ respectively due to periodic restarts.
	Putting all cases together, we have
	\begin{align*}
		L_c &=\tilO{ (c_1^{-1}\Delta_c)^{2/3}K^{1/3} + \B(c_2^{-1}\Delta_P)^{2/3})K^{1/3} + H(\Delta_c+\B\Delta_P) + K/W_c }\\
		&=\tilO{ (\B SA)^{-1/3}(\T\Delta_c)^{2/3}K^{1/3} + \B(SA)^{-1/3}(\T\Delta_P)^{2/3}K^{1/3} + H(\Delta_c + \B \Delta_P)},
	\end{align*}
	and
	\begin{align*}
		L_P &=\tilO{ \frac{1}{\B}(c_1^{-1}\Delta_c)^{2/3}K^{1/3} + (c_2^{-1}\Delta_P)^{2/3}K^{1/3} + H(\Delta_c+\Delta_P) + K/W_P} \\
		&= \tilO{\frac{(\B SA)^{-1/3}(\T\Delta_c)^{2/3}K^{1/3}}{\B} + (SA)^{-1/3}(\T\Delta_P)^{2/3}K^{1/3} + H(\Delta_c+\Delta_P) }.
	\end{align*}
	This completes the proof.
\end{proof}

\subsection{Auxiliary Lemmas}

\begin{lemma}
	\label{lem:optvar check sum}
	With probability at least $1-\delta$, for any $M'\leq K$,
	$\summp\sumhm\fV(P^m_h, \cV^{\star,m}_{h+1}) =\tilO{ \B C_{M'} + \B M' + \B^2}$.
\end{lemma}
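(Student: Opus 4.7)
The plan is to mirror the proof of \pref{lem:optvar sum}, replacing the optimal value functions $V^{\star,m}_h$ of the original MDP with those of the cost-corrected MDP, $\cV^{\star,m}_h$. First I would observe that $\norm{\cV^{\star,m}_h}_{\infty}=\order(\B)$: since $\rho^c_m,\rho^P_m\leq\frac{1}{2^8 H}$ and $B=16\B$, we have $\eta_m\leq\frac{1+B}{2^8 H}\leq\frac{\B}{8H}$, so $\cV^{\star,m}_h(s)\leq\cV^{\optpi,m}_h(s)\leq\B+8H\eta_m\leq 2\B$. This keeps $\{\cV^{\star,m}_h\}$ within the same $\order(\B)$ range that the analogous argument for $\{V^{\star,m}_h\}$ exploited.

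Next I would invoke \pref{lem:sum var} on the sequence $\{\cV^{\star,m}_h\}$ to obtain
\[
\summp\sumhm\fV(P^m_h,\cV^{\star,m}_{h+1})=\tilO{\summp\cV^{\star,m}_{H_m+1}(s^m_{H_m+1})^2+\B\summp\sumhm\rbr{\cV^{\star,m}_h(s^m_h)-P^m_h\cV^{\star,m}_{h+1}}_+ +\B^2}.
\]
The terminal value $\cV^{\star,m}_{H+1}=c_f$ coincides with the one used in the original MDP, so $\cV^{\star,m}_{H_m+1}(s^m_{H_m+1})^2\leq 2\B\cdot c_f(s^m_{H_m+1})$, and summing yields $\tilO{\B C_{M'}}$ because the terminal costs are part of $C_{M'}$. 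For the second sum, the Bellman equation for the corrected MDP gives
\[
\rbr{\cV^{\star,m}_h(s^m_h)-P^m_h\cV^{\star,m}_{h+1}}_+\leq \cQ^{\star,m}_h(s^m_h,a^m_h)-P^m_h\cV^{\star,m}_{h+1}=c^m(s^m_h,a^m_h)+8\eta_m,
\]
where the only new ingredient (compared with the $V^{\star,m}_h$ case) is the extra $8\eta_m$ coming from the cost correction. Applying \pref{lem:e2r}, the $c^m(s^m_h,a^m_h)$ part sums to $\tilO{C_{M'}}$, and combined with the outer $\B$ factor produces the $\tilO{\B C_{M'}}$ term. The $\eta_m$ part, summed over a single interval, is controlled by $\sumhm\eta_m\leq H_m\eta_m$, and after combining with the outer $\B$ gives the additional $\B M'$ term claimed in the statement.

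Putting the three bounds together yields $\summp\sumhm\fV(P^m_h,\cV^{\star,m}_{h+1})=\tilO{\B C_{M'}+\B M'+\B^2}$, as desired. The main technical obstacle is bounding the correction-induced contribution $\B\summp\sumhm\eta_m$ sharply: a naive estimate $\eta_m\leq\B/(8H)$ with $H_m\leq H$ would produce an $\order(\B^2 M')$ bound, a factor of $\B$ off from the target. Obtaining the tighter $\B M'$ contribution requires exploiting that $8\eta_m$ acts as an effective cost whose running sum telescopes through the corrected Bellman equation — equivalently, using the bound on $\cV^{\optpi,m}_1(s^m_1)-\B$ together with the fact that $\sumhm 8\eta_m$ is absorbed into the optimal value function itself — rather than bounding term-by-term.
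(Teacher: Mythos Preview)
Your overall approach is exactly the paper's: apply \pref{lem:sum var} with $\norm{\cV^{\star,m}_h}_{\infty}=\order(\B)$, bound the terminal term by $\tilO{\B C_{M'}}$, and bound the increment $(\cV^{\star,m}_h(s^m_h)-P^m_h\cV^{\star,m}_{h+1})_+$ pointwise via the Bellman equation as $c^m(s^m_h,a^m_h)+8\eta_m$, then convert $\summp\sumhm c^m(s^m_h,a^m_h)$ to $\tilO{C_{M'}}$ using \pref{lem:e2r}. Where you diverge from the paper is in the treatment of the $8\eta_m$ contribution.

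The paper does \emph{not} use any telescoping argument for the correction term. It simply writes $8\eta_m\leq 1/H$, so that $\sumhm 8\eta_m\leq H_m/H\leq 1$ and the whole sum contributes $\B M'$ after the outer $\B$ factor. Your concern that this step is suspicious is well founded: since $\eta_m=\rho^c_m+B\rho^P_m$ with $B=16\B$ and $\rho^c_m,\rho^P_m\leq 1/(2^8H)$, one actually only has $8\eta_m\leq (1+B)/(32H)=\order(\B/H)$, not $1/H$; the paper's inequality appears to drop a $\B$ factor. So your ``naive'' estimate giving $\B^2 M'$ is in fact the honest one.

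Your proposed remedy---``$8\eta_m$ telescopes through the corrected Bellman equation'' or ``is absorbed into $\cV^{\optpi,m}_1(s^m_1)$''---does not work as stated. The telescoping over $h$ is precisely what \pref{lem:sum var} already performs; what remains is the \emph{per-step} positive part $(\cV^{\star,m}_h-P^m_h\cV^{\star,m}_{h+1})_+$, which you have already bounded pointwise by $c^m(s^m_h,a^m_h)+8\eta_m$. There is no further telescoping to exploit along the \emph{learner's} trajectory, and the fact that $8\eta_m$ appears in the optimal policy's value function does not control its sum along the learner's path (which can be $H$ steps long).

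Fortunately this discrepancy is harmless downstream: in every place the lemma is used (\pref{lem:opt-check var sum} and the proof of \pref{thm:mvp-test}), one eventually establishes $C_{M'}=\tilO{\B M'}$, so the term $\B C_{M'}$ already dominates $\B^2 M'$. You would be on solid ground stating the bound as $\tilO{\B C_{M'}+\B^2 M'+\B^2}$ and noting that the extra $\B$ is absorbed later.
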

\begin{proof}
	Applying \pref{lem:sum var} with $\norm{\cV^{\star,m}_h}_{\infty}\leq B$, with probability at least $1-\delta$,
	\begin{align*}
		&\summp\sumhm\fV(P^m_h, \cV^{\star,m}_{h+1})\\ 
		&= \tilO{\summp \cV^{\star,m}_{H_m+1}(s^m_{H_m+1})^2 + \summp\sumhm \B(\cV^{\star, m}_h(s^m_h) - P^m_h\cV^{\star, m}_{h+1})_+ + \B^2}\\
		&= \tilO{\B C_{M'} + \B M' + \B^2 },
	\end{align*}
	where in the last step we apply
	\begin{align*}
		(\cV^{\star,m}_h(s^m_h) - P^m_h\cV^{\star,m}_{h+1})_+ &\leq (\cQ^{\star,m}_h(s^m_h, a^m_h) - P^m_h\cV^{\star,m}_{h+1})_+ \leq c^m(s^m_h, a^m_h) + 8\eta_m\\ 
		&\leq c^m(s^m_h, a^m_h) + 1/H,
	\end{align*}
	and also \pref{lem:e2r}.
\end{proof}

\begin{lemma}
	\label{lem:opt-check var sum}
	With probability at least $1-10\delta$, for any $M'\leq K$, $\summp\sumhm\fV(P^m_h, \cV^{\star,m}_{h+1} - \cV^m_{h+1}) = \tilo{\B\sqrt{SA\Lc C_{M'}} + \B\sqrt{\B SA\Lp (C_{M'}+M')} + \B^2 S^2A\Lp + \B^2SA\Lc + \summp \B(\Delta_{c,m}+\B\Delta_{P,m})H}$.
\end{lemma}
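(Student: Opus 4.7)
The plan is to mimic closely the proof of \pref{lem:opt-til var sum}, replacing the role of the adaptive bias $x_m$ (absent here) with the correction term $8\eta_m$ (present and pointing in the favorable direction), and replacing $V^{\star,m}$ by $\cV^{\star,m}$ and $V^m$ by $\cV^m$.

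First, I would fix the constant offset needed to make the difference non-negative. By \pref{lem:bound cQ star}, I can set $z^m_h = \min\{B/2,(\Delta_{c,m}+B\Delta_{P,m})(H-h+1)\}\Ind\{h\leq H\}$, which gives $\cV^{\star,m}_h(s)+z^m_h\geq \cV^m_h(s)$ for all $s\in\calS_+$, while $z^m_h$ is monotone non-increasing in $h$ and bounded by $B/2$. I would then invoke \pref{lem:sum var} on the shifted sequence $\{\cV^{\star,m}_h+z^m_h-\cV^m_h\}$ (variances are invariant under constant offsets), so
\[
\summp\sumhm\fV(P^m_h,\cV^{\star,m}_{h+1}-\cV^m_{h+1}) = \tilO{\Theta_{\text{term}} + \B\cdot\Theta_{\text{tele}} + \B^2},
\]
where $\Theta_{\text{term}}$ bounds $\summp(\cV^{\star,m}_{H_m+1}(s^m_{H_m+1})+z^m_{H_m+1}-\cV^m_{H_m+1}(s^m_{H_m+1}))^2$ and $\Theta_{\text{tele}}$ bounds the sum of positive telescoping residuals.

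For $\Theta_{\text{term}}$, intervals ending at $g$ contribute at most $z^m_{H_m+1}\cdot\B$, and by \pref{lem:new interval} the remaining intervals contribute $\tilo{\B^2SAL_{M'}}$; this gives $\Theta_{\text{term}}\leq \summp\B(\Delta_{c,m}+\B\Delta_{P,m})H+\tilo{\B^2SAL_{M'}}$. For $\Theta_{\text{tele}}$, I would write
\[
(\cV^{\star,m}_h(s^m_h)-\cV^m_h(s^m_h)-P^m_h\cV^{\star,m}_{h+1}+P^m_h\cV^m_{h+1}+z^m_h-z^m_{h+1})_+
\]
and use $\cV^{\star,m}_h\leq \cQ^{\star,m}_h(s^m_h,a^m_h)=c^m(s^m_h,a^m_h)+8\eta_m+P^m_h\cV^{\star,m}_{h+1}$ together with the update $\cV^m_h(s^m_h)=\cc^m_h+\P^m_h\cV^m_{h+1}-b^m_h$ (or $0$). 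After replacing $P^m_h$ by $\tilP^m_h$ (absorbing $\B\Delta_{P,m}+\B\n^m_h$) and $\cc^m_h$ by $\hatc^m_h+8\eta_m$, the $8\eta_m$ cancels and the residual is bounded by
\[
c^m(s^m_h,a^m_h)-\hatc^m_h + (\tilP^m_h-\P^m_h)\cV^{\star,m}_{h+1} + (\tilP^m_h-\P^m_h)(\cV^m_{h+1}-\cV^{\star,m}_{h+1}) + b^m_h + \tilO{\B\n^m_h+\B\Delta_{P,m}+\Delta_{c,m}}.
\]
Summing and applying \pref{lem:c diff}, \pref{lem:dPV}, \pref{lem:dPv}, Cauchy--Schwarz and \pref{lem:sum}, \pref{lem:sum bV} exactly as in the proof of \pref{lem:opt-til var sum} gives
\[
\Theta_{\text{tele}} = \tilO{\sqrt{SA\Lc C_{M'}}+SA\Lc+\sqrt{SA\Lp\summp\sumhm\fV(P^m_h,\cV^{\star,m}_{h+1})}+BS^{1.5}A\Lp+\sqrt{S^2A\Lp\summp\sumhm\fV(P^m_h,\cV^{\star,m}_{h+1}-\cV^m_{h+1})}}+\summp(\Delta_{c,m}+\B\Delta_{P,m})H.
\]

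At this point I plug \pref{lem:optvar check sum} (which replaces \pref{lem:optvar sum} and provides $\tilo{\B C_{M'}+\B M'+\B^2}$ on $\summp\sumhm\fV(P^m_h,\cV^{\star,m}_{h+1})$) and use $C_{M'}=\tilo{\B M'}$ to obtain a self-bounding quadratic inequality in the target quantity $X\defeq\summp\sumhm\fV(P^m_h,\cV^{\star,m}_{h+1}-\cV^m_{h+1})$ of the form $X\leq \alpha+\beta\sqrt{X}$. Solving it via \pref{lem:quad} yields the claimed bound. The main obstacle will be tracking the extra $B\Delta_{P,m}(H-h)$ and $\Delta_{c,m}(H-h)$ slack introduced by $z^m_h$ (both in the terminal quadratic term and through $\cV^m_{h+1}\leq\cV^{\star,m}_{h+1}+z^m_{h+1}\leq B$ needed to keep bonuses well-scaled via \pref{lem:mvp}); these accumulate to $\summp\B(\Delta_{c,m}+\B\Delta_{P,m})H$, exactly matching the last term in the stated bound.
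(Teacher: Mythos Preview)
Your proposal is correct and follows essentially the same route as the paper's proof: shift by $z^m_h$ via \pref{lem:bound cQ star}, apply \pref{lem:sum var} to the shifted difference, bound the terminal piece with \pref{lem:new interval}, expand the telescoping residual using the $\cQ^{\star,m}_h$ definition (so the $8\eta_m$ terms cancel against $\cc^m_h=\hatc^m_h+8\eta_m$), control the pieces with \pref{lem:c diff}, \pref{lem:dPV}, \pref{lem:dPv}, \pref{lem:sum bV}, substitute \pref{lem:optvar check sum}, and solve the resulting quadratic. Two small notes: the paper takes $z^m_h$ constant in $h$ (with $H$ in place of your $H-h+1$), which works identically; and you should drop the substitution ``$C_{M'}=\tilo{\B M'}$''---it is neither needed nor used here, since the stated bound retains $C_{M'}$ and the quadratic in $X$ already closes with $C_{M'}$ left symbolic.
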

\begin{proof}
	Let $z^m_h=\min\{B/2, (\Delta_{c,m}+B\Delta_{P,m})H\}\Ind\{h\leq H\}$.
	By \pref{lem:bound cQ star}, we have $\cV^{\star,m}_h(s) + z^m_h \geq \cV^m_h(s)$.
	Moreover, by \pref{lem:new interval}, we have
	\begin{align*}
		&\summp (\cV^{\star,m}_{H_m+1}(s^m_{H_m+1}) + z^m_{H_m+1} - \cV^m_{H_m+1}(s^m_{H_m+1}))^2\\
		&\leq \summp (z^m_{H_m+1})^2\Ind\{s^m_{H_m+1}=g\} + 64\B^2\summp\Ind\{H_m<H, s^m_{H_m+1}\neq g\}\\
		&= \tilO{\B \summp (\Delta_{c,m}+\B\Delta_{P,m})H + \B^2SA\L}.
	\end{align*}
	and
	\begin{align*}
		(*)&=\summp \B\sumhm(\cV^{\star, m}_h(s^m_h) - \cV^m_h(s^m_h) - P^m_h\cV^{\star, m}_{h+1} + P^m_h\cV^m_{h+1} + z^m_h - z^m_{h+1})_+\\
		&\leq \summp \B\sumhm\rbr{c^m(s^m_h, a^m_h) + 8\eta_m + \tilP^m_h\cV^m_{h+1} - \cV^m_h(s^m_h) + B(\Delta_{P,m}+\n^m_h)}_+ + \B\summp(z^m_1 - z^m_{H_m+1}) \tag{$\cV^{\star, m}_h(s^m_h)\leq \cQ^{\star, m}_h(s^m_h, a^m_h)$, $z^m_h\geq z^m_{h+1}$, and $P^m_{h+1}\cV^m_{h+1}\leq \tilP^m_{h+1}\cV^m_{h+1} + B(\Delta_{P,m}+\n^m_h)$}\\
		&\leq \summp \B\sumhm( c^m(s^m_h, a^m_h) - \hatc^m_h + (\tilP^m_h - \P^m_h)\cV^{\star,m}_{h+1} + (\tilP^m_h - \P^m_h)(\cV^m_{h+1} - \cV^{\star,m}_{h+1}) + b^m_h )_+\\ 
		&\qquad + \tilO{\summp\B(\Delta_{c,m} + \B\Delta_{P,m})H + \B^2\summp\sumhm\n^m_h } \tag{definition of $\cV^m_h(s^m_h)$}.
	\end{align*}
	Now by \pref{lem:c diff}, \pref{lem:dPV}, \pref{lem:dPv}, and $\n^m_h\leq\frac{1}{\N^m_h}$, we continue with
	\begin{align*}
		(*)&= \tilO{\B\rbr{\sqrt{SA\Lc C_{M'}} + SA\Lc + \summp\sumhm\rbr{\sqrt{\frac{\fV(P^m_h, \cV^{\star,m}_{h+1})}{\N^m_h}} + \sqrt{\frac{S\fV(P^m_h, \cV^m_{h+1} - \cV^{\star,m}_{h+1})}{\N^m_h}} }  }  }\\ 
		&\qquad + \tilO{ \summp\sumhm\frac{\B^2S}{\N^m_h} + \summp\B(\Delta_{c,m}+\B\Delta_{P,m})H + \B\summp\sumhm b^m_h}\\
		&= \tilO{\B\sqrt{SA\Lc C_{M'}} + \B SA\Lc + \B\sqrt{SA\Lp\summp\sumhm\fV(P^m_h, \cV^{\star,m}_{h+1})} + \B^2 S^2A\Lp }\\
		&\qquad + \tilO{\B\sqrt{S^2A\Lp\summp\sumhm\fV(P^m_h, \cV^{\star,m}_{h+1} - \cV^m_{h+1})} + \summp\B(\Delta_{c,m}+\B\Delta_{P,m})H},
	\end{align*}
	where in the last step we apply Cauchy-Schwarz inequality, \pref{lem:sum}, \pref{lem:sum bV}, $\var[X+Y]\leq2\var[X]+2\var[Y]$, and AM-GM inequality.
	Finally, by \pref{lem:optvar check sum}, we continue with
	\begin{align*}
		(*) &= \tilO{\B\sqrt{SA\Lc C_{M'}} + \B SA\Lc + \B\sqrt{\B SA\Lp (C_{M'}+M')} + \B^2 S^2A\Lp }\\
		&\qquad + \tilO{\B\sqrt{S^2A\Lp\summp\sumhm\fV(P^m_h, \cV^{\star,m}_{h+1} - \cV^m_{h+1})} + \summp \B(\Delta_{c,m}+\B\Delta_{P,m})H}.
	\end{align*}
	Applying \pref{lem:sum var} on value functions $\{ \cV^{\star,m}_h + z^m_h - \cV^m_h \}_{m,h}$ (constant offset does not change the variance) and plugging in the bounds above, we have
	\begin{align*}
		&\summp\sumhm\fV(P^m_h, \cV^{\star,m}_{h+1} - \cV^m_{h+1}) = \summp\sumhm\fV(P^m_h, \cV^{\star,m}_{h+1} + z^m_h - \cV^m_{h+1})\\
		&=\tilO{\B\sqrt{SA\Lc C_{M'}} + \B^2SA\Lc + \B\sqrt{\B SA\Lp (C_{M'}+M')} + \B^2 S^2A\Lp }\\
		&\qquad + \tilO{\B\sqrt{S^2A\Lp\summp\sumhm\fV(P^m_h, \cV^{\star,m}_{h+1} - \cV^m_{h+1})} + \summp \B(\Delta_{c,m}+\B\Delta_{P,m})H}.
	\end{align*}
	Then solving a quadratic inequality w.r.t $\summp\sumhm\fV(P^m_h, \cV^{\star,m}_{h+1} - \cV^m_{h+1})$ (\pref{lem:quad}) completes the proof.
\end{proof}

\subsection{\pfref{thm:two phase}}
We first prove a general regret guarantee of \pref{alg:two phase}, from which \pref{thm:two phase} is a direct corollary.

\begin{theorem}
	\label{thm:two phase general}
	Suppose $\frA_1$ ensures $\rR_K\leq R^1$ when $s^m_1=\sinit$ for $m\leq K$, and $\frA_2$ ensures $R_{K'}\leq R^2(K')$ for any $K'\leq K$ such that $R^2(k)$ is sub-linear w.r.t $k$.
	Then \pref{alg:two phase} ensures $R_K = \tilo{R^1}$ (ignoring lower order terms).
\end{theorem}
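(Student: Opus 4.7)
The plan is to show that Phase 2 contributes only $\bigo{1}$ in $K$ to the total regret, so $R_K$ is dominated by the Phase 1 bound $R^1$ up to logarithmic factors. Let $K_g$ and $K_{ng}$ partition the episode indices $[K]$ into those where Phase 1 does and does not reach the goal within its single interval, and let $C^{(1)}_k$ and $C^{(2)}_k$ denote the actual costs incurred in Phase 1 and Phase 2 of episode $k$ (with $C^{(2)}_k=0$ for $k\in K_g$).

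First, I would apply the $\frA_1$ regret guarantee to the $K$ Phase 1 intervals, all of which start from $\sinit$ by construction of \pref{alg:fha}. Since the finite-horizon cost of the Phase 1 interval in episode $k$ equals $C^{(1)}_k + 2\B\Ind\{k\in K_{ng}\}$ (the last term being the terminal cost), and $V^{\optpi,m_k}_1(\sinit)\leq\optV_k(\sinit)+\B/(2K)$ by \pref{lem:hitting}, the hypothesis $\rR_K\leq R^1$ rearranges to
\begin{equation*}
	\sum_{k=1}^K (C^{(1)}_k - \optV_k(\sinit)) + 2\B|K_{ng}| \leq R^1 + \B/2.
\end{equation*}

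Next, I would invoke the $\frA_2$ regret bound on the $|K_{ng}|$ Phase 2 continuations. Each continuation starts from some non-$\sinit$ state $s_k$, but $\optV_k(s_k)\leq\B$ and $\frA_2$'s analysis in fact delivers $R^2(K')$ regret against the optimal value from whatever starting state each episode uses (the finite-horizon regret underlying \pref{alg:MVP-SSP-Restart} is agnostic to the identity of the starting state, so the bound transfers). This gives
\begin{equation*}
	\sum_{k\in K_{ng}} C^{(2)}_k \leq R^2(|K_{ng}|) + \B|K_{ng}|.
\end{equation*}
Adding the two displays and using $R_K = \sum_k (C^{(1)}_k - \optV_k(\sinit)) + \sum_{k\in K_{ng}} C^{(2)}_k$ yields
\begin{equation*}
	R_K \leq R^1 + \B/2 + R^2(|K_{ng}|) - \B|K_{ng}|.
\end{equation*}

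Finally, sub-linearity of $R^2$ implies that the univariate function $k\mapsto R^2(k)-\B k$ attains a finite maximum, and this maximum is $\bigo{1}$ in $K$ (for example, with $R^2(k)=\tilo{c k^{2/3}}$ the excess is $\tilo{c^3/\B^2}$, independent of $K$), establishing $R_K=\tilo{R^1}$. The main subtlety is step 2: the hypothesis on $\frA_2$ is phrased as an SSP regret from $\sinit$, and extending it to continuations with arbitrary starting states is not automatic as a blackbox statement — one needs to verify that the bound achieved by \pref{alg:MVP-SSP-Restart} truly holds against $\optV_k(s_k)$ for arbitrary $s_k$, which is routine from its finite-horizon analysis but worth pointing out explicitly.
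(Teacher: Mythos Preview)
Your proposal is correct and follows essentially the same argument as the paper: decompose $R_K$ into the Phase~1 contribution (bounded by $R^1+\bigo{\B}$ via the $\frA_1$ guarantee and \pref{lem:hitting}) and the Phase~2 contribution, then use the terminal cost $2\B$ to extract a $-\B K_f$ credit so that Phase~2 contributes only $R^2(K_f)-\B K_f$, a lower-order term by sub-linearity. Your explicit flag that the $\frA_2$ hypothesis must hold against $\optV_k(s_k)$ for arbitrary starting states is a fair caveat---the paper's proof tacitly relies on this as well, and it is indeed justified by the finite-horizon analysis underlying \pref{alg:MVP-SSP-Restart}.
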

\begin{proof}
	Let $\calI_k$ be the set of intervals in episode $k$, and $m^k_i$ be the $i$-th interval of episode $k$ (if exists).
	The regret is decomposed as:
	\begin{align*}
		R_K = \sumk\sbr{\sum_{h=1}^{H_{m^k_1}}c^{m^k_1}_h + c^{m^k_1}_{H_{m^k_1}+1} - \optV_k(s^k_1) } + \sumk\sbr{\sum_{m\in\calI_k\setminus \{m^k_1\}}\sum_{h=1}^{H_m}c^m_h - c^{m^k_1}_{H_{m^k_1}+1}}.
	\end{align*}
	Note that $V^{\optpi,m^k_1}_1(s^{m^k_1}_1) \leq \optV_k(s^k_1) + \B/K$ by \pref{lem:hitting}.
	Therefore,
	\begin{align*}
		\sumk\sbr{\sum_{h=1}^{H_{m^k_1}}c^{m^k_1}_h + c^{m^k_1}_{H_{m^k_1}+1} - \optV_k(s^k_1) } &\leq  \sumk\sbr{\sum_{h=1}^{H_{m^k_1}}c^{m^k_1}_h + c^{m^k_1}_{H_{m^k_1}+1} - V^{\optpi,m^k_1}_1(s^{m^k_1}_1) } + \B\\ 
		&\leq R^1 + \B.
	\end{align*}
	For the second term, note that $c^{m^k_1}_{H_{m^k_1}+1}=2\B$ if $s^{m^k_2}_1$ exists.
	Define $K_f=\sumk\Ind\{|\calI_k|>1\}$, we have (define $s_1^{m^k_2}=g$ if $m^k_2$ does not exist)
	\begin{align*}
		\sumk\sbr{\sum_{m\in\calI_k\setminus \{m^k_1\}}\sumhm c^m_h - c^{m^k_1}_{H_{m^k_1}+1}} &\leq \sumk\rbr{\sum_{m\in\calI_k\setminus\{m^k_1\}}\sumhm c^m_h - \optV_k(s^{m^k_2}_1) } - \B K_f\\
		&\leq R^2(K_f) - \B K_f,
	\end{align*}
	which is a lower order term since $R^2(K_f)$ is sub-linear w.r.t $K_f$.
	Putting everything together completes the proof.
\end{proof}

We are now ready to prove \pref{thm:two phase}.
\begin{proof}
	We simply apply \pref{thm:two phase general} with $R^1$ determined by \pref{thm:mvp-test} and $R^2$ determined by  \pref{thm:any episode}.
\end{proof}

\section{Omitted Details in \pref{sec:unknown}}
\label{app:unknown}

In this section, we present all proofs and details of learning without the knowledge of non-stationarity.
We first provide a base algorithm in \pref{app:base}.
The rest of this section then discusses the meta algorithm MASTER adopted from \citep{wei2021non}, and its regret guarantee combining with the base algorithm.

\subsection{Base Algorithm}
\label{app:base}

\setcounter{AlgoLine}{0}
\begin{algorithm}[t]
	\caption{MVP-Base}
	\label{alg:mvp-base}
	\SetKwFunction{update}{Update}
	\SetKwProg{proc}{Procedure}{}{}
	
	\textbf{Parameters:} failure probability $\delta$.
	
	\textbf{Initialize:} $\hatchi\leftarrow 0$, and for all $(s, a, s')$, $\C(s, a)\leftarrow 0$, $\M(s, a)\leftarrow 0$, $\N(s, a)\leftarrow 0$, $\N(s, a, s')\leftarrow 0$.
	
	\textbf{Initialize:} \update{$1$}.
	
	\For{$m=1,\ldots,M$}{
		\For{$h=1,\ldots,H$}{
			Play action $a^m_h=\argmin_a\cQ_h(s^m_h, a)$, receive cost $c^m_h$ and next state $s^m_{h+1}$.
			
			$\C(s^m_h, a^m_h)\leftarrow c^m_h$, $\M(s^m_h, a^m_h)\overset{+}{\leftarrow}1$, $\N(s^m_h, a^m_h)\overset{+}{\leftarrow}1$, $\N(s^m_h, a^m_h, s^m_{h+1})\overset{+}{\leftarrow}1$.
			
			\If{$s^m_{h+1}=g$ or $\M(s^m_h, a^m_h)=2^l$ or $\N(s^m_h, a^m_h)=2^l$ for some integer $l\geq 0$}{
				\textbf{break} (which starts a new interval).
			}
		}
		
		$\hatchi\overset{+}{\leftarrow} C^m - \cV_1(s^m_1)$.
		
		\nl\lIf{$\hatchi > \chi_m$ (defined in \pref{lem:chi mvp})}{terminate. \textbf{(Test 1)}}\label{line:test base}
		
		\update{$m+1$}.
	
		\nl\lIf{$\norm{\cV_h}>B/2$ for some $h$ \textbf{(Test 2)}}{terminate. }\label{line:test2 base}
	}
	
	\proc{\update{$m$}}{
		$\cV_{H+1}(s)\leftarrow2\B\Ind\{s\neq g\}$, $\cV_h(g)\leftarrow0$ for all $h\leq H$, and $\iota\leftarrow 2^{11}\cdot\ln\big(\frac{2SAHKm}{\delta}\big)$.
		
		\nl $\eta\leftarrow \min\{\frac{\B S\sqrt{A}}{\T\sqrt{m}}, \frac{1}{2^8H}\}$.\label{line:correct base}
		
		\For{all $(s, a)$}{
			$\N^+(s, a)\leftarrow\max\{1, \N(s, a)\}$, $\M^+(s, a)\leftarrow\max\{1, \M(s, a)\}$, $\barc(s, a)\leftarrow \frac{\C(s, a)}{\M^+(s, a)}$,  
			
			$\P_{s, a}(\cdot)\leftarrow\frac{\N(s, a, \cdot)}{\N^+(s, a)}$, 
			$\hatc(s, a)\leftarrow \max\Big\{0, \barc(s, a) - \sqrt{\frac{\barc(s, a)\iota}{\M^+(s, a)}} - \frac{\iota}{\M^+(s, a)}\Big\}$, 
			
			\nl $\cc(s, a)\leftarrow\hatc(s, a) + 8\eta$.\label{line:compute}
		}
		
		\For{$h=H,\ldots,1$}{
		    $b_h(s, a)\leftarrow \max\cbr{7\sqrt{\frac{\fV(\P_{s, a}, \cV_{h+1})\iota}{\Np(s, a)}}, \frac{49B\sqrt{S}\iota}{\Np(s, a)}}$ for all $(s, a)$.
		
			$\cQ_h(s, a)=\max\{0, \cc(s, a) + \P_{s, a}\cV_{h+1} - b_h(s, a)\}$ all $(s, a)$.
			
			$\cV_h(s)=\argmin_a\cQ_h(s, a)$ for all $s$.
		}
	}
	
%
%
%
%

\end{algorithm}

We first present the base algorithm used in MASTER (\pref{alg:mvp-base}).
The main idea is again incorporating a correction term to penalize long horizon policy and has the effect of cancelling the non-stationarity along the learner's trajectory when it is not too large (\pref{line:correct base}).
When the non-stationarity is large, on the other hand, we detect it through two non-stationary tests (\pref{line:test base} and \pref{line:test2 base}), and reset the knowledge of the environment (more details to follow).

\textbf{Test 1} is a combination of the first two tests of \pref{alg:mvp-test}, which directly checks whether the estimated regret is too large.
This is also similar to the second test of the MASTER algorithm \citep{wei2021non}.
\textbf{Test 2} is the same as the third test of \pref{alg:mvp-test}, which guards the magnitude of the estimated value function.
When tests fail, the algorithm directly terminate instead of resetting some accumulators.
Note that the status of $\M$ and $\N$ are completely identical in this algorithm, but we still maintain them separately so that the auxiliary lemmas in \pref{app:pre} are still applicable.
The rest of the algorithm largely follows the design of \pref{alg:MVP-SSP}.

\paragraph{Notations} Note that here $\M$ and $\N$ are only reset at the initialization step.
Thus, $i^c_m=i^P_m=1$, $L_{c,m}=L_{P,m}=1$, $\Delta_{c,m}=\Delta_{c,[1,m]}$ and $\Delta_{P,m}=\Delta_{P,[1,m]}$.
Let $\Delta'_{m}=(\Delta_{c,m}+ B\Delta_{P,m})$ and denote by $\eta_m$, $\cQ^m_h$, $\cV^m_h$ the value of $\eta$, $\cQ_h$, and $\cV_h$ at the beginning of interval $m$.
Denote by $\cc^m$ the value of $\cc$ at the beginning of interval $m$ and define $\cc^m_h=\cc(s^m_h, a^m_h)$.
Also define $\cQ^{\optpi,m}_h$ and $\cV^{\optpi,m}_h$ as the action-value function and value function w.r.t cost $c^m(s, a) + 8\eta_m$, transition $P^m$, and policy $\optpi_{k(m)}$.

\begin{lemma}
	\label{lem:tiloptQ base}
	With probability at least $1-2\delta$, if \pref{alg:mvp-base} does not terminate up to interval $m\leq K$, then $\cQ^m_h(s, a)\leq \cQ^{\optpi,m}_h(s, a) + \Delta'_{m}T^{\optpi,m}_h(s, a)$.
\end{lemma}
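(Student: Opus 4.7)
The proof should mirror that of \pref{lem:Q diff} almost verbatim, since the structure of \pref{alg:mvp-base} is essentially a ``one-shot'' version of \pref{alg:mvp-test} (it terminates rather than resets, and uses only two tests instead of three). My plan is induction on $h$ from $H+1$ downward. The base case $h=H+1$ is immediate because $\cQ^m_{H+1}(s,a)=\cV^m_{H+1}(s)=c_f(s)=\cQ^{\optpi,m}_{H+1}(s,a)$.

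For the inductive step at $h\leq H$, define the slack $x^m_h(s)=\min\{B/2,\Delta'_m T^{\optpi,m}_h(s)\}$. The fact that \pref{alg:mvp-base} has not terminated means \textbf{Test 2} has not fired, so $\norm{\cV^m_{h+1}}_\infty\leq B/2$. Combining this cap with the inductive hypothesis $\cV^m_{h+1}(s)\leq \cV^{\optpi,m}_{h+1}(s)+\Delta'_m T^{\optpi,m}_{h+1}(s)$ yields $\cV^m_{h+1}(s)\leq \cV^{\optpi,m}_{h+1}(s)+x^m_{h+1}(s)$, and moreover $\cV^m_{h+1}(s)\leq B$ (using also $\cV^{\optpi,m}_{h+1}(s)\leq B/4+8H\eta_m\leq B/3$ since $H\eta_m\leq 1/256$). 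This latter bound is what lets us invoke the MVP monotonicity (\pref{lem:mvp}) to replace $\cV^m_{h+1}$ by $\cV^{\optpi,m}_{h+1}+x^m_{h+1}$ inside both $\P^m_{s,a}(\cdot)$ and $b^m(s,a,\cdot)$ simultaneously. After this substitution, rewrite $\P^m_{s,a}=\tilP^m_{s,a}+(\P^m_{s,a}-\tilP^m_{s,a})$ and apply \pref{lem:dPV} to cancel the deviation against the bonus, giving $\cQ^m_h(s,a)\leq \cc^m(s,a)+\tilP^m_{s,a}(\cV^{\optpi,m}_{h+1}+x^m_{h+1})$. Then use \pref{lem:dc} to pass from $\hatc^m$ to $c^m$ at the cost of $\Delta_{c,m}$, and absorb the residual $B\Delta_{P,m}$ from $\tilP^m_{s,a}\to P^m_{s,a}$; combined with $T^{\optpi,m}_h(s,a)=1+P^m_{s,a}T^{\optpi,m}_{h+1}$, this delivers $\cQ^m_h(s,a)\leq \cQ^{\optpi,m}_h(s,a)+\Delta'_m T^{\optpi,m}_h(s,a)$.

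The one subtle point — and the only real obstacle — is verifying that the cardinality hypothesis of \pref{lem:dPV} is met when applied to the family $\{\cV^{\optpi,m}_h+x^m_h\}_{m,h}$. Since this family depends on $m$ only through $P^m$, $c^m$, $\eta_m$, $\Delta_{c,m}$, $\Delta_{P,m}$, and the policy-induced hitting times $T^{\optpi,m}_h$, each of which takes at most $K+1$ values across intervals, the total cardinality is bounded by $(HK+1)^6$, exactly as in \pref{lem:Q diff}. This justifies the uniform high-probability bound and lets the argument go through with the same failure probability $2\delta$ (one $\delta$ each from \pref{lem:dPV} and \pref{lem:dc}).
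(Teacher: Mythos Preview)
Your proposal is correct and follows essentially the same approach as the paper's proof: induction on $h$, use \textbf{Test 2} to cap $\cV^m_{h+1}$, apply \pref{lem:mvp} monotonicity, then \pref{lem:dPV} and \pref{lem:dc}, with the same cardinality bookkeeping for the family $\{\cV^{\optpi,m}_h+x^m_h\}_{m,h}$. The paper's proof is indeed a near-verbatim copy of its own \pref{lem:Q diff}, exactly as you anticipated.
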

\begin{proof}
	We prove this by induction on $h$.
	The base case of $h=H+1$ is clearly true.
	For $h\leq H$, by \textbf{Test 2} and the induction step, we have $\cV^m_{h+1}(s)\leq\min\{B/2, \cV^{\optpi, m}_{h+1}(s) + \Delta'_mT^{\optpi,m}_{h+1}(s)\}\leq \cV^{\optpi, m}_{h+1}(s) + x^m_{h+1}(s)\leq B$ where $x^m_h(s)=\min\{B/2, \Delta'_mT^{\optpi,m}_h(s)\}$.
	Thus,
	\begin{align*}
		&\cc^m(s, a) + \P^m_{s, a}\cV^m_{h+1} - b^m(s, a, \cV^m_{h+1})\\
		&\leq \cc^m(s, a) + \P^m_{s, a}(\cV^{\optpi, m}_{h+1} + x^m_{h+1}) - b^m(s, a, \cV^{\optpi, m}_{h+1} + x^m_{h+1}) \tag{\pref{lem:mvp}}\\
		&\overset{\text{(i)}}{\leq} \cc^m(s, a) + \tilP^m_{s, a}(\cV^{\optpi, m}_{h+1} + x^m_{h+1}) \tag{\pref{lem:dPV}}\\
		&\leq c^m(s, a) + 8\eta_m + \Delta_{c,m} + P^m_{s, a}(\cV^{\optpi, m}_{h+1} + x^m_{h+1}) + \Delta_{P,m}B \tag{\pref{lem:dc}}\\
		&\leq \cQ^{\optpi,m}_h(s, a) + \Delta'_mT^{\optpi,m}_h(s, a).
	\end{align*}
	Note that in (i) we use the fact that $|\{ \cV^{\optpi, m}_h + x^m_h \}_{m,h}| \leq (HK+1)^6$ since $|\{V^{\optpi,m}_h\}_{m, h}|\leq HK+1$, $|\{\eta_m\}_m|\leq K+1$, $|\{\Delta'_m\}_m|\leq K+1$, and $|\{T^{\optpi,m}_h\}_{m,h}|\leq HK+1$.
\end{proof}

\begin{lemma}
	\label{lem:optQ base}
	With probability at least $1-2\delta$, for all $m\leq K$, if $\Delta'_{m}\leq\eta_m$, then $\cQ^m_h(s, a)\leq \cQ^{\optpi,m}_h(s, a) + \eta_mT^{\optpi,m}_h(s, a)\leq B/2$.
	Moreover, if \textbf{Test 2} fails in interval $m$, then $\Delta'_{m+1}>\eta_{m+1}$.
\end{lemma}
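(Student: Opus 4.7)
The proposal is to mirror closely the proof of \pref{lem:bound V}, adapting it to the single-scale correction in \pref{alg:mvp-base}. The first statement should follow by downward induction on $h$, showing that $\cQ^m_h(s,a) \le \cQ^{\optpi,m}_h(s,a) + \eta_m T^{\optpi,m}_h(s,a)$, and then the absolute $B/2$ upper bound falls out from bookkeeping: since $\cV^{\optpi,m}_h(s) \le \B + 8\eta_m T^{\optpi,m}_h(s) \le B/16 + 8H\eta_m$ and $\eta_m \le 1/(2^8 H)$ by the definition in \pref{line:correct base}, both $\cQ^{\optpi,m}_h$ and the slack $\eta_m T^{\optpi,m}_h$ are easily bounded by $B/3$ and $B/256$ respectively, so their sum stays below $B/2$.

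For the induction itself, the base case $h=H+1$ is immediate since $\cQ^m_{H+1}=\cQ^{\optpi,m}_{H+1}=c_f$. For the inductive step at layer $h$, the induction hypothesis combined with \textbf{Test 2} (which has not yet fired, giving $\norm{\cV^m_{h+1}}_\infty \le B/2$) yields $\cV^m_{h+1}(s) \le \cV^{\optpi,m}_{h+1}(s) + x^m_{h+1}(s)$ with $x^m_h(s) = \min\{B/2,\eta_m T^{\optpi,m}_h(s)\}$, and this shifted value function still lies in $[0,B]$. Then I would apply the MVP monotonicity of \pref{lem:mvp} to pass from $\cV^m_{h+1}$ to the shifted optimal-policy value in the bonus-subtracted expression, then invoke \pref{lem:dPV} to kill the $(\tilP^m_{s,a}-\P^m_{s,a})(\cV^{\optpi,m}_{h+1}+x^m_{h+1})-b^m(s,a,\cV^{\optpi,m}_{h+1}+x^m_{h+1})$ term (noting the set $\{\cV^{\optpi,m}_h+x^m_h\}_{m,h}$ has size at most $(HK+1)^6$ because it is determined by polynomially many choices of $(c^m,P^m)$, $\eta_m$, $\Delta'_m$, and $T^{\optpi,m}_h$), and finally apply \pref{lem:dc} together with the hypothesis $\Delta'_m \le \eta_m$ to bound $\cc^m(s,a) + \tilP^m_{s,a}(\cV^{\optpi,m}_{h+1}+x^m_{h+1})$ by $c^m(s,a) + 8\eta_m + P^m_{s,a}\cV^{\optpi,m}_{h+1} + \eta_m T^{\optpi,m}_h(s,a) = \cQ^{\optpi,m}_h(s,a) + \eta_m T^{\optpi,m}_h(s,a)$ (using $P^m_{s,a}T^{\optpi,m}_{h+1} = T^{\optpi,m}_h(s,a)-1 \le T^{\optpi,m}_h(s,a)$ and absorbing the $\Delta_{c,m}$ and $\Delta_{P,m}B$ contributions into $\eta_m$ via $\Delta'_m\le\eta_m$). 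The update rule then transfers the bound to $\cQ^m_h$ after the $(\cdot)_+$ clipping.

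The second statement is the contrapositive of the first applied one interval later. When \textbf{Test 2} fails in interval $m$, \update{$m+1$} has just been invoked, so $\cV^{m+1}_h$ satisfies $\norm{\cV^{m+1}_h}_\infty > B/2$ for some $h$. If we had $\Delta'_{m+1}\le\eta_{m+1}$, the first statement would give $\cV^{m+1}_h(s) \le \min_a\cQ^{m+1}_h(s,a) \le B/2$ for every $h,s$, contradicting the failure of \textbf{Test 2}; hence $\Delta'_{m+1}>\eta_{m+1}$.

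The main obstacle, as in \pref{lem:bound V}, is controlling the cardinality of the value-function class used to invoke the uniform concentration bound of \pref{lem:dPV}: the shifted functions $\cV^{\optpi,m}_h+x^m_h$ depend adaptively on $\eta_m$ and the interval-dependent non-stationarity $\Delta'_m$, so one must argue that only polynomially many distinct tuples $(c^m, P^m, \eta_m, \Delta'_m, T^{\optpi,m}_h)$ arise over the $K$ intervals. This follows because the environment changes at most $K-1$ times, $\eta_m$ takes at most $K$ values along the schedule in \pref{line:correct base}, and the optimal hitting-time functions $T^{\optpi,m}_h$ are determined by the episode-specific optimal policies, giving an overall bound of $(HK+1)^6$ which fits within the hypothesis of \pref{lem:dPV}. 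No other step requires new ideas beyond those already used in \pref{lem:bound V}.
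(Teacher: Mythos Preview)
Your proposal is essentially correct and mirrors the paper's proof of \pref{lem:optQ base} (which itself parallels \pref{lem:bound V}). One minor point: your invocation of \textbf{Test 2} in the inductive step is unnecessary and slightly circular—since the bookkeeping you already describe shows $\cQ^{\optpi,m}_{h+1}+\eta_m T^{\optpi,m}_{h+1}\le B/2$, the induction hypothesis alone yields $\norm{\cV^m_{h+1}}_\infty\le B/2$, which is exactly what the paper uses (and which is needed for the contrapositive to go through cleanly when \textbf{Test 2} has just fired); similarly, the $\min\{B/2,\cdot\}$ in your shift $x^m_h$ is redundant because $\eta_m T^{\optpi,m}_h\le H\eta_m\le 1/2^8<B/2$, and $\Delta'_m$ does not enter the cardinality count since $x^m_h$ depends only on $\eta_m$ and $T^{\optpi,m}_h$.
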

\begin{proof}
	First note that $\cQ^{\optpi,m}_h(s, a)\leq\frac{B}{4}+8\eta_mT^{\optpi,m}_h(s, a)\leq\frac{B}{4}+8H\eta_m\leq \frac{B}{3}$.
	We prove the first statement by induction on $h$.
	The base case of $h=H+1$ is clearly true.
	For $h\leq H$, note that:
	\begin{align*}
		&\cc^m(s, a) + \P^m_{s, a}\cV^m_{h+1} - b^m(s, a, \cV^m_{h+1})\\
		&\leq \cc^m(s, a) + \P^m_{s, a}(\cV^{\optpi, m}_{h+1} + \eta_mT^{\optpi,m}_{h+1}) - b^m(s, a, \cV^{\optpi, m}_{h+1} + \eta_mT^{\optpi,m}_{h+1}) \tag{induction step and \pref{lem:mvp}}\\
		&\overset{\text{(i)}}{\leq} \cc^m(s, a) + \tilP^m_{s, a}(\cV^{\optpi, m}_{h+1} + \eta_mT^{\optpi,m}_{h+1}) \tag{\pref{lem:dPV}}\\
		&\leq c^m(s, a) + 8\eta_m + \Delta_{c,m} + P^m_{s, a}(\cV^{\optpi, m}_{h+1} + \eta_mT^{\optpi,m}_{h+1}) + \Delta_{P,m}(B/3 + H\eta_m) \tag{\pref{lem:dc}}\\
		&\leq \cQ^{\optpi,m}_h(s, a) + \eta_mT^{\optpi,m}_h(s, a). \tag{$H\eta_m\leq B/12$ and $\Delta'_m\leq\eta_m$}
	\end{align*}
	Note that in (i) we use the fact that $|\{ \cV^{\optpi, m}_h + \eta_mT^{\optpi,m}_h \}_{m,h}| \leq (HK+1)^6$ since $|\{V^{\optpi,m}_h\}_{m, h}|\leq HK+1$, $|\{\eta_m\}_m|\leq K+1$, and $|\{T^{\optpi,m}_h\}_{m,h}|\leq HK+1$.
	The second statement is simply by the contraposition of the first statement.
\end{proof}

\begin{lemma}
	\label{lem:chi mvp}
	With probability at least $1-12\delta$, for any $M'\leq K$, if $\Delta'_{M'}\leq \eta_{M'}$, then
	\begin{align*}
		\summp\rbr{\sumhm c^m_h + c^m_{H_m+1} -  \cV^m_1(s^m_1)} = \tilO{\B S\sqrt{AM'} + \B S^2A} \triangleq \chi_{M'}.
	\end{align*}
	Moreover, if \textbf{Test 1} fails in interval $m$, then $\Delta'_m>\eta_m$.
\end{lemma}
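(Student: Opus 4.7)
The plan is to mirror the argument of \pref{thm:mvp-test}, but with the bookkeeping greatly simplified because \pref{alg:mvp-base} never resets its counters: $i^c_m=i^P_m=1$, $L_{c,M'}=L_{P,M'}=1$, and $\eta_m$ is non-increasing while $\Delta'_m$ is non-decreasing. Assuming $\Delta'_{M'}\le \eta_{M'}$, monotonicity yields $\Delta'_m\le \eta_m$ for every $m\le M'$, so the optimism guarantee of \pref{lem:optQ base} holds on the whole range: $\cQ^m_h(s,a)\le \cQ^{\optpi,m}_h(s,a)+\eta_m T^{\optpi,m}_h(s,a)\le B/2$. This also certifies that \textbf{Test 2} is never triggered by optimism failure in the regime we analyze, which is exactly what we need to carry out the standard telescoping analysis.

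The first step will be the decomposition
\begin{align*}
\summp\rbr{\sumhm c^m_h+c^m_{H_m+1}-\cV^m_1(s^m_1)}
\le \summp\sumhm\Big( c^m_h-\hatc^m_h + (\cV^m_{h+1}(s^m_{h+1})-\P^m_h\cV^m_{h+1}) + b^m_h - 8\eta_m\Big) + \tilO{\B SA},
\end{align*}
using the definition of $\cV^m_h(s^m_h)$ (dropping the max in the update rule) together with \pref{lem:new interval} to absorb the terminal-cost and interval-restart lower order terms. I will then bound the three sums by the ingredients already assembled in the appendix: the cost-error sum by \pref{lem:c diff}, the martingale fluctuation by \pref{lem:freedman}, and the model-error sum by splitting $(\tilP^m_h-\P^m_h)\cV^m_{h+1}=(\tilP^m_h-\P^m_h)\cV^{\star,m}_{h+1}+(\tilP^m_h-\P^m_h)(\cV^m_{h+1}-\cV^{\star,m}_{h+1})$ and invoking \pref{lem:dPV}, \pref{lem:dPv}, \pref{lem:sum bV}, \pref{lem:optvar check sum}, and \pref{lem:opt-check var sum} with $L_{c,M'}=L_{P,M'}=1$.

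The critical accounting step will be the treatment of the non-stationarity tails. Every $\Delta_{c,m}$, $\Delta_{P,m}$ term that arises in those auxiliary bounds is controlled by $\Delta'_m\le \eta_m$, so each term of the form $\summp\sumhm\Delta'_m H$ is at most $H\summp\eta_m=\tilO{\B S\sqrt{AM'}/\T\cdot H}=\tilO{\B S\sqrt{AM'}}$ (using $H=\tilO{\T}$ and $\sum_{m=1}^{M'}m^{-1/2}=\bigo{\sqrt{M'}}$), and crucially the global $-8\sum_{m,h}\eta_m$ subtraction introduced by the correction term in \pref{line:compute} of \pref{alg:mvp-base} cancels the contribution of $\Delta'_m$ that leaks into the trajectory (mediated through Lemmas \ref{lem:c diff}, \ref{lem:dPV}, \ref{lem:dPv}). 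After collecting terms and solving the quadratic for $C_{M'}=\summp\sumhm c^m_h$ in terms of $\cV^{\optpi,m}_1\le B/2$ (via \pref{lem:quad}), all dominant terms collapse to $\tilO{\B S\sqrt{AM'}+\B S^2A}$; the $\B S^2A$ piece is the standard lower-order $\B S^{1.5}A\L$ contribution from \pref{lem:sum bV} (with $\L=1$) together with the $\B S^2 A$ from \pref{lem:opt-check var sum}.

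The hardest part is making sure the negative correction actually dominates every non-stationarity contribution uniformly over $m\le M'$, because the auxiliary lemmas introduce $\Delta'_m$ multiplied by various $B$-scale factors ($B\Delta_{P,m}$, $B\sqrt{\Delta_{P,m}/\N^m_h}$, etc.); showing that each such factor is absorbed by $8\eta_m$ after AM-GM (using $H\eta_m\le B/12$) is the main calculation to verify. The second statement is then immediate: if \textbf{Test 1} fails at interval $m$, then $\hatchi>\chi_m$, which by the first part (applied up to $m$) forces $\Delta'_m>\eta_m$ by contraposition.
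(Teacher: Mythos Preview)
Your overall skeleton (telescoping via $\cV^m_h$, bounding cost error by \pref{lem:c diff}, martingale by \pref{lem:freedman}, and absorbing the non-stationarity leaks with the $-8\eta_m$ correction) is correct and matches the paper. However, your chosen route for the transition/variance piece differs from the paper and, as written, has a real gap.

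You propose to split $(\tilP^m_h-\P^m_h)\cV^m_{h+1}$ through $\cV^{\star,m}_{h+1}$ and invoke \pref{lem:optvar check sum} and \pref{lem:opt-check var sum}. The latter produces a term of the form $\summp\B(\Delta_{c,m}+\B\Delta_{P,m})H$, and you dispose of it by claiming $H=\tilO{\T}$. That is false: by definition $H=4\Tmax\ln(8K)=\tilO{\Tmax}$, not $\tilO{\T}$, so the bound you write, $H\summp\eta_m=\tilO{\B S\sqrt{AM'}}$, does not hold in general (it is off by a factor $\Tmax/\T$). Nor can the $-8\summp\sumhm\eta_m$ correction rescue you here: the correction supplies $-8\summp H_m\eta_m$, whereas the residual from \pref{lem:opt-check var sum} is $\summp\Delta'_m H\le H\summp\eta_m$, and $H_m$ can be arbitrarily smaller than $H$ (e.g.\ when the goal is reached quickly). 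So the cancellation you rely on for the ``trajectory'' leaks from \pref{lem:c diff}, \pref{lem:dPV}, \pref{lem:dPv} does not extend to this term.

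The paper sidesteps the issue entirely by \emph{not} splitting through $\cV^{\star,m}$. It bounds $(\tilP^m_h-\P^m_h)\cV^m_{h+1}$ directly via \pref{lem:dPv} (paying the extra $S$ that you also accept) and then controls $\summp\sumhm\fV(P^m_h,\cV^m_{h+1})$ with the dedicated \pref{lem:sum var V}, whose non-stationarity term is $\B\summp\sumhm\Delta'_m=\B\summp H_m\Delta'_m$ --- i.e.\ per-step, not per-$H$. Under $\Delta'_m\le\eta_m$ this is at most $\B\summp\sumhm\eta_m$, and after an AM--GM inside the square root it enters the regret with an arbitrarily small constant, so the remaining $-5\summp\sumhm\eta_m$ absorbs it exactly. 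That is the missing idea: keep every non-stationarity contribution at the $\summp\sumhm$ level so that it matches the shape of the correction. If you replace your appeal to \pref{lem:opt-check var sum} by \pref{lem:sum var V}, the rest of your outline goes through verbatim.
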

\begin{proof}
	By $\Delta'_{M'}\leq \eta_{M'}$ and \pref{lem:optQ base}, the algorithm will not terminate by \textbf{Test 2} before interval $M'$ with probability at least $1-2\delta$.
	Then with probability at least $1-4\delta$,
	\begin{align*}
		&\summp\rbr{\sumhm c^m_h + c^m_{H_m+1} -  \cV^m_1(s^m_1)}\\
		&\leq \summp\sumhm \rbr{c^m_h + \cV^m_{h+1}(s^m_{h+1}) -  \cV^m_h(s^m_h)} + \tilO{\B SA} \tag{\pref{lem:new interval} and $L_{M'}=\bigo{1}$}\\
		&\leq \summp\sumhm\rbr{ c^m_h - \hatc^m_h + \cV^m_{h+1}(s^m_{h+1}) - P^m_h\cV^m_{h+1} + (P^m_h - \P^m_h)\cV^m_{h+1} + b^m_h - 8\eta_m} \tag{definition of $\cV^m_h(s^m_h)$} + \tilO{\B SA}\\
		&\leq \tilO{\sqrt{SAC_{M'}} + \sqrt{\summp\sumhm\fV(P^m_h, \cV^m_{h+1})} + \B SA}\\
		 &\qquad + \summp\sumhm\rbr{ (\tilP^m_h - \P^m_h)\cV^m_{h+1} + B\n^m_h + b^m_h - 5\eta_m},
	\end{align*}
	where in the last inequality we apply \pref{lem:c diff}, $i^c_{M'}=i^P_{M'}=1$, $\Delta'_{M'}\leq\eta_{M'}$, $P^m_h\cV^m_{h+1}\leq \tilP^m_h\cV^m_{h+1} + B(\n^m_h+\Delta_{P,m})$, \pref{lem:freedman} and \pref{lem:e2r} on both $\summp\sumhm (c^m_h-c^m(s^m_h, a^m_h))$, and \pref{lem:freedman} on $\summp\sumhm(\cV^m_{h+1}(s^m_{h+1}) - P^m_h\cV^m_{h+1})$.
	Now note that with probability at least $1-6\delta$,
	\begin{align*}
		&\summp\sumhm ((\tilP^m_h - \P^m_h)\cV^m_{h+1} + b^m_h + B\n^m_h) + \tilO{\sqrt{\summp\sumhm\fV(P^m_h, \cV^m_{h+1})}}\\
		&= \tilO{ \sqrt{S^2A\summp\sumhm\fV(P^m_h, \cV^m_{h+1})} + \B S^2A} + \summp\sumhm b^m_h + \summp\sumhm \frac{B\Delta_{P,m}}{64} \tag{$\n^m_h\leq\frac{1}{\N^m_h}$, \pref{lem:dPv}, Cauchy-Schwarz inequality, \pref{lem:sum}, and $L_{P,M'}=1$}\\
		&= \tilO{ \sqrt{S^2A\summp\sumhm\fV(P^m_h, \cV^m_{h+1})} + \B S^2A } + \summp\sumhm \frac{B\Delta_{P,m}}{32}. \tag{\pref{lem:sum bV}, $L_{P,M'}=1$, and AM-GM inequality}\\
		&= \tilO{ \sqrt{\B S^2A(C_{M'}+M')} + \B S^2A } + \summp\sumhm\frac{\Delta'_m}{16}. \tag{\pref{lem:sum var V} and AM-GM inequality}
	\end{align*}
	Plugging this back and by $\Delta'_{M'}\leq \eta_{M'}$, we have
	\begin{align*}
		C_{M'} - \summp\cV^m_1(s^m_1) = \tilO{ \sqrt{\B S^2A(C_{M'}+M')} + \B S^2A }.
	\end{align*}
	Solving a quadratic inequality w.r.t $C_{M'}$ (\pref{lem:quad}), we have $C_{M'}=\tilo{\B M' + \sqrt{\B S^2AM'} + \B S^2A}$.
	Plugging this back completes the proof of  the first statement.
	The second statement is simply by the contraposition of the first statement.
\end{proof}

\begin{theorem}
	\label{thm:mvp-unknown}
	Suppose \pref{alg:mvp-test} does not terminate up to interval $M'\leq K$ (including $M'$) and $s^m_1=\sinit$ for $m\leq M'$.
	Then with probability at least $1-2\delta$, $\rR_{M'} = \tilo{\B S\sqrt{AM'} + \B S^2A + \summp \Delta'_{m}\T}$.
\end{theorem}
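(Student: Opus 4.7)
The plan is to decompose the regret as
\begin{align*}
	\rR_{M'} = \underbrace{\summp\rbr{C^m - \cV^m_1(s^m_1)}}_{\text{estimated regret}} + \underbrace{\summp\rbr{\cV^m_1(s^m_1) - V^{\optpi,m}_1(s^m_1)}}_{\text{optimism gap}}
\end{align*}
and bound each piece separately, the first one via the non-firing of Test 1 and the second one via \pref{lem:tiloptQ base}.

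For the estimated regret, note that the first sum is precisely the running quantity $\hatchi_{M'}$ maintained by \pref{alg:mvp-base}. Since the algorithm does not terminate up to (and including) interval $M'$, Test 1 never fires, so by the definition of the test we have $\hatchi_{M'} \le \chi_{M'}$ mechanically. Plugging in the explicit expression of $\chi_{M'}$ from \pref{lem:chi mvp} gives $\hatchi_{M'} = \tilo{\B S\sqrt{AM'} + \B S^2A}$. An important subtlety I would emphasize is that this step does not require the non-stationarity precondition of \pref{lem:chi mvp}: the thresholds $\{\chi_m\}$ are fixed upper bounds that exist as valid numerical quantities, and the non-firing of Test 1 certifies $\hatchi_{M'} \le \chi_{M'}$ unconditionally.

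For the optimism gap, \pref{lem:tiloptQ base} (which applies because the algorithm has not terminated, so Test 2 is still satisfied) gives $\cV^m_1(s^m_1) \le \cQ^{\optpi,m}_1(s^m_1, \optpi_{k(m)}(s^m_1)) + \Delta'_m T^{\optpi,m}_1(s^m_1) = \cV^{\optpi,m}_1(s^m_1) + \Delta'_m T^{\optpi,m}_1(s^m_1)$. Since $\cV^{\optpi,m}$ uses the inflated cost $c^m + 8\eta_m$ while $V^{\optpi,m}$ uses $c^m$, we have $\cV^{\optpi,m}_1(s^m_1) = V^{\optpi,m}_1(s^m_1) + 8\eta_m T^{\optpi,m}_1(s^m_1)$. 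Combining, and using $s^m_1 = \sinit$ together with the fact that layer-truncation cannot increase hitting time so $T^{\optpi,m}_1(\sinit) \le T^{\optpi_{k(m)}}_{k(m)}(\sinit) \le \T$, we obtain
\begin{align*}
	\cV^m_1(s^m_1) - V^{\optpi,m}_1(s^m_1) \le (\Delta'_m + 8\eta_m)\T.
\end{align*}

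It then remains to sum $\eta_m$: by its definition in \pref{line:correct base}, $\eta_m \le \B S\sqrt{A}/(\T\sqrt{m})$, so $\summp 8\eta_m \T = \tilo{\B S\sqrt{AM'}}$ using $\sum_{m=1}^{M'} m^{-1/2} = \bigo{\sqrt{M'}}$. Combining with the bound on estimated regret yields the claimed $\rR_{M'} = \tilo{\B S\sqrt{AM'} + \B S^2A + \summp \Delta'_m \T}$, with failure probability $2\delta$ coming entirely from \pref{lem:tiloptQ base} (the Test 1 step is deterministic given non-termination). There is no real obstacle here; the work is front-loaded into \pref{lem:chi mvp} and \pref{lem:tiloptQ base}, and this theorem is essentially their clean combination.
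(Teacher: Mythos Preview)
Your proposal is correct and essentially identical to the paper's proof: the same two-term decomposition into the estimated regret (bounded by $\chi_{M'}$ via Test 1 not firing) and the optimism gap (bounded via \pref{lem:tiloptQ base}, the identity $\cV^{\optpi,m}_1 = V^{\optpi,m}_1 + 8\eta_m T^{\optpi,m}_1$, and $T^{\optpi,m}_1(\sinit)\le\T$), followed by summing $\eta_m$. Your observation that the $\chi_{M'}$ bound is purely mechanical once Test 1 does not fire, independent of the non-stationarity hypothesis in \pref{lem:chi mvp}, is exactly the right reading.
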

\begin{proof}
	We decompose the regret as follows:
	\begin{align*}
		\rR_{M'} &= \summp\rbr{\sumhm c^m_h + c^m_{H_m+1} - V^{\optpi, m}_1(s^m_1)}\\ 
		&= \summp\rbr{\sumhm c^m_h + c^m_{H_m+1} -  \cV^m_1(s^m_1)} + \summp\rbr{\cV^m_1(s^m_1) - \cV^{\optpi,m}_1(s^m_1)} + 8\T\summp\eta_m\\
		&\leq \chi_{M'} + \summp \Delta'_m\T + 8\T\summp\eta_m. \tag{\textbf{Test 2} and \pref{lem:tiloptQ base}}
	\end{align*}
	Plugging in the definition of $\chi_{M'}$ and $\eta_m$ completes the proof.
\end{proof}

\begin{lemma}
	\label{lem:sum var V}
	With probability at least $1-4\delta$, $\summp\sumhm\fV(P^m_h, \cV^m_{h+1}) = \tilo{ \B (C_{M'} + M') + \B^2S^2A + \B\summp\sumhm\Delta'_m }$ for any $M'\leq K$.
\end{lemma}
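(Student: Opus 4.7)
The proof plan mirrors that of \pref{lem:optvar check sum}, substituting the adaptive value function $\cV^m_h$ for the optimal one, and exploiting that \textbf{Test 2} of \pref{alg:mvp-base} guarantees $\nbr{\cV^m_h}_\infty \leq B$ so long as the algorithm has not yet terminated. The starting point is an application of \pref{lem:sum var} to the sequence $\{\cV^m_h\}$, which reduces the problem to controlling two quantities: the sum of squared terminal values $\summp \cV^m_{H_m+1}(s^m_{H_m+1})^2$ and the sum of one-step deviations $\summp\sumhm B\rbr{\cV^m_h(s^m_h) - P^m_h\cV^m_{h+1}}_+$.

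For the terminal term, I will split intervals by termination cause. Whenever $H_m = H$ or $s^m_{H_m+1} = g$, we have $\cV^m_{H_m+1} = c_f$, hence $\cV^m_{H_m+1}(s^m_{H_m+1})^2 \leq 2\B\, c^m_{H_m+1}$, which after summation is $\tilO{\B C_{M'}}$. The remaining intervals end because of a doubling event, and \pref{lem:new interval} bounds their count by $\tilO{SA}$ (since $L_{M'} = 1$ for this algorithm), so these contribute $\tilO{\B^2 SA}$. For the one-step deviations, note that by the update rule $\cV^m_h(s^m_h)\leq \cc^m_h + \P^m_h\cV^m_{h+1}$, so $(\cV^m_h(s^m_h) - P^m_h\cV^m_{h+1})_+\leq \hatc^m_h + 8\eta_m + |(\P^m_h - P^m_h)\cV^m_{h+1}|$. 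Then I bound $\summp\sumhm\hatc^m_h$ using \pref{lem:dc} and \pref{lem:e2r} to get $\tilO{C_{M'} + \summp\sumhm \Delta_{c,m}}$, and bound $\summp\sumhm\eta_m\leq M'$ using the fact that $H\eta_m \leq 2^{-8}$ from \pref{line:correct base}.

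The central estimate is $\summp\sumhm |(\P^m_h - P^m_h)\cV^m_{h+1}|$. I will decompose as $(\P^m_h - \tilP^m_h)\cV^m_{h+1} + (\tilP^m_h - P^m_h)\cV^m_{h+1}$; the second term is crude-bounded by $B(\Delta_{P,m} + \n^m_h)$ since $\tilP^m_h$ is a convex combination of the $P^{m'}_{s^m_h, a^m_h}$ with $m' < m$, and the first term is bounded via \pref{lem:dPv} by $\tilO{\sqrt{S\fV(P^m_h, \cV^m_{h+1})/\N^m_h} + SB/\N^m_h} + B\Delta_{P,m}/64$. Summing via Cauchy--Schwarz and \pref{lem:sum} (with $L_{P,M'}=1$), this contributes at most $\tilO{\sqrt{S^2 A \summp\sumhm\fV(P^m_h, \cV^m_{h+1})} + BS^2A}$ plus a $B\summp\sumhm\Delta_{P,m}$ term.

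Assembling the pieces yields an inequality of the form
\[
X \;\leq\; \tilO{\B C_{M'} + \B M' + \B^2 S^2 A + \B\summp\sumhm \Delta'_m + B\sqrt{S^2 A\, X}},
\]
where $X = \summp\sumhm \fV(P^m_h, \cV^m_{h+1})$. Solving this quadratic inequality via \pref{lem:quad} absorbs the $\sqrt{X}$ term into the $B^2 S^2 A$ constant, giving the claimed bound. The only delicate step is treating $\cV^m_{h+1}$ as a valid argument for \pref{lem:dPv}, which I will justify by the standard observation that $\cV^m_h$ depends only on the empirical counts $(\M, \N, \C, \N(s,a,s'))$, whose realizations across all $m\leq K$ belong to a set of size at most polynomial in $HK$; this is the same covering device implicit throughout \pref{app:pre}.
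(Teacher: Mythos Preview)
Your proposal is correct and follows the same route as the paper: apply \pref{lem:sum var}, bound the one-step increment $(\cV^m_h(s^m_h) - P^m_h\cV^m_{h+1})_+$ by unrolling the update rule and controlling the transition error via \pref{lem:dPv}, then close the resulting quadratic with \pref{lem:quad}. One minor correction: the covering argument you sketch to justify applying \pref{lem:dPv} to the data-dependent $\cV^m_{h+1}$ is unnecessary (and would not work as stated, since the cost accumulator $\C$ takes continuous values) --- \pref{lem:dPv} is derived from the entry-wise concentration of \pref{lem:P diff}, which holds on a single high-probability event uniformly over all bounded $V$, so no union bound over value functions is required.
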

\begin{proof}
	Applying \pref{lem:sum var} with $\norm{\cV^m_h}_{\infty}\leq B$ (\textbf{Test 2}), with probability at least $1-\delta$,
	\begin{align*}
		&\summp\sumhm\fV(P^m_h, \cV^m_{h+1})\\ 
		&= \tilO{\summp \cV^m_{H_m+1}(s^m_{H_m+1})^2 + \summp\sumhm \B(\cV^m_h(s^m_h) - P^m_h\cV^m_{h+1})_+ + \B^2}\\
		&= \tilO{\B (C_{M'} + M') + \B\sqrt{S^2A\summp\sumhm\fV(P^m_h, \cV^m_{h+1})} + \B^2S^2A + \B\summp\sumhm\Delta'_m},
	\end{align*}
	where in the last step we apply
	\begin{align*}
		&\summp\sumhm(\cV^m_h(s^m_h) - P^m_h\cV^m_{h+1})_+ = \summp\sumhm(\cQ^m_h(s^m_h, a^m_h) - P^m_h\cV^m_{h+1})_+\\ 
		&\leq \summp\sumhm( \cc^m_h + (\P^m_h - \tilP^m_h)\cV^m_{h+1} +  B\Delta_{P,m})_+ \tag{$(a)_+ - (b)_+\leq (a-b)_+$, definition of $\cQ^m_h$, and $b^m_h\geq 0$}\\
		&\leq \summp\sumhm c^m(s^m_h, a^m_h) + M' + \tilO{\sqrt{S^2A\summp\sumhm\fV(P^m_h, \cV^m_{h+1})} + \B S^2A} + 2\summp\sumhm\Delta'_m \tag{\pref{lem:dc}, $8\eta_m\leq\frac{1}{H}$, \pref{lem:dPv}, Cauchy-Schwarz inequality, and \pref{lem:sum}}\\
		&\leq \tilO{\summp\sumhm c^m_h + M' + \sqrt{S^2A\summp\sumhm\fV(P^m_h, \cV^m_{h+1})} + \B S^2A} + 2\summp\sumhm\Delta'_m. \tag{\pref{lem:e2r}}
	\end{align*}
	Solving a quadratic inequality w.r.t $\summp\sumhm\fV(P^m_h, \cV^m_{h+1})$ (\pref{lem:quad}) completes the proof.
\end{proof}

\subsection{Preliminaries}

Here we adopt the MASTER algorithm in \citep{wei2021non} to our finite-horizon approximation scheme.
There are several issues we need to address: 1) under the protocol of \pref{alg:fha}, the total number of intervals and the non-stationarity in each interval are not fixed before learning start;
besides, we need to prove an anytime regret guarantee, so that it can translate back to a regret guarantee on the original SSP (see \pref{lem:bound M}); 
2) when the base algorithm has a regret guarantee $\rR_{m}\leq\min\{c_1\sqrt{m}+c_2,c_3m\}$ without non-stationarity, the original MASTER algorithm ensures a dynamic regret whose dominating term scale with $c_1+c_2c_3/c_1$;
this is undesirable as $c_3=\tilo{\Tmax}$ in our case, and ideally we want $c_3=\tilo{\B}$;
3) when base algorithms incorporate correction term, the original analysis of the non-stationarity tests breaks as discussed in \pref{sec:unknown}.
Our modified MASTER algorithm (\pref{alg:master}) manages to address all these issues.

\paragraph{Setup} To give a general result, we define the dynamic regret for the first $M'$ intervals as $\tilR_{M'}=\sum_{m=1}^{M'}(C^m - f^{\star}_m)$, where the choice of benchmark $\{f^{\star}_m\}_{m=1}^{M'}$ is flexible depending on the problem and the algorithm.

\paragraph{Notations} For any interval $\calI=[s, e]$, define $\Delta_{\calI}=\sum_{m=s}^{e-1}\Delta(m)$ and $L_{\calI}=1 + \sum_{m=s}^{e-1}\Ind\{\Delta(m)\neq 0\}$, where $\Delta(m)\in\fR_+^{\fN_+}$ is some non-stationarity measure satisfying $|\fstar_{m+1} - \fstar_m|\leq\Delta(m)$.

We make the following assumption on the base algorithm used in the MASTER algorithm, and then show two algorithms satisfying the assumption.

\begin{assumption}
	\label{assum:base}
	Base algorithm $\frA$ with failure probability $\delta$ on intervals $[1, M']$ outputs an estimate $\tilf_m$ at the beginning of interval $m\leq M'$ if it does not terminate before interval $m$.
	Moreover, there exists a non-decreasing function $R(m)=\min\{c_1\sqrt{m} + c_2, c_3m\}$ with $c_3\geq 1$ and non-stationarity measure $\Delta$ such that $r(m)=R(m)/m$ is non-increasing, $r(m)\geq\frac{1}{\sqrt{m}}$, $\tilf_m\leq c_4\leq c_3$ for all $m$, and with probability at least $1-\delta$, for any $m\leq M'$, as long as $\Delta_{[1, m]}\leq r(m)$ and $\frA$ does not terminate up to interval $m$ (including $m$), without knowing $\Delta_{[1, m]}$ we have:
	\begin{align*}
		&\tilf_m \leq \fstar_m + r(m), \quad \sum_{\tau=1}^m\rbr{C^{\tau} - \tilf_{\tau}} \leq R(m),\; and\quad \sum_{\tau=1}^{m}(\fstar_{\tau} - C^{\tau}) \leq R(m).
	\end{align*}
\end{assumption}

\begin{lemma}
	\label{lem:mvp-ssp assum}
	\pref{alg:MVP-SSP} with arbitrary initial state for each interval satisfies \pref{assum:base} with $\fstar_m=V^{\star,m}_1(s^m_1)$, $\tilf_m=V^m_1(s^m_1)$, $\Delta(m)=\tilo{(\Delta_{c,[m,m+1]}+B\Delta_{P,[m,m+1]})H}$,
	$R(m) =\tilo{\min\{ \B S\sqrt{Am} + \B S^2A, Hm\} }$, and $c_4=\tilo{\B}$.
\end{lemma}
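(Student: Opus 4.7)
The plan is to instantiate \pref{alg:MVP-SSP} with no internal resets (i.e.\ take $W_c = W_P = +\infty$ so that $L_{c,m} = L_{P,m} = 1$, $i^c_m = i^P_m = 1$, $\Delta_{c,m} = \Delta_{c,[1,m]}$ and $\Delta_{P,m} = \Delta_{P,[1,m]}$ throughout) and to verify the three clauses of \pref{assum:base} with $\fstar_m = V^{\star,m}_1(s^m_1)$ and $\tilf_m = V^m_1(s^m_1)$. Everything will be reduced to lemmas already proved in \pref{app:subopt}, together with a one-line Azuma-style argument for the ``reverse'' inequality.

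The first clause $\tilf_m \leq \fstar_m + r(m)$ is immediate from \pref{lem:opt Q}: $V^m_1(s^m_1) \leq V^{\star,m}_1(s^m_1) + (\Delta_{c,m} + 4\B\Delta_{P,m})H$. With $L_c = L_P = 1$ and the chosen $\Delta(\tau) = \tilo{(\Delta_{c,[\tau,\tau+1]} + B\Delta_{P,[\tau,\tau+1]})H}$, the excess is at most $\tilo{\Delta_{[1,m]}} \leq \tilo{r(m)}$, matching the required bound. The uniform bound $\tilf_m \leq c_4 = \tilo{\B}$ is built into \pref{alg:MVP-SSP} by the doubling search in \pref{line:double_search}, which exits only once $\max_{s,a,h} Q_h(s,a) \leq B/4 = 4\B$.

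The second clause $\sum_{\tau=1}^m (C^\tau - \tilf_\tau) \leq R(m)$ is essentially already computed inside the proof of \pref{thm:MVP-SSP}: the telescoping identity $C^\tau - V^\tau_1(s^\tau_1) = \sum_h (c^\tau_h + V^\tau_{h+1}(s^\tau_{h+1}) - V^\tau_h(s^\tau_h))$ plus the UCBVI update rule, combined with Freedman's inequality (\pref{lem:freedman}), the variance-sum bound \pref{lem:opt-til var sum}, and the new-interval bookkeeping \pref{lem:new interval}, yield
\[
\sum_{\tau=1}^m (C^\tau - V^\tau_1(s^\tau_1)) \;=\; \tilo{\sqrt{SA\, C_m} + \sqrt{\B SA m} + \B S^2 A + \sum_{\tau=1}^m (\Delta_{c,\tau} + \B\Delta_{P,\tau})H}
\]
in the special case $L_c = L_P = 1$. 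The non-stationarity sum is handled termwise: $(\Delta_{c,\tau} + \B\Delta_{P,\tau})H \leq \Delta_{[1,\tau]} \leq \Delta_{[1,m]} \leq r(m)$, giving a total of at most $m r(m) = R(m)$. Solving the resulting quadratic for $C_m$ gives $C_m = \tilo{\B m}$, so the entire right-hand side collapses to $R(m) = \tilo{\B S\sqrt{Am} + \B S^2 A}$. The alternative branch $R(m) \leq Hm$ is trivial because $C^\tau \leq H + 2\B$ and $\tilf_\tau \geq 0$; monotonicity of $R$ and non-increase of $r(m) = R(m)/m$ are straightforward from this form.

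The third clause $\sum_{\tau=1}^m (\fstar_\tau - C^\tau) \leq R(m)$ does not follow from the standard optimism chain and is where most care is needed. The key observation is that if $\pi^\tau$ is the (history-measurable) policy the learner executes in interval $\tau$, then $\E[C^\tau \mid \mathcal{H}_{\tau-1}] = V^{\pi^\tau,\tau}_1(s^\tau_1) \geq V^{\star,\tau}_1(s^\tau_1) = \fstar_\tau$, so each summand is dominated by the martingale difference $V^{\pi^\tau,\tau}_1(s^\tau_1) - C^\tau$, whose absolute value is $\tilo{\B}$ thanks to the terminal-cost convention. Freedman's inequality (\pref{lem:freedman}) then delivers $\sum_{\tau=1}^m (\fstar_\tau - C^\tau) = \tilo{\B\sqrt{m}}$, well inside the $R(m)$ budget. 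The main obstacle is purely bookkeeping: isolating $\sum_\tau (C^\tau - V^\tau_1(s^\tau_1))$ cleanly from the proof of \pref{thm:MVP-SSP} (which states only a bound on $\rR_{M'}$) and confirming that the only piece shed in going from $\rR_{M'}$ to this quantity is the $(\Delta_{c,m}+4\B\Delta_{P,m})H$ correction from \pref{lem:opt Q}, so that the final dependence on non-stationarity remains $\tilo{r(m)}$ per interval.
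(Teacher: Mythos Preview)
Your treatment of the first two clauses matches the paper's: both invoke \pref{lem:opt Q} for optimism and the machinery of \pref{thm:MVP-SSP} with $L_{c,m}=L_{P,m}=1$ for the forward regret bound.

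The third clause, however, has a genuine gap. You argue that $\fstar_\tau - C^\tau \leq V^{\pi^\tau,\tau}_1(s^\tau_1) - C^\tau$ is a martingale difference ``whose absolute value is $\tilo{\B}$ thanks to the terminal-cost convention.'' This is false: $C^\tau = \sum_{h=1}^{H_\tau} c^\tau_h + c^\tau_{H_\tau+1}$ can be as large as $H + 2\B = \tilo{\Tmax}$, since the running costs over $H_\tau \leq H$ steps are each in $[0,1]$. The terminal cost is $\tilo{\B}$, but the per-step costs are not. Consequently your interval-level Freedman bound degrades to $\tilo{H\sqrt{m}}$, which does not fit inside $R(m) = \tilo{\B S\sqrt{Am} + \B S^2A}$ when $H \gg \B S\sqrt{A}$.

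The paper avoids this by working at the \emph{step} level rather than the interval level (\pref{lem:V-C}): telescoping $V^{\star,\tau}_1(s^\tau_1) - C^\tau$ into $\sum_h (P^\tau_h V^{\star,\tau}_{h+1} - V^{\star,\tau}_{h+1}(s^\tau_{h+1}))$ plus nonpositive terms. Each of these step-level increments is bounded by $\norm{V^{\star,\tau}_{h+1}}_\infty \leq 4\B$ (\pref{lem:bound optQ}), so Freedman gives $\tilo{\sqrt{\sum_{\tau,h}\fV(P^\tau_h, V^{\star,\tau}_{h+1})} + \B}$, and \pref{lem:optvar sum} turns the variance sum into $\tilo{\B C_m + \B^2}$. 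Bounding $C_m \leq \sum_\tau \tilf_\tau + R(m) = \tilo{\B m}$ via the already-established second clause then yields $\tilo{\B\sqrt{m}} \leq R(m)$. The key idea you are missing is that the $\tilo{\B}$ boundedness lives at the per-step value-function level, not at the per-interval cost level.
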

\begin{proof}
	The first two properties are simply by \pref{lem:opt Q} and \pref{thm:MVP-SSP} with $L_{c,m}=L_{P,m}=1$ and $\Delta_{[1,m]}\leq r(m)$ with a large enough constant hidden in $\tilo{\cdot}$ in the definition of $\Delta(m)$.
	For the third property, with high probability,
	\begin{align*}
		\sum_{\tau=1}^m(\fstar_{\tau}-C^{\tau}) &= \sum_{\tau=1}^m(V^{\star,\tau}_1(s^m_1) - C^{\tau}) = \tilO{\sqrt{\B\sum_{\tau=1}^mC^{\tau}} + \B}. \tag{\pref{lem:V-C}}\\
		&= \tilO{\sqrt{\B\rbr{\sum_{\tau=1}^m\tilf_{\tau} + R(m)}} + \B} \leq \tilO{\B\sqrt{m} + \B} + \frac{1}{2}R(m). \tag{the second property, $V^m_1(s^m_1)=\tilo{\B}$, and AM-GM inequality}
	\end{align*}
	Plugging in the definition of $R(m)$ completes the proof (again with a large enough constant hidden in $\tilo{\cdot}$ in the definition of $R(m)$).
\end{proof}

\begin{lemma}
	\label{lem:mvp-base assum}
	\pref{alg:mvp-base} with $m\leq K$ and $s^m_1=\sinit$ satisfies \pref{assum:base} with $\fstar_m=V^{\optpi,m}_1(\sinit)$, $\tilf_m=\cV^m_1(s^m_1)$, $\Delta(m)=\tilo{(\Delta_{c,[m,m+1]}+B\Delta_{P,[m,m+1]})\T}$,
	$R(m)=\tilo{\min\{\B S\sqrt{Am} + \B S^2A, Hm\}}$, and $c_4=\tilo{\B}$.
\end{lemma}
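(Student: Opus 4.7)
The plan is to verify the three estimates of \pref{assum:base} by leveraging \pref{lem:optQ base}, \pref{lem:chi mvp}, and a martingale concentration argument that parallels the proof of \pref{lem:mvp-ssp assum}. The bookkeeping properties are immediate: $R(m)=\tilo{\min\{\B S\sqrt{Am}+\B S^2A,Hm\}}$ is a minimum of two non-decreasing functions, $r(m)=R(m)/m$ is a minimum of two non-increasing functions, $r(m)\geq 1/\sqrt{m}$ follows from $\B S\sqrt{A}\geq 1$, and $\tilf_m=\cV^m_1(s^m_1)\leq B/2=\tilo{\B}=c_4$ whenever the algorithm has not terminated via \textbf{Test 2}, with $c_4\leq c_3=H$ since $\B\leq H/4$ (because $\B\leq\T\leq\Tmax$ and $H=4\Tmax\ln(8K)$).

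I would then translate the hypothesis $\Delta_{[1,m]}\leq r(m)$ into the condition $\Delta'_m\leq\eta_m$ required by \pref{lem:optQ base} and \pref{lem:chi mvp}. With $\Delta(m)=\tilo{(\Delta_{c,[m,m+1]}+B\Delta_{P,[m,m+1]})\T}$, one has $\Delta_{[1,m]}=\tilo{\Delta'_m\T}$; and since both $r(m)$ and $\eta_m\T$ take the same two-branch form (with the first branch scaling as $\B S\sqrt{A/m}$ and the second as a constant in $H$), the hidden constants in $\Delta(m)$ and $R(m)$ can be chosen so that $\Delta_{[1,m]}\leq r(m)$ implies $\Delta'_m\leq\eta_m$ on both branches. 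The first point $\tilf_m\leq\fstar_m+r(m)$ then follows by specializing \pref{lem:optQ base} to $h=1$, $s^m_1=\sinit$: $\cV^m_1(\sinit)\leq\cV^{\optpi,m}_1(\sinit)+\eta_m T^{\optpi,m}_1(\sinit)\leq V^{\optpi,m}_1(\sinit)+9\eta_m\T$, using $\cV^{\optpi,m}_1=V^{\optpi,m}_1+8\eta_m T^{\optpi,m}_1$ and $T^{\optpi,m}_1(\sinit)\leq\T$. Since $\eta_m\T\leq\B S\sqrt{A/m}$, the gap fits into the first branch of $r(m)$. The second point, $\sum_{\tau\leq m}(C^\tau-\tilf_\tau)\leq R(m)$, is exactly the content of \pref{lem:chi mvp} under $\Delta'_m\leq\eta_m$, which gives the bound $\chi_m=\tilo{\B S\sqrt{Am}+\B S^2A}$.

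The main obstacle is the third point, $\sum_{\tau\leq m}(\fstar_\tau-C^\tau)\leq R(m)$, where $\fstar_\tau=V^{\optpi,\tau}_1(\sinit)$ is the value of the SSP-optimal policy (not the interval-MDP-optimal policy) in $\rcalM$. The clean inequality $\fstar_\tau\leq\E[C^\tau\mid\text{history}]=V^{\pi^\tau,\tau}_1(\sinit)$ used in \pref{lem:mvp-ssp assum} is no longer automatic, because the algorithm's policy $\pi^\tau$ could in principle do better in the finite-horizon interval MDP than $\optpi_{k(\tau)}$. My key observation is that, since the terminal cost satisfies $c_f(s)=2\B\geq V^{\optpi_{k(\tau)}}_{k(\tau)}(s)$ for every $s\in\calS_+$, the interval-MDP value of any policy $\pi$ upper-bounds the SSP cost obtained by running $\pi$ for $H$ steps and then switching to $\optpi_{k(\tau)}$; by optimality of $\optpi_{k(\tau)}$ in the SSP this SSP cost is at least $V^{\optpi_{k(\tau)}}_{k(\tau)}(\sinit)$. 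Hence $V^{\star,\tau}_1(\sinit)\geq V^{\optpi_{k(\tau)}}_{k(\tau)}(\sinit)$, and combining with the standard hitting-time bound $V^{\optpi,\tau}_1(\sinit)\leq V^{\optpi_{k(\tau)}}_{k(\tau)}(\sinit)+\B/K$ from \pref{lem:hitting} yields $\fstar_\tau\leq V^{\pi^\tau,\tau}_1(\sinit)+\B/K=\E[C^\tau\mid\text{history}]+\B/K$.

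To conclude I will apply Freedman's inequality to the martingale $\sum_\tau\bigl(\E[C^\tau\mid\text{history}]-C^\tau\bigr)$ using the variance bound $\V[C^\tau\mid\text{history}]\leq B\,\E[C^\tau\mid\text{history}]$ (since $C^\tau\leq B$ with high probability), which gives $\sum_{\tau\leq m}(\fstar_\tau-C^\tau)\leq\tilo{\sqrt{B\sum_{\tau\leq m}C^\tau}+B}+m\B/K$. Invoking the already-established second point together with $\tilf_\tau\leq B/2$ yields $\sum_{\tau\leq m}C^\tau\leq (B/2)m+R(m)$, and a single AM-GM step bounds the right-hand side by $\tilo{B\sqrt{m}+B}+R(m)/2+m\B/K$. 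Since $m\leq K$ and $R(m)\geq\tilo{\B S\sqrt{Am}}\geq\tilo{B\sqrt{m}}$ by construction, the whole expression collapses to $R(m)$ after absorbing constants, completing the third point and hence the lemma.
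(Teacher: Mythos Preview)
Your argument for the first two properties is sound and essentially matches the paper (the paper invokes \pref{lem:tiloptQ base} for the first property whereas you use \pref{lem:optQ base}, but both work once $\Delta_{[1,m]}\leq r(m)$ is translated into $\Delta'_m\leq\eta_m$). The reduction $\fstar_\tau\leq V^{\star,\tau}_1(\sinit)+\B/K$ via the terminal-cost argument and \pref{lem:hitting} is also exactly what the paper does.

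The gap is in your treatment of the third property. You claim $C^\tau\leq B$ with high probability and use this to get the variance bound $\V[C^\tau\mid\text{history}]\leq B\,\E[C^\tau\mid\text{history}]$. But $C^\tau=\sum_{h=1}^{H_\tau}c^\tau_h+c_f(s^\tau_{H_\tau+1})$ can be as large as $H+2\B$, not $B=16\B$; nothing in the algorithm bounds the per-interval cost by $\tilo{\B}$. With the correct range the per-interval Freedman argument yields $\tilo{\sqrt{H\sum_\tau C^\tau}+H}=\tilo{\sqrt{H\B m}+H}$, and $\sqrt{H\B m}$ is not dominated by $\B S\sqrt{Am}$ unless $H\lesssim\B S^2A$, which need not hold (recall $H=4\Tmax\ln(8K)$). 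So the bound cannot be absorbed into $R(m)$. There is also a secondary issue: $\E[C^\tau\mid\text{history}]$ is not exactly $V^{\pi^\tau,\tau}_1(\sinit)$ because intervals may terminate early when a visitation counter doubles, paying $c_f$ prematurely.

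The fix is to decompose per step rather than per interval, as the paper does via \pref{lem:V-C}: write $V^{\star,\tau}_1(s^\tau_1)-C^\tau\leq\sum_{h}(P^\tau_hV^{\star,\tau}_{h+1}-V^{\star,\tau}_{h+1}(s^\tau_{h+1}))$ by telescoping and $V^{\star,\tau}_h(s^\tau_h)\leq Q^{\star,\tau}_h(s^\tau_h,a^\tau_h)$, then apply Freedman to the per-step martingale whose increments are bounded by $\|V^{\star,\tau}_{h+1}\|_\infty\leq 4\B$ and whose variance sum is $\summp\sumhm\fV(P^\tau_h,V^{\star,\tau}_{h+1})=\tilo{\B C_m+\B^2}$ via \pref{lem:optvar sum}. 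This yields $\tilo{\sqrt{\B\sum_\tau C^\tau}+\B}$, and then your AM--GM closing step goes through.
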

\begin{proof}
	For the first property, by \pref{lem:tiloptQ base}, $\Delta_{[1,m]}\leq r(m)$ and a large enough constant hidden in $\tilo{\cdot}$ in the definition of $\Delta(m)$, we have
	\begin{align*}
		\tilf_m=\cV^m_1(s^m_1) \leq \cV^{\optpi,m}_1(s^m_1) + \Delta'_{m}\T \leq \fstar_m + 8\T\eta_m + \tilO{\Delta_{[1,m]}} \leq \fstar_m + r(m).
	\end{align*}
	The second property is simply by \textbf{Test 2} (\pref{lem:chi mvp}) of \pref{alg:mvp-base} (again with a large enough constant hidden in $\tilo{\cdot}$ in the definition of $R(m)$).
	For the third property,
	\begin{align*}
		\sum_{\tau=1}^m(\fstar_{\tau}-C^{\tau}) &= \sum_{\tau=1}^m(V^{\optpi,\tau}_1(\sinit) - C^{\tau}) \leq \frac{\B m}{K} + \sum_{\tau=1}^m(V^{\star,\tau}_1(\sinit) - C^{\tau})\leq R(m),
	\end{align*}
	where the first inequality is by \pref{lem:hitting} and the last step follows similar arguments as in \pref{lem:mvp-ssp assum}.
\end{proof}

\begin{lemma}
	\label{lem:V-C}
	With probability at least $1-3\delta$, for any $m\leq M$, $\sum_{\tau=1}^m(V^{\star,\tau}_1(s^{\tau}_1) - C^{\tau})=\tilo{ \sqrt{\B\sum_{\tau=1}^mC^{\tau}} + \B }$.
\end{lemma}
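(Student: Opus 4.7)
The plan is to telescope $V^{\star,\tau}_1(s^{\tau}_1)$ along the learner's realized trajectory in interval $\tau$ and then handle the resulting martingale differences via Freedman's inequality. Specifically, using the Bellman inequality $V^{\star,\tau}_h(s^{\tau}_h) \leq Q^{\star,\tau}_h(s^{\tau}_h, a^{\tau}_h) = c^{\tau}(s^{\tau}_h,a^{\tau}_h) + P^{\tau}_hV^{\star,\tau}_{h+1}$ for every layer $h$, together with the boundary identity $V^{\star,\tau}_{H_\tau+1}(s^{\tau}_{H_\tau+1}) = c_f(s^{\tau}_{H_\tau+1}) = c^{\tau}_{H_\tau+1}$, a standard telescoping yields
\[
V^{\star,\tau}_1(s^{\tau}_1) - C^{\tau} \;\leq\; \sum_{h=1}^{H_\tau}\bigl(c^{\tau}(s^{\tau}_h,a^{\tau}_h) - c^{\tau}_h\bigr) + \sum_{h=1}^{H_\tau}\bigl(P^{\tau}_hV^{\star,\tau}_{h+1} - V^{\star,\tau}_{h+1}(s^{\tau}_{h+1})\bigr).
\]
Summing over $\tau \leq m$ reduces the lemma to controlling two martingale-difference sums.

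For the first sum, I would apply Freedman's inequality (the conditional variance of $c^{\tau}_h$ is at most $c^{\tau}(s^{\tau}_h,a^{\tau}_h)$ since costs lie in $[0,1]$), then convert expected to realized costs via \pref{lem:e2r}; this gives $\tilO{\sqrt{\sum_{\tau\leq m} C^{\tau}} + 1}$. For the second sum, I would again apply Freedman, using the range $V^{\star,\tau}_{h+1} \in [0,4\B]$ from \pref{lem:bound optQ}, which produces
\[
\tilO{\sqrt{\sum_{\tau=1}^m\sum_{h=1}^{H_\tau}\fV(P^{\tau}_h, V^{\star,\tau}_{h+1})} + \B}.
\]

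The main step is bounding this variance sum. I would invoke \pref{lem:sum var} with $V = V^{\star,\tau}$ and $B \asymp \B$. The leading terms there become (i) $\sum_\tau V^{\star,\tau}_{H_\tau+1}(s^{\tau}_{H_\tau+1})^2$ and (ii) $\sum_{\tau,h} \B \bigl(V^{\star,\tau}_h(s^{\tau}_h) - P^{\tau}_hV^{\star,\tau}_{h+1}\bigr)_+$. Term (ii) is bounded by $\B \sum_{\tau,h} c^{\tau}(s^{\tau}_h,a^{\tau}_h)$ via the same Bellman inequality and the nonnegativity of $c^{\tau}$, hence by $\tilO{\B \sum_{\tau\leq m} C^{\tau}}$ through \pref{lem:e2r}. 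Term (i) is handled by noting $V^{\star,\tau}_{H_\tau+1}(s^{\tau}_{H_\tau+1}) \in\{0,2\B\}$ so its square is at most $2\B\,c^{\tau}_{H_\tau+1} \leq 2\B\,C^{\tau}$. Putting these together yields $\sum_{\tau,h}\fV(P^{\tau}_h,V^{\star,\tau}_{h+1}) = \tilO{\B \sum_{\tau\leq m}C^{\tau} + \B^2}$, and the Freedman bound becomes $\tilO{\sqrt{\B\sum_{\tau\leq m}C^{\tau}} + \B}$.

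Adding the two contributions and noting $\sqrt{\sum C^\tau} \leq \sqrt{\B\sum C^\tau}$ since $\B\geq 1$ gives the claimed bound; a union bound over the three high-probability events (two Freedman applications above and the one internal to \pref{lem:sum var}) yields the overall $1-3\delta$ guarantee. I do not anticipate other real obstacles: the key creativity is the realization that the terminal-cost convention $c_f(s)=2\B\Ind\{s\neq g\}$ makes $V^{\star,\tau}_{H_\tau+1}(s^{\tau}_{H_\tau+1})^2$ controllable by $\B C^{\tau}$, which is what closes the variance estimate without introducing any horizon factor.
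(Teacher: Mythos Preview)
Your proposal is correct and follows essentially the same route as the paper: telescope $V^{\star,\tau}_1(s^{\tau}_1)-C^{\tau}$ along the trajectory, apply Freedman to the resulting martingale difference, and bound the variance sum $\sum_{\tau,h}\fV(P^{\tau}_h,V^{\star,\tau}_{h+1})$ by $\tilO{\B\sum_\tau C^{\tau}+\B^2}$. The paper packages this last variance bound as \pref{lem:optvar sum} and cites it directly, whereas you reprove it inline via \pref{lem:sum var}; your treatment of the terminal term $V^{\star,\tau}_{H_\tau+1}(s^{\tau}_{H_\tau+1})^2\le 2\B\,c^{\tau}_{H_\tau+1}$ and of $(V^{\star,\tau}_h-P^{\tau}_hV^{\star,\tau}_{h+1})_+\le c^{\tau}(s^{\tau}_h,a^{\tau}_h)$ is exactly that lemma's proof. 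One small remark: you are actually more careful than the paper's own write-up, which silently drops the cost martingale $\sum_{\tau,h}(c^{\tau}(s^{\tau}_h,a^{\tau}_h)-c^{\tau}_h)$ in the displayed chain; your explicit handling of it (Freedman plus \pref{lem:e2r}, yielding $\tilO{\sqrt{\sum_\tau C^{\tau}}+1}$ and then absorbing via $\B\ge 1$) is the rigorous way to complete that step.
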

\begin{proof}
	With probability at least $1-3\delta$,
	\begin{align*}
		\sum_{\tau=1}^m(V^{\star,\tau}_1(s^m_1) - C^{\tau}) &\leq \sum_{\tau=1}^m\sum_{h=1}^{H_{\tau}}(V^{\star,\tau}_h(s^{\tau}_h) - V^{\star,\tau}_{h+1}(s^{\tau}_{h+1}) - c^{\tau}_h) \tag{$V^{\star,\tau}_{H_{\tau}+1}(s^{\tau}_{H_{\tau}+1}) \leq c^{\tau}_{H_{\tau}+1}$}\\
		&\leq \sum_{\tau=1}^m\sum_{h=1}^{H_{\tau}}(P^{\tau}_hV^{\star,\tau}_{h+1} - V^{\star,\tau}_{h+1}(s^{\tau}_{h+1})) =\tilO{ \sqrt{\sum_{\tau=1}^m\sum_{h=1}^{H_{\tau}}\fV(P^{\tau}_h, V^{\star,\tau}_{h+1})} + \B } \tag{$V^{\star,\tau}_h(s^{\tau}_h)\leq Q^{\star,\tau}_h(s^{\tau}_h, a^{\tau}_h)$ and \pref{lem:freedman}}\\
		&= \tilO{\sqrt{\B\sum_{\tau=1}^mC^{\tau}} + \B}. \tag{\pref{lem:optvar sum}}
	\end{align*}
\end{proof}

\subsection{MALG: Multi-Scale Learning with Base Algorithm}

\begin{algorithm}[t]
	\caption{MALG}
	\label{alg:MALG}
	\textbf{Input:} order $n$, regret density function $r$.
	
	\For{$l=0,\ldots,n$}{
		\For{$m\in\{0, 2^l, 2\cdot 2^l,\ldots, 2^n-2^l\}$}{
			With probability $\frac{r(2^n)}{r(2^m)}$, assigns a new base algorithm on intervals $[m+1, m+2^l]$.
		}
	}
	
	\For{each interval $m$}{
		Let $\frA$ be the algorithm that covers interval $m$ with shortest scheduled length, output $\tilg_m=\tilf^{\frA}_m$ (which is the $\tilf_m$ output by $\frA$), follow $\frA$'s decision, and update $\frA$ with environment's feedback.
		
		\lIf{$\frA$ terminates}{terminate.}
	}
\end{algorithm}

Following \citep[Section 3]{wei2021non}, we first introduce MALG (\pref{alg:MALG}), which runs multiple instances of base algorithms in a multi-scale manner.
We then combine MALG with non-stationarity detection to obtain the MASTER algorithm in \pref{app:ns detection}.
We always run MALG on a segment (an interval of intervals) of length $2^n$ for some integer $n$, which we call a \textit{block}.
Since we want to obtain an anytime regret guarantee, the failure probability of base algorithms and MALG need to be adjusted adaptively.
Specifically, if an MALG instance is scheduled on intervals $[\dM-2^n+1, \dM]$, then the regret guarantee of this MALG instance and the failure probability of base algorithms it maintains depends on $\dM$.
However, we ignore the dependency on $\dM$ in algorithms and analysis since the regret bound only has logarithmic dependency on $\dM$.

We show that MALG ensures a multi-scale regret guarantee in the following lemma.
Below we say an algorithm is of order $l$ if it is scheduled on a segment of length $2^l$.
Also denote by $\tilf^{\frA}_m$ the $\tilf_m$ output by $\frA$.

\begin{lemma}
	\label{lem:MALG}
	For a given $\dM\geq 1$, let $\hatn=\log_2\dM+1$ and $\hatR(m)=2^{10}\hatn\ln(2\dM/\delta)R(m)$.
	\pref{alg:MALG} scheduled on $[\dM-2^n+1, \dM]$ with input $n\leq\log_2\dM$ guarantees for any $\frA$ it maintains and any $m\in[\frA.s, \frA.e]$, as long as $\Delta_{[\frA.s, m]}\leq r(m')$ where $m'=m-\frA.s+1$ and all base algorithms it maintains do not terminate up to interval $m$ (including $m$), we have with high probability:
	\begin{align*}
		\tilg_m \leq \fstar_m + r(m''), \quad \sum_{\tau=\frA.s}^{m}(C^{\tau} - \tilg_{\tau}) \leq \hatR(m'),\quad \text{and}\quad \sum_{\tau=\frA.s}^m(\fstar_{\tau} - C^{\tau}) \leq \hatR(m'),
	\end{align*}
	where $m''$ is the number of intervals that $\frA'$ is active up to interval $m$, and $\frA'$ is the active algorithm in interval $m$.
\end{lemma}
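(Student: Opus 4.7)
The three bounds will each be reduced to the corresponding property of \pref{assum:base} applied to base algorithms maintained by MALG. For the instantaneous bound $\tilg_m \le \fstar_m + r(m'')$, I apply the first property of \pref{assum:base} directly to $\frA'$. The required non-stationarity condition over $\frA'$'s observed sub-sequence is verified via the triangle inequality and monotonicity: $\Delta^{\frA'}_{[1,m'']}\le \Delta_{[\frA'.s,m]}\le \Delta_{[\frA.s,m]}\le r(m')\le r(m'')$, where the last inequality uses $m''\le m'$ together with $r$ being non-increasing. Since $\tilg_m = \tilf^{\frA'}_m$ by construction, the bound follows.

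For the two cumulative bounds I use the decomposition
\[
\sum_{\tau=\frA.s}^{m}(C^\tau - \tilg_\tau) \;=\; \sum_{\calB}\,\sum_{\tau:\,\text{$\calB$ active at $\tau$}}(C^\tau-\tilf^\calB_\tau),
\]
where $\calB$ ranges over all base algorithms scheduled inside $\frA$'s segment (including $\frA$ itself and any strictly sub-scale algorithms). For each such $\calB$ with $m_\calB$ active intervals in $[\frA.s,m]$, the second property of \pref{assum:base} (after the same non-stationarity check as above, applied to $\calB$) yields that the inner sum is at most $R(m_\calB)\le R(2^{l'})$, where $l'$ is $\calB$'s order. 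The main task is then to show $\sum_\calB R(m_\calB)\le \hatR(m')$ with high probability. I split by level: at each level $l'<l$, I separate the sub-algorithms fully contained in $[\frA.s,m]$ (at most $\lfloor m'/2^{l'}\rfloor$ slots, each scheduled independently with probability $p_{l'}=r(2^n)/r(2^{l'})$) from at most one boundary sub-algorithm whose segment straddles $m$. Using the algebraic identity $p_{l'}R(2^{l'}) = 2^{l'}r(2^n)$, the expected contribution from fully-contained level-$l'$ algorithms is at most $(m'/2^{l'})\cdot 2^{l'}r(2^n) = m' r(2^n)\le m' r(m') = R(m')$; for the single boundary algorithm at level $l'$ I use $m_\calB\le m'$ to bound its expected contribution by $p_{l'} R(m')$. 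Summing over the $l\le\hatn$ levels, using $\sum_{l'}p_{l'}=O(1)$ (which follows from the assumed $r(m)\ge 1/\sqrt{m}$), and adding $\frA$'s own contribution $R(m_\frA)\le R(m')$, the expected aggregate is $O(\hatn \cdot R(m'))$. A level-wise Freedman/Azuma inequality then upgrades this to a high-probability bound that absorbs a $\ln(2\dM/\delta)$ factor, matching $\hatR(m') = 2^{10}\hatn\ln(2\dM/\delta)R(m')$ after a union bound over the failure events of MALG and of the individual base algorithms. The third bound $\sum(\fstar_\tau - C^\tau)\le \hatR(m')$ follows symmetrically, using the third property of \pref{assum:base} in place of the second.

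The main obstacle is the careful handling of the boundary sub-algorithms: naively bounding each sub-algorithm's contribution by $R(2^{l'})$ regardless of position would make the per-level boundary term $p_{l'}R(2^{l'}) = 2^{l'}r(2^n)$, which on summing over $l'$ reaches $R(2^l)$ and can far exceed $R(m')$ when $m'\ll 2^l$. The fix is to exploit that each level contributes at most one boundary sub-algorithm and its active length within $[\frA.s,m]$ is at most $m'$, allowing the tighter bound $R(m')$ to be used there. A secondary subtlety is that $m_\calB$ itself depends randomly on the scheduling of even-smaller-scale sub-algorithms inside $\calB$'s segment; this recursion is decoupled for concentration purposes by using the worst-case $R(m_\calB)\le R(2^{l'})$, at the cost only of the $\hatn$ factor already built into $\hatR$.
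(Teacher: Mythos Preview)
Your argument for the first inequality matches the paper exactly, and for the second inequality your level-wise decomposition is the same idea as the paper's. Two remarks there. First, the paper avoids your fully-contained/boundary split by observing uniformly that each level-$l'$ instance $\frA'_i$ has its active window $\calI_i=[\frA.s,m]\cap[\frA'_i.s,\frA'_i.e]$ of length at most $\min\{2^{l'},m'\}$, so every instance contributes at most $R(\min\{2^{l'},m'\})$; this handles the boundary instance automatically and makes the per-level bound $N\cdot R(\min\{2^{l'},m'\})$ a single line. Second, your claim that $\sum_{l'}p_{l'}=O(1)$ ``follows from $r(m)\ge 1/\sqrt m$'' is not right: that lower bound only gives $\sum_{l'}p_{l'}\le r(2^n)\sum_{l'}2^{l'/2}=O(r(2^n)2^{n/2})$, which is $O(c_1+c_2)$ rather than $O(1)$, and in the regime where $r\equiv c_3$ the sum is $\Theta(n)$. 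The trivial bound $\sum_{l'}p_{l'}\le n+1\le\hat n$ suffices for your stated conclusion $O(\hat n\,R(m'))$, so this is a slip in the justification, not in the result.

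For the third inequality you and the paper genuinely diverge. The paper does \emph{not} argue symmetrically via the third property of \pref{assum:base}. Instead it invokes the SSP-specific \pref{lem:V-C} to get $\sum_{\tau=\frA.s}^m(\fstar_\tau-C^\tau)=\tilo{\sqrt{\B\sum_\tau C^\tau}+\B}$, then controls $\sum_\tau C^\tau$ using the already-proved second inequality plus $\tilg_\tau\le c_4=\tilo{\B}$. Your symmetric route---decompose over active sub-algorithms and apply the third property of \pref{assum:base} to each, then reuse exactly the counting from the second inequality---is also valid (the non-stationarity check $\Delta^{\calB}_{[1,m_\calB]}\le\Delta_{[\calB.s,m]}\le\Delta_{[\frA.s,m]}\le r(m')\le r(m_\calB)$ goes through identically). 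Your approach is more black-box, relying only on the abstract assumption; the paper's is shorter once \pref{lem:V-C} is available but ties the argument to the SSP structure.
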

\begin{proof}
	Fix a base algorithm $\frA$ and $m\in[\frA.s, \frA.e]$.
	Suppose $\frA'$ is active in interval $m$, which implies $[\frA'.s, \frA'.e]\subseteq[\frA.s, \frA.e]$.
	For the first statement, note that $\Delta_{[\frA'.s, m]} \leq \Delta_{[\frA.s, m]} \leq r(m') \leq r(m'')$ since $ r$ is non-increasing.
	Thus, by the guarantee of $\frA'$ (\pref{assum:base}), we have
	\begin{align*}
		\tilg_m \leq \fstar_m + r(m''). 
	\end{align*}
	For the second statement, first note that:
	\begin{align*}
		\sum_{\tau=\frA.s}^m(C^{\tau} - \tilg_{\tau}) &= \sum_{l=0}^n\sum_{\frA'\in\calS_l}\sum_{\tau=\frA.s}^m (C^{\tau} - \tilf^{\frA'}_{\tau})\Ind\{\frA'\text{ is active at }\tau\},
	\end{align*}
	where $\calS_l$ is the set of base algorithms of order $l$ which starts within $[\frA.s, m]$.
	For a fix $l$, suppose $\calS_l=\{\frA'_1,\ldots,\frA'_N\}$, and define $\calI_i=[\frA.s, m]\cap[\frA'_i.s, \frA'_i.e]$.
	Note that $\{\calI_i\}_{i=1}^N$ are disjoint, and $\Delta_{\calI_i}\leq\Delta_{[\frA.s, m]}\leq r(m')\leq r(|\calI_i|)$.
	Moreover, $[\frA'_i.s, \frA'_i.e]\subseteq [\frA.s, \frA.e]$ if $\frA'_i$ is active at some interval within $[\frA.s, m]$.
	Therefore, by the the guarantee of $\frA'_i$ (\pref{assum:base}) we have:
	\begin{align*}
		\sum_{i=1}^N\sum_{\tau=\frA.s}^m (C^{\tau} - \tilf^{\frA'_i}_{\tau})\Ind\{\frA'_i\text{ is active at }\tau\} &\leq \sum_{i=1}^N R(|\calI_i|) \leq N\cdot R(\min\{2^l, m'\}).
	\end{align*}
	Now we need to bound $N$.
	Note that $\E[N]\leq \frac{ r(2^n)}{ r(2^l)}(\frac{m'}{2^l}+1)$ by the scheduling rule.
	By \pref{lem:e2r}, with probability at least $1-\frac{\delta}{(2\dM)^6}$ (simply choose a small enough failure probability such that the failure probability over all $\dM\geq 1$ and all base algorithms is bounded), $N\leq 2\E[N] + 2^8\ln(2\dM/\delta)\leq \frac{2 r(2^n)}{ r(2^l)}\frac{m'}{2^l}+258\ln(2\dM/\delta)$ and
	\begin{align*}
		N\cdot R(\min\{2^l, m'\}) &\leq \rbr{\frac{2 r(2^n)}{ r(2^l)}\frac{m'}{2^l}+258\ln(2\dM/\delta)}R(\min\{2^l, m'\})\\
		&\leq \rbr{\frac{2R(m')}{R(2^l)} + 258\ln(2\dM/\delta)}R(\min\{2^l, m'\}) \leq 2^9\ln(2\dM/\delta)R(m'). \tag{$ r(2^n)\leq r(m')$}
	\end{align*}
	Summing over $l$ and by $n+1\leq\hatn$ proves the second statement.
	For the third statement, by \pref{lem:V-C},
	\begin{align*}
		\sum_{\tau=\frA.s}^m(\fstar_{\tau} - C^{\tau}) &= \tilO{\sqrt{\B\sum_{\tau=\frA.s}^mC^{\tau}} + \B} = \tilO{\sqrt{\B\rbr{\sum_{\tau=\frA.s}^m\tilg_{\tau} + \hatR(m')}} + \B}\tag{the second statement}\\
		&\leq \tilO{\B\sqrt{m'}} + \frac{1}{2}\hatR(m') \leq\hatR(m'). \tag{$\tilg_{\tau}\leq c_4=\tilo{\B}$ and AM-GM inequality}
	\end{align*}
	This completes the proof.
\end{proof}


\subsection{Non-stationarity Detection: Single Block Regret Analysis}
\label{app:ns detection}

\DontPrintSemicolon
\setcounter{AlgoLine}{0}
\begin{algorithm}[t]
	\caption{MASTER}
	\label{alg:master}
	\textbf{Input:} $\hatr(\cdot)$ (defined in \pref{app:ns detection}).
	
	\textbf{Initialize:} $m\leftarrow 1$.
	
	\nl \For{$n=0,1,\ldots$}{\label{line:for}
		Set $m_n\leftarrow m$, and initialize a MALG (\pref{alg:MALG}) instance on $[m_n, m_n + 2^n - 1]$.
		
		\While{$m < m_n + 2^n$}{
			Receive $\tilg_m$ from MALG, follow MALG's decision, and suffer $C^m$.
			
			\nl Update MALG and set $U^l_m=\max_{\tau\in[m_n+2^l-1, m]}\tilg_{\tau}^l$ for all $0\leq l \leq n$, where $\tilg^l_{\tau}=\frac{1}{2^l}\sum_{\tau'=\tau-2^l+1}^{\tau}\tilg_{\tau'}$ and $U^l_m=0$ if $m<m_n+2^l-1$. \label{line:U}
			
			Perform \textbf{Test 1} and \textbf{Test 2}, and increment $m\leftarrow m + 1$.
			
			\lIf{either test fails or MALG terminates}{restart from \pref{line:for}}
		}
		
	}
	
	\nl \textbf{Test 1}: If $m=\frA.e$ for some order-$l$ $\frA$ and $\frac{1}{2^l}\sum_{\tau=\frA.s}^{\frA.e}C^{\tau}\leq U^l_m - 9\hatr(2^l)$, return \textit{fail}.\label{line:master test 1}
		
	\textbf{Test 2}: If $\frac{1}{m-m_n+1}\sum_{\tau=m_n}^m(C^{\tau} - \tilg_{\tau}) \geq 3\hatr(m-m_n+1)$. return \textit{fail}.
\end{algorithm}

Now we introduce the MASTER algorithm (\pref{alg:master}) that performs non-stationarity tests and restarts.
We first show the regret bound on a single block of order $n$ (of length $2^n$) that starts from $m_n$ and ends on $E_n$.
Clearly $E_n\leq m_n + 2^n - 1$ since it may terminate earlier than planned.
Also let $\dM=m_n+2^n-1$ be the planned last interval.
Define $\hatr(m)=\hatR(m)/m$, $\alpha_l= r(2^l)$, $\hatalpha_l=\hatr(2^l)$, and $l_0=\max_l\{12\hatalpha_{l-1} > c_4\}$.
We divide the whole block $[m_n, E_n]$ into near-stationary segments $\calI_1,\ldots,\calI_{\ell}$ with $\calI_i=[s_i, e_i]$, such that $\Delta_{\calI_i}\leq r(|\calI_i|)$ and $\Delta_{[s_i, e_i+1]}>r(|\calI_i|+1)$ for $i<\ell$.
Note that the partition depends on the learner's behavior, but whether $m\in\calI_i$ is determined at the beginning of interval $m$ before interaction starts.
In the following lemma we give a bound on $\ell$.

\begin{lemma}
	\label{lem:bound l unknown}
	Let $\calJ=[m_n, E_n]$.
	We have $\ell\leq L_{\calJ}$ and $\ell \leq 1 + (2c_1^{-1}\Delta_{\calJ})^{2/3}|\calJ|^{1/3} + c_3^{-1}\Delta_{\calJ}$.
\end{lemma}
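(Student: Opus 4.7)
The plan is to prove the two inequalities separately: the first by a disjointness-and-counting argument tying segment boundaries to non-zero changes, and the second by direct reduction to \pref{lem:bound l}.

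For the bound $\ell \leq L_{\calJ}$, I would fix any $i \leq \ell - 1$ and use the segment-termination condition together with the hypothesis $r(m) \geq 1/\sqrt{m} > 0$ to show that
\[ \Delta_{[s_i, e_i+1]} \;=\; \sum_{m=s_i}^{e_i} \Delta(m) \;>\; r(|\calI_i|+1) \;>\; 0, \]
so there exists $m_i \in [s_i, e_i]$ with $\Delta(m_i) > 0$. Since $\calI_1, \ldots, \calI_{\ell-1}$ are pairwise disjoint, the intervals $m_1, \ldots, m_{\ell - 1}$ are distinct. Moreover, because $\calI_{\ell}$ is a non-empty subsequent segment contained in $\calJ = [m_n, E_n]$, we have $e_{\ell-1} \leq E_n - 1$, and so each $m_i \in [m_n, E_n - 1]$. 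Each $m_i$ therefore contributes $1$ to $\sum_{m=m_n}^{E_n-1} \Ind\{\Delta(m) \neq 0\} = L_{\calJ} - 1$, giving $\ell - 1 \leq L_{\calJ} - 1$.

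For the second inequality, I would observe that the partition hypothesis $\Delta_{[s_i, e_i+1]} > r(|\calI_i|+1)$ fits the template of \pref{lem:bound l} directly: our $r(m) = R(m)/m = \min\{c_1/\sqrt{m} + c_2/m, c_3\} \geq \min\{c_1/\sqrt{m}, c_3\}$, which is exactly the $r$ of \pref{lem:bound l} with its constant ``$c_2$'' set to $0$ (dropping the non-negative $c_2/m$ term only weakens the lower bound). Invoking \pref{lem:bound l} with this relaxed $r$ on the interval $\calJ$ yields
\[ \ell \;\leq\; 1 + (2c_1^{-1}\Delta_{\calJ})^{2/3}|\calJ|^{1/3} + c_3^{-1}\Delta_{\calJ} \]
with no further work.

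The only subtle point is the first part: because $r$ is non-increasing, one cannot conclude that $\Delta(e_i) > 0$ at the right endpoint of each non-final segment, so a naive ``one change per boundary'' mapping would be incorrect. The fix above sidesteps this by only locating \emph{some} change inside $\calI_i$ rather than at its boundary; disjointness of segments and containment in $[m_n, E_n - 1]$ then make the counting go through. I anticipate no other obstacles.
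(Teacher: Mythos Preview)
Your proposal is correct and follows essentially the same approach as the paper, which simply asserts that the first statement is ``clearly true'' and that the second ``follows from \pref{lem:bound l}.'' Your write-up in fact supplies useful detail the paper omits: the explicit counting argument for $\ell \leq L_{\calJ}$, and the observation that the $r$ here has the form $\min\{c_1/\sqrt{m} + c_2/m, c_3\}$ rather than $\min\{c_1/\sqrt{m} + c_2, c_3\}$, which you correctly handle by lower-bounding it by $\min\{c_1/\sqrt{m}, c_3\}$ before invoking \pref{lem:bound l}.
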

\begin{proof}
	The first statement is clearly true.
	For the second statement follows from \pref{lem:bound l}.
\end{proof}

We also define $\tilg^l_{\tau}=\frac{1}{2^l}\sum_{\tau'=\tau-2^l+1}^{\tau}\tilg_{\tau'}$ and $f^{\star,l}_{\tau}=\frac{1}{2^l}\sum_{\tau'=\tau-2^l+1}^{\tau}\fstar_{\tau'}$ for $\tau \geq m_n+2^l-1$.
We first show a running average version of the first statement in \pref{lem:MALG}.

\begin{lemma}
	\label{lem:g2f l}
	For any $\tau\geq m_n+2^l-1$, if for any $m\in[\tau-2^l+1,\tau]$, $\Delta_{[\frA.s, m]}\leq r(m-\frA.s+1)$ where $\frA$ is the base algorithm of MALG active in interval $m$, then $\tilg^l_{\tau} \leq f^{\star,l}_{\tau} + \hatalpha_l$ with high probability.
\end{lemma}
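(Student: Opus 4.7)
The proof will combine the pointwise bound from \pref{lem:MALG} with a multi-scale summation argument patterned on the proof of that lemma's second statement.

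First, I would apply the first statement of \pref{lem:MALG} at every $\tau' \in W := [\tau-2^l+1, \tau]$. Under the hypothesis of the current lemma, this yields
\begin{align*}
\tilg_{\tau'} - \fstar_{\tau'} \;\leq\; r(m''(\tau')),
\end{align*}
where $m''(\tau')$ denotes the number of intervals in which the base algorithm $\frA(\tau')$ active at $\tau'$ has itself been the active algorithm, counted up to and including $\tau'$. Averaging over $W$ reduces the claim to showing
\begin{align*}
\sum_{\tau' \in W} r(m''(\tau')) \;\leq\; 2^l \hatalpha_l \;=\; \hatR(2^l).
\end{align*}

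Next, I would partition $W$ by the active base algorithm. For a single base algorithm $\frA'$ of order $l'$, let $T_{\frA'} \subseteq W$ be the set of intervals in $W$ on which $\frA'$ is active, enumerated as $\{\tau'_1 < \cdots < \tau'_k\}$ with $k := |T_{\frA'}| \leq \min\{2^{l'}, 2^l\}$. Since each additional active interval increments $m''$ by at least one, we have $m''(\tau'_i) \geq i$, so monotonicity of $r$ gives
\begin{align*}
\sum_{\tau' \in T_{\frA'}} r(m''(\tau')) \;\leq\; \sum_{i=1}^{k} r(i) \;=\; \tilO{R(k)} \;\leq\; \tilO{R(\min\{2^{l'}, 2^l\})},
\end{align*}
where the middle equality is a routine calculation using $R(m) = \min\{c_1\sqrt m + c_2, c_3 m\}$.

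Finally, I would sum over algorithms and orders, reusing the Chernoff-style bound from the proof of \pref{lem:MALG}: for $l' \leq l$, with high probability the number of order-$l'$ algorithms whose $T_{\frA'}$ is nonempty is at most $O\bigl(2^{l-l'} r(2^n)/r(2^{l'}) + \ln(\dM/\delta)\bigr)$, while for $l' > l$ the scheduled ranges of order-$l'$ algorithms overlap $W$ at most twice deterministically. Multiplying these counts by $\tilO{R(\min\{2^{l'}, 2^l\})}$, and using $r(2^n) \leq r(2^{l'})$ together with $R(2^{l'}) = 2^{l'} r(2^{l'})$, each summand collapses to $\tilO{R(2^l)}$; summing over $l' \in \{0, \dots, n\}$ gives $\tilO{\hatn R(2^l)}$, which is absorbed into $\hatR(2^l) = 2^{10}\hatn \ln(2\dM/\delta)R(2^l)$ by choosing the hidden constants large enough. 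The main obstacle I anticipate is the bookkeeping in this last step: the values $m''(\tau')$ depend on the realized (random) scheduling rather than merely on which orders are scheduled. The argument above sidesteps this by establishing $r(m''(\tau'_i)) \leq r(i)$ deterministically given the schedule, so the only randomness left to control is the per-order count of scheduled instances that intersect $W$ — which is exactly the quantity handled by the Chernoff bound in \pref{lem:MALG}'s proof.
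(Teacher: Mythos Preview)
Your proposal is correct and follows essentially the same route as the paper's proof: apply the per-interval optimism bound $\tilg_{\tau'}\leq\fstar_{\tau'}+r(m''(\tau'))$, group the residuals by the active base algorithm, bound each group's contribution by $\tilO{R(\min\{2^{l'},2^l\})}$ via $\sum_{i=1}^k r(i)\leq 2R(k)$, and then control the number of order-$l'$ instances intersecting the window (Chernoff for $l'<l$, deterministic for $l'\geq l$). The only cosmetic difference is that the paper writes the decomposition with indicators over $\calS_{l'}$ from the start rather than first partitioning the window, but the arithmetic and the probabilistic count are identical.
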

\begin{proof}
	The case of $l=0$ is clearly true by \pref{lem:MALG}.
	For $l>0$, we have
	\begin{align*}
		\tilg^l_{\tau} &= \frac{1}{2^l}\sum_{\tau'=\tau-2^l+1}^{\tau}\tilg_{\tau'} = \frac{1}{2^l}\sum_{\tau'=\tau-2^l+1}^{\tau}\sum_{l'=0}^n\sum_{\frA'\in\calS_{l'}}\tilf^{\frA'}_{\tau'}\Ind\{\frA'\text{ is active at }\tau'\}\\
		&\leq \frac{1}{2^l}\sum_{\tau'=\tau-2^l+1}^{\tau}\sum_{l'=0}^n\sum_{\frA'\in\calS_{l'}}(\fstar_{\tau'} + r(m^{\frA'}_{\tau'}))\Ind\{\frA'\text{ is active at }\tau'\} \tag{$\Delta_{[\frA'.s, m]}\leq r(m-\frA'.s+1)\leq r(m^{\frA'}_{\tau'})$ and \pref{assum:base}} \\
		&\leq f^{\star,l}_{\tau} + \frac{1}{2^l}\sum_{l'=0}^n\sum_{\frA'\in\calS_{l'}}\sum_{\tau'=\tau-2^l+1}^{\tau}r(m^{\frA'}_{\tau'})\Ind\{\frA'\text{ is active at }\tau'\}\\
		&\leq f^{\star,l}_{\tau} + \frac{2}{2^l}\sum_{l'=0}^n|\calS_{l'}|R(\min\{2^l, 2^{l'}\}),
	\end{align*}
	where $m^{\frA'}_{\tau'}$ is the number of intervals that $\frA'$ is active up to $\tau'$, $\calS_{l'}$ is the set of order $l'$ base algorithms that intersect with $[\tau-2^l+1, \tau]$, and in the last inequality we use the fact that for any $m\geq 1$,
	\begin{align*}
		\sum_{\tau=1}^mr(\tau) = \sum_{\tau=1}^m\min\cbr{\frac{c_1}{\sqrt{\tau}} + \frac{c_2}{\tau}, c_3} \leq \min\cbr{\sum_{\tau=1}^m\rbr{ \frac{c_1}{\sqrt{\tau}} + \frac{c_2}{\tau} }, c_3m} \leq 2R(m).
	\end{align*}
	For $l'\geq l$, we have $|\calS_{l'}|\leq 2$.
	For $l'<l$, note that $\E[|\calS_{l'}|] \leq \frac{r(2^n)}{r(2^{l'})}(2^{l-l'}+1)$.
	By \pref{lem:e2r}, with high probability, $|\calS_{l'}| \leq 2\E[|\calS_{l'}|] + 2^8\ln(2\dM/\delta) \leq \frac{2R(2^l)}{R(2^{l'})} + 258\ln(2\dM/\delta)$.
	Plugging these back, we obtain
	\begin{align*}
		\frac{2}{2^l}\sum_{l'=0}^n|\calS_{l'}|R(\min\{2^l, 2^{l'}\}) &\leq \frac{2}{2^l}\sum_{l'=0}^{l-1}\rbr{\frac{2R(2^l)}{R(2^{l'})} + 258\ln(2\dM/\delta)}R(2^{l'}) + 4\sum_{l'=l}^n\alpha_l \leq \hatalpha_l.
	\end{align*}
	This completes the proof.
\end{proof}

Now we show the guarantee of non-stationarity detection on a single block $[m_n, E_n]$.
Define $\tau_i(l)$ as the smallest interval $\tau\in\calI^{l}_i\triangleq [s_i+2^l-1, e_i]$ ($\tau_i(l)=e_i+1$ if such an interval does not exist) such that $\tilg^l_{\tau} - f^{\star,l}_{\tau} > 12\hatalpha_l$, and $\xi_i(l)=e_i - \tau_i(l) + 1$.

\begin{lemma}
	\label{lem:reg block}
	Let the event in \pref{lem:MALG} hold.
	Then with high probability,
	\begin{align*}
		&\sum_{\tau=m_n}^{E_n}(C^{\tau} - \tilg_{\tau}) \leq 3\hatR(E_n - m_n + 1) + c_3,\\
		&\sum_{\tau=m_n}^{E_n}(\tilg_{\tau} - \fstar_{\tau}) \leq \tilO{\sum_{i=1}^{\ell}\hatR(|\calI_i|)} + 2^{10}\sum_{l=l_0}^n\frac{\alpha_l}{\alpha_n}\hatR(2^l)\ln(2\dM/\delta).
	\end{align*}
\end{lemma}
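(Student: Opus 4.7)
The first statement follows almost immediately from \textbf{Test 2}. At every $m\in[m_n,E_n-1]$ the test did not fire, so $\sum_{\tau=m_n}^{m}(C^\tau - \tilg_\tau) < 3\hatR(m-m_n+1)$; applying this at $m=E_n-1$ and adding the single-interval slack $C^{E_n}-\tilg_{E_n}\le c_3$ (since one interval's cost is trivially bounded by $c_3$ and $\tilg_{E_n}\ge 0$) yields the claim by monotonicity of $\hatR$.

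The plan for the second statement is to decompose $\sum_{\tau=m_n}^{E_n}(\tilg_\tau-\fstar_\tau)=\sum_{i=1}^{\ell}\sum_{\tau\in\calI_i}(\tilg_\tau-\fstar_\tau)$ along the near-stationary segments $\calI_i=[s_i,e_i]$ and to handle each $\calI_i$ by splitting at the detection times $\tau_i(l)$. Within the pre-escape part $[s_i,\min_l \tau_i(l)-1]$, I first verify the hypothesis of \pref{lem:g2f l}: by construction $\Delta_{\calI_i}\le r(|\calI_i|)$, so for any MALG-scheduled base algorithm $\frA$ covering interval $m\in\calI_i$, the span $[\frA.s,m]$ has length at most $2^l$, and the monotonicity of $r$ together with $\Delta_{[\frA.s,m]}\le\Delta_{\calI_i}\le r(|\calI_i|)\le r(m-\frA.s+1)$ (when $\frA.s\ge s_i$, and a separate case when $\frA$ started earlier which uses that an order-$l$ algorithm is active only for $2^l\le|\calI_i|$ intervals) lets \pref{lem:g2f l} conclude $\tilg^l_\tau\le f^{\star,l}_\tau+\hatalpha_l$. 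Unrolling the dyadic averages down to the finest scale and summing telescopically gives a pre-escape contribution of order $\tilO{\hatR(|\calI_i|)}$ per segment; summing over $i$ yields the first piece of the claimed bound.

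The main obstacle is the post-escape contribution, namely intervals in $\bigcup_i[\tau_i(l),e_i]$ across scales $l$, and this is where \textbf{Test 1} has to be shown to fire quickly. My plan is to argue that once $\tilg^l_\tau-f^{\star,l}_\tau>12\hatalpha_l$ at some $\tau$, the next order-$l$ algorithm $\frA$ scheduled with $\frA.s\ge\tau$ will trigger \textbf{Test 1} at its endpoint: indeed, by the third statement of \pref{lem:MALG} applied on $[\frA.s,\frA.e]$, one has $\tfrac{1}{2^l}\sum_{\tau'=\frA.s}^{\frA.e}C^{\tau'}\le f^{\star,l}_{\frA.e}+O(\hatalpha_l)\le U^l_{\frA.e}-9\hatr(2^l)$ (using that $U^l_{\frA.e}\ge \tilg^l_\tau>f^{\star,l}_\tau+12\hatalpha_l$ and the near-stationarity of $\fstar$ in $\calI_i$), so the block terminates at $E_n\le\frA.e$. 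Hence at each scale $l$ the post-escape cost is charged to at most one length-$2^l$ algorithm, contributing $\hatR(2^l)$. The remaining scheduling-concentration step is to count how many order-$l$ algorithms are ever instantiated inside the block: the mean count is $\alpha_l/\alpha_n$ per scheduling slot, and \pref{lem:e2r} lifts this to a high-probability bound with an additive $\ln(2\dM/\delta)$. Summing only over $l\ge l_0$ (below $l_0$ one has $12\hatalpha_{l-1}>c_4$, so $\tilg^l_\tau-f^{\star,l}_\tau>12\hatalpha_l$ never occurs because $\tilg_\tau\le c_4$) produces the claimed additive $2^{10}\sum_{l=l_0}^n\tfrac{\alpha_l}{\alpha_n}\hatR(2^l)\ln(2\dM/\delta)$. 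The most delicate bookkeeping will be ensuring both concentration events (dyadic averaging and scheduling) hold simultaneously at high probability uniformly in $\dM$, which I plan to handle by the same $(2\dM)^{-6}$ style union bound already adopted in \pref{lem:MALG}.
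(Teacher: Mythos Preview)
Your first statement is correct and matches the paper's one-line argument.

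For the second statement, your high-level ingredients are the right ones, but the assembly has a genuine gap. You propose to split each segment $\calI_i$ at a \emph{single} point $\min_l\tau_i(l)$ and claim that on the pre-escape part ``unrolling the dyadic averages down to the finest scale and summing telescopically gives $\tilO{\hatR(|\calI_i|)}$.'' This step is not justified and is in fact the crux of the proof. Knowing $d^l_\tau\le 12\hatalpha_l$ for all $l$ on some range does not directly give $\sum_\tau d^0_\tau\le\tilO{\hatR(\text{length})}$: the level-$l$ window $[\tau-2^l+1,\tau]$ typically straddles the segment boundary $s_i$, so you cannot simply pick $l\approx\log_2(\text{length})$ and tile. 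The paper instead runs a \emph{recursion over scales}. Writing $\calI'_i=\calI_i\cap[1,\tau_i(l)-1]$, it shows
\[
\sum_{\tau\in\calI^l_i,\;\tau<\tau_i(l)} d^l_\tau \;\le\; \sum_{\tau\in\calI^{l+1}_i,\;\tau<\tau_i(l+1)} d^{l+1}_\tau \;+\; 24\hatalpha_{l+1}\xi_i(l+1) \;+\; 48\,\hatR(\min\{|\calI_i|,2^l\}),
\]
by carving $\calI'_i$ into two length-$2^{l+1}$ boundary pieces (handled by the level-$l$ bound $d^l\le 12\hatalpha_l$) and a middle piece whose level-$0$ sum is dominated by the level-$(l{+}1)$ running averages. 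Iterating from $l=0$ to $n$ yields the $\tilO{\hatR(|\calI_i|)}$ term \emph{plus} the cross-scale residual $\sum_l\hatalpha_l\xi_i(l)$. In particular, the post-escape lengths $\xi_i(l)$ do not arise as a separate ``post-escape cost'' to be bounded by $\hatR(2^l)$ per scale; they are produced by the recursion and must be summed over both $i$ and $l$.

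Your post-escape step also needs sharpening. The correct statement (the paper's \pref{lem:bound length}) is not that the number of order-$l$ algorithms inside the block has mean $\alpha_l/\alpha_n$; rather, once $d^l_{\tau_i(l)}>12\hatalpha_l$, \emph{any} order-$l$ algorithm starting in $[\tau_i(l),e_i-2\cdot 2^l]$ would make \textbf{Test 1} fire (here \pref{lem:g2f l} is used), so none can start there, and the number of scheduling slots without an order-$l$ start is a geometric waiting time with success probability $\alpha_n/\alpha_l$. This bounds $\sum_i(\xi_i(l)-4\cdot 2^l)_+$ by $\tilO{2^l\alpha_l/\alpha_n}$, giving the $\frac{\alpha_l}{\alpha_n}\hatR(2^l)$ contribution; the truncated part $\min\{\xi_i(l),4\cdot 2^l\}$ folds back into $\hatR(|\calI_i|)$. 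Finally, the restriction to $l\ge l_0$ is because $12\hatalpha_{l-1}>c_4\ge\tilg_\tau$ forces $\xi_i(l)=0$ for $l<l_0$, as you noted.
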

\begin{proof}
	The first statement trivially holds by \textbf{Test 2} and the estimated regret in a single interval is at most $c_3$ (\pref{assum:base}).
	For the second statement, define $d^l_{\tau}=\tilg^l_{\tau}-f^{\star,l}_{\tau}$.
	For a particular $\calI_i$ and any $l\geq 0$, let $\calI'_i=\calI_i\cap[\tau_i(l)-1]$. 
	If $|\calI'_i|\leq 2\cdot 2^{l+1}$, then clearly $\sum_{\tau\in\calI^l_i,\tau<\tau_i(l)}d^l_{\tau}\leq |\calI'_i|\cdot12\hatalpha_l\leq \min\{|\calI_i|, 2\cdot 2^{l+1}\}\cdot 12\hatalpha_l$.
	If $|\calI'_i| > 2\cdot 2^{l+1}$, then $\calI'_i$ can be partitioned into three segments $\calH^0_i=[s_i,s_i+2^{l+1}-1]$, $\calH^1_i=[\tau_i(l)-2^{l+1}, \tau_i(l)-1]$, and $\calH^2_i=[s_i+2^{l+1}, \tau_i(l)-2^{l+1}-1]$.
	Note that for $\tau\in\calH^2_i$, the weight of $d^0_{\tau}$ within the sum $\sum_{\tau\in\calI^{l+1}_i,\tau<\tau_i(l)}d^{l+1}_{\tau}$ is $1$.
	Therefore, $\sum_{\tau\in\calH^2_i}d^0_{\tau}\leq \sum_{\tau\in\calI^{l+1}_i,\tau<\tau_i(l)}d^{l+1}_{\tau}$.
	Moreover, $\sum_{\tau\in\calH^0_i\cup\calH^1_i}d^0_{\tau}=2^l(d^l_{s_i+2^l-1}+d^l_{s_i+2^{l+1}-1}+d^l_{\tau_i(l)-1}+d^l_{\tau_i(l)-2^l-1})\leq 2\cdot2^{l+1}\cdot12\hatalpha_l= \min\{|\calI_i|, 2\cdot 2^{l+1}\}\cdot 12\hatalpha_l$.
	This gives
	\begin{align*}
		\sum_{\tau\in\calI^l_i,\tau<\tau_i(l)}d^l_{\tau} &\leq \sum_{\tau\in\calI'_i}d^0_{\tau} = \rbr{ \sum_{\tau\in\calH^2_i} + \sum_{\tau\in\calH^0_i\cup\calH^1_i} }d^0_{\tau}\\
		&\leq \sum_{\tau\in\calI^{l+1}_i,\tau<\tau_i(l)}d^{l+1}_{\tau} + \min\{|\calI_i|, 2\cdot 2^{l+1}\}\cdot 12\hatalpha_l\\
		&\leq \sum_{\tau\in\calI^{l+1}_i,\tau<\tau_i(l+1)}d^{l+1}_{\tau} + 12\hatalpha_l\xi_i(l+1) + \min\{|\calI_i|, 2^l\}\cdot 48\hatalpha_l. \tag{$d^{l+1}_{\tau}=\frac{1}{2}(d^l_{\tau}+d^l_{\tau-2^l})$}\\
		&\leq \sum_{\tau\in\calI^{l+1}_i,\tau<\tau_i(l+1)}d^{l+1}_{\tau} + 24\hatalpha_{l+1}\xi_i(l+1) + \min\{|\calI_i|, 2^l\}\cdot 48\hatalpha_l. \tag{$\hatalpha_l = \frac{\hatR(2^l)}{2^l}\leq \frac{2\hatR(2^{l+1})}{2^{l+1}}\leq 2\hatalpha_{l+1}$}
	\end{align*}
	Combining the two cases above, we have
	\begin{align*}
		\sum_{\tau\in\calI^l_i,\tau<\tau_i(l)}d^l_{\tau} \leq \sum_{\tau\in\calI^{l+1}_i,\tau<\tau_i(l+1)}d^{l+1}_{\tau} + 24\hatalpha_{l+1}\xi_i(l+1) + \min\{|\calI_i|, 2^l\}\cdot 48\hatalpha_l.
	\end{align*}
	Applying this recursively, we have for a given $\calI_i$,
	\begin{align*}
		\sum_{\tau\in\calI_i}(\tilg_{\tau} - \fstar_{\tau}) &= \sum_{\tau\in\calI^0_i, \tau<\tau_i(0)}d^0_{\tau} \leq \sum_{\tau\in\calI^n_i,\tau<\tau_i(n)}d^n_{\tau} + 24\sum_{l=0}^{n-1}\hatalpha_{l+1}\xi_i(l+1) + 48\sum_{l=0}^{n-1}\hatR(\min\{|\calI_i|, 2^l\}) \tag{$\min\{|\calI_i|, 2^l\}\hatalpha_l\leq \hatr(\min\{|\calI_i|, 2^l\})\min\{|\calI_i|, 2^l\}=\hatR(\min\{|\calI_i|, 2^l\})$}\\
		&\leq 12|\calI_i|\hatalpha_n + 24\sum_{l=1}^n\hatalpha_l\xi_i(l) + \tilO{\hatR(|\calI_i|)} \leq 24\sum_{l=1}^n\hatalpha_l\xi_i(l) + \tilO{\hatR(|\calI_i|)}.
	\end{align*}
	Summing over all $i$ and by $l < l_0\implies 12\hatalpha_l > c_4 \implies\xi_i(l)=0$, we have:
	\begin{align*}
		\sum_{\tau=m_n}^{E_n}(\tilg_{\tau} - \fstar_{\tau}) \leq \tilO{\sum_{i=1}^{\ell}R(|\calI_i|)} + 24\sum_{l=l_0}^n\sum_{i=1}^{\ell}\hatalpha_l\xi_i(l).
	\end{align*}
	Now note that for any fixed $l$, we have:
	\begin{align*}
		\sum_{i=1}^{\ell}\hatalpha_l\xi_i(l) &= \hatalpha_l\sum_{i=1}^{\ell}\min\{\xi_i(l), 4\cdot 2^l\} + \hatalpha_l\sum_{i=1}^{\ell}(\xi_i(l) - 4\cdot 2^l)_+\\
		&\leq 4\sum_{i=1}^{\ell}\rbr{\hatR(|\calI_i|) + \frac{4\alpha_l}{\alpha_n}\hatR(2^l)\ln(2\dM/\delta) }. \tag{\pref{lem:bound length} and $\hatalpha_l\min\{\xi_i(l), 4\cdot 2^l\}\leq 4\hatr(\min\{\xi_i(l), 2^l\})\min\{\xi_i(l), 2^l\} = 4\hatR(\min\{\xi_i(l), 2^l\})$}
	\end{align*}
	Putting everything together completes the proof.
\end{proof}

\begin{lemma}
	\label{lem:bound length}
	For any $l\leq n$, $\sum_{i=1}^{\ell}\hatalpha_l(\xi_i(l) - 4\cdot 2^l)_+ \leq \frac{4\alpha_l}{\alpha_n}\hatR(2^l)\ln(2\dM/\delta)$ with high probability.
\end{lemma}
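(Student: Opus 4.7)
The plan decomposes naturally into a deterministic firing condition and a probabilistic counting argument.

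Part 1 (Deterministic firing). First I would show: if $\xi_i(l)>4\cdot 2^l$ and MALG schedules an order-$l$ base algorithm $\frA$ with $[\frA.s,\frA.e]\subseteq[\tau_i(l),e_i]$, then \textbf{Test 1} fires at $m=\frA.e$. This combines four ingredients. (a) By the defining property of $\tau_i(l)$ and since $\tau_i(l)\in[m_n+2^l-1,\frA.e]$, we have $U^l_{\frA.e}\geq \tilg^l_{\tau_i(l)}>f^{\star,l}_{\tau_i(l)}+12\hatalpha_l$. (b) Since $[\frA.s,\frA.e]\subseteq\calI_i$ and $\Delta_{\calI_i}\leq r(|\calI_i|)\leq r(m-\frA.s+1)$ (using that $r$ is non-increasing and $|\calI_i|\geq|[\frA.s,m]|$), \pref{lem:MALG} yields $\frac{1}{2^l}\sum_{\tau=\frA.s}^{\frA.e}C^{\tau}\leq \tilg^l_{\frA.e}+\hatalpha_l$. (c) \pref{lem:g2f l} with $\tau=\frA.e$ gives $\tilg^l_{\frA.e}\leq f^{\star,l}_{\frA.e}+\hatalpha_l$, its precondition holding for the same reason. (d) Near-stationarity of $\calI_i$ and $|\calI_i|\geq 2^l$ (implied by $\xi_i(l)>4\cdot2^l$) force $|f^{\star,l}_{\frA.e}-f^{\star,l}_{\tau_i(l)}|\leq\Delta_{\calI_i}\leq\alpha_l\leq\hatalpha_l$. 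Chaining (b)--(d) gives $\frac{1}{2^l}\sum_{\tau=\frA.s}^{\frA.e}C^{\tau}\leq f^{\star,l}_{\tau_i(l)}+3\hatalpha_l$, which combined with (a) yields the firing condition $\frac{1}{2^l}\sum C^{\tau}\leq U^l_{\frA.e}-9\hatr(2^l)$.

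Part 2 (Slot counting). Because the firing of \textbf{Test 1} terminates the block, at most one such $\frA$ can exist in the run. Call a position $m_n+j\cdot 2^l$ a \emph{danger slot} if its corresponding $2^l$-interval window lies entirely in $[\tau_i(l),e_i]$ for some $i$ with $\xi_i(l)>4\cdot 2^l$. A direct count of feasible $j$ shows that the number of danger slots inside such an $\calI_i$ is at least $\lfloor \xi_i(l)/2^l\rfloor-2\geq (\xi_i(l)-4\cdot 2^l)/2^l$, so if $N$ denotes the total number of danger slots in the block, $2^l N\geq\sum_i(\xi_i(l)-4\cdot 2^l)_+$. By MALG's construction, each potential order-$l$ slot is scheduled independently with probability $\alpha_n/\alpha_l$, and by Part 1 the very first scheduled danger slot triggers \textbf{Test 1}. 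Hence $N$ is stochastically dominated by $1+$ a $\geo(\alpha_n/\alpha_l)$ random variable. Applying \pref{lem:e2r} (or a direct Chernoff bound on geometric tails) with failure probability $\delta/(2\dM)^6$, we obtain $N\leq (4\alpha_l/\alpha_n)\ln(2\dM/\delta)$ with high probability (using $\alpha_l\geq\alpha_n$). Multiplying by $2^l\hatalpha_l=\hatR(2^l)$ yields $\sum_i\hatalpha_l(\xi_i(l)-4\cdot 2^l)_+\leq (4\alpha_l/\alpha_n)\hatR(2^l)\ln(2\dM/\delta)$.

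The main obstacle will be the probabilistic argument in Part 2, where $\tau_i(l)$, $e_i$, and hence the identity of the danger slots are themselves random and depend on MALG's scheduling decisions, making independence non-obvious. The cleanest resolution is a revelation argument: sequentially expose the MALG scheduling decisions in chronological order of the slot start positions. At each step, the partition $\{\calI_i\}$ and the $\tau_i(l)$'s visible so far are measurable with respect to the past, while each undisclosed decision is a fresh Bernoulli$(\alpha_n/\alpha_l)$ trial, so the ``first success ends the block'' reasoning applies conditionally. This hands us the geometric tail bound path-by-path; a union bound over the at most $\dM$ possible block endpoints absorbs the polynomial loss in the failure probability.
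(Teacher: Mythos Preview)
Your proposal is correct and follows essentially the same approach as the paper's proof. Your Part 1 firing chain (b)--(d) combined with (a) matches the paper's inequality chain line-by-line: both pass from $\frac{1}{2^l}\sum C^\tau$ to $\tilg^l_{\frA.e}+\hatalpha_l$ via \pref{lem:MALG}, then to $f^{\star,l}_{\frA.e}+2\hatalpha_l$ via \pref{lem:g2f l}, then to $f^{\star,l}_{\tau_i(l)}+3\hatalpha_l$ via near-stationarity, and finally compare against $\tilg^l_{\tau_i(l)}$ (which lower-bounds $U^l_{\frA.e}$). Your Part 2 is the same geometric-tail counting the paper performs with its $X_m,X'_m,Y_m,Z_m$ indicators; the paper asserts ``$Z_m=0$ implies $X'_{m'}=0$ for $m'>m$'' without further comment, whereas you unpack this via a chronological revelation argument, which is a welcome clarification of the same dependency issue rather than a different route.
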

\begin{proof}
	Denote by $A_l$ the number of candidate starting points of an order-$l$ algorithm in $[\tau_i(l), e_i - 2\cdot 2^l]$ for some $i$.
	Note that this quantity is lower bounded by $\sum_{i=1}^{\ell}(\xi_i(l)-4\cdot2^l)_+/2^l$.
	Moreover, if in interval $m\in[\tau_i(l), e_i - 2\cdot 2^l]$, an order-$l$ algorithm $\frA$ starts, then \textbf{Test 1} is performed at $m+2^l-1\leq e_i$, and \textbf{Test 1} returns \textit{fail} with high probability because
	\begin{align*}
		\frac{1}{2^l}\sum_{\tau=\frA.s}^{\frA.e}C^{\tau} &\leq \frac{1}{2^l}\sum_{\tau=\frA.s}^{\frA.e}\tilg_{\tau} + \hatalpha_l \tag{$\Delta_{[\frA.s, \frA.e]}\leq \Delta_{\calI_i} \leq r(|\calI_i|)\leq r(2^l)$ and \pref{lem:MALG}}\\
		&\leq \frac{1}{2^l}\sum_{\tau=\frA.s}^{\frA.e}\fstar_{\tau} + 2\hatalpha_l \tag{\pref{lem:g2f l}, $\Delta_{\calI_i}\leq r(|\calI_i|)$, and $[\frA'.s, \frA'.e]\subseteq[\frA.s, \frA.e]$ if $\frA'$ is active within $[\frA.s, \frA.e]$}\\
		&\leq f^{\star,l}_{\tau_i(l)} + 2\hatalpha_l + \Delta_{\calI_i}\\
		&\leq \tilg^l_{\tau_i(l)} - 12\hatalpha_l + 3\hatalpha_l \leq \tilg^l_{\tau_i(l)} - 9\hatalpha_l. \tag{$\Delta_{\calI_i}\leq r(|\calI_i|) \leq r(2^l)\leq\hatalpha_l$}
	\end{align*}
	This is a contradiction by the definition of $E_n$.
	Therefore, all candidate starting points of order-$l$ algorithm in $[\tau_i(l), e_i - 2\cdot 2^l]$ does not instantiate an order-$l$ algorithm.
	Let $X_m=\{ m\in [\tau_i(l), e_i-2\cdot2^l]\text{ for some }i\}$, $X'_m=\{ m\in [\tau_i(l), e_i]\text{ for some }i\}$, $Y_m=\{(m-m_n) \mod 2^l=0\}$ and $Z_m=\{\nexists \text{ order-$l$ } \frA' \text{ such that }\frA'.s=m\}$, we have
	\begin{align*}
		A_l = \sum_{m=m_n}^{m_n+2^m-1}\Ind\{ X_m, Y_m \} = \sum_{m=m_n}^{m_n+2^m-1}\Ind\{ X_m, Y_m, Z_m \} \leq \sum_{m=m_n}^{m_n+2^m-1}\Ind\{ X'_m, Y_m, Z_m \}.
	\end{align*}
	Note that conditioned on $X'_m\cap Y_m$, the event $Z_m$ happens with a constant probability $1-\frac{\alpha_n}{\alpha_l}$.
	Moreover, $Z_m=0$ implies $X'_{m'}=0$ for $m'>m$.
	Therefore, $\sum_{m=m_n}^{m_n+2^m-1}\Ind\{ X'_m, Y_m, Z_m \}$ counts the number of trials up to the first success with success probability $\frac{\alpha_n}{\alpha_l}$ of each trial.
	Then with probability at least $1-\delta/(2\dM^2)$, we have $A_l\leq \frac{4\alpha_l}{\alpha_n}\ln(2\dM/\delta)$.
	Thus,
	\begin{align*}
		\sum_{i=1}^{\ell}\hatalpha_l(\xi_i(l) - 4\cdot 2^l)_+ \leq \hatalpha_l2^l\cdot\frac{4\alpha_l}{\alpha_n}\ln(2\dM/\delta) \leq \frac{4\alpha_l}{\alpha_n}\hatR(2^l)\ln(2\dM/\delta).
	\end{align*}
	This completes the proof.
\end{proof}

Now we present the regret guarantee in a single block.
\begin{lemma}
	\label{lem:block reg}
	Within a single block $\calJ=[m_n, E_n]$, we have
	$$\sum_{m\in\calJ}(C^m - \fstar_m) = \tilO{ c_1\sqrt{\ell|\calJ|} + c_2\ell + \rbr{c_1 + \frac{c_2c_4}{c_1}}2^{n/2} + \frac{c_2^2}{c_3} + c_3 }.$$
\end{lemma}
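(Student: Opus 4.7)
The plan is to combine Lemma~\ref{lem:reg block} with the partition bound from Lemma~\ref{lem:bound l unknown} via the decomposition
$$\sum_{m\in\calJ}(C^m-\fstar_m)=\sum_{m\in\calJ}(C^m-\tilg_m)+\sum_{m\in\calJ}(\tilg_m-\fstar_m),$$
and to apply the two estimates of Lemma~\ref{lem:reg block} to the respective halves. The first half is bounded directly by $3\hat R(|\calJ|)+c_3=\tilO{c_1\sqrt{|\calJ|}+c_2}+c_3$, which is absorbed into the first two target terms (using $|\calJ|\leq\ell|\calJ|$ and $\ell\geq 1$) plus the standalone $c_3$. For the second half I need to bound (i) the segment-wise total $\sum_{i=1}^{\ell}\hat R(|\calI_i|)$, and (ii) the multi-scale tail $T\triangleq\sum_{l=l_0}^n\frac{\alpha_l}{\alpha_n}\hat R(2^l)$.

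Piece (i) is routine: using $\hat R(m)=\tilO{R(m)}\leq\tilO{c_1\sqrt{m}+c_2}$ together with Cauchy--Schwarz on $\sum_i\sqrt{|\calI_i|}\leq\sqrt{\ell\sum_i|\calI_i|}\leq\sqrt{\ell|\calJ|}$ gives $\sum_i\hat R(|\calI_i|)=\tilO{c_1\sqrt{\ell|\calJ|}+c_2\ell}$, matching the first two target terms exactly.

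The main obstacle is piece (ii). Substituting $\alpha_l=R(2^l)/2^l$ and using $R(2^n)\geq c_1\cdot 2^{n/2}$, I rewrite
$$T=\tilO{\frac{2^n}{R(2^n)}\sum_{l=l_0}^n\frac{R(2^l)^2}{2^l}}\leq\tilO{\frac{2^{n/2}}{c_1}\sum_{l=l_0}^n\frac{R(2^l)^2}{2^l}}.$$
Splitting the range by which branch of $R(m)=\min\{c_1\sqrt{m}+c_2,c_3m\}$ governs, the square-root regime contributes $R(2^l)^2/2^l\leq 2c_1^2+2c_2^2/2^l$, while the linear regime is confined to $2^l\leq O(c_2/c_3)$ at its phase transition and contributes at most $O(c_2c_3)$ in total. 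Putting these together yields $T=\tilO{c_1\cdot 2^{n/2}+c_2^2\cdot 2^{n/2}/(c_1\cdot 2^{l_0})+c_2c_3\cdot 2^{n/2}/c_1}$. To convert the middle piece into the target $(c_2c_4/c_1)\cdot 2^{n/2}$, I invoke the defining inequality $\hat\alpha_{l_0}\leq c_4/12$, which forces $r(2^{l_0})\leq\tilO{c_4}$ and hence $2^{l_0}\geq\tilde\Omega(c_2/c_4)$ in the square-root regime at $l_0$; the linear-regime case uses $c_3\leq\tilO{c_4}$, so the last contribution absorbs into the same bound. The residual $c_2^2/c_3$ appears from the boundary case where the phase transition of $R$ sits at very small $l$, handled by a direct estimate $c_3^2\cdot 2^{l^\star}\leq c_2 c_3$ with $l^\star$ the transition index. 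The most delicate step is this case split on which branch of $R$ dominates at $l_0$ and verifying that all constants align with the stated bound; once handled, combining every piece yields the claim.
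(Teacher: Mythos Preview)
Your strategy matches the paper's: decompose via Lemma~\ref{lem:reg block}, bound $\hat R(|\calJ|)+\sum_i\hat R(|\calI_i|)$ by Cauchy--Schwarz to get $\tilO{c_1\sqrt{\ell|\calJ|}+c_2\ell}$, and control the tail $T=\sum_{l\ge l_0}\frac{\alpha_l}{\alpha_n}\hat R(2^l)$ through the key inequality $c_2\le c_4\cdot 2^{l_0}$ (which follows from $12\hat\alpha_{l_0}\le c_4$).

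However, your handling of the tail contains a real error in a step that happens to be vacuous. The claim ``the linear-regime case uses $c_3\le\tilO{c_4}$'' has the inequality backward: Assumption~\ref{assum:base} stipulates $c_4\le c_3$. Precisely because of this, the linear regime cannot occur for any $l\ge l_0$: from $12\hat\alpha_{l_0}\le c_4\le c_3$ and $\hat\alpha_{l_0}\ge r(2^{l_0})$ one gets $r(2^{l_0})<c_3$, so $l_0$ and every larger $l$ already lie in the square-root branch. Thus your linear-regime contribution is simply zero, and both your attempt to bound it and the confused paragraph about the ``residual $c_2^2/c_3$'' should be dropped---your argument does not need that term at all. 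Separately, your step $R(2^n)\ge c_1\cdot 2^{n/2}$ silently assumes $c_1\le c_3\cdot 2^{n/2}$; the paper makes this explicit, noting that otherwise $c_1\cdot 2^{n/2}>c_3|\calJ|$ and the target bound is vacuous. The paper's tactic for the tail is slightly different---it bounds each term individually via $\tfrac{1}{R(2^n)}\le\tfrac{1}{c_1 2^{n/2}+c_2}+\tfrac{1}{c_3 2^n}$, and the second piece is what generates the $c_2^2/c_3$ term---but once you fix the two points above your single-branch route is equally valid.
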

\begin{proof}
	By \pref{lem:reg block}, we have
	\begin{align*}
		\sum_{m\in\calJ}\rbr{C^m - \fstar_m} &= \sum_{m\in\calJ}\rbr{C^m - \tilg_m} + \sum_{m\in\calJ}\rbr{\tilg_m - \fstar_m}\\
		&=\tilO{\hatR(|\calJ|) + \sum_{i=1}^{\ell}\hatR(|\calI_i|) + \sum_{l=l_0}^n\frac{\alpha_l}{\alpha_n}\hatR(2^l) + c_3}.
	\end{align*}
	Note that by Cauchy-Schwarz inequality:
	\begin{align*}
		\hatR(|\calJ|) + \sum_{i=1}^{\ell}\hatR(|\calI_i|) &=\tilO{\rbr{c_1\sqrt{|\calJ|} + c_2} + \sum_{i=1}^{\ell}\rbr{c_1\sqrt{|\calI_i|} + c_2} } = \tilO{ c_1\sqrt{\ell|\calJ|} + c_2\ell}.
	\end{align*}
	Moreover, by the definition of $l_0$, we have $12\hatalpha_{l_0}=2^{10}\hatn\ln(2\dM/\delta)\min\{\frac{c_1}{\sqrt{2^{l_0}}} + \frac{c_2}{2^{l_0}}, c_3\} \leq c_4$, which implies $c_2 \leq c_42^{l_0}$ by $c_4\leq c_3$.
	Now for any $l\geq l_0$,
	\begin{align*}
		\frac{\alpha_l}{\alpha_n}\hatR(2^l) &=\tilO{ \frac{R(2^l)^2}{R(2^n)}2^{n-l} } = \tilO{ \frac{c_1^22^{l} + c_2^2}{c_12^{n/2}+c_2}2^{n-l} + \frac{ c_1^22^{l} + c_2^2 }{c_32^n}2^{n-l} }\\
		&=\tilO{ c_12^{n/2} + \frac{c_2c_42^{l_0}}{c_1}2^{n/2-l} + \frac{c_1^2}{c_3} + \frac{c_2^2}{c_3}2^{-l} }\\
		&= \tilO{\rbr{c_1 + \frac{c_2c_4}{c_1}}2^{n/2} + \frac{c_1^2}{c_3} + \frac{c_2^2}{c_3}2^{-l} } = \tilO{ \rbr{c_1 + \frac{c_2c_4}{c_1}}2^{n/2} + \frac{c_2^2}{c_3}2^{-l} },
	\end{align*}
	where in the last inequality we assume $c_1\leq c_32^{n/2}$ without loss of generality and have $\frac{c_1^2}{c_3}\leq c_12^{n/2}$ (note that if $c_1 > c_32^{n/2}$, then $c_12^{n/2}>c_32^n$ and the regret bound is vacuous).
	Summing over $l$ and putting everything together, we obtain:
	\begin{align*}
		\sum_{m\in\calJ}\rbr{C^m - \fstar_m} = \tilO{ c_1\sqrt{\ell|\calJ|} + c_2\ell + \rbr{c_1 + \frac{c_2c_4}{c_1}}2^{n/2} + \frac{c_2^2}{c_3} + c_3 }.
	\end{align*}
\end{proof}

\subsection{Single Epoch Regret Analysis}

We call $[m_0, E]$ an \textit{epoch} if $m_0$ is the first interval after restart from \pref{line:for} or $m_0=1$, and $E$ is the first interval where a restart after interval $m$ is triggered.
The regret guarantee in a single epoch is shown in the following lemma.

\begin{lemma}
	\label{lem:reg epoch}
	Let $\calE$ be an epoch, then $\sum_{m\in\calE}(C^m - \fstar_m) = \tilo{c_1\sqrt{\ell_{\calE}|\calE|} + c_2\ell_{\calE} + (c_1 + \frac{c_2c_4}{c_1})\sqrt{|\calE|} + \frac{c_2^2}{c_3} + c_3 }$, where $\ell_{\calE} =\tilo{ 1 + (c_1^{-1}\Delta_{\calE})^{2/3}|\calE|^{1/3} + c_3^{-1}\Delta_{\calE} }$ and $\ell_{\calE} =\tilo{ L_{\calE} }$.
\end{lemma}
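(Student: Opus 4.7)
} The plan is to decompose the epoch $\calE$ into its constituent blocks and apply the single-block bound \pref{lem:block reg} to each block, then sum. By the design of \pref{alg:master}, an epoch consists of consecutive blocks of orders $n=0,1,\ldots,N$ where the first $N$ blocks completed their full length $2^n$ (because neither \textbf{Test 1} nor \textbf{Test 2} nor the internal MALG terminated), and the $N$-th (last) block is the one on which termination occurs. In particular $|\calE|=\Theta(2^N)$, so $2^{n/2}\leq 2^{N/2}=\Theta(\sqrt{|\calE|})$ and the number of blocks is $N+1=O(\log|\calE|)$, which will be absorbed into $\tilO{\cdot}$.

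Writing $\calJ_n$ for the $n$-th block and $\ell_n=\ell_{\calJ_n}$ for its number of near-stationary segments, \pref{lem:block reg} gives
\[
\sum_{m\in\calJ_n}(C^m-\fstar_m) = \tilO{ c_1\sqrt{\ell_n|\calJ_n|} + c_2\ell_n + \rbr{c_1+\tfrac{c_2c_4}{c_1}}2^{n/2} + \tfrac{c_2^2}{c_3} + c_3 }.
\]
Summing the third term over $n=0,\ldots,N$ geometrically gives $\tilO{(c_1+\frac{c_2c_4}{c_1})\sqrt{|\calE|}}$, and the last two terms contribute $\tilO{\frac{c_2^2}{c_3}+c_3}$ (the logarithmic factor from $N+1$ is hidden). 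For the first two terms, define $\ell_{\calE}=\sum_{n=0}^N\ell_n$; then Cauchy--Schwarz yields $\sum_{n}c_1\sqrt{\ell_n|\calJ_n|}\leq c_1\sqrt{\ell_{\calE}|\calE|}$ and trivially $\sum_n c_2\ell_n=c_2\ell_{\calE}$. Putting these together gives exactly the claimed epoch bound, modulo bounding $\ell_{\calE}$.

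It remains to establish the two promised bounds on $\ell_{\calE}$. For the first, \pref{lem:bound l unknown} gives $\ell_n\leq L_{\calJ_n}=1+\sum_{m\in\calJ_n}\Ind\{\Delta(m)\neq 0\}$, so summing
\[
\ell_{\calE}\leq \sum_{n=0}^N L_{\calJ_n} \leq (N+1) + \sum_{m\in\calE}\Ind\{\Delta(m)\neq 0\} = \tilO{L_{\calE}}.
\]
For the second, \pref{lem:bound l unknown} also gives $\ell_n\leq 1+(2c_1^{-1}\Delta_{\calJ_n})^{2/3}|\calJ_n|^{1/3}+c_3^{-1}\Delta_{\calJ_n}$, and then by H\"older's inequality with exponents $(3/2,3)$,
\[
\sum_{n=0}^N \Delta_{\calJ_n}^{2/3}|\calJ_n|^{1/3} \leq \rbr{\sum_{n=0}^N \Delta_{\calJ_n}}^{2/3}\rbr{\sum_{n=0}^N |\calJ_n|}^{1/3} \leq \Delta_{\calE}^{2/3}|\calE|^{1/3},
\]
while $\sum_n \Delta_{\calJ_n}\leq \Delta_{\calE}$ and $\sum_n 1 = N+1 = \tilO{1}$. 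Hence $\ell_{\calE}=\tilO{1+(c_1^{-1}\Delta_{\calE})^{2/3}|\calE|^{1/3}+c_3^{-1}\Delta_{\calE}}$, matching the claim.

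The main technical subtlety will be justifying that \pref{lem:block reg} applies to \emph{every} block in the epoch, including the last one. For blocks $n<N$ this is immediate since they complete fully without restart. For the terminating block $\calJ_N$, one must verify that the statement of \pref{lem:block reg} (which is written for a block that runs up to $E_n\leq m_n+2^n-1$) still covers the case where termination is triggered mid-block by a failed test or by MALG's internal termination; this should follow directly from how $E_n$ is defined in \pref{app:ns detection} as the actual last interval of the block. The rest is just careful bookkeeping: checking that the logarithmic factors from $N+1=O(\log|\calE|)$ and from $\hatR$ are within the $\tilO{\cdot}$ tolerance, and that Cauchy--Schwarz/H\"older applications align with the exponents appearing in the final bound.
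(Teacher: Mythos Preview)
Your proposal is correct and follows essentially the same approach as the paper: decompose the epoch into its blocks, apply \pref{lem:block reg} to each, sum using Cauchy--Schwarz on the $c_1\sqrt{\ell_n|\calJ_n|}$ terms and a geometric sum on the $2^{n/2}$ terms, and finally bound $\ell_{\calE}=\sum_n\ell_n$ via \pref{lem:bound l unknown} together with H\"older's inequality. The paper's proof is terser but structurally identical; your additional remarks about the last block and the logarithmic factors are fine and do not change the argument.
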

\begin{proof}
	Suppose $\calE$ consists of blocks $\calJ_1,\ldots,\calJ_n$ and the number of near stationary segments (as discussed in \pref{app:ns detection}) in $\calJ_i$ is $\ell_i$.
	Then, $|\calE|=\Theta(2^n)$, and by \pref{lem:block reg} and Cauchy-Schwarz inequality,
	\begin{align*}
		\sum_{m\in\calE}(C^m - \fstar_m) &=\tilO{ c_1\sum_{i=1}^n\sqrt{\ell_i|\calJ_i|} + c_2\sum_{i=1}^n\ell_i + \rbr{c_1 + \frac{c_2c_4}{c_1}}2^{n/2} + \frac{c_2^2}{c_3} + c_3}\\
		&= \tilO{c_1\sqrt{\sum_{i=1}^n\ell_i|\calE|} + c_2\sum_{i=1}^n\ell_i + \rbr{c_1 + \frac{c_2c_4}{c_1}}\sqrt{|\calE|} + \frac{c_2^2}{c_3} + c_3}.
	\end{align*}
	Finally by \pref{lem:bound l unknown} and H\"older's inequality, $\sum_{i=1}^n\ell_i =\tilo{ 1 + (c_1^{-1}\Delta_{\calE})^{2/3}|\calE|^{1/3} + c_3^{-1}\Delta_{\calE} }$ and $\sum_{i=1}^n\ell_i =\tilo{ L_{\calE}}$.
\end{proof}

\subsection{Full Regret Guarantee}

To derive the full regret guarantee of the MASTER algorithm (\pref{alg:master}), we first bound the number of epochs by the following two lemmas.
Define $\frN_{[1,M']}$ as the number of times MALG terminates within $[1,M']$
\begin{lemma}
	\label{lem:epoch end}
	Let $m$ be in an epoch starting from interval $m_0$.
	If $\Delta_{[m_0, m]}\leq  r(m-m_0+1)$, then no restart would be triggered by \textbf{Test 1} or \textbf{Test 2} in interval $m$ with high probability.
\end{lemma}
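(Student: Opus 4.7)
The plan is to show that under the assumed condition $\Delta_{[m_0, m]} \leq r(m - m_0 + 1)$, the premises of \pref{lem:MALG} and \pref{lem:g2f l} hold throughout the current MALG, and then chain their conclusions to rule out both tests. First I would verify that no base algorithm maintained by MALG terminates up to interval $m$: for any such $\frA$ (whose start satisfies $\frA.s \geq m_n \geq m_0$) and any $\tau \in [\frA.s, \min(\frA.e, m)]$, monotonicity of $r$ gives
\[
\Delta_{[\frA.s, \tau]} \leq \Delta_{[m_0, m]} \leq r(m - m_0 + 1) \leq r(\tau - \frA.s + 1),
\]
so \pref{assum:base} (instantiated via \pref{lem:mvp-base assum} or \pref{lem:mvp-ssp assum}) prevents termination and makes the full regret guarantees available. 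Consequently the premises of \pref{lem:MALG} and \pref{lem:g2f l} are in force.

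For Test 2, I would use the fact that MALG always instantiates an order-$n$ base algorithm $\frA_n$ on the entire block (at $l=n,m=0$ the scheduling probability is $r(2^n)/r(2^n)=1$). Applying \pref{lem:MALG} to $\frA_n$ with $m' = m - m_n + 1$, and using $\Delta_{[m_n, m]} \leq r(m - m_0 + 1) \leq r(m - m_n + 1)$, yields $\sum_{\tau=m_n}^m (C^\tau - \tilg_\tau) \leq \hatR(m - m_n + 1) < 3\hatR(m - m_n + 1)$, so the Test 2 threshold is not reached.

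For Test 1, fix any order-$l$ algorithm $\frA$ with $\frA.e = m$ and any $\tau \in [m_n + 2^l - 1, m]$. I would chain three bounds: (i) \pref{lem:g2f l} gives $\tilg^l_\tau \leq f^{\star,l}_\tau + \hatalpha_l$; (ii) since for any $\tau_1, \tau_2 \in [m_n, m]$ we have $|\fstar_{\tau_1} - \fstar_{\tau_2}| \leq \Delta_{[m_0, m]} \leq r(m - m_0 + 1) \leq r(2^l) \leq \hatalpha_l$ (using $m - m_0 + 1 \geq 2^l$ since $\tau \geq m_n + 2^l - 1$ and $\tau \leq m$, plus monotonicity of $r$), comparing the two windows of length $2^l$ that define $f^{\star,l}_\tau$ and $\frac{1}{2^l}\sum_{\tau'=\frA.s}^{\frA.e}\fstar_{\tau'}$ term by term gives $f^{\star,l}_\tau \leq \frac{1}{2^l}\sum_{\tau'=\frA.s}^{\frA.e}\fstar_{\tau'} + \hatalpha_l$; (iii) the third statement of \pref{lem:MALG} applied to $\frA$ yields $\frac{1}{2^l}\sum_{\tau'=\frA.s}^{\frA.e}\fstar_{\tau'} \leq \frac{1}{2^l}\sum_{\tau'=\frA.s}^{\frA.e} C^{\tau'} + \hatalpha_l$. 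Combining, $\tilg^l_\tau \leq \frac{1}{2^l}\sum_{\tau'=\frA.s}^{\frA.e} C^{\tau'} + 3\hatalpha_l$; maximizing over $\tau$ gives the same bound for $U^l_m$, and hence $U^l_m - 9\hatr(2^l) < \frac{1}{2^l}\sum_{\tau=\frA.s}^{\frA.e} C^{\tau}$, so Test 1 does not trigger either.

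The main technical care lies in Step 3: the $9\hatr(2^l)$ slack in Test 1 must simultaneously absorb three independent error sources (base-algorithm regret through \pref{lem:g2f l}, drift of $\fstar$ between the two averaging windows, and cost concentration from \pref{lem:MALG}). The monotonicity of $r$ together with $m - m_0 + 1 \geq 2^l$ is exactly what lets each slack be bounded by $\hatalpha_l$; the resulting margin $3\hatalpha_l$ sits comfortably below $9\hatr(2^l)$, leaving the complementary direction exploited later in \pref{lem:bound length}.
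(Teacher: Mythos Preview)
Your proposal is correct and follows essentially the same approach as the paper: both arguments apply \pref{lem:MALG} to the order-$n$ base algorithm to dispose of \textbf{Test 2}, and for \textbf{Test 1} chain \pref{lem:g2f l}, a drift bound on $\fstar$ between averaging windows, and the third statement of \pref{lem:MALG} to get $U^l_m \leq \frac{1}{2^l}\sum_{\tau=\frA.s}^{\frA.e} C^{\tau} + 3\hatr(2^l)$. Your write-up is slightly more explicit than the paper's in two places (noting that the order-$n$ algorithm is always scheduled, and spelling out why no base algorithm terminates so that the premises of \pref{lem:MALG} and \pref{lem:g2f l} are met), but the substance is identical.
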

\begin{proof}
	We first show that \textbf{Test 1} would not fail.
	Let $m=\frA.e$ where $\frA$ is any order-$l$ base algorithm in a block of order $n$ starting from $m_n$.
	Then with high probability,
	\begin{align*}
		U^l_m &= \max_{\tau\in[m_n+2^l-1, m]}\tilg^l_{\tau} \leq \max_{\tau\in[m_n+2^l-1, m]}f^{\star,l}_{\tau} + \hatr(2^l) \tag{\pref{lem:g2f l}}\\
		&\leq \frac{1}{2^l}\sum_{\tau=\frA.s}^m\fstar_{\tau} + \hatr(2^l) + \Delta_{[m_n, m]}\\
		&\leq \frac{1}{2^l}\sum_{\tau=\frA.s}^m C^{\tau} + 2\hatr(2^l) + \Delta_{[m_n, m]} \leq \frac{1}{2^l}\sum_{\tau=\frA.s}^m C^{\tau} + 3\hatr(2^l). \tag{\pref{lem:MALG} and $\Delta_{[m_n, m]} \leq \Delta_{[m_0, m]}\leq r(m-m_0+1) \leq r(2^l)$}
	\end{align*}
	Thus, \textbf{Test 1} would not fail.
	For \textbf{Test 2}, by \pref{lem:MALG} and $\Delta_{[m_n,m]}\leq\Delta_{[m_0, m]}\leq r(m-m_0+1)\leq r(m-m_n+1)$:
	\begin{align*}
		\sum_{\tau=m_n}^m(C^{\tau} - \tilg_{\tau}) \leq \hatR(m-m_n+1),
	\end{align*}
	Thus, \textbf{Test 2} also would not fail.
\end{proof}

\begin{lemma}
	\label{lem:bound epoch}
	Assuming that MALG does not terminate without non-stationarity, with high probability, the number of epochs within $[1,M']$ is upper bounded by $L_{[1,M']}$ and  $1 + (2c_1^{-1}\Delta_{[1,M']})^{2/3}{M'}^{1/3} + c_3^{-1}\Delta_{[1,M']} + \frN_{[1,M']}$.
\end{lemma}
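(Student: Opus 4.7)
The plan is to classify each epoch-ending event into one of two types and bound the count of each type. Let $\calE_1,\ldots,\calE_N$ be the consecutive epochs covering $[1,M']$, so that $N$ is the quantity to bound. For each $i<N$, the epoch $\calE_i$ ends for exactly one of two reasons: either (a) the underlying MALG instance terminates, or (b) \textbf{Test 1} or \textbf{Test 2} fires in the last interval of $\calE_i$. Let $N_{\text{a}}$ and $N_{\text{b}}$ denote the respective counts, so $N = N_{\text{a}} + N_{\text{b}} + 1$. By definition $N_{\text{a}} \leq \frN_{[1,M']}$, so the main task is to bound $N_{\text{b}}$.

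First I would invoke the contrapositive of \pref{lem:epoch end}: if a test fires at the end of $\calE_i$, then $\Delta_{[\calE_i.s,\,\calE_i.e]} > r(|\calE_i|+1)$ (here using that $\calE_i$ was an epoch, so its starting interval $\calE_i.s$ played the role of $m_0$). For the second bound, let $i_1<i_2<\cdots<i_{N_{\text{b}}}$ be the indices of test-induced epoch ends, and form the coarser partition of $[1,M']$ whose $j$-th piece is $\calJ_j = [\calE_{i_{j-1}+1}.s,\,\calE_{i_j}.e]$ for $j\leq N_{\text{b}}$ (with $i_0=0$) and whose last piece is the tail $[\calE_{i_{N_{\text{b}}}+1}.s,\,M']$. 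Since each $\calJ_j \supseteq \calE_{i_j}$ and $r$ is non-increasing, we have $\Delta_{[\calJ_j.s,\,\calJ_j.e+1]} \geq \Delta_{[\calE_{i_j}.s,\,\calE_{i_j}.e+1]} > r(|\calE_{i_j}|+1) \geq r(|\calJ_j|+1)$. Thus the first $N_{\text{b}}$ pieces satisfy the hypothesis of \pref{lem:bound l}, which yields $N_{\text{b}} + 1 \leq 1 + (2c_1^{-1}\Delta_{[1,M']})^{2/3}{M'}^{1/3} + c_3^{-1}\Delta_{[1,M']}$. Combining with $N_{\text{a}}\leq\frN_{[1,M']}$ gives the claimed second bound.

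For the first bound, I would argue directly that each epoch-ending event (of either type) consumes at least one change point of the environment. Indeed, by assumption a type-(a) event occurs only when at least one change has happened inside $\calE_i$, and the contrapositive of \pref{lem:epoch end} shows the same for type-(b) events, since $\Delta_{[\calE_i.s,\,\calE_i.e+1]}>0$ implies some index $m\in[\calE_i.s,\calE_i.e]$ has $\Delta(m)\neq 0$. Because the change points inside distinct epochs are disjoint, we get $N-1 = N_{\text{a}} + N_{\text{b}} \leq \sum_{m=1}^{M'-1}\Ind\{\Delta(m)\neq 0\} = L_{[1,M']}-1$, i.e.\ $N\leq L_{[1,M']}$.

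The only delicate point is verifying the bookkeeping for \pref{lem:bound l}: one must check that the coarsened partition $\{\calJ_j\}$ indeed covers $[1,M']$ contiguously (which follows from the fact that epochs are back-to-back) and that $|\calJ_j|\geq |\calE_{i_j}|$ so that $r(|\calJ_j|+1)\leq r(|\calE_{i_j}|+1)$. Everything else is routine application of \pref{lem:epoch end} and \pref{lem:bound l}; the high-probability qualifier in the statement inherits from the high-probability events invoked in \pref{lem:epoch end}.
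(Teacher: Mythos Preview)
Your proposal is correct and follows essentially the same approach as the paper's (very terse) proof: use the contrapositive of \pref{lem:epoch end} to certify that each non-terminal epoch either saw MALG terminate or accumulated enough non-stationarity, then invoke \pref{lem:bound l}. Your coarsening step---merging each run of MALG-termination epochs into the subsequent test-induced epoch to obtain a genuine partition $\{\calJ_j\}$ satisfying the hypothesis of \pref{lem:bound l}---is precisely the detail the paper leaves implicit when it writes ``Applying \pref{lem:bound l} completes the proof''; you spell it out correctly. One harmless slip: the contrapositive of \pref{lem:epoch end} actually yields $\Delta_{[\calE_i.s,\calE_i.e]}>r(|\calE_i|)$ rather than $r(|\calE_i|+1)$, but since $r$ is non-increasing your stated inequality follows and the rest of the chain goes through unchanged.
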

\begin{proof}
	The first upper bound is clearly true by partitioning $[1, M']$ into segments without non-stationarity.
	For the second upper bound, by \pref{lem:epoch end}, if an epoch $[m_0, E]$ is not the last epoch, then $\Delta_{[m_0, E]}> r(E-m_0+1)$ or MALG terminates with high probability.
	Applying \pref{lem:bound l} completes the proof.
\end{proof}

\begin{theorem}
	\label{thm:master}
	If \pref{assum:base} holds, then MASTER (\pref{alg:master}) ensures with high probability (ignoring lower order terms), for any $M'\geq 1$:
	\begin{align*}
		\tilR_{M'} &= \tilO{\rbr{c_1 + \frac{c_2c_4}{c_1}}\sqrt{L_{[1,M']}M'} }\text{ and }\\
		\tilR_{M'} &= \tilO{ \rbr{c_1 + \frac{c_2c_4}{c_1}}\sqrt{(\frN_{[1,M']}+1)M'} + \rbr{c_1^{2/3} + \frac{c_2c_4}{c_1^{4/3}}}\Delta_{[1,M']}^{1/3}{M'}^{2/3} }.
	\end{align*}
\end{theorem}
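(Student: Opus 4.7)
The plan is to decompose $[1, M']$ into the epochs $\calE_1, \ldots, \calE_E$ determined by the restarts of MASTER, apply the single-epoch bound from \pref{lem:reg epoch} to each, and then sum up using Cauchy--Schwarz/H\"older. Writing $\ell_i$ for the quantity $\ell_{\calE_i}$ and $|\calE_i|$ for the epoch length, \pref{lem:reg epoch} gives
\[
\tilR_{M'} = \sum_{i=1}^E \sum_{m \in \calE_i}(C^m - f^\star_m) = \tilO{ c_1\sum_{i=1}^E \sqrt{\ell_i |\calE_i|} + c_2\sum_{i=1}^E \ell_i + \rbr{c_1 + \tfrac{c_2 c_4}{c_1}}\sum_{i=1}^E \sqrt{|\calE_i|} + E\rbr{\tfrac{c_2^2}{c_3} + c_3} }.
\]
By Cauchy--Schwarz, $\sum_i \sqrt{\ell_i |\calE_i|} \leq \sqrt{(\sum_i \ell_i) M'}$ and $\sum_i \sqrt{|\calE_i|} \leq \sqrt{EM'}$, so the bound reduces to controlling $\sum_i \ell_i$ and $E$.

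For the first bound, I would use the bound $\ell_i = \tilo{L_{\calE_i}}$ from \pref{lem:reg epoch}, which gives $\sum_i \ell_i = \tilo{L_{[1,M']}}$, and the bound $E \leq L_{[1,M']}$ from \pref{lem:bound epoch} (using that the base algorithm does not terminate under the stationarity assumption implicit in $L$, and absorbing the $\frN$ term as a lower-order contribution). This yields
\[
\tilR_{M'} = \tilO{ c_1\sqrt{L_{[1,M']}M'} + c_2 L_{[1,M']} + \rbr{c_1 + \tfrac{c_2 c_4}{c_1}}\sqrt{L_{[1,M']}M'} },
\]
which simplifies to $\tilO{(c_1 + c_2 c_4/c_1)\sqrt{L_{[1,M']}M'}}$ once lower-order terms are dropped (since $c_2 L \leq c_2\sqrt{LM'}$ when $L \leq M'$, and $c_4 \leq c_3$ so $c_2 = O((c_2c_4/c_1)\cdot c_1/c_4)$ absorbs appropriately).

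For the second bound, I would plug in $\ell_i = \tilo{1 + (c_1^{-1}\Delta_{\calE_i})^{2/3}|\calE_i|^{1/3} + c_3^{-1}\Delta_{\calE_i}}$. Summing and using H\"older's inequality with exponents $(3/2, 3)$ on $\sum_i \Delta_{\calE_i}^{2/3} |\calE_i|^{1/3}$ gives
\[
\sum_i \ell_i = \tilO{ E + c_1^{-2/3}\Delta_{[1,M']}^{2/3}{M'}^{1/3} + c_3^{-1}\Delta_{[1,M']} }.
\]
Combined with $E = \tilo{1 + (c_1^{-1}\Delta_{[1,M']})^{2/3}{M'}^{1/3} + c_3^{-1}\Delta_{[1,M']} + \frN_{[1,M']}}$ from \pref{lem:bound epoch} and Cauchy--Schwarz, this yields the advertised $\sqrt{(\frN+1)M'}$ term plus $(c_1^{2/3} + c_2 c_4/c_1^{4/3})\Delta^{1/3}{M'}^{2/3}$ after noting that the $c_1 \cdot \sqrt{\Delta^{2/3}{M'}^{1/3} \cdot M'/c_1^{2/3}}$ term matches the claimed rate and that the contribution from $c_2 \sum_i \ell_i$ is absorbed analogously (up to the $c_4/c_1$ factor needed for the $c_2$ pieces to be dominated by the square-root terms).

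The main obstacle I foresee is the bookkeeping required to confirm that every cross term produced by Cauchy--Schwarz and H\"older is either one of the three named terms or strictly a lower-order term, especially verifying that the $c_2 \sum_i \ell_i$ contribution is absorbed into $(c_1 + c_2c_4/c_1)\sqrt{(\frN+1)M'}$ and $(c_1^{2/3} + c_2 c_4/c_1^{4/3})\Delta^{1/3}{M'}^{2/3}$ without extra factors. A second subtlety is invoking \pref{lem:bound epoch} correctly: the lemma gives $E \leq L_{[1,M']}$ under the premise that MALG does not terminate without non-stationarity, so I must argue that any MALG termination (counted by $\frN$) either arises from actual non-stationarity or is accounted for separately; in the $L$-based bound, MALG terminations from the base algorithm's own tests should themselves imply a change in the environment with high probability, so they contribute to $L_{[1,M']}$ rather than an extra additive term.
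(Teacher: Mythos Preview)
Your proposal is correct and follows essentially the same route as the paper: decompose into epochs, apply \pref{lem:reg epoch} per epoch, aggregate via Cauchy--Schwarz and H\"older, and bound the number of epochs $E$ (the paper calls it $N$) and $\sum_i \ell_i$ via \pref{lem:bound epoch} and the two clauses of \pref{lem:reg epoch}. The paper handles your two flagged subtleties exactly as you anticipate---it declares $c_2\sum_i\ell_i + (\tfrac{c_2^2}{c_3}+c_3)E$ a lower-order term once $L_{[1,M']}$ and $\Delta_{[1,M']}$ are assumed sublinear, and it invokes the first clause of \pref{lem:bound epoch} (i.e., $E\leq L_{[1,M']}$) under the standing hypothesis that the base algorithm's own termination implies non-stationarity (established separately in \pref{lem:optQ base} and \pref{lem:chi mvp}).
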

\begin{proof}
	Let $\calE_1,\ldots,\calE_N$ be epochs in $[1, M']$ and $\calE=\bigcup_{i=1}^N\calE_i$.
	Then by \pref{lem:reg epoch} and Cauchy-Schwarz inequality, we have:
	\begin{align*}
		\tilR_{M'} &=\tilO{ \sum_{i=1}^N\rbr{ c_1\sqrt{\ell_{\calE_i}|\calE_i|} + c_2\ell_{\calE_i} + \rbr{c_1 + \frac{c_2c_4}{c_1}}\sqrt{|\calE_i|} + \frac{c_2^2}{c_3} + c_3 } }\\
		&=\tilO{ c_1\sqrt{\ell_{\calE}M'} + c_2\ell_{\calE} + \rbr{c_1 + \frac{c_2c_4}{c_1}}\sqrt{NM'} + \rbr{\frac{c_2^2}{c_3}+c_3}N},
	\end{align*}
	where $\ell_{\calE}=\sum_{i=1}^N\ell_{\calE_i}$.
	Below we assume sub-linear $L_{[1,M']}, \Delta_{[1,M']}$ and only write down dominating terms.
	For $L$-dependent bound, note that $N\leq L_{[1,M']}$ by \pref{lem:bound epoch} and $\ell_{\calE}\leq N + L_{[1,M']} = \tilo{L_{[1,M']}}$ by \pref{lem:reg epoch}.
	Thus, $c_2\ell_{\calE} + (\frac{c_2^2}{c_3}+c_3)N$ is a lower order term, and
	\begin{align*}
		\tilR_{M'} =\tilO{ \rbr{c_1 + \frac{c_2c_4}{c_1}}\sqrt{L_{[1,M']}M'} }.
	\end{align*}
	For $\Delta$-dependent bound, note that by \pref{lem:reg epoch}, H\"older's inequality, and \pref{lem:bound epoch},
	\begin{align*}
		\ell_{\calE} &= \tilO{N + (c_1^{-1}\Delta_{[1,M']})^{2/3}{M'}^{1/3} + c_3^{-1}\Delta_{[1,M']} }\\ 
		&= \tilO{\frN_{[1,M']} + 1 + (c_1^{-1}\Delta_{[1,M']})^{2/3}{M'}^{1/3} + c_3^{-1}\Delta_{[1,M']} }.
	\end{align*}
	Ignoring lower order term of the form $\sqrt{\Delta_{[1,M']}M'}$, we have
	\begin{align*}
		&c_1\sqrt{\ell_{\calE}M'} + \rbr{c_1 + \frac{c_2c_4}{c_1}}\sqrt{NM'}\\ 
		&=\tilO{ \rbr{c_1 + \frac{c_2c_4}{c_1}}\sqrt{\rbr{\frN_{[1,M']} + 1 + (c_1^{-1}\Delta_{[1,M']})^{2/3}{M'}^{1/3} + c_3^{-1}\Delta_{[1,M']}}M'} }\\
		&=\tilO{ \rbr{c_1 + \frac{c_2c_4}{c_1}}\sqrt{(\frN_{[1,M']}+1)M'} + \rbr{c_1^{2/3} + \frac{c_2c_4}{c_1^{4/3}}}\Delta^{1/3}_{[1,M']}{M'}^{2/3} }.
	\end{align*}
	The remaining $c_2\ell_{\calE} + (\frac{c_2^2}{c_3}+c_3)N$ is again a lower order term.
\end{proof}

\subsection{\pfref{thm:unknown}}

We are ready to present the regret guarantee of the MASTER algorithm combining with different base algorithms.
Recall $L=1+\sum_{k=1}^{K-1}\Ind\{P_{k+1}\neq P_k \;\text{or}\; c_{k+1}\neq c_k\}$.

\begin{theorem}
	\label{thm:mvp-ssp master}
	Let $\frA$ be \pref{alg:master} with \pref{alg:MVP-SSP} as base algorithm.
	Then \pref{alg:fha} with $\frA$ ensures with high probability, for any $K'\in[K]$,
	$$R_{K'}=\tilO{ \min\cbr{\B S\sqrt{ALK'}, \B S\sqrt{AK'} + (\B^2S^2A(\Delta_c+\B\Delta_P)\Tmax)^{1/3}{K'}^{2/3}} }.$$
\end{theorem}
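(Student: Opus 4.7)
The plan is to chain together three results already in hand: Lemma \ref{lem:mvp-ssp assum} to instantiate the MASTER framework with \pref{alg:MVP-SSP}, Theorem \ref{thm:master} to obtain an interval-level dynamic regret bound, and Lemmas \ref{lem:bound reg} and \ref{lem:bound M} to translate that bound back to an episode-level anytime bound for $R_{K'}$ in the original SSP.

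First I would verify that \pref{alg:MVP-SSP} fits the blackbox interface required by MASTER. Lemma \ref{lem:mvp-ssp assum} already shows that the base algorithm satisfies \pref{assum:base} with the benchmark $\fstar_m = V^{\star,m}_1(s^m_1)$, parameters $c_1 = \tilO{\B S\sqrt{A}}$, $c_2 = \tilO{\B S^2 A}$, $c_3 = \tilO{H} = \tilO{\Tmax}$, $c_4 = \tilO{\B}$, and non-stationarity measure $\Delta(m) = \tilO{(\Delta_{c,[m,m+1]} + B\Delta_{P,[m,m+1]})H}$, so $\Delta_{[1,M']} = \tilO{(\Delta_c + \B\Delta_P)\Tmax}$. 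Crucially, \pref{alg:MVP-SSP} has no internal termination condition, so $\frN_{[1,M']} = 0$, and I would apply \pref{thm:master} directly to obtain, for any $M' \leq M$,
\begin{align*}
	\tilR_{M'} &= \tilO{(c_1 + c_2c_4/c_1)\sqrt{L_{[1,M']}M'}} = \tilO{\B S\sqrt{A L M'}},\\
	\tilR_{M'} &= \tilO{(c_1 + c_2c_4/c_1)\sqrt{M'} + (c_1^{2/3} + c_2c_4/c_1^{4/3})\Delta_{[1,M']}^{1/3}{M'}^{2/3}},
\end{align*}
where the second line, after plugging in the values above, simplifies to $\tilO{\B S\sqrt{AM'} + (\B^2 S^2 A (\Delta_c + \B\Delta_P)\Tmax)^{1/3}{M'}^{2/3}}$ (using $c_1 + c_2c_4/c_1 = \tilO{\B S\sqrt{A}}$ and $c_1^{2/3} + c_2c_4/c_1^{4/3} = \tilO{\B^{2/3}S^{2/3}A^{1/3}}$). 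Since $V^{\star,m}_1 \leq V^{\optpi,m}_1$ in $\rcalM$, we also have $\rR_{M'} \leq \tilR_{M'}$, so the same bounds apply to $\rR_{M'}$.

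Next I would lift these interval bounds to episode-level bounds via \pref{lem:bound M}. For the $L$-dependent bound, set $\gamma_{1/2} = \tilO{\B S\sqrt{AL}}$ and all other $\gamma_i$ to zero; then $\rR_{M_{K'}} = \tilO{\B S\sqrt{ALK'} + \gamma_{1/2}^2/\B} = \tilO{\B S\sqrt{ALK'} + \B S^2 A L}$, where the second piece is a lower-order term. For the $\Delta$-dependent bound, set $\gamma_{1/2} = \tilO{\B S\sqrt{A}}$ and $\gamma_2 = \tilO{(\B^2 S^2 A (\Delta_c + \B\Delta_P)\Tmax)^{1/3}}$; then \pref{lem:bound M} gives $\rR_{M_{K'}} = \tilO{\B S\sqrt{AK'} + (\B^2 S^2 A (\Delta_c + \B\Delta_P)\Tmax)^{1/3}{K'}^{2/3}}$ up to lower-order tails of the form $\gamma_{1/2}^2/\B$ and $\gamma_2^3/\B^2$ that I would verify to be dominated. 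Finally, \pref{lem:bound reg} yields $R_{K'} \leq \rR_{M_{K'}} + \B$, producing the stated bound after taking the minimum of the two.

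The main obstacle will be the bookkeeping surrounding the anytime guarantee and the lower-order terms. In particular, the MASTER regret in \pref{thm:master} is expressed as ``ignoring lower-order terms,'' so I need to check that the lower-order additive contributions (for example the $c_3$ per-epoch residuals, the $c_2 \ell$ terms, and the $\gamma^3/\B^2$ contributions introduced by \pref{lem:bound M}) really are dominated by the headline $\sqrt{ALK'}$ and $K'^{2/3}$ terms across the entire non-stationarity regime. I would also need to double-check that $\tilR_{M'}$ is indeed the object bounded by \pref{thm:master} when $\fstar_m$ is set to the optimal value $V^{\star,m}_1(s^m_1)$ of the finite-horizon surrogate, so that the final minimum-of-two-bounds form is faithful to the theorem statement.
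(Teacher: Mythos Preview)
Your proposal is correct and follows essentially the same approach as the paper's own proof: invoke \pref{lem:mvp-ssp assum} to fit \pref{alg:MVP-SSP} into \pref{assum:base}, apply \pref{thm:master} with $\frN_{[1,M']}=0$ to obtain the two interval-level bounds, use $\rR_{M'}\leq\tilR_{M'}$, and then pass through \pref{lem:bound M} and \pref{lem:bound reg}. Your explicit computation of $c_1+c_2c_4/c_1$ and $c_1^{2/3}+c_2c_4/c_1^{4/3}$ is correct and more detailed than the paper's presentation, which simply states the final bound.
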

\begin{proof}
	By \pref{lem:mvp-ssp assum} and \pref{thm:master} with $\frN_{[1,M]}=0$, we have for any $M'\leq M$,
	\begin{align*}
		\rR_{M'} \leq \tilR_{M'} =\tilO{ \min\cbr{\B S\sqrt{AL_{[1,M]}M'}, \B S\sqrt{AM'} + (\B^2S^2A\Delta_{[1,M]})^{1/3}{M'}^{2/3}} },
	\end{align*}
	where $L_{[1,M]}=L$ and $\Delta_{[1,M]}=\tilo{(\Delta_c+\B\Delta_P)\Tmax}$.
	Applying \pref{lem:bound M}, we have for any $K'\in[K]$ (ignoring lower order terms),
	\begin{align*}
		\rR_{M_{K'}}=\tilO{ \min\cbr{\B S\sqrt{ALK'}, \B S\sqrt{AK'} + (\B^2S^2A(\Delta_c+\B\Delta_P)\Tmax)^{1/3}{K'}^{2/3}} }.
	\end{align*}
	Applying \pref{lem:bound reg} completes the proof.
\end{proof}

We are now ready to prove \pref{thm:unknown}.

\begin{proof}[\pfref{thm:unknown}]
	By \pref{lem:optQ base} and \pref{lem:chi mvp}, when \pref{alg:mvp-base} terminates in interval $E$ where $[m_0, E]$ is an epoch, we have $\Delta'_{[m_0, E+1]}>\eta_{E-m_0+2}$.
	Therefore, $\frN_{[1,K]}=\tilo{1 + (\B^2S^2A)^{-1/3}(\T\Delta'_{[1,K]})^{2/3}K^{1/3} + H\Delta'_{[1,K]}}$ by \pref{lem:bound l} and the definition of $\eta_m$.
	Then by \pref{lem:mvp-base assum} and \pref{thm:master}, we have $\frA_1$ ensures when $s^m_1=\sinit$ for $m\leq K$,
	\begin{align*}
		\rR_K=\tilR_K=\tilO{ \min\cbr{\B S\sqrt{ALK}, \B S\sqrt{AK} + (\B^2S^2A(\Delta_c+\B\Delta_P)\T)^{1/3}{K}^{2/3} } },
	\end{align*}
	where we apply $\Delta'_{[1,K]}=\tilo{\Delta_c+\B\Delta_P}$, $L_{[1,K]}=L$, and $\Delta_{[1,K]}=\tilo{(\Delta_c+\B\Delta_P)\T}$.
	Moreover, by \pref{thm:mvp-ssp master}, $\frA_2$ ensures $R_{K'}$ being sub-linear w.r.t $K'$ for any $K'\in [K]$.
	Applying \pref{thm:two phase general} completes the proof.
\end{proof}


\section{Auxiliary Lemmas}

\begin{lemma}\citep[Lemma 48]{chen2022policy}
	\label{lem:quad}
	$x\leq a\sqrt{x} + b$ implies $x\leq (a+\sqrt{b})^2 \leq 2a^2 + 2b$.
\end{lemma}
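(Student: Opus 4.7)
The plan is to view the inequality $x \leq a\sqrt{x} + b$ as a quadratic inequality in $y \defeq \sqrt{x}$, giving $y^2 - ay - b \leq 0$. Assuming (as is implicit from the context of its use, where $x$ denotes a nonnegative sum and $a, b \geq 0$) that $a, b, y \geq 0$, the quadratic formula together with $y \geq 0$ forces $y \leq \tfrac{a + \sqrt{a^2 + 4b}}{2}$.

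Next I would bound the square root by completing the square under the radical: $a^2 + 4b \leq a^2 + 4a\sqrt{b} + 4b = (a + 2\sqrt{b})^2$, so $\sqrt{a^2 + 4b} \leq a + 2\sqrt{b}$. Substituting back yields $\sqrt{x} = y \leq a + \sqrt{b}$, and squaring both sides gives the first inequality $x \leq (a + \sqrt{b})^2$.

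For the second inequality, expand $(a + \sqrt{b})^2 = a^2 + 2a\sqrt{b} + b$ and apply AM-GM in the form $2a\sqrt{b} \leq a^2 + b$ to conclude $(a + \sqrt{b})^2 \leq 2a^2 + 2b$.

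There is no real obstacle here; the statement is an elementary algebraic manipulation, and the only thing to watch is the sign conventions (nonnegativity of $a$, $b$, $x$), which are clear from every invocation of this lemma in the paper (e.g., when $x$ is $C_{M'}$ or a sum of variances). The proof is essentially two lines: the completing-the-square bound on the quadratic-formula root, followed by AM-GM.
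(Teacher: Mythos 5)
Your proof is correct, and the paper itself offers no proof of this lemma---it simply cites Lemma~48 of \citet{chen2022policy}---so there is nothing internal to compare against. Your argument (treat $y=\sqrt{x}$, bound the positive root of the quadratic, use $\sqrt{a^2+4b}\leq a+2\sqrt{b}$, then AM-GM) is the standard and essentially unique elementary route to this bound.
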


\begin{lemma}{\citep[Lemma 6]{rosenberg2020adversarial}}
	\label{lem:hitting}
	Let $\pi$ be a policy whose expected hitting time starting from any state is at most $\tau$.
	Then for any $\delta\in(0, 1)$, with probability at least $1-\delta$, it takes no more than $4\tau\ln\frac{2}{\delta}$ steps to reach the goal state following $\pi$.
\end{lemma}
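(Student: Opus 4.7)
The plan is to prove this high-probability hitting-time bound by iterating Markov's inequality on blocks of length $2\tau$, exploiting the (strong) Markov property at block boundaries.

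First I would apply Markov's inequality: since the expected hitting time of $\pi$ starting from any state $s\in\calS$ is at most $\tau$, and the hitting time is a non-negative random variable, we have $\Pr[T^\pi(s)\geq 2\tau]\leq \tau/(2\tau)=1/2$ for every $s\in\calS$. Equivalently, from any state, the policy reaches $g$ within $2\tau$ steps with probability at least $1/2$.

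Next I would iterate this bound over blocks of length $2\tau$ using the Markov property. Let $N=\lceil \log_2(1/\delta)\rceil$ and consider running $\pi$ for a total of $2\tau N$ steps (without restarting; if $g$ is reached earlier we stop). Conditioning on the state $s_{2\tau i}$ reached at the end of block $i$ (which is either $g$, in which case we are already done, or some state in $\calS$), the conditional probability of failing to reach $g$ in block $i+1$ is at most $1/2$ by the previous step. A straightforward induction on $i$ then gives $\Pr[\text{$g$ not reached within $2\tau N$ steps}]\leq (1/2)^N \leq \delta$.

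Finally I would check the arithmetic: $2\tau N \leq 2\tau(1+\log_2(1/\delta)) = 2\tau\log_2(2/\delta) = \tfrac{2\tau}{\ln 2}\ln(2/\delta) \leq 4\tau\ln(2/\delta)$, which matches the claimed bound.

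The only mildly subtle step is the inductive conditioning argument in the second paragraph: one must be careful that the "from any state" assumption on $\pi$ applies uniformly, so that the conditional failure probability in each block is at most $1/2$ regardless of the (random) block-start state, and that the events "fail in block $i$" stack multiplicatively. This is standard and follows directly from the Markov property of the induced chain under the stationary policy $\pi$, so I do not anticipate a real obstacle here — the lemma is essentially a textbook concentration of hitting times under a bounded-mean assumption.
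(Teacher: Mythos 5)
Your proof is correct and follows the standard approach (which is indeed the argument behind the cited Lemma~6): Markov's inequality gives at-least-$1/2$ success per block of length $2\tau$, the Markov property under a fixed stationary policy lets you chain the block failure probabilities multiplicatively regardless of the random block-start state, and the arithmetic $2\tau\lceil\log_2(1/\delta)\rceil \leq \tfrac{2\tau}{\ln 2}\ln(2/\delta) \leq 4\tau\ln(2/\delta)$ checks out. The paper itself does not reprove this lemma---it simply cites \citep[Lemma 6]{rosenberg2020adversarial}---so there is no separate in-paper argument to compare against, but your argument is the one that proof uses.
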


\begin{lemma}{\citep[Lemma 30]{chen2021implicit}}
	\label{lem:var X2}
	For any random variable $X$ with $\norm{X}_{\infty}\leq C$, we have $\var[X^2]\leq 4C^2\var[X]$.
\end{lemma}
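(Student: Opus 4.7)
The plan is to use the variational characterization of variance together with the factorization $a^2-b^2=(a-b)(a+b)$. First I would recall that for any random variable $Y$ and any constant $c$, $\var[Y]=\E[(Y-\E[Y])^2]\leq \E[(Y-c)^2]$, since $\E[Y]$ is the minimizer of $c\mapsto \E[(Y-c)^2]$. Applying this with $Y=X^2$ and the particular choice $c=(\E[X])^2$ gives
\begin{align*}
\var[X^2]\leq \E\!\left[\bigl(X^2-(\E[X])^2\bigr)^2\right].
\end{align*}

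Next I would factor the inner expression using the difference of squares: $X^2-(\E[X])^2=(X-\E[X])(X+\E[X])$. Since $\|X\|_\infty\leq C$ we have $|\E[X]|\leq C$ and hence $|X+\E[X]|\leq 2C$ almost surely, so
\begin{align*}
\E\!\left[\bigl(X^2-(\E[X])^2\bigr)^2\right]=\E\!\left[(X-\E[X])^2(X+\E[X])^2\right]\leq 4C^2\,\E[(X-\E[X])^2]=4C^2\var[X].
\end{align*}
Combining the two displays yields the claimed inequality $\var[X^2]\leq 4C^2\var[X]$.

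There is no real obstacle here; the only thing to get right is the choice of the ``offset'' constant $c=(\E[X])^2$ rather than the more obvious $c=\E[X^2]$, which is exactly what enables the factorization into $(X-\E[X])(X+\E[X])$ and thus the appearance of $\var[X]$ on the right-hand side. The boundedness hypothesis is used only to control $|X+\E[X]|\leq 2C$.
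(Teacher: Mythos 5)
Your proof is correct. Note that the paper itself does not reprove this lemma but simply cites it from \citep[Lemma~30]{chen2021implicit}; your argument --- bounding $\var[X^2]$ by the second moment about the suboptimal center $c=(\E[X])^2$, factoring the difference of squares, and using $|X+\E[X]|\le 2C$ --- is the standard proof of that cited fact, and the only non-obvious move (the choice of $c$) is exactly the one you flag.
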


\begin{lemma}{(\citep[Lemma 31]{chen2021implicit})}
	\label{lem:mvp}
	Define $\Upsilon=\{ v\in[0, B]^{\calS_+}: v(g)=0 \}$.
	Let $f: \Delta_{\calS_+}\times\Upsilon\times\fR^+\times\fR^+\times\fR^+\rightarrow\fR^+$ with $f(p, v, n, B, \iota)=pv-\max\Big\{c_1\sqrt{\frac{\fV(p, v)\iota}{n}}, c_2\frac{B\iota}{n}\Big\}$ with $c_1^2\leq c_2$.
	Then $f$ satisfies for all $p\in\Delta_{\calS_+}, v\in\Upsilon$ and $n, \iota>0$,
	\begin{enumerate}
		\item $f(p, v, n, B, \iota)$ is non-decreasing in $v(s)$, that is,
		$$\forall v, v'\in\Upsilon, v(s)\leq v'(s), \forall s\in\calS^+ \implies f(p, v, n, B, \iota)\leq f(p, v', n, B, \iota);$$
		\item $f(p, v, n, B, \iota)\leq pv-\frac{c_1}{2}\sqrt{\frac{\fV(p, v)\iota}{n}}-\frac{c_2}{2}\frac{B\iota}{n}$.
	\end{enumerate}
\end{lemma}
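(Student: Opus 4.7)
The lemma has two parts; I will handle them in reverse order since the second is almost immediate and clarifies the structure.

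For part (2), observe that for any two non-negative reals $A,C$ one has $\max(A,C) \ge (A+C)/2$. Applied to $A = c_1\sqrt{\fV(p,v)\iota/n}$ and $C = c_2 B\iota/n$, this immediately yields
\[
f(p,v,n,B,\iota) \;=\; pv - \max(A,C) \;\le\; pv - \tfrac{A+C}{2} \;=\; pv - \tfrac{c_1}{2}\sqrt{\tfrac{\fV(p,v)\iota}{n}} - \tfrac{c_2}{2}\cdot\tfrac{B\iota}{n},
\]
which is the claimed inequality.

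For part (1), by applying the perturbation one coordinate at a time it suffices to show that if we fix $v(s')$ for all $s'\neq s$ and let $t = v(s)$ vary in $[0,B]$, then $t\mapsto f(p,v,n,B,\iota)$ is non-decreasing. Write $f = \min(g,h)$ where $g(t) = pv - c_1\sqrt{\fV(p,v)\iota/n}$ and $h(t) = pv - c_2 B\iota/n$. Since $pv$ is linear in $t$ with non-negative slope $p(s)$, the function $h$ is clearly non-decreasing. Both $g$ and $h$ are continuous; I will show that $g$ is non-decreasing on the region where $g\le h$ (i.e., where $f=g$). Combined with the monotonicity of $h$ on the region $g\ge h$ and continuity of $f$ at the boundary points where $g=h$, this yields overall monotonicity in $t$.

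The main technical step is the derivative computation for $g$. A direct calculation gives
\[
\tfrac{d}{dt}\fV(p,v) \;=\; 2p(s)(t - pv), \qquad \tfrac{dg}{dt} \;=\; p(s)\left(1 - \tfrac{c_1(t - pv)\sqrt{\iota/n}}{\sqrt{\fV(p,v)}}\right).
\]
When $t\le pv$ the bracket is at least $1$, so $dg/dt \ge p(s) \ge 0$ without any further assumption. The critical case is $t > pv$. Here I use the case condition $g\le h$: equivalently, $c_1\sqrt{\fV(p,v)\iota/n} \ge c_2 B\iota/n$, which rearranges to $\sqrt{\fV(p,v)} \ge (c_2 B/c_1)\sqrt{\iota/n}$. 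Using also $t - pv \le B$, the coefficient in the bracket is bounded by
\[
\tfrac{c_1(t-pv)\sqrt{\iota/n}}{\sqrt{\fV(p,v)}} \;\le\; \tfrac{c_1 \cdot B \cdot \sqrt{\iota/n}}{(c_2 B/c_1)\sqrt{\iota/n}} \;=\; \tfrac{c_1^2}{c_2} \;\le\; 1,
\]
where the last step is precisely the hypothesis $c_1^2\le c_2$. Hence $dg/dt\ge 0$ throughout the region $\{g\le h\}$. This is exactly the obstacle that the condition $c_1^2\le c_2$ is designed to overcome: the bonus's dependence on $v$ through the variance could in principle outpace the linear gain $p(s)\,dt$, but only when the variance is small enough that the $c_2 B\iota/n$ term would dominate the max anyway, at which point $f$ tracks $h$ instead. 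Piecing the two regions together via continuity of $f$ completes the proof of (1).
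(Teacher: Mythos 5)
The paper imports this lemma verbatim from \citep{chen2021implicit} and does not give its own proof, so there is no in-paper argument to compare against; your proof is correct and is essentially the standard argument from that line of work. Part (2) is immediate from $\max\{A,C\}\ge (A+C)/2$, and part (1) is the usual coordinate-wise derivative bound: when $t=v(s)\le pv$ the bonus derivative only helps, and when $t>pv$ the case condition $g\le h$ plus $c_1^2\le c_2$ caps the variance-term derivative at $p(s)$, which is exactly how the hypothesis $c_1^2\le c_2$ is meant to be used. Your final stitching step is stated tersely but is sound here, because $\fV(p,v)$ is a convex quadratic in each coordinate $v(s)$; hence $\{g\le h\}=\{\fV(p,v)\ge (c_2 B/c_1)^2\iota/n\}$ is a union of at most two closed intervals abutting the endpoints $0$ and $B$, and continuity of $g$ and $h$ at the one or two crossing points chains the monotonicity of $\min(g,h)$ across the regions.
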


\begin{lemma}[Any interval Freedman's inequality]
	\label{lem:freedman}
	Let $\{X_i\}_{i=1}^{\infty}$ be a martingale difference sequence w.r.t the filtration $\{\calF_i\}_{i=0}^{\infty}$ and $|X_i|\leq B$ for some $B>0$.
	Then with probability at least $1-\delta$, for all $1\leq l\leq n$ simultaneously,
	\begin{align}
		\abr{\sum_{i=l}^nX_i} &\leq 3\sqrt{\sum_{i=l}^n\E[X_i^2|\calF_{i-1}]\ln\frac{16B^2n^5}{\delta}} + 2B\ln\frac{16B^2n^5}{\delta} \label{eq:freedman}\\
		&\leq 3\sqrt{2\sum_{i=l}^nX_i^2\ln\frac{16B^2n^5}{\delta}} + 18B\ln\frac{16B^2n^5}{\delta}. \label{eq:freedman emp}
	\end{align}
\end{lemma}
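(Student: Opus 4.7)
The plan is to derive both displayed inequalities from the standard (single-time) Freedman inequality via a variance peeling argument followed by a union bound over the starting index, and then bootstrap from conditional to empirical variance using the same tool applied to the squared process.

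First, I would recall the classical Freedman inequality: for a bounded martingale difference sequence $\{X_i\}$ with $|X_i|\leq B$, and for any fixed pair $(l,n)$ with $l\leq n$ and any $t,v>0$,
\[
\Pr\!\left(\;\Big|\sum_{i=l}^n X_i\Big|\geq t \;\text{ and }\; \sum_{i=l}^n \E[X_i^2\mid \calF_{i-1}]\leq v\right)\leq 2\exp\!\left(-\frac{t^2}{2v+2Bt/3}\right).
\]
The difficulty in turning this into the stated any-$l$ bound is that the conditional variance $V_{l,n}:=\sum_{i=l}^n \E[X_i^2\mid \calF_{i-1}]$ is random and unknown. I would handle this by \emph{peeling}: for each $l\in\{1,\dots,n\}$ and each $k\in\{0,1,\dots,K\}$ with $K=\lceil \log_2(nB^2)\rceil$, set $v_k=B^2 2^k$, $\iota=\ln\!\frac{16B^2 n^5}{\delta}$, and $t_{l,k}=3\sqrt{v_k \iota}+2B\iota$. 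For this choice one can check that $t_{l,k}^2/(2v_k+2Bt_{l,k}/3)\geq 2\iota$, so Freedman yields failure probability at most $2e^{-2\iota}$ for each $(l,k)$. A union bound over the at most $n\cdot(1+\log_2(nB^2))$ pairs absorbs comfortably into the $n^5$ inside $\iota$, so that all events hold simultaneously with probability $\geq 1-\delta$. On this good event, given any $l$, pick the smallest $k$ with $V_{l,n}\leq v_k$; then $v_k\leq 2\max\{V_{l,n},B^2\}$, and plugging in gives the first displayed inequality (with the $2B\iota$ term absorbing the residual $B^2$).

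Second, to obtain the empirical variance form, I would apply the any-interval inequality just proved to the martingale differences $Y_i := X_i^2 - \E[X_i^2\mid \calF_{i-1}]$, which satisfy $|Y_i|\leq B^2$ and $\E[Y_i^2\mid \calF_{i-1}]\leq B^2\,\E[X_i^2\mid \calF_{i-1}]$. This gives, simultaneously in $l$,
\[
\Big|\sum_{i=l}^n Y_i\Big|\;\leq\; 3B\sqrt{V_{l,n}\,\iota}+2B^2\iota.
\]
Rearranging and invoking AM-GM ($3B\sqrt{V_{l,n}\iota}\leq \tfrac12 V_{l,n}+\tfrac{9}{2}B^2\iota$) yields $V_{l,n}\leq 2\sum_{i=l}^n X_i^2 + O(B^2\iota)$, with constants chosen so the final multiplicative factor is exactly $2$ and the additive term is at most, say, $13B^2\iota$. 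Substituting this into the first inequality and using $\sqrt{a+b}\leq\sqrt{a}+\sqrt{b}$ then yields the second inequality, where the bonus additive $B\iota$ terms from the substitution are absorbed into the claimed $18B\iota$ constant.

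The main obstacle is purely the bookkeeping of the log factor: I need to verify that the $n^5$ inside $\iota$ really covers both (i) the union bound across the $O(n\log(nB))$ peeling cells used for the conditional variance step and (ii) a second application of the same machinery to $Y_i=X_i^2-\E[X_i^2|\calF_{i-1}]$ for the empirical variance step, while still leaving slack so that the constants $3$, $2$, and $18$ hold. The bounded-difference assumption $|X_i|\leq B$ is used in two places (directly in Freedman and to bound $|Y_i|\leq B^2$), and it is important that these combine without losing the polynomial-in-$n$ coverage; a slightly generous peeling grid and $n^5$ rather than $n^2$ in the log make this straightforward.
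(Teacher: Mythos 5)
Your argument is essentially the paper's proof with the cited machinery unpacked: the paper obtains \pref{eq:freedman} by invoking \citet[Lemma~38]{chen2021improved} (an anytime-in-$n$ Freedman inequality for each fixed starting index $l$, which internally does exactly your variance-peeling argument), obtains \pref{eq:freedman emp} from \pref{lem:e2r} (which is your $V_{l,n}\le 2\sum_{i=l}^n X_i^2 + O(B^2\iota)$ step, proved there by essentially your $Y_i=X_i^2-\E[X_i^2\mid\calF_{i-1}]$ bootstrap, via \citet[Lemma~39]{chen2021improved}), and then unions over $l$ with weight $\delta/(4l^2)$. The one substantive correction: you treat the terminal index $n$ as fixed and union only over the $O(n\log(nB))$ peeling cells $(l,k)$, but the statement is used throughout the paper (e.g.\ in \pref{lem:c diff} and \pref{lem:P diff}) as holding simultaneously over all $l$ and all $n\ge l$. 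The paper achieves this by making each per-$l$ event anytime in $n$; in your scheme you should add one further union over $n$ with a $1/n^2$-type weight, which the $n^5$ inside $\iota$ comfortably absorbs --- otherwise your count of peeling cells would only justify an $n\log n$ inside the logarithm, not $n^5$. The log-factor and $\delta$-splitting issues you flag (that $|Y_i|\le B^2$ naturally carries a $B^4$ into its $\iota$, and that the good events for the $X_i$- and $Y_i$-families must share the budget $\delta$) are genuine but harmless: they fold into the slack between your computed constant $2+3\sqrt{13}\approx 12.8$ and the stated $18$, exactly as the paper absorbs the smaller log $\ln(2n/\delta)$ from \pref{lem:e2r} into $\iota$.
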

\begin{proof}
	For each $l\geq 1$, by \citep[Lemma 38]{chen2021improved}, with probability at least $1-\frac{\delta}{4l^2}$, \pref{eq:freedman} holds for all $n\geq l$.
	Then by \pref{lem:e2r}, with probability at least $1-\frac{\delta}{4l^2}$, \pref{eq:freedman emp} holds for all $n\geq l$.
	Applying a union bound over $l$ completes the proof.
\end{proof}

\begin{lemma}
	\label{lem:e2r}
	Suppose $\{X_i\}_{i=1}^{\infty}$ is a sequence of random variables w.r.t the filtration $\{\calF_i\}_{i=0}^{\infty}$ and satisfies $X_i\in[0, B]$ for some $B>0$.
	Then with probability at least $1-\delta$, for all $1\leq l\leq n$ simultaneously,
	\begin{align*}
		\sum_{i=l}^n\E[X_i|\calF_{i-1}] &\leq 2\sum_{i=l}^nX_i + 12B\ln\frac{2n}{\delta},\\
		\sum_{i=l}^nX_i &\leq 2\sum_{i=l}^n\E[X_i|\calF_{i-1}] + 24B\ln\frac{2n}{\delta}.
	\end{align*}
\end{lemma}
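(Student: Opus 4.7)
The plan is to reduce both inequalities to a single Freedman-style concentration bound applied to the martingale difference sequence $Y_i = X_i - \E[X_i|\calF_{i-1}]$, and then use an AM-GM step to convert a variance bound into a multiplicative bound. I would first verify the conditional variance estimate
\[
\E[Y_i^2 \mid \calF_{i-1}] = \V[X_i \mid \calF_{i-1}] \le \E[X_i^2 \mid \calF_{i-1}] \le B\,\E[X_i \mid \calF_{i-1}],
\]
using only $0 \le X_i \le B$.

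Next, I would invoke the any-interval Freedman inequality (\pref{lem:freedman}) on $\{Y_i\}$, which already yields a bound that holds uniformly for all $1 \le l \le n$ with probability at least $1-\delta$. Letting $S_{l:n} = \sum_{i=l}^n X_i$ and $T_{l:n} = \sum_{i=l}^n \E[X_i \mid \calF_{i-1}]$, and writing $\iota = \ln(2n/\delta)$, this yields (up to absorbing universal constants into the leading $12B$, $24B$ terms)
\[
\abr{S_{l:n} - T_{l:n}} \le 3\sqrt{B\,T_{l:n}\,\iota} + 2B\iota
\]
simultaneously for all $l \le n$. The key algebraic step is then AM-GM in the form $3\sqrt{B\,T_{l:n}\,\iota} \le \tfrac{1}{2}T_{l:n} + \tfrac{9}{2}B\iota$, giving $\abr{S_{l:n} - T_{l:n}} \le \tfrac{1}{2}T_{l:n} + 7B\iota$.

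From this single two-sided inequality both conclusions follow by rearrangement. The lower side $S_{l:n} - T_{l:n} \ge -\tfrac12 T_{l:n} - 7B\iota$ gives $T_{l:n} \le 2 S_{l:n} + 14 B\iota$, which is the first claim (the constant $14$ is absorbed into $12$ after a slightly tighter AM-GM, or one can simply verify $14 \le 12\ln(2n/\delta)$ for nontrivial $n$, or just re-tune the AM-GM split; either way the advertised $12B\iota$ is obtainable). The upper side $S_{l:n} - T_{l:n} \le \tfrac12 T_{l:n} + 7B\iota$ gives $S_{l:n} \le \tfrac32 T_{l:n} + 7B\iota \le 2 T_{l:n} + 24 B\iota$, which is the second claim. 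Because \pref{lem:freedman} is already uniform in $l$ and $n$, no further union bound is needed.

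There is essentially no obstacle here: the only care needed is to feed the variance bound $B\,\E[X_i\mid\calF_{i-1}]$ (rather than the worst-case $B^2$) into Freedman so that the deviation scales with $\sqrt{B\,T_{l:n}}$, which after AM-GM becomes absorbable into $\tfrac12 T_{l:n}$. Tuning the constant inside AM-GM determines the precise numerical constants $12$ and $24$ in the final statement.
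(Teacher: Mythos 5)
Your route is genuinely different from the paper's: the paper simply cites \citep[Lemma 39]{chen2021improved} at each fixed starting index $l$ with failure probability $\delta/(2l^2)$ and unions over $l$, whereas you re-derive the inequality from the (already any-interval) Freedman bound via $\var[X_i\mid\calF_{i-1}]\le B\,\E[X_i\mid\calF_{i-1}]$ and an AM--GM split. Conceptually this is the standard ``multiplicative Bernstein'' argument for bounded nonnegative variables and the structure is sound; it also makes the lemma self-contained rather than relying on an external citation, at the modest cost of re-running the union bound that is already baked into \pref{lem:freedman}.

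Two caveats. First, a circularity worth being explicit about: \pref{lem:freedman}'s \emph{empirical} form \pref{eq:freedman emp} is itself proved using \pref{lem:e2r}, so you must invoke only the conditional-variance form \pref{eq:freedman}, which goes back directly to \citep[Lemma 38]{chen2021improved}. You do use the right form, but the dependency should be stated to avoid confusion. Second, and more substantively, \pref{lem:freedman} carries the logarithmic factor $\ln\frac{16B^2n^5}{\delta}$, not $\ln\frac{2n}{\delta}$. The extra $\ln B$ cannot be absorbed into the multiplicative constants $12$, $24$: as $B\to\infty$ with $n,\delta$ fixed, the ratio $\ln\frac{16B^2n^5}{\delta}\big/\ln\frac{2n}{\delta}$ is unbounded. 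So following your argument literally you would obtain a version of the lemma with $O\big(B\ln\frac{B n}{\delta}\big)$ additive error rather than $O\big(B\ln\frac{n}{\delta}\big)$. For every downstream use in this paper (all under $\tilO{\cdot}$), that weaker form suffices, but it does not match the stated constants and log factor; deriving those exactly requires either the cited Lemma~39 or a sharper time-uniform Bernstein bound whose log term does not involve $B$.
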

\begin{proof}
	For each $l\geq 1$, by \citep[Lemma 39]{chen2021improved}, with probability at least $1-\frac{\delta}{2l^2}$, the two inequalities above hold for all $n\geq l$.
	Taking a union bound over $l$ completes the proof.
\end{proof}


\end{document}